\newcommand{\PreserveBackslash}[1]{\let\temp=\\#1\let\\=\temp}
\newcolumntype{C}[1]{>{\PreserveBackslash\centering}p{#1}}
\numberwithin{equation}{section}
\theoremstyle{plain}
\newtheorem{Theorem}{Theorem}
\numberwithin{Theorem}{section}
\newtheorem{Lemma}{Lemma}
\numberwithin{Lemma}{section}
{
	\theoremstyle{definition}
	\newtheorem{Definition}{Definition}
	\newtheorem{example}{Example}
	\numberwithin{example}{section}
	\newtheorem{fact}{Fact}
	\numberwithin{fact}{section}

}
\newcommand{\A}{\mathcal{A}}
\newcommand{\Pro}{\mathbb{P}}
\newcommand{\1}{\mathbbm{1}}
\def\bmu{{\boldsymbol{\mu}}}
\def\bbeta{{\boldsymbol{\beta}}}
\def\bth{{\boldsymbol{\theta}}}
\DeclareMathOperator*{\argmax}{arg\,max}
\DeclareMathOperator*{\argmin}{arg\,min}
\def\supp{{\rm supp}}
\def\R{{\mathbb R}}
\def\E{\mathbb{E}}
\def\L{{\mathcal L}}
\begin{document}

\begin{frontmatter}
\title{The Cost of Privacy: Optimal Rates of Convergence for Parameter Estimation with Differential Privacy\thanksref{T1}}
\runtitle{The Cost of Privacy}
\thankstext{T1}{The research was supported in part by NSF grant DMS-1712735 and NIH grants R01-GM129781 and R01-GM123056.}

\begin{aug}
	\author{\fnms{T. Tony} \snm{Cai}\ead[label=e1]{tcai@wharton.upenn.edu}},
	\author{\fnms{Yichen} \snm{Wang}\ead[label=e2]{wangyc@wharton.upenn.edu}},
	\and
	\author{\fnms{Linjun} \snm{Zhang}\ead[label=e3]{linjun.zhang@rutgers.edu}
		\ead[label=u1,url]{URL: http://www-stat.wharton.upenn.edu/$\sim$tcai/}}
	\ead[label=u2]{URL: https://linjunz.github.io/}
	\runauthor{T. T. Cai, Y. Wang and L. Zhang}
	\affiliation{University of Pennsylvania and Rutgers University}
	\address{DEPARTMENT OF STATISTICS\\
		THE WHARTON SCHOOL\\
		UNIVERSITY OF PENNSYLVANIA\\
		PHILADELPHIA, PENNSYLVANIA 19104\\
		USA\\
		\printead{e1}\\
		\phantom{E-mail:\ }\printead*{e2}\\
		\phantom{E-mail:\ }\printead*{e3}\\
		\printead*{u1}\phantom{URL:\ }\\
		\printead*{u2}\phantom{URL:\ }
	}
	
\end{aug}

\begin{abstract}
Privacy-preserving data analysis is a rising challenge in contemporary statistics, as the privacy guarantees of statistical methods are often achieved at the expense of accuracy. In this paper, we investigate the tradeoff between statistical accuracy and privacy in mean estimation and linear regression, under both the classical low-dimensional and modern high-dimensional settings.  A primary focus is to establish minimax optimality for statistical estimation with the $(\varepsilon,\delta)$-differential privacy constraint. By refining the ``tracing adversary" technique for lower bounds in the theoretical computer science literature, we improve existing minimax lower bound for low-dimensional mean estimation and establish new lower bounds for high-dimensional mean estimation and linear regression problems. We also design differentially private algorithms that attain the minimax lower bounds up to logarithmic factors. In particular, for high-dimensional linear regression, a novel private iterative hard thresholding algorithm is proposed. The numerical performance of differentially private algorithms is demonstrated by simulation studies and applications to real data sets.
\end{abstract}

\begin{keyword}[class=MSC]
	\kwd[Primary ]{62F30}
	\kwd[; secondary ]{62F12}\kwd{62J05}.
\end{keyword}

\begin{keyword}
	\kwd{High-dimensional data; Differential privacy; Mean estimation; Linear regression; Minimax optimality.}
\end{keyword}

\end{frontmatter}

\section{Introduction}\label{sec: introduction}
With the unprecedented availability of datasets containing sensitive personal information, there are increasing concerns that statistical analysis of such datasets may compromise individual privacy. These concerns give rise to statistical methods that provide privacy guarantees at the cost of statistical accuracy, which then motivates us to study the optimal tradeoff between privacy and accuracy in fundamental statistical problems such as mean estimation and linear regression.
	
A rigorous definition of privacy is a prerequisite for our study. Differential privacy, introduced in \cite{dwork2006calibrating}, is arguably the most widely adopted definition of privacy in statistical data analysis. The promise of a differentially private algorithm is protection of each individual's privacy from an adversary who has access to the algorithm's output and possibly even the rest of the data. Differential privacy has gained significant attention in academia \cite{dwork2014algorithmic, abadi2016deep, dwork2017exposed, dwork2018privacy} and found its way into real world applications developed by Apple \cite{apple2018privacy}, Google \cite{erlingsson2014rappor}, Microsoft \cite{ding2017collecting}, and the U.S. Census Bureau \cite{abowd2016challenge}.
	
A usual approach to developing differentially private algorithms is perturbing the output of non-private algorithms by random noise \cite{dwork2006calibrating, mcsherry2007mechanism, dwork2014algorithmic}, and naturally the processed output suffers from some loss of accuracy, which has been extensively observed and studied in the literature \cite{wasserman2010statistical, smith2011privacy, lei2011differentially,bassily2014private,dwork2014analyze}. Our paper intends to characterize quantitatively the tradeoff between differential privacy guarantees and statistical accuracy, under the statistical minimax risk framework. Specifically, we study this tradeoff in mean estimation and linear regression problems with the $(\varepsilon, \delta)$-differential privacy constraint, which is formally defined as follows.
	\begin{Definition}[Differential Privacy \citep{dwork2006calibrating}]\label{def: dp} 
		A randomized algorithm $M: \mathcal X^n \to \mathcal R$ is $(\varepsilon, \delta)$-differentially private if for every pair of adjacent data sets $\bm X, \bm X' \in \mathcal X^n$ that differ by one individual datum and every (measurable) $S \subseteq \mathcal R$, 
		\begin{align*}
			\Pro\left(M(\bm X) \in S\right) \leq e^\varepsilon \cdot \Pro\left(M(\bm X') \in S\right) + \delta,
		\end{align*}
		where the probability measure $\Pro$ is induced by the randomness of $M$ only.
	\end{Definition}
	\noindent According to the definition, the two parameters $\varepsilon$ and $\delta$ control the level of privacy against an adversary who attempts to detect the presence of a certain individual in the sample. The privacy constraint becomes more stringent as $\varepsilon,\delta$ tend to $0$. 
	
	\subsection*{Our contributions and related literature} 
	\quad
	
	\textbf{Lower bounds based on tracing attacks}. We establish the necessary cost of privacy by proving minimax risk lower bounds with the $(\varepsilon,\delta)$-differential privacy constraint. Specifically, we improve existing minimax risk lower bounds for low-dimensional mean estimation and prove new lower bounds for linear regression problems as well as high-dimensional \footnote{In computer science literature, the term ``high-dimension'' refers to settings in which the dimension is allowed to grow with the sample size, and asymptotic dependence on the dimension is of interest. In statistics literature, including this paper, ``high-dimension'' typically implies that the dimension is greater than the sample size, so sparsity assumptions are often introduced to make the problem feasible.} mean estimation. These lower bound results are based on the tracing adversary argument, which originated in the theoretical computer science literature \cite{bun2014fingerprinting, steinke2017between}. Early works in this direction were primarily concerned with the accuracy of releasing in-sample quantities, such as $k$-way marginals, with differential privacy constraints. Some more recent works \cite{dwork2015robust, kamath2018privately} applied the idea to obtain lower bounds for estimating population quantities such as mean vectors of discrete and Gaussian distributions. Below is a brief summary of our results as compared to existing results; the details are in Sections \ref{sec: mean} and \ref{sec: regression}.
	\begin{enumerate}
		\item [(1)] Improved lower bound for low-dimensional mean estimation. In Section \ref{sec: low-dim mean lower bound}, we show that the minimax squared $\ell_2$ risk of sub-Gaussian mean estimation with $(\varepsilon, \delta)$-differential privacy is at least $O\left(\frac{d^2\log(1/\delta)}{n^2\varepsilon^2}\right)$ (Theorem \ref{thm: low-dim mean lower bound}), which improves the $O\left(\frac{d^2}{n^2\varepsilon^2}\right)$ minimax lower bound by \cite{kamath2018privately} and matches the deterministic worst case lower bound by \cite{steinke2017between}. It is further shown that our lower bound is optimal as it can be attained by a differentially private algorithm, the noisy sample mean (Algorithm \ref{algo: low-dim mean estimation}; Theorem \ref{thm: low-dim mean upper bound}).
		\item [(2)] New lower bounds for linear regression and high-dimensional mean estimation. To the best of our knowledge, our minimax risk lower bounds for high-dimensional mean estimation and linear regression in both low and high dimensions are the first of their kind in the literature. In these three problems, the minimax squared $\ell_2$ risk lower bounds are of the order $O\left(\frac{(s\log d)^2}{n^2\varepsilon^2}\right)$ (Theorem \ref{thm: high-dim mean lower bound}), $O\left(\frac{d^2}{n^2\varepsilon^2}\right)$ (Theorem \ref{thm: low-dim regression lower bound}), and $O\left(\frac{(s\log d)^2}{n^2\varepsilon^2}\right)$ (Theorem \ref{thm: high-dim regression lower bound}) respectively, where $n, d$ and $s$ denote the sample size, the dimension, and the sparsity of true parameter vector. 
		
		For context, there exist several lower bound results for related but different problems: \cite{steinke2017tight} found that the sample complexity lower bound of selecting the top-$k$ largest coordinates of $d$-dimensional data depends linearly on $k$ and only logarithmically on $d$; \cite{bassily2014private} established an excess empirical risk lower bound of $O\left(\frac{d}{n\varepsilon^2}\right)$ for $(\varepsilon, \delta)$-differentially private empirical risk minimization, by explicitly constructing a worst-case strongly convex and Lipschitz objective function.	
	\end{enumerate}

	\textbf{Differentially private algorithms}. We show that the lower bound results are sharp up to logarithmic factors, by constructing differentially private algorithms with rates of convergence matching the corresponding lower bounds. 
	
	In low-dimensional problems, the algorithms (Algorithms \ref{algo: low-dim mean estimation} and \ref{algo: low-dim regression}) are similar to existing algorithms, such as the noisy Gaussian sample mean \cite{karwa2017finite} or noisy gradient descent \cite{bassily2014private}. For low-dimensional regression, our contribution is in obtaining an upper bound of the parameter estimation error $\E\|\hat\bbeta_{\text{private}} - \bbeta_{\text{true}}\|^2_2 = \tilde O\left(\frac{d^2\log(1/\delta)}{n^2\varepsilon^2}\right)$ (Theorem \ref{thm: low-dim regression upper bound}) for the noisy gradient descent algorithm, as opposed to the excess risk bound (or its empirical version) by previous works \cite{bassily2014private, bassily2019private}: $\E[\L_n(\hat\bbeta_{\text{private}}) - \L_n(\hat\bbeta_{\text{non-private}})] = O\left(\frac{\sqrt{d\log(1/\delta)}}{n\varepsilon}\right)$, where $\L_n$ is the least-square objective function of linear regression.
	
	For high-dimensional sparse estimation, our algorithms, to the best of our knowledge, are the first results achieving optimal rates of convergence with the $(\varepsilon,\delta)$-differential privacy constraint up to logarithmic factors. The high-dimensional mean estimation algorithm (Algorithms \ref{algo: high-dim mean estimation}) is based on a novel application of the ``peeling" algorithm first proposed by \cite{dwork2018differentially} for reporting top-$k$ coordinates of a vector. The high-dimensional linear regression algorithm (Algorithm \ref{algo: high-dim regression}) can be understood as a private version of iterative hard thresholding \cite{blumensath2009iterative, jain2014iterative}, which, roughly speaking, is a projected gradient descent algorithm onto the set of sparse vectors. The focus of our theoretical analysis is again on the parameter estimation error $\E\|\hat\bbeta_{\text{private}} - \bbeta_{\text{true}}\|^2_2 = \tilde O\left(\frac{(s\log d)^2\log(1/\delta)}{n^2\varepsilon^2}\right)$ (Theorems \ref{thm: high-dim mean upper bound} and \ref{thm: high-dim regression upper bound}), as opposed to excess risk results such as $\E[\L_n(\hat\bbeta_{\text{private}}) - \L_n(\bbeta_{\text{true}})] = O\left(\frac{s^3\log d}{n\varepsilon}\right)$ in \cite{kifer2012private} and $\E[\L_n(\hat\bbeta_{\text{private}}) - \L_n(\hat\bbeta_{\text{non-private}})] = O\left(\frac{\log d + \log(n/\delta)}{(n\varepsilon)^{2/3}}\right)$ in \cite{talwar2015nearly}.
	
	\subsection*{Other related literature}
	 In theoretical computer science, \cite{smith2011privacy} showed that under strong conditions for privacy parameters, some estimators attain the statistical convergence rates and hence privacy can be gained for free. \cite{bassily2014private,dwork2014analyze, talwar2015nearly} proposed differentially private algorithms for convex empirical risk minimization, principal component analysis, and high-dimensional sparse regression, and investigated the convergence rates of excess risk. 
	 
	In the statistics literature, there has also been a series of works that studied differential privacy in statistical estimation. 
	\cite{wasserman2010statistical} observed that, locally differentially private schemes \citep{kasiviswanathan2011can} seem to yield slower convergence rates than the optimal minimax rates in general; \cite{duchi2018minimax} developed a framework for deriving statistical minimax rates with the $\alpha$-local privacy constraint; \cite{rohde2018geometrizing} proved several minimax optimal rates of convergence under $\alpha$-local differential privacy and exhibited a mechanism that is minimax optimal for linear functionals based on randomized response. It has also been observed that $\alpha$-local privacy is a much stronger notion of privacy than $(\varepsilon, \delta)$-differential privacy that is hardly compatible with high-dimensional problems \citep{duchi2018minimax}. As we shall see in this paper, the cost of $(\varepsilon, \delta)$-differential privacy in high-dimensional statistical estimation is quite different from that of $\alpha$-local privacy.

	\subsection*{Organization of the paper} 
	The paper is organized as follows. Section \ref{sec: formulation} formally defines the ``cost of privacy" in terms of statistical minimax risk and introduces our technical tools for upper and lower bounding the cost of privacy in various statistical problems. These technical tools are then applied to mean estimation and linear regression problems in Sections \ref{sec: mean} and \ref{sec: regression} respectively. The numerical performance of the mean estimation and linear regression algorithms are demonstrated by simulated experiments in Section \ref{sec: simulations} and by real data analysis in Section \ref{sec: data analysis}. Section \ref{sec: discussions} discusses implications of our results in other statistical estimation problems with privacy constraints. The proofs of our theoretical results are in Section \ref{sec: proofs} and the supplementary materials \cite{supplement}.

\subsection*{Notation}
For real-valued sequences $\{a_n\}$ and $\{b_n\}$, we write $a_n \lesssim b_n$ if $a_n \leq cb_n$ for some universal constant $c \in (0, \infty)$, and $a_n \gtrsim b_n$ if $a_n \geq c'b_n$ for some universal constant $c' \in (0, \infty)$. We say $a_n \asymp b_n$ if $a_n \lesssim b_n$ and $a_n \gtrsim b_n$. In this paper, $c, C, c_0, c_1, c_2, \cdots, $ refer to universal constants, and their specific values may vary from place to place.

For a positive integer $k$, we write $[k]$ as short hand for $\{1, \cdots, k\}$. For a vector $\bm v \in \R^d$ and a subset $S \subseteq [d]$, we use $\bm v_S$ to denote the restriction of vector $\bm v$ to the index set $S$. We write $\supp(\bm v) := \{j \in [d]: v_j \neq 0\}$. $\|\bm v\|_p$ denotes the vector $\ell_p$ norm for $ 1\leq p \leq \infty$, with an additional convention that $\|\bm v\|_0$ denotes the number of non-zero coordinates of $\bm v$. For a positive definite matrix $\Sigma$, we define $\|\bm v\|_{\Sigma} = \sqrt{\bm v^\top \Sigma \bm v}$. For $\bm v \in \R^d$ and $R > 0$, let $\Pi_R(\bm v)$ denote the projection of $\bm v$ onto the $\ell_2$ ball $\{\bm u \in \R^d: \|\bm u\|_2 \leq R\}$.
	
\section{Problem Formulation}\label{sec: formulation}
In this section, we start with a formal definition of the ``cost of privacy" based on the minimax risk with differential privacy constraint, in Section \ref{sec: cost of privacy}. In Sections \ref{sec: upper bound general} and \ref{sec: lower bound general}, we provide an overview of our technical tools for upper and lower bounding the cost of privacy. 

\subsection{The Cost of Privacy}\label{sec: cost of privacy}
We quantify the cost of differential privacy in statistical estimation via the minimax risk with differential privacy constraint, defined as follows.

Let $\mathcal P$ denote a family of distributions supported on a set $\mathcal X$, and let $\bth: \mathcal P\to \Theta \subseteq \R^d$ denote a population quantity of interest. The statistician has access to a data set of $n$ i.i.d. samples, $\bm X=(\bm x_1,...,\bm x_n)\in\mathcal X^n$, drawn from some distribution $P\in\mathcal P$.

With the data, we estimate a population parameter $\bth(P)$ by an estimator $M(\bm X):\mathcal X^n\to\Theta$ that belongs to $\mathcal{M}_{\varepsilon, \delta}$, the collection of all $(\varepsilon, \delta)$-differentially private procedures. The performance of $M(\bm X)$ is measured by its distance to the truth $\bth(P)$: let $\rho: \Theta \times \Theta \to \R^+$ be a metric induced by a norm $\|\cdot\|$ on $\Theta$, namely $\rho(\bm \theta_1, \bm \theta_2) = \|\theta_1 - \theta_2\|$, and let $l: \R^+ \to \R^+$ be an increasing function, the minimax risk of estimating $\bth(P)$ with differential privacy constraint is defined as
\begin{align}\label{eq: private minimax risk}
\inf_{M\in \mathcal{M}_{\varepsilon, \delta}}\sup_{P\in\mathcal P} \E\left[l(\rho(M(\bm X),\bth(P)))\right].
\end{align}
The quantity characterizes the worst-case performance over $\mathcal P$ of the best $(\varepsilon, \delta)$-differentially private estimator. The difference between \eqref{eq: private minimax risk} and the usual, unconstrained minimax risk
\begin{align}\label{eq: non-private minimax risk}
	\inf_{M}\sup_{P\in\mathcal P} \E\left[l(\rho(M(\bm X),\bth(P)))\right].
\end{align}
is the ``cost of privacy". As \eqref{eq: non-private minimax risk} is well understood for mean estimation and linear regression problems, we focus on characterizing the constrained minimax risk \eqref{eq: private minimax risk} in this paper. More specifically, we establish upper and lower bounds of \eqref{eq: private minimax risk} with technical tools to be introduced in Sections \ref{sec: upper bound general} and \ref{sec: lower bound general}.

\subsection{Construction of Differentially Private Algorithms}\label{sec: upper bound general}
It is frequently the case that differentially private algorithms are constructed by perturbing the output of a non-private algorithm with random noise. Among the most prominent examples are the Laplace and Gaussian mechanisms.
\subsubsection*{The Laplace and Gaussian mechanisms}
As the name suggests, the Laplace and Gaussian mechanisms achieve differential privacy by perturbing an algorithm with Laplace and Gaussian noises respectively. The scale of such noises is determined by the sensitivity of  the algorithm: 
\begin{Definition}\label{def: sensitivity}
	For any algorithm $f$ mapping a data set $\bm X$ to $\R^d$, The $\ell_p$-sensitivity of $f$ is
	$$\Delta_{p}(f) = \sup_{\bm X, \bm X' \text{ adjacent}}\|f(\bm X) - f(\bm X')\|_p.$$
\end{Definition}
The sensitivity of an algorithm $f$ characterizes the magnitude of change in the output of $f$ resulted from replacing one element in an input data set; naturally, we introduce some perturbation of comparable scale, so that the differentially private version of $f$ is stable regardless of the presence or absence of any individual datum in the dataset. 

For algorithms with finite $\ell_1$-sensitivity, differential privacy can be attained by adding Laplace noises.
\begin{example}[The Laplace mechanism] \label{ex: Laplace mechanism}
	For any algorithm $f$ mapping a data set $\bm X$ to $\R^d$ such that $\Delta_1(f) < \infty$, $M_1(\bm X) := f(\bm X) + (\xi_1, \xi_2, \cdots, \xi_d)$, where $\xi_1, \xi_2, \cdots, \xi_d$ is an i.i.d. sample drawn from $\text{Laplace}(\Delta_1 f /\varepsilon)$, achieves $(\varepsilon, 0)$-differential privacy.
\end{example}
Adding Gaussian noises to algorithms with finite $\ell_2$-sensitivity guarantees differential privacy. 
\begin{example}[The Gaussian mechanism] \label{ex: Gaussian mechanism} For any algorithm $f$ mapping a data set $\bm X$ to $\R^d$ such that $\Delta_2(f) < \infty$, $M_2(\bm X) := f(\bm X) +  (\xi_1, \xi_2, \cdots, \xi_d)$, where $\xi_1, \xi_2, \cdots, \xi_k$ is an i.i.d. sample drawn from $N(0, 2(\Delta_2(f)/\varepsilon)^2\log(1.25/\delta))$, achieves $(\varepsilon, \delta)$-differential privacy.
\end{example}

Although conceptually simple, these mechanism can often lead to complex differentially private algorithms, thanks to the post-processing and composition properties of differential privacy.

\subsubsection*{Post-processing and Composition}
Conveniently, post-processing a differentially private algorithm preserves privacy.
\begin{fact}[Post-processing \citep{dwork2006calibrating, wasserman2010statistical}]\label{lemma: postprocess}
	Let $f$ be an $(\varepsilon, \delta)$-differentially private algorithm and $g$ be an arbitrary, deterministic mapping that takes $f(\bm X)$ as an input, then $g(f(\bm X))$ is $(\varepsilon, \delta)$-differentially private.
\end{fact}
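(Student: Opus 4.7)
The plan is to reduce the differential privacy of $g \circ f$ to the differential privacy of $f$ via a preimage argument. Fix any pair of adjacent datasets $\bm X, \bm X' \in \mathcal{X}^n$ and any measurable set $S \subseteq \mathcal{R}$ in the range of $g$. Since $g$ is a fixed deterministic mapping, the set $T := g^{-1}(S) = \{ y : g(y) \in S \}$ is measurable in the range of $f$, and the event $\{ g(f(\bm X)) \in S \}$ coincides exactly with the event $\{ f(\bm X) \in T \}$. The same identity holds with $\bm X$ replaced by $\bm X'$.

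The key step is then a one-line application of the definition of $(\varepsilon,\delta)$-differential privacy (Definition \ref{def: dp}) to $f$ with the test set $T$:
\begin{align*}
\Pro\bigl(g(f(\bm X)) \in S\bigr) &= \Pro\bigl(f(\bm X) \in T\bigr) \\
&\leq e^{\varepsilon}\, \Pro\bigl(f(\bm X') \in T\bigr) + \delta \\
&= e^{\varepsilon}\, \Pro\bigl(g(f(\bm X')) \in S\bigr) + \delta.
\end{align*}
Since $\bm X, \bm X'$ and $S$ were arbitrary, this is precisely the $(\varepsilon,\delta)$-differential privacy of $g \circ f$.

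Because $g$ is deterministic, there is essentially no obstacle: measurability of $T = g^{-1}(S)$ follows from $g$ being a (Borel) measurable deterministic map, which is implicit in saying that $g(f(\bm X))$ is a well-defined random variable. The only place one needs to be careful is if $g$ were allowed to be randomized; in that case one would instead integrate the inequality $\Pro(g(y) \in S \mid f(\bm X) = y)$ against the laws of $f(\bm X)$ and $f(\bm X')$ and appeal to Fubini, but this extension is not needed for the stated fact. Hence the proof is a direct change-of-variable argument and no further machinery is required.
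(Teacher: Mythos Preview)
Your argument is correct and is the standard preimage proof of the post-processing property. The paper itself does not give a proof of this fact; it is stated as a cited result from \cite{dwork2006calibrating, wasserman2010statistical}, so there is nothing to compare against beyond noting that your proposal supplies exactly the routine justification the paper omits.
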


Further, the privacy parameters are additive with respect to compositions of differentially private algorithms.
\begin{fact}[Composition \citep{dwork2006calibrating}]\label{lemma: additive composition}
	For $i = 1, 2$, let $f_i$ be $(\varepsilon_i, \delta_i)$-differentially private, then $f_1 \circ f_2$ is $(\varepsilon_1 + \varepsilon_2, \delta_1 + \delta_2)$-differentially private. 
\end{fact}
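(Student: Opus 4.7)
The plan is to prove the composition fact by analyzing the joint distribution of $(f_1(\bm X), f_2(\bm X))$ on adjacent datasets $\bm X, \bm X'$ and showing that the privacy loss accumulates additively in both the multiplicative factor $e^\varepsilon$ and the additive slack $\delta$. The key tool is the following standard equivalent formulation of $(\varepsilon, \delta)$-differential privacy, which cleanly separates the two sources of privacy leakage: $f$ is $(\varepsilon, \delta)$-DP if and only if for every adjacent pair $\bm X, \bm X'$ there exist measurable ``good'' sets $E_{\bm X}, E_{\bm X'}$ in the output space with $\Pro(f(\bm X) \notin E_{\bm X}) \leq \delta$ and $\Pro(f(\bm X') \notin E_{\bm X'}) \leq \delta$, such that for outputs restricted to these good sets the density ratio between $f(\bm X)$ and $f(\bm X')$ is bounded by $e^\varepsilon$ in both directions. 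This reformulation is exactly what enables clean additivity of $\delta$.

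Taking this characterization as given for both $f_1$ and $f_2$, I would fix adjacent $\bm X, \bm X'$ and any measurable $S \subseteq \mathcal R_1 \times \mathcal R_2$ in the product output space, and split
\begin{align*}
\Pro\bigl((f_1(\bm X), f_2(\bm X)) \in S\bigr) = \Pro(\cdot \in S,\ \text{both good}) + \Pro(\cdot \in S,\ \text{at least one bad}).
\end{align*}
The ``at least one bad'' term is bounded by $\delta_1 + \delta_2$ via a simple union bound, using only the probability-of-bad-event part of the reformulation and no density ratios. On the ``both good'' piece, I would condition on the realization $y_2$ of $f_2(\bm X)$ (this handles the adaptive case; for non-adaptive composition independence of random coins suffices), apply the $e^{\varepsilon_1}$ ratio bound to $f_1$ restricted to its good event, then integrate against $f_2(\bm X)$ and apply the $e^{\varepsilon_2}$ ratio bound to convert the integrator to $f_2(\bm X')$. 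Sequentially multiplying these two bounds gives at most $e^{\varepsilon_1 + \varepsilon_2} \Pro((f_1(\bm X'), f_2(\bm X')) \in S)$, which combined with the $\delta_1 + \delta_2$ slack yields the claim.

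The main obstacle is ensuring that the two $\delta$'s add cleanly rather than compounding multiplicatively with the $e^{\varepsilon_i}$ factors. A naive density-ratio argument, which directly bounds $\Pro(f_1(\bm X) \in T) \leq e^{\varepsilon_1} \Pro(f_1(\bm X') \in T) + \delta_1$ and then integrates against $f_2(\bm X)$, would produce a final slack of the form $\delta_1 + e^{\varepsilon_1} \delta_2$. Avoiding this inflation is precisely what the symmetric ``good event in both $\bm X$ and $\bm X'$'' reformulation is designed to accomplish: the bad events for $f_1$ and $f_2$ are both handled as raw probabilities, unmediated by any $e^{\varepsilon}$ factor. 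Once this is established for two mechanisms, the $k$-fold adaptive version follows by a routine induction on $k$.
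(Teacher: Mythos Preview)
The paper does not give its own proof of this fact; it is stated with a citation to \cite{dwork2006calibrating} and used thereafter as a black box. So there is no paper proof to compare against, only your proposal to assess on its own merits.

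Your overall strategy is the standard one, and you have correctly isolated the key obstacle: a naive iterated application of the defining inequality yields a slack of $\delta_1 + e^{\varepsilon_1}\delta_2$ rather than $\delta_1 + \delta_2$, and some decomposition that handles the $\delta$-mass \emph{before} any $e^{\varepsilon}$ factor is applied is required. However, the specific equivalence you invoke is not correct as stated. The claim that $(\varepsilon,\delta)$-differential privacy is equivalent to the existence of good sets $E_{\bm X}, E_{\bm X'}$, each of probability at least $1-\delta$, on which the \emph{pointwise} density ratio is bounded by $e^{\varepsilon}$, actually defines a strictly stronger notion (sometimes called probabilistic differential privacy). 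Counterexample: take $\varepsilon = 0$, let $f(\bm X)$ be uniform on $\{0,1\}$ and let $f(\bm X')$ put mass $\tfrac12+\delta$ on $0$ and $\tfrac12-\delta$ on $1$. This pair is $(0,\delta)$-indistinguishable in both directions, yet the density ratio equals $1$ nowhere, so every ``good set'' in your sense is empty and its complement has probability $1 > \delta$. The direction of the equivalence you need (DP $\Rightarrow$ good-set decomposition) is precisely the one that fails.

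The fix is to replace the pointwise-ratio reformulation by the correct one: the one-sided condition $\Pro(f(\bm X)\in S)\le e^{\varepsilon}\Pro(f(\bm X')\in S)+\delta$ for all $S$ is equivalent to the hockey-stick bound $\int \max\bigl(0,\, p_{\bm X}(y)-e^{\varepsilon}p_{\bm X'}(y)\bigr)\,dy \le \delta$, i.e., one may write $p_{\bm X} = \tilde p + r$ with $0 \le \tilde p \le e^{\varepsilon} p_{\bm X'}$, $r \ge 0$, and $\int r \le \delta$. Expanding the product $p_{1,\bm X}(y_1)p_{2,\bm X}(y_2) = (\tilde p_1 + r_1)(\tilde p_2 + r_2)$, bounding $\tilde p_1\tilde p_2 \le e^{\varepsilon_1+\varepsilon_2} p_{1,\bm X'}p_{2,\bm X'}$, and integrating the residual terms $r_1 p_{2,\bm X}$ and $\tilde p_1 r_2 \le p_{1,\bm X} r_2$ over the whole space gives exactly $\delta_1 + \delta_2$, with no $e^{\varepsilon}$ inflation. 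The rest of your plan (conditioning on $y_2$ for the adaptive case, induction for $k$-fold composition) is fine once this decomposition is substituted in. The distinction to keep in mind is that $(\varepsilon,\delta)$-DP bounds the \emph{excess mass} $\int r$ by $\delta$, not the \emph{probability} $P_{\bm X}(\{r>0\})$ of the bad region; your good-set formulation conflates the two.
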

The mechanisms and composition theorem reviewed in this section shall later enable us to construct differentially private algorithms for mean estimation and linear regression.

\subsection{Minimax Risk Lower Bounds with Differential Privacy Constraint}\label{sec: lower bound general}

Our technique for proving lower bounds of the minimax risk \eqref{eq: private minimax risk} is based on the ``tracing adversary" argument originally proposed by \cite{bun2014fingerprinting}. It has proven to be a powerful tool for obtaining lower bounds in the context of releasing sample quantities \cite{steinke2017between, steinke2017tight} and for Gaussian mean estimation \cite{dwork2015robust, kamath2018privately}. In this paper, we refine the tracing adversary technique to prove a sharper lower bound for low-dimensional mean estimation compared to previous works \cite{dwork2015robust, kamath2018privately} as well as new lower bounds for sparse mean estimation and linear regression problems.  

Informally, a tracing adversary (or tracing attack) is an algorithm that attempts to detect the absence/presence of a candidate datum $\tilde {\bm x}$ in a target data set $\bm X$, by looking at an estimator $M(\bm X)$ computed from the data set. If one can construct a tracing adversary that is powerful given an accurate estimator, an argument by contradiction leads to a lower bound: suppose a differentially private estimator computed from the target data set is sufficiently accurate, the tracing adversary will be able to determine whether a given datum belongs to the data set or not, thereby contradicting with the differential privacy guarantee. The privacy guarantee and the tracing adversary together ensure that a differentially private estimator cannot be ``too accurate". In Sections \ref{sec: low-dim mean lower bound}, \ref{sec: high-dim mean lower bound}, \ref{sec: low-dim regression lower bound} and \ref{sec: high-dim regression lower bound}, we shall formally define and analyze such tracing attacks for mean estimation and linear regression problems. For now, we illustrate this general approach with a concrete example of sub-Gaussian mean estimation.

\subsubsection*{Example: a preliminary lower bound for mean estimation}
To illustrate this approach, we consider a tracing attack proposed by \cite{dwork2015robust} and show how its theoretical properties imply a minimax risk lower bound for differentially private mean estimation of $d$-dimensional sub-Gaussian$(\sigma)$ distribution. Let $\bm X = \{\bm x_1, \bm x_2, \cdots, \bm x_n\}$ be an i.i.d. sample drawn from the $d$-dimensional product distribution supported on $\{-\sigma, \sigma\}^d$, which is clearly sub-Gaussian$(\sigma)$, with unknown mean vector $\bmu \in [-\sigma, \sigma]^d$. The tracing attack is given by
\begin{align*}
	\A_{\bmu}(\bm x, M(\bm X)) = \langle \bm x - \bm\mu, M(\bm X)\rangle.
\end{align*}
The theoretical properties of this tracing attack are presented in the following lemma.
\begin{Lemma}\label{lm: low-dim mean attack}
	Let $\bm X = \{\bm x_1, \bm x_2, \cdots, \bm x_n\}$ be an i.i.d. sample drawn from the $d$-dimensional product distribution supported on $\{-\sigma, \sigma\}^d$ with unknown mean vector $\bmu \in [-1, 1]^d$.
	\begin{enumerate}
		\item For each $i \in [n]$, let $\bm X'_i$ denote the data set obtained by replacing $\bm x_i$ in $\bm X$ with an independent copy, then for every $\delta > 0$, every $i \in [n]$ and every $\bmu$ we have
		\begin{align*}
			\Pro(\A_{\bmu}(\bm x_i, M(\bm X'_i)) > \sigma^2\sqrt{8d\log(1/\delta)}) < \delta.
		\end{align*}
		\item There is a universal constant $c_1$ such that, if $n < c_1\sqrt{d/\log(1/\delta)}$, we can find a prior distribution $\bm \pi$ of $\bmu$ so that
		\begin{align*}
			\Pro_{\bm X, \bmu}\left(\sum_{i \in [n]} \A_\bmu (\bm x_i, M(\bm X)) \leq n\sigma^2\sqrt{8d\log(1/\delta)}, \|M(\bm X) - \bar{\bm X}\|_2  < c_2\sigma \sqrt{d}\right) < \delta
		\end{align*}
	for an appropriate universal constant $c_2$.
	\end{enumerate}
\end{Lemma}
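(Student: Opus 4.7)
My plan is to handle the two claims separately.

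For part (1), I would exploit that $\bm x_i$ is independent of $\bm X'_i$ (hence of $M(\bm X'_i)$) by construction. Conditioning on $M(\bm X'_i)=\bm v$, the attack
\[
\A_\bmu(\bm x_i,\bm v)=\sum_{j=1}^d(x_{ij}-\mu_j)v_j
\]
is a sum of independent, mean-zero random variables with each summand bounded in $[-2\sigma|v_j|,2\sigma|v_j|]$ since $x_{ij}\in\{-\sigma,\sigma\}$ and $\mu_j\in[-\sigma,\sigma]$. By Hoeffding's lemma the sum is sub-Gaussian with variance proxy $\sigma^2\|\bm v\|_2^2$. Assuming without loss of generality that $M$ outputs into $[-\sigma,\sigma]^d$ (a deterministic projection that only improves the error since $\bmu$ lies there), $\|\bm v\|_2^2\leq\sigma^2d$, and the sub-Gaussian tail at $t=\sigma^2\sqrt{8d\log(1/\delta)}$ gives probability at most $\exp(-4\log(1/\delta))=\delta^4<\delta$.

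For part (2), I would take $\bm\pi$ to be the product prior $\mu_j\sim\mathrm{Unif}[-\sigma,\sigma]$, under which $\bar{\bm X}$ is sufficient for $\bmu$ in the coordinatewise Bernoulli model on $\{-\sigma,\sigma\}$. Using the identity $\sum_{i=1}^n\A_\bmu(\bm x_i,M(\bm X))=n\langle\bar{\bm X}-\bmu,M(\bm X)\rangle$, on the event $\|M(\bm X)-\bar{\bm X}\|_2<c_2\sigma\sqrt{d}$ I write $M(\bm X)=\bar{\bm X}+\bm e$ and expand
\[
\sum_{i=1}^n\A_\bmu(\bm x_i,M(\bm X))=n\|\bar{\bm X}-\bmu\|_2^2+n\langle\bar{\bm X}-\bmu,\bmu\rangle+n\langle\bar{\bm X}-\bmu,\bm e\rangle.
\]
The leading term has conditional mean $d\sigma^2-\|\bmu\|_2^2=\Theta(d\sigma^2)$ under $\bm\pi$ and concentrates around its mean at scale $\sigma^2\sqrt{d\log(1/\delta)}$ via Bernstein's inequality applied to the $d$ independent squared-sub-Gaussian coordinate contributions. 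The middle term, conditional on $\bmu$, is a sum of $n$ i.i.d.\ mean-zero sub-Gaussian terms with total proxy $n\sigma^2\|\bmu\|_2^2$, so it is $O(\sigma^2\sqrt{nd\log(1/\delta)})$ with probability $\geq 1-\delta$; this is $o(d\sigma^2)$ precisely in the stated regime $n\lesssim\sqrt{d/\log(1/\delta)}$, which is the origin of the threshold on $n$.

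The main obstacle is the third term $n\langle\bar{\bm X}-\bmu,\bm e\rangle$, because $\bm e$ is an arbitrary data-dependent (possibly randomized) vector and a blunt Cauchy-Schwarz bound $\asymp c_2\sigma^2d\sqrt{n}$ is too coarse to be dominated by the main term. The resolution I would use is the sufficiency of $\bar{\bm X}$: given $\bar{\bm X}$, $\bm e$ is conditionally independent of $\bmu$, and a direct posterior computation (the posterior of $(1+\mu_j/\sigma)/2$ given the per-coordinate success count $k_j=\#\{i:x_{ij}=\sigma\}$ is $\mathrm{Beta}(k_j+1,n-k_j+1)$) yields $\E[\mu_j\mid\bar{\bm X}]=n\bar X_j/(n+2)$ together with a sub-Gaussian proxy $O(\sigma^2/n)$ for $\mu_j\mid\bar{\bm X}$ via the Marchal-Arbel bound for Beta distributions. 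Consequently $\langle\bar{\bm X}-\bmu,\bm e\rangle\mid(\bar{\bm X},\bm e)$ has mean $(2/(n+2))\langle\bar{\bm X},\bm e\rangle$ of magnitude $O(c_2\sigma^2d/n)$ and sub-Gaussian proxy $O(c_2^2\sigma^4d/n)$, so $n|\langle\bar{\bm X}-\bmu,\bm e\rangle|\leq O(c_2\sigma^2d)+O(c_2\sigma^2\sqrt{nd\log(1/\delta)})$ with probability $\geq 1-\delta$. Choosing the universal constants $c_1,c_2$ small enough absorbs this bound into $\tfrac12(d\sigma^2-nT)$ with $nT=n\sigma^2\sqrt{8d\log(1/\delta)}$, and a union bound over the three fluctuation events delivers $\sum_i\A_\bmu(\bm x_i,M(\bm X))>nT$ with probability $\geq 1-\delta$ on the accuracy event, completing the proof.
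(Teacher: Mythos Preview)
Your proof of part (1) is essentially the same Hoeffding argument the paper uses.

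For part (2), your approach is correct but takes a genuinely different route from the paper. The paper combines the attack sum and the accuracy penalty into a single quantity
\[
W_j=\frac{M(\bm X)_j}{\sigma}\sum_{i=1}^n(x_{ij}-\mu_j)+\frac{1}{\alpha}\bigl|M(\bm X)_j-\bar X_j\bigr|,
\]
computes the posterior $\E[W_j\mid\bm X,M(\bm X)]$ via the same Beta conjugacy you invoke, minimizes over $M(\bm X)_j\in[-\sigma,\sigma]$ to obtain a data-only lower bound $\E[W_j\mid\mathcal F]\gtrsim \sigma\tilde S_j^2/n^2$, and then runs an MGF argument using that $\tilde S_j$ is marginally uniform on $\{-n,-n+2,\dots,n\}$ under $\bm\pi$; the final Chernoff bound $\exp(-cd/n^2)<\delta$ is what produces the threshold $n\lesssim\sqrt{d/\log(1/\delta)}$. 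Your decomposition into $T_1+T_2+T_3$ with term-by-term concentration (including the Marchal--Arbel sub-Gaussian bound on the Beta posterior to handle the data-dependent $T_3$) is more elementary and modular, avoiding both the $W_j$ construction and the marginal computation for $\tilde S_j$, at the cost of a few more union-bound pieces; in particular you also need a separate high-probability lower bound on $d\sigma^2-\|\bmu\|_2^2$ under $\bm\pi$, which is immediate from Hoeffding.

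One small correction to your exposition: the sample-size threshold is not driven by the middle term $T_2$. Your bound $|T_2|\lesssim\sigma^2\sqrt{nd\log(1/\delta)}$ only requires $n\ll d/\log(1/\delta)$ to be $o(d\sigma^2)$. The binding constraint is the comparison of the main term $T_1\asymp d\sigma^2$ with the target $nT=n\sigma^2\sqrt{8d\log(1/\delta)}$: requiring $nT\ll d\sigma^2$ is exactly $n\ll\sqrt{d/\log(1/\delta)}$. With that adjustment, your final choice of $c_1,c_2$ goes through as written.
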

The lemma is similar in spirit to Lemma 12 in \cite{dwork2015robust}, and proved in Section \ref{sec: proof of lm: low-dim mean attack} of supplementary materials \cite{supplement}. We use a different attack and the proof is rigorous and much simpler. According to the lemma, when $M(\bm X)$ is close to the non-private sample mean $\bar{\bm X}$, the attack takes large values if the candidate datum belongs to the data set $\bm X$ from which $M(\bm X)$ is computed. As we have sketched informally, it is then possible to derive a lower bound of $\|M(\bm X) - \bar{\bm X}\|$, as follows. If $n < c_1\sqrt{d/\log(1/\delta)}$ and $M(\bm X)$ is $(\varepsilon, \delta)$-differentially private with $0 < \varepsilon < 1$ and $\delta = o(1/n)$, let $\mathcal C = \{\sum_{i \in [n]} \A_\bmu (\bm x_i, M(\bm X)) \leq n\sigma^2\sqrt{8d\log(1/\delta)}\}$,
\begin{align*}
	&\Pro_{\bm X, \bmu}(\|M(\bm X) - \bar{\bm X}\|_2  < c_2\sigma \sqrt{d}) \\
	\leq  ~	& \Pro_{\bm X, \bmu}\left(\mathcal C \cap \{\|M(\bm X) - \bar{\bm X}\|_2  < c_2\sigma \sqrt{d}\}\right) + \sum_{i \in [n]}\Pro_{\bm X, \bmu}\left(\A_\bmu (\bm x_i, M(\bm X)) > \sigma^2\sqrt{8d\log(1/\delta)}\right) \\
	\leq ~ & \Pro_{\bm X, \bmu}\left(\mathcal C \cap \{\|M(\bm X) - \bar{\bm X}\|_2  < c_2\sigma \sqrt{d}\}\right) + n\left(e^{\varepsilon}	\Pro(\A_{\bmu}(\bm x_i, M(\bm X'_i)) > \sigma^2\sqrt{8d\log(1/\delta)}) + \delta\right) \\
	\leq ~& \delta + n(e^{\varepsilon}\delta + \delta) = o(1).
\end{align*}
The second inequality is due to differential privacy and the third inequality uses Lemma \ref{lm: low-dim mean attack}. It follows that, when $n < c_1\sqrt{d/\log(1/\delta)}$, we have
\begin{align}\label{eq: low-dim mean prelim lower bound 1}
	\inf_{M \in \mathcal M_{\varepsilon, \delta}}\sup_{\bmu}\E\|M(\bm X) - \bar{\bm X}\|_2 \geq \inf_{M \in \mathcal M_{\varepsilon, \delta}}\E_{\bm\pi}\E_{X|\bmu}\|M(\bm X) - \bar{\bm X}\|_2 \gtrsim \sigma\sqrt{d}.
\end{align}
It should be noted, however, that the lower bound result is unsatisfactory in two important ways. First, as formulated in Section \ref{sec: cost of privacy}, we are in fact interested in lower bounding a related but distinct quantity, $\inf_{M \in \mathcal M_{\varepsilon, \delta}}\sup_{\bmu}\E\|M(\bm X) - \bmu\|_2$. Second, the sample size range $n < c_1\sqrt{d/\log(1/\delta)}$ is somewhat artificial; for low-dimensional mean estimation problems, the usual setting of $n \gtrsim d$ is of greater interest. In Section \ref{sec: low-dim mean lower bound}, we shall resolve these issues and, on the basis of the same tracing attack and Lemma \ref{lm: low-dim mean attack}, establish an optimal lower bound for the mean estimation problem. 
\section{The Cost of Privacy in Mean Estimation}\label{sec: mean}

In this section, we study the cost of $(\varepsilon, \delta)$-differential privacy in estimating the mean vector of a $d$-dimensional sub-Gaussian$(\sigma)$ distribution. Formally, an $\R^d$-valued random variable $\bm x$ follows a $d$-dimensional elementwise sub-Gaussian$(\sigma)$ distribution if for $\bmu = \E \bm x$ and $\bm e_k$ (the $k$th standard basis vector of $\R^d$), we have
\begin{align*}
	\E \exp\left(\lambda\langle \bm x - \bmu, \bm e_k\rangle\right) \leq \exp(\lambda^2\sigma^2), \forall \lambda \in \R, k\in[d].
\end{align*}
We begin with sharpening the preliminary lower bound \eqref{eq: low-dim mean prelim lower bound 1}, in Section \ref{sec: low-dim mean lower bound}. The lower bound is then shown to be optimal via an $(\varepsilon, \delta)$-differentially private estimator with convergence rate attaining the lower bound. 

We also study the cost of differential privacy in sparse mean estimation, where the unknown mean vector $\bmu \in \R^d$ has only a small fraction of non-zero coordinates. This sparse model is useful when the data's dimension $d$ outnumbers the sample size $n$, rendering the usual sample mean estimator sub-optimal. Instead, if the unknown mean vector is indeed sparse, thresholding the sample mean have been shown to achieve optimal statistical accuracy \cite{donoho1994statistical, johnstone1994minimax}. We establish in Section \ref{sec: high-dim mean lower bound} a minimax risk lower bound for estimating sparse mean with differential privacy constraint, and match this lower bound with a differentially private estimator of the sparse mean in Section \ref{sec: high-dim mean upper bound}.
	
\subsection{Lower bound of low-dimensional mean estimation}\label{sec: low-dim mean lower bound}
In this section, we prove a sharp lower bound for estimating a $d$-dimensional sub-Gaussian mean by improving the preliminary lower bound in Section \ref{sec: lower bound general}. We consider the class of $d$-dimensional sub-Gaussian$(\sigma)$ distributions with mean vector in $\Theta = \{\bmu \in \R^d: \|\bmu\|_\infty < \sigma\}$ and denote the class by $\mathcal P(\sigma, d, \Theta)$.

The first improvement is a relaxation of the sample size range $n \lesssim \sqrt{d/\log(1/\delta)}$.
\begin{Lemma}\label{lm: low-dim mean group privacy}
	Let $\bm Y = \{\bm y_1, \bm y_2, \cdots, \bm y_n\}$ be sampled with replacement from a set of deterministic vectors $\bm Z = \{\bm z_1, \bm z_2, \cdots, \bm z_m \}$ with $n = km$ and $k \geq 1$. There exists a choice of $\bm Z$ with each $\bm z_i \in \{-\sigma, \sigma\}^d$, $m  = c_1\sqrt{d/\log(1/\delta)} \gtrsim 1$ and $k \asymp \log(1/\delta)/\varepsilon$ such that
	\begin{align*}
		\E\|M(\bm Y) - \E\bm y_1\|_2 \gtrsim \sigma\sqrt{d}
	\end{align*}
for every $(\varepsilon, \delta)$-differentially private $M$ if $0 < \varepsilon < 1$, $ n^{-1}\exp(-n\varepsilon) < \delta < n^{-(1+\omega)}$ for some fixed constant $\omega > 0$, and $\log(\delta)/\log(n)$ is non-increasing in $n$.
\end{Lemma}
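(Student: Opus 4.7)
The plan is to reduce Lemma \ref{lm: low-dim mean group privacy} to the preliminary bound derived from Lemma \ref{lm: low-dim mean attack}, via a group-privacy argument that treats $\bm Z$ as the ``true'' dataset of size $m$ and absorbs the resampling step into the mechanism. Since changing one $\bm z_i$ only affects entries of $\bm Y$ whose sampling index equals $i$, each coordinate swap in $\bm Z$ induces at most $O(k)$ swaps in $\bm Y$ with high probability, so group privacy converts the $(\varepsilon, \delta)$-guarantee of $M$ into an approximate $(K\varepsilon, \delta_*)$-guarantee for the composite mechanism $\tilde M(\bm Z) := M(\bm Y(\bm Z))$. The tracing-attack argument of Section \ref{sec: lower bound general}, applied to $\tilde M$ at effective sample size $m \asymp \sqrt{d/\log(1/\delta)}$, will then deliver the claim.

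Concretely, I would first couple $\bm Y(\bm Z)$ and $\bm Y(\bm Z')$ for adjacent $\bm Z, \bm Z'$ through shared uniform indices $U_1, \ldots, U_n \in [m]$, set $N_i = |\{j : U_j = i\}|$, and define the truncated mechanism
\[
\tilde M(\bm Z) = \begin{cases} M(\bm Y(\bm Z)), & \text{if } \max_i N_i \le K, \\ \bm 0, & \text{otherwise,} \end{cases}
\]
with threshold $K = (1+\lambda)k$ for a constant $\lambda>0$ to be chosen. On the good event $\mathcal E = \{\max_i N_i \le K\}$, the samples $\bm Y(\bm Z), \bm Y(\bm Z')$ differ in at most $K$ positions, so iterating the $(\varepsilon,\delta)$-definition yields
\[
\Pr(\tilde M(\bm Z) \in S) \;\le\; e^{K\varepsilon} \Pr(\tilde M(\bm Z') \in S) + K e^{(K-1)\varepsilon}\delta + \Pr(\mathcal E^c),
\]
so $\tilde M$ is $(K\varepsilon, \delta_*)$-differentially private on $\bm Z$-datasets with $\delta_* = K e^{(K-1)\varepsilon}\delta + \Pr(\mathcal E^c)$. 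A Bernstein-type Chernoff bound on the Binomial$(n, 1/m)$ counts $N_i$ together with a union bound gives $\Pr(\mathcal E^c) \le m\exp(-\lambda^2 k/(2+\lambda))$. After a harmless post-processing projection of $\tilde M$ onto $[-\sigma, \sigma]^d$, the tracing attack $\A_\bmu(\bm z, \tilde M) = \langle \bm z - \bmu, \tilde M\rangle$ verbatim satisfies the two properties of Lemma \ref{lm: low-dim mean attack} at sample size $m$.

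The main obstacle is the parameter tuning. Feeding the $(K\varepsilon, \delta_*)$-guarantee of $\tilde M$ into the derivation that follows Lemma \ref{lm: low-dim mean attack} yields, for the prior $\bm \pi$ supplied by Lemma \ref{lm: low-dim mean attack},
\[
\Pr\bigl(\|\tilde M(\bm Z) - \bar{\bm Z}\|_2 < c_2\sigma\sqrt{d}\bigr) \;\le\; \delta_* + m(e^{K\varepsilon}+1)\delta_*,
\]
and one must arrange $m e^{K\varepsilon}\delta_* = o(1)$ despite $e^{K\varepsilon}$ ballooning to a polynomial in $1/\delta$. Writing $k = c\log(1/\delta)/\varepsilon$ so that $K\varepsilon = (1+\lambda)c\log(1/\delta)$, this splits into (i) a group-privacy term $m^2 K\exp(2K\varepsilon)\delta = o(1)$ and (ii) a Chernoff term $m^2\exp(K\varepsilon - \lambda^2 k/(2+\lambda)) = o(1)$. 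Both can be met simultaneously by choosing $\lambda$ large (as a constant depending on $\varepsilon, \omega$) and $c$ in a suitable range, because the hypothesis $\delta < n^{-(1+\omega)}$ forces $\log m, \log K \le \log n \le \log(1/\delta)/(1+\omega)$, opening a constant-factor slack between $\log(1/\delta)$ and the cost terms $K\varepsilon$ and $\log m$; the other hypotheses $\delta > n^{-1}e^{-n\varepsilon}$ and the monotonicity of $\log(\delta)/\log(n)$ are used only to keep $k \ge 1$ and to preserve these inequalities in the relevant asymptotic regime. With the constants so chosen, $\Pr(\|\tilde M - \bar{\bm Z}\|_2 < c_2\sigma\sqrt{d}) = o(1)$, hence $\E\|\tilde M - \bar{\bm Z}\|_2 \gtrsim \sigma\sqrt{d}$. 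Since $\tilde M = M(\bm Y)$ on $\mathcal E$ and $\|\tilde M - \bar{\bm Z}\|_2 = \|\bar{\bm Z}\|_2 \le \sigma\sqrt{d}$ on $\mathcal E^c$, the triangle inequality and $\Pr(\mathcal E^c) = o(1)$ upgrade this to $\E\|M(\bm Y) - \bar{\bm Z}\|_2 \gtrsim \sigma\sqrt{d}$. Averaging over $\bm\pi$ then exhibits a realization $\bm Z \in \{-\sigma, \sigma\}^{d\times m}$ attaining the bound, and the identity $\bar{\bm Z} = \E\bm y_1$ completes the proof.
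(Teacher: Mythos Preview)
Your high-level strategy---absorb the resampling into the mechanism, control the induced group size via a tail bound on $\max_i N_i$, invoke group privacy, and then rerun the tracing-attack argument of Lemma~\ref{lm: low-dim mean attack} at sample size $m$---is exactly the paper's. The paper also passes through a realization of $\bm Z$ at the end, as you do. Where you diverge is in the concentration analysis for $\max_i N_i$: the paper splits into two regimes (polynomial versus super-polynomial decay of $\delta$ in $n$) and uses the balls-in-bins lemma of Raab--Steger in the first regime and a Chernoff bound only in the second, whereas you attempt a single, unified Chernoff treatment with threshold $K=(1+\lambda)k$.

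Your unified route is more economical, but there is a genuine technical gap as written. The Bernstein-type tail you invoke, $\Pr(\mathcal E^c)\le m\exp\bigl(-\lambda^2 k/(2+\lambda)\bigr)$, has an exponent that grows only linearly in $\lambda$. Feeding this into your term~(ii), $m\,e^{K\varepsilon}\Pr(\mathcal E^c)$, requires (after absorbing the $\log m$ slack supplied by $\delta<n^{-(1+\omega)}$) that $\lambda^2/((2+\lambda)\varepsilon)>(1+\lambda)$, i.e.\ $\lambda^2>\varepsilon(\lambda^2+3\lambda+2)$. As $\varepsilon\uparrow 1$ this reduces to $0>3\lambda+2$, which has no positive solution; so for $\varepsilon$ close to $1$ no choice of $\lambda$ closes your argument. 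The fix is to replace the Bernstein form by the sharp multiplicative Chernoff bound
\[
\Pr\bigl(N_i>(1+\lambda)k\bigr)\le \exp\!\Bigl(-k\bigl[(1+\lambda)\log(1+\lambda)-\lambda\bigr]\Bigr),
\]
whose exponent scales like $(1+\lambda)\log(1+\lambda)$. With this in hand one can first pick $\lambda=\lambda(\varepsilon,\omega)$ large enough that $(1+\lambda)\log(1+\lambda)-\lambda>\varepsilon(1+\lambda)(1+C/\omega)$ for the relevant constant $C$, and then pick $c$ in the nonempty window this opens between the constraints coming from (i) and (ii). The paper sidesteps the issue by handling the delicate regime $k\asymp\log m$ with Raab--Steger, which directly yields a constant threshold $r\log n$ with the right size, but your approach also works once the correct tail is used.
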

Lemma \ref{lm: low-dim mean group privacy} is proved in Section \ref{sec: proof of lm: low-dim mean group privacy}. In essence, this lemma improves the lower bound \eqref{eq: low-dim mean prelim lower bound 1} by extending its range of validity to $n \lesssim \sqrt{d\log(1/\delta)}/\varepsilon$, as the discrete uniform distribution described in the lemma is sub-Gaussian$(\sigma)$ with $\bmu \in \Theta$ thanks to the choice of $\bm z_i \in \{-\sigma, \sigma\}^d$. 

On the basis of Lemma \ref{lm: low-dim mean group privacy}, we are able to translate the lower bound result to the more interesting large $n$ regime, as described by the following theorem.
\begin{Theorem}\label{thm: low-dim mean lower bound}
	Let $\bm X = \{\bm x_1, \bm x_2, \cdots, \bm x_n\}$ be an i.i.d. sample drawn from some distribution in $\mathcal P(d, \sigma, \Theta)$ with mean $\E \bm x_1 = \bmu$. If $0 < \varepsilon < 1$, $n^{-1}\exp(-n\varepsilon) < \delta < n^{-(1+\omega)}$ for some fixed constant $\omega > 0$ with $\log(\delta)/\log(n)$ non-increasing in $n$, $d/\log(1/\delta) \gtrsim 1$ and $n \gtrsim \sqrt{d\log(1/\delta)}/\varepsilon$, we have
	\begin{align}\label{eq: low-dim mean lower bound}
		\inf_{M \in \mathcal M_{\varepsilon, \delta}}\sup_{\mathcal P(d, \sigma, \Theta)}\E\|M(\bm X) - \bmu\|_2^2 \gtrsim \sigma^2\left(\frac{d}{n} + \frac{d^2\log(1/\delta)}{n^2\varepsilon^2}\right).
	\end{align}
\end{Theorem}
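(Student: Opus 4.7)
The stated lower bound is an additive sum of the classical non-private rate $\sigma^2 d/n$ and the privacy cost $\sigma^2 d^2\log(1/\delta)/(n^2\varepsilon^2)$, and I would establish each separately. The non-private component is immediate: since $\mathcal{M}_{\varepsilon,\delta}$ is contained in the class of all randomized estimators, the DP-constrained minimax risk dominates the unconstrained minimax risk, and the latter is of order $\sigma^2 d/n$ by a standard Le Cam or Assouad argument applied to $\mathcal{P}(d,\sigma,\Theta)$.

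For the privacy cost, my plan is to invoke Lemma~\ref{lm: low-dim mean group privacy} after a rescaling. Writing the target rate as $\tilde\sigma^2 d$ with $\tilde\sigma := \sigma\sqrt{d\log(1/\delta)}/(n\varepsilon)$, the assumption $n \gtrsim \sqrt{d\log(1/\delta)}/\varepsilon$ guarantees $\tilde\sigma \lesssim \sigma$. Product distributions supported on $\{-\tilde\sigma,\tilde\sigma\}^d$ with mean in $[-\tilde\sigma,\tilde\sigma]^d$ therefore form a sub-family of $\mathcal{P}(d,\sigma,\Theta)$, to which Lemma~\ref{lm: low-dim mean group privacy} with $\sigma$ replaced by $\tilde\sigma$ applies, delivering $\E\|M(\bm Y)-\bmu\|_2 \gtrsim \tilde\sigma\sqrt{d}$ for every $(\varepsilon,\delta)$-DP estimator $M$ at the lemma's native sample size $n_\star := \sqrt{d\log(1/\delta)}/\varepsilon$.

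The next step lifts this bound from $n_\star$ samples to $n \geq n_\star$ samples. Given an $(\varepsilon,\delta)$-DP estimator $M$ on $n$-sample datasets, the padded map $M'(\bm Y) := M(\bm Y,\bm 0,\ldots,\bm 0)$ is $(\varepsilon,\delta)$-DP on $n_\star$-sample datasets, because altering one $\bm Y_i$ alters exactly one entry of the padded dataset. To recover the i.i.d.\ formulation that the theorem requires, I would take the hard data-generating distribution to be the mixture $P^* := (n_\star/n)\tilde P + (1-n_\star/n)\delta_{\bm 0}$ with $\tilde P$ in the rescaled sub-family. By binomial concentration, $n$ i.i.d.\ draws from $P^*$ contain roughly $n_\star$ non-zero entries with constant probability, and on that event (after averaging over permutations of the inputs to $M$) the dataset is distributed like the padded construction, so the lower bound from the previous step transfers to the i.i.d.\ $n$-sample problem conditional on this event.

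The main obstacle will be propagating the bound $\|M-\bmu(\tilde P)\|_2 \gtrsim \tilde\sigma\sqrt{d}$ from the padded problem into the correct bound on $\|M - \bmu(P^*)\|_2$, given the mean shift $\bmu(P^*) = (n_\star/n)\bmu(\tilde P)$. The key is a triangle-inequality decomposition combined with a careful choice of the hard prior $\bm\pi$ on $\bmu(\tilde P)$ (so that $\|\bmu(P^*) - \bmu(\tilde P)\|$ and the lemma's bound $\tilde\sigma\sqrt{d}$ are comparable in a controlled way), followed by a Markov-type conversion from the high-probability statement in Lemma~\ref{lm: low-dim mean group privacy} into an expectation lower bound. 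Putting everything together yields $\E\|M(\bm X)-\bmu(P^*)\|_2^2 \gtrsim \tilde\sigma^2 d = \sigma^2 d^2 \log(1/\delta)/(n^2\varepsilon^2)$, and adding the non-private and private bounds gives the theorem up to absolute constants.
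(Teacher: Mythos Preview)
Your high-level plan---reduce to Lemma~\ref{lm: low-dim mean group privacy} via a mixture with $\delta_{\bm 0}$---is exactly the paper's construction, but your handling of the mean shift does not go through, and the preliminary $\sigma\to\tilde\sigma$ rescaling is the source of the trouble.

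Under the hard prior behind Lemmas~\ref{lm: low-dim mean attack} and~\ref{lm: low-dim mean group privacy} each coordinate of $\bmu(\tilde P)$ is (essentially) uniform on $[-\tilde\sigma,\tilde\sigma]$, so $\|\bmu(\tilde P)\|_2 \asymp \tilde\sigma\sqrt{d}$ with overwhelming probability; you cannot shrink this by ``choosing the prior carefully'' without simultaneously shrinking the lower-bound constant produced in Lemma~\ref{lm: low-dim mean attack}. Hence in the triangle inequality $\|M-\bmu(P^*)\|_2 \geq \|M-\bmu(\tilde P)\|_2 - (1-\alpha)\|\bmu(\tilde P)\|_2$, with $\alpha = n_\star/n$ arbitrarily small in the regime $n\gg n_\star$, the subtracted term is of the same order as the bound you are trying to retain, and the argument collapses.

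The paper sidesteps this by rescaling the \emph{estimator} rather than the data. It keeps the hard distribution at scale $\sigma$, forms the same mixture $P^*=\alpha\tilde P + (1-\alpha)\delta_{\bm 0}$ so that $\bmu=\alpha\,\E\bm y_1$, and for each realization of the non-zero index set $S$ sets $\tilde M(\bm X_S):=\alpha^{-1}M(\bm X_S\cup\{\bm 0\}^{n-|S|})$. Then $\tilde M$ is still $(\varepsilon,\delta)$-DP in $\bm X_S$, and the identity $\|M(\bm X)-\bmu\|_2=\alpha\,\|\tilde M(\bm X_S)-\E\bm y_1\|_2$ is \emph{exact}, so Lemma~\ref{lm: low-dim mean group privacy} applied to $\tilde M$ gives $\E\|M(\bm X)-\bmu\|_2 \gtrsim \alpha\sigma\sqrt{d}=\sigma d\sqrt{\log(1/\delta)}/(n\varepsilon)$ with no loss. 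In your scheme the $\alpha$-shrinkage is effectively applied twice---once through $\tilde\sigma=\alpha\sigma$ and again through the mixture---which is why the bookkeeping cannot close.
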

The theorem is proved in Section \ref{sec: proof of thm: low-dim mean lower bound}. The minimax lower bound characterizes the cost of privacy in the mean estimation problem: the cost of privacy dominates the statistical risk when ${d\log(1/\delta)}/{n\varepsilon^2} \gtrsim 1$. This minimax lower bound matches the sample complexity lower bound in \cite{steinke2017between}, which considered the deterministic worst case instead of the i.i.d. statistical setting. \cite{kamath2018privately} studied the Gaussian mean estimation problem but did not obtain a tight bound with respect to $\delta$; Theorem \ref{thm: low-dim mean lower bound} improves the lower bound in \cite{kamath2018privately} by $\log(1/\delta)$. In Section \ref{sec: low-dim mean upper bound}, we exhibit an algorithm for mean estimation that attains the convergence rate of $\sigma^2\left(\frac{d}{n} + \frac{d^2\log(1/\delta)}{n^2\varepsilon^2}\right)$, showing that the lower bound \eqref{eq: low-dim mean lower bound} is in fact rate-optimal.

\subsection{Algorithm for low-dimensional mean estimation}\label{sec: low-dim mean upper bound}
In this section, we show that the minimax lower bound \eqref{eq: low-dim mean lower bound} can be attained by a differentially private estimator, thereby implying a tight characterization of the cost of privacy in low-dimensional mean estimation.

Let $\bm x_1, \bm x_2, \cdots, \bm x_n$ be an i.i.d. sample drawn from a sub-Gaussian$(\sigma)$ distribution on $\R^d$, and we denote $\E \bm x_1$ by $\bmu \in \R^d$. It is further assumed that $\|\bmu\|_\infty < c$ for some constant $c = O(1)$. We consider the following simple algorithm based on the Gaussian mechanism, Example \ref{ex: Gaussian mechanism}. 

\vspace{2mm}
\begin{algorithm}[H]\label{algo: low-dim mean estimation}
	\SetAlgoLined
	\SetKwInOut{Input}{Input}
	\SetKwInOut{Output}{Output}
	\Input{Data set $\bm X = \{\bm x_i\}_{i \in [n]}$, privacy parameters $\varepsilon, \delta$, truncation level $R$.}
	Compute $\overline{ \bm X}_R$: for $j \in [d]$, $\overline{\bm X}_{R,j} = n^{-1}\sum_{i \in [n]} \Pi_R(x_{ij})$ \; 
	Compute $\hat \bmu = {\bm X}_R + \bm w$, where $\bm w \sim N_{d}\left(\bm 0, \frac{4R^2d\log(1/\delta)}{n^2\varepsilon^2} \cdot \bm I \right)$ \;
	\Output{$\hat \bmu$.}
	\caption{Differentially Private Mean Estimation}
\end{algorithm}
\vspace{2mm}
The truncation step guarantees that, over a pair of data sets $\bm X$ and $\bm X'$ which differ by one single entry, $\|\overline{\bm X}_R - \overline{\bm X'}_R \|_2 < 2R\sqrt{d}/n$ and therefore the Gaussian mechanism applies. When $R$ is selected so that most of the data is preserved, $\hat\bmu$ is an accurate estimator of the mean $\bmu$.
\begin{Theorem}\label{thm: low-dim mean upper bound}
	If there exists a constant $T < \infty$ so that $\|\bm x\|_\infty < T$ with probability one, setting $R = T$ ensures that 
	\begin{align*}
		\E\|\hat\bmu - \bmu\|_2^2 \lesssim \sigma^2\left(\frac{d}{n} + \frac{d^2\log(1/\delta)}{n^2\varepsilon^2}\right).
	\end{align*}
 Otherwise, choosing $R = K\sigma\sqrt{\log n}$ for a sufficiently large $K$ guarantees \begin{align*}
	\E\|\hat\bmu - \bmu\|_2^2 \lesssim \sigma^2\left(\frac{d}{n} + \frac{d^2\log(1/\delta)\log n}{n^2\varepsilon^2}\right).
\end{align*}
\end{Theorem}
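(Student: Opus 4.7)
The proof breaks into a privacy step and a risk step. For privacy, the sensitivity of the truncated mean $\bar{\bm X}_R$ on adjacent data sets is $\Delta_2(\bar{\bm X}_R)\le 2R\sqrt{d}/n$, since replacing one observation changes each coordinate of the coordinatewise-clipped sum by at most $2R$. The injected Gaussian noise has variance $4R^2 d\log(1/\delta)/(n^2\varepsilon^2)$, which matches (up to an absolute constant) the level prescribed by Example~\ref{ex: Gaussian mechanism}, so $\hat\bmu$ is $(\varepsilon,\delta)$-differentially private.

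For the risk, independence of $\bm w$ from the data together with $\E\bm w=\bm 0$ gives the clean split
\[\E\|\hat\bmu-\bmu\|_2^2 = \E\|\bar{\bm X}_R-\bmu\|_2^2 + \frac{4R^2 d^2\log(1/\delta)}{n^2\varepsilon^2}.\]
I would bound the first summand coordinatewise by the bias--variance decomposition $\E(\bar X_{R,j}-\mu_j)^2 = n^{-1}\Var(\Pi_R(x_{1j})) + (\E\Pi_R(x_{1j})-\mu_j)^2$. Since scalar clipping $\Pi_R$ is $1$-Lipschitz and $|\mu_j|<c\le R$ by hypothesis, one has $\Pi_R(\mu_j)=\mu_j$ and hence $|\Pi_R(x)-\mu_j|\le|x-\mu_j|$; this yields $\Var(\Pi_R(x_{1j}))\le\E(x_{1j}-\mu_j)^2\le\sigma^2$ from elementwise sub-Gaussianity, so the variance contribution sums to $\le d\sigma^2/n$.

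The bias obeys $|\E\Pi_R(x_{1j})-\mu_j|\le\E|\Pi_R(x_{1j})-x_{1j}|=\E[(|x_{1j}|-R)_+]$. In the bounded case ($\|\bm x\|_\infty<T$, $R=T$) this quantity is identically zero; since $[-T,T]$-valued data is sub-Gaussian with parameter of order $T$, the noise term $4T^2 d^2\log(1/\delta)/(n^2\varepsilon^2)$ is already of the form $\sigma^2 d^2\log(1/\delta)/(n^2\varepsilon^2)$ up to constants, yielding the first bound. In the unbounded sub-Gaussian case with $R=K\sigma\sqrt{\log n}$ and $|\mu_j|=O(1)$, a standard one-dimensional Gaussian tail integration gives $\E[(|x_{1j}|-R)_+]\lesssim\sigma\,n^{-K^2/4}$ for sufficiently large $K$, so the squared bias summed over $j$ is negligible against $\sigma^2 d/n$; meanwhile the noise term becomes $4K^2\sigma^2 d^2\log(1/\delta)\log n/(n^2\varepsilon^2)$, producing the claimed $\log n$ factor.

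The main technical obstacle is handling truncation uniformly: it must simultaneously (i) give a clean $\ell_2$-sensitivity bound, (ii) not inflate variance, and (iii) produce negligible bias in the unbounded regime. All three rely on the same two ingredients, namely the Lipschitz/contraction property of $\Pi_R$ combined with $|\mu_j|\le R$, and a sub-Gaussian tail estimate for the excess $(|x_{1j}|-R)_+$. Once these are in place the decomposition delivers both inequalities immediately, with no further optimization needed.
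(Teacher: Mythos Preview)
Your proof is correct and follows essentially the same skeleton as the paper's: split $\hat\bmu-\bmu$ into the Gaussian noise $\bm w$ and the truncated-mean error $\overline{\bm X}_R-\bmu$, then control each piece separately. The paper's write-up is extremely terse: it records the inequality $\|\hat\bmu-\bmu\|_2^2\le 2\|\bm w\|_2^2+2\|\overline{\bm X}-\bmu\|_2^2$ (with the \emph{untruncated} sample mean) and says the rest ``follows from the distribution of $\bm w$ and the sub-Gaussianity of $\bm x$.'' In the bounded case $R=T$ this is immediate because $\overline{\bm X}_R=\overline{\bm X}$; in the unbounded case the paper is implicitly invoking the high-probability event that no coordinate is clipped.

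Your route differs slightly in that you work directly with $\overline{\bm X}_R$ via a coordinatewise bias--variance split, using the contraction property $|\Pi_R(x)-\mu_j|\le|x-\mu_j|$ (valid since $|\mu_j|\le R$) to bound the variance and an explicit sub-Gaussian tail integral to kill the bias. This is arguably cleaner for the unbounded case, since it avoids conditioning on the no-truncation event and handling its complement, and it yields the exact decomposition $\E\|\hat\bmu-\bmu\|_2^2=\E\|\overline{\bm X}_R-\bmu\|_2^2+\E\|\bm w\|_2^2$ rather than the factor-of-two crude bound. Both arguments land on the same rate; yours simply fills in details the paper leaves to the reader. One small remark: your observation that bounded data in $[-T,T]$ is sub-Gaussian with parameter $O(T)$ is what justifies writing the noise contribution as $\sigma^2$ times the remaining factors; the paper treats $T$ as a fixed constant absorbed into $\lesssim$, which amounts to the same thing.
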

The theorem is proved in Section \ref{sec: proof of thm: low-dim mean upper bound} of the supplement \cite{supplement}. The first case applies to distributions with bounded support, e.g. Bernoulli, with the rate of convergence exactly matching the lower bound \eqref{eq: low-dim mean lower bound}. The second case includes unbounded sub-Gaussian distributions such as the Gaussian, where the convergence rate matches the lower bound up to a gap of $O(\log n)$. Overall, the upper and lower bounds suggest that the cost of $(\varepsilon, \delta)$-differential privacy in low-dimensional mean estimation is $\tilde O\left(\frac{d^2\log(1/\delta)}{n^2\varepsilon^2}\right)$.

It should be noted that Algorithm \ref{algo: low-dim mean estimation} lacks some practicality: the truncation level $R$ is a tuning parameter that needs to be set at the correct level for the convergence rate to hold; we included this somewhat simplistic algorithm here for the theoretical analysis of privacy cost. In Section \ref{sec: simulations}, we consider data-driven and differentially private proxies of the theoretical choice of $R$ and demonstrate their numerical performance. As the focus of this paper is theoretical properties of private estimators, we refer interested readers to \cite{karwa2017finite} and \cite{biswas2020coinpress} for more practical methods of differentially private mean estimation.

\subsection{Lower bound of sparse mean estimation}\label{sec: high-dim mean lower bound}
We consider lower bounding the minimax risk of estimating the mean vector of a sub-Gaussian$(\sigma)$ distribution when the mean vector is $s^*$-sparse. Concretely, we index this collection of distributions by the set of mean vectors $\Theta = \{\bmu \in \R^d: \|\bmu\|_0 \leq s^*, \|\bmu\|_\infty < 1\}$, and denote this class of distributions by $\mathcal P(\sigma, d, s^*, \Theta)$. Let $\bm X = \{\bm x_1, \bm x_2, \cdots, \bm x_n\}$ be an i.i.d. sample drawn from a sub-Gaussian$(\sigma)$ distribution with mean vector $\bmu \in \Theta$, we would like to establish a lower bound of
$\inf_{M \in \mathcal M_{\varepsilon, \delta}} \sup_{\mathcal P(\sigma, d, s^*, \Theta)}\E\|M(\bm X) - \bmu\|_2^2$
as a function of privacy parameters $(\varepsilon, \delta)$ as well as $d, n, s^*$ and $\sigma$.

As sketched in Section \ref{sec: lower bound general}, our strategy for proving the lower bound requires the existence of a powerful tracing attack. For sparse mean estimation, one reasonable choice of tracing attack is given by
\begin{align}\label{eq: high-dim mean attack}
	\A_{\bmu, s^*}(\bm x, M(\bm X)) = \langle (\bm x - \bmu)_{\supp(\bmu)}, M(\bm X) - \bmu \rangle.
\end{align} 
In particular, this attack coincides with the tracing attack proposed by \cite{steinke2017tight} for differentially private top-$k$ selection. 

Similar to our lower bound analysis for low-dimensional mean estimation, the key ingredient is to show that the attack typically takes a large value when $\tilde {\bm x}$ belongs to $\bm X$ and a small value otherwise. This is indeed the case for the tracing attack \eqref{eq: high-dim mean attack}, as described by the following lemma.

\begin{Lemma}\label{lm: high-dim mean attack}
	Let $\bm X = \{\bm x_1, \bm x_2, \cdots, \bm x_n\}$ be an i.i.d. sample drawn from $N_{d}(\bmu, \sigma^2 \bm I)$ with $\bmu  \in \Theta$. If $s^* = o(d^{1-\omega})$ for some fixed $\omega > 0$, for every $(\varepsilon, \delta)$-differentially private estimator $M$ satisfying $\E_{\bm X|\bmu}\|M(\bm X) - \bmu\|_2^2 = o(1)$ at every $\bmu \in \Theta$, the following are true.
	\begin{enumerate}
		\item For each $i \in [n]$, let $\bm X'_i$ denote the data set obtained by replacing $\bm x_i$ in $\bm X$ with an independent copy, then
		\begin{align*}
			\E\A_{\bmu, s^*}(\bm x_i, M(\bm X'_i)) = 0, \E|\A_{\bmu, s^*}(\bm x_i, M(\bm X'_i))| \leq \sigma\sqrt{\E\|M(\bm X) - \bmu\|_2^2}.
		\end{align*}
		\item There exists a prior distribution of $\bm \pi = \bm \pi(\bmu)$ supported over $\Theta$ such that
		\begin{align*}
			\sum_{i \in [n]}\E_{\bm \pi}\E_{\bm X|\bmu}\A_{\bmu, s^*}(\bm x_i, M(\bm X)) \gtrsim \sigma^2 s^*\log d.
		\end{align*}
	\end{enumerate}
\end{Lemma}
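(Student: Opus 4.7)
The proof splits naturally into the two statements.

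For Part 1, I would exploit the fact that $\bm x_i$ is independent of $M(\bm X'_i)$, which holds because $\bm X'_i$ replaces $\bm x_i$ by an independent copy drawn from the same Gaussian law. The first claim then follows immediately: $\E[(\bm x_i - \bmu)_{\supp(\bmu)}] = \bm 0$ factors out of the inner product, yielding $\E\A_{\bmu,s^*}(\bm x_i, M(\bm X'_i)) = 0$. For the absolute-moment bound, I would condition on $M(\bm X'_i) = \bm m$. Then $\A_{\bmu,s^*}(\bm x_i, \bm m)$ is a linear combination of the Gaussian random variables $\{x_{ij} - \mu_j\}_{j\in\supp(\bmu)}$ with deterministic coefficients $\{m_j - \mu_j\}_{j\in\supp(\bmu)}$, so it is $N\bigl(0,\sigma^2 \sum_{j\in\supp(\bmu)}(m_j - \mu_j)^2\bigr)$. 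Conditional Jensen gives $\E[|\A| \mid M(\bm X'_i) = \bm m] \leq \sigma \|(\bm m - \bmu)_{\supp(\bmu)}\|_2 \leq \sigma\|\bm m - \bmu\|_2$; taking expectation in $\bm m$, applying Jensen once more, and using that $\bm X'_i \stackrel{d}{=} \bm X$ yields the stated bound.

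For Part 2, I would construct the prior $\bm\pi$ by drawing $S \subseteq [d]$ uniformly from the $s^*$-element subsets and setting $\mu_j = \rho$ for $j \in S$ and $\mu_j = 0$ otherwise, with $\rho$ chosen at the sparse detection scale $\rho \asymp \sigma\sqrt{(\log d)/n}$ (smoothed if needed so that $\partial/\partial\mu_j$ is well defined). Applying Stein's identity coordinate-by-coordinate (using that $x_{ij} \mid \mu_j \sim N(\mu_j,\sigma^2)$) converts the sum of expected attacks into
\[
\sum_{i \in [n]} \E_{\bm\pi, \bm X}\A_{\bmu,s^*}(\bm x_i, M(\bm X)) = \sigma^2 \sum_{j \in [d]} \E_{\bm\pi}\!\left[\mathbbm{1}\{j \in \supp(\bmu)\}\cdot \tfrac{\partial}{\partial \mu_j}\E[M_j(\bm X) \mid \bmu]\right],
\]
tying the bound to the sensitivity of $\E[M_j \mid \bmu]$ to perturbations of $\mu_j$. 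The hypothesis $\E\|M(\bm X) - \bmu\|_2^2 = o(1)$ forces $\E[M_j \mid \bmu]$ to transition sharply between $\approx 0$ (when $\mu_j = 0$) and $\approx \rho$ (when $\mu_j = \rho$), so the derivative averaged against the prior is appreciable on $S$.

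The main obstacle is producing the logarithmic factor $\log d$. I would extract it in the spirit of the fingerprinting-code bounds of \cite{steinke2017tight} and \cite{bun2014fingerprinting}: the combinatorial multiplicity $\log\binom{d}{s^*} \asymp s^* \log d$ (ensured by $s^* = o(d^{1-\omega})$) quantifies the support information an accurate estimator must encode; a Bayesian fingerprinting amplification around the critical scale $\rho$ then converts this information content into the lower bound $\sigma^2 s^*\log d$ for the sum above, the essence being that the sharp transition width of $\E[M_j \mid \bmu]$ must be $\sigma/\sqrt{n\log d}$ rather than $\sigma/\sqrt n$ to resolve the $\binom{d}{s^*}$ candidate supports. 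A minor bookkeeping step is to verify that $\bmu$ stays inside $\Theta$ for the constructed prior, which is immediate since $\rho \to 0$ as $n \to \infty$.
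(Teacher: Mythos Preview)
Your Part 1 is fine and matches the paper's argument.

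Part 2 has a genuine gap. Your prior is a two-point mass on each coordinate ($\mu_j \in \{0,\rho\}$), so the expression $\tfrac{\partial}{\partial\mu_j}\E[M_j(\bm X)\mid\bmu]$ is not well defined; you say ``smoothed if needed,'' but the choice of smoothing is exactly where the $\log d$ factor must come from, and you never specify it. More seriously, your heuristic for extracting $\log d$ does not work: for any reasonable estimator with $\E[M_j\mid\bmu]\approx\mu_j$, the partial derivative is $\approx 1$, and the Stein identity you wrote then yields only $\sigma^2 s^*$, not $\sigma^2 s^*\log d$. The ``sharp transition width $\sigma/\sqrt{n\log d}$'' claim is an intuition from support recovery, not a statement about $\partial_{\mu_j}\E[M_j\mid\bmu]$ at a fixed $\bmu$, and the appeal to fingerprinting-code combinatorics is not a proof. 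Also note that your $\rho$ depends on $n$ while the target bound $\sigma^2 s^*\log d$ does not, which is a warning sign that the mechanism is wrong.

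The paper's approach is structurally different. It applies Stein's lemma \emph{twice}: once with respect to the Gaussian data (this is the step you have), and then a second time with respect to a carefully chosen smooth prior on $\bmu$. Specifically, it draws $\nu_1,\ldots,\nu_d$ i.i.d.\ from a truncated $N(0,\gamma^2)$ with $\gamma^2 = 1/(4\log(d/4s^*))$, takes $S$ to be the indices of the $s^*$ largest $|\nu_j|$, and sets $\mu_j=\nu_j\mathbbm 1(j\in S)$. The second Stein's lemma (applied to the prior density $\pi_j$, whose score is $\nu_j/\gamma^2$) converts $\E_{\bm\pi}\partial_{\nu_j}g_j$ into $\gamma^{-2}\E_{\bm\pi}[\nu_j g_j\mathbbm 1(j\in S)]$; after Cauchy--Schwarz and the hypothesis $\E\|M-\bmu\|_2^2=o(1)$ this gives
\[
\E_{\bm\pi}\sum_i\E A_i \;\gtrsim\; \frac{\sigma^2}{\gamma^2}\,\E_{\bm\pi}\sum_{j\in S}\nu_j^2.
\]
The $\log d$ factor is thus \emph{built into} $1/\gamma^2$. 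The remaining work is an order-statistic calculation: with this $\gamma$ one has $\Pr(|\nu_j|>1/2)\gtrsim s^*/d$, so a binomial tail bound shows the top $s^*$ values all exceed $1/2$ with positive probability, giving $\E_{\bm\pi}\sum_{j\in S}\nu_j^2\asymp s^*$. The assumption $s^*=o(d^{1-\omega})$ ensures $\log(d/s^*)\asymp\log d$. This is the missing idea in your proposal.
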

The lemma is proved in Section \ref{sec: proof of lm: high-dim mean attack} of the supplement \cite{supplement}. On the basis of Lemma \ref{lm: high-dim mean attack}, we have the following minimax risk lower bound.
\begin{Theorem}\label{thm: high-dim mean lower bound}
	 If $s^* = o(d^{1-\omega})$ for some fixed $\omega > 0$, $0 < \varepsilon < 1$ and $\delta < n^{-(1+\omega)}$ for some fixed $\omega > 0$, we have
	 \begin{align}\label{eq: high-dim mean lower bound}
	 	\inf_{M \in \mathcal M_{\varepsilon, \delta}}\sup_{\mathcal P(\sigma, d, s^*,  \Theta)}\E\|M(\bm X) - \bmu\|_2^2 \gtrsim \sigma^2\left(\frac{s^*\log d}{n} + \frac{(s^*\log d)^2}{n^2\varepsilon^2}\right).
	 \end{align}
\end{Theorem}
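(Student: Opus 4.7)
The bound is a maximum of two terms, which I would prove separately. The first term $\sigma^2 s^*\log d/n$ is the classical non-private minimax rate for $s^*$-sparse Gaussian mean estimation in squared $\ell_2$ loss; since the DP constraint only shrinks the class of admissible estimators and $N(\bmu,\sigma^2 I_d) \in \mathcal{P}(\sigma,d,s^*,\Theta)$, the standard lower bound, established for instance via Assouad on a hypercube of $s^*$-sparse vectors with entries of order $\sigma\sqrt{\log(d/s^*)/n}$ (noting $\log(d/s^*)\asymp \log d$ under $s^*=o(d^{1-\omega})$), transfers directly to the $(\varepsilon,\delta)$-DP subclass.

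For the privacy-induced term the plan is a contradiction argument that cascades both parts of Lemma \ref{lm: high-dim mean attack}. Write $r^2 := \sup_{\bmu\in\Theta}\E_{\bm X\mid\bmu}\|M(\bm X)-\bmu\|_2^2$; we may assume $r=o(1)$, since otherwise the claimed bound holds trivially for large $n$. With the prior $\bm\pi$ supplied by part 2 of the lemma,
\[
\sum_{i=1}^n \E_{\bm\pi}\E_{\bm X\mid\bmu}\,\A_{\bmu,s^*}(\bm x_i, M(\bm X)) \;\gtrsim\; \sigma^2 s^*\log d.
\]
For each $i$, let $\bm X'_i$ denote $\bm X$ with $\bm x_i$ replaced by an independent copy. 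By independence and part 1 of the lemma, $\E\,\A_{\bmu,s^*}(\bm x_i,M(\bm X'_i))=0$ and $\E|\A_{\bmu,s^*}(\bm x_i,M(\bm X'_i))|\lesssim \sigma r$. Since $\bm X$ and $\bm X'_i$ are adjacent, the $(\varepsilon,\delta)$-DP guarantee for $M$, applied after truncating the attack at a level $B$, converts into the expectation-level inequality
\[
\E\,\A_{\bmu,s^*}(\bm x_i,M(\bm X)) \;\lesssim\; \varepsilon\, \E|\A_{\bmu,s^*}(\bm x_i,M(\bm X'_i))| + \delta B.
\]
Summing over $i$ and matching against part 2 yields $\sigma^2 s^*\log d \lesssim n\varepsilon\sigma r + n\delta B$. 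For $\delta < n^{-(1+\omega)}$ with $B$ growing only polynomially in $n$, the term $n\delta B$ is $o(\sigma^2 s^*\log d)$, leaving $r \gtrsim \sigma s^*\log d/(n\varepsilon)$ and therefore $r^2 \gtrsim \sigma^2(s^*\log d)^2/(n^2\varepsilon^2)$, as claimed.

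The principal technical obstacle is converting the probability-level DP definition into the expectation-level comparison above, because the attack $\A_{\bmu,s^*}(\bm x, M)$ is unbounded in $M$. I would resolve this with the same Markov-plus-truncation device used in the preliminary low-dimensional calculation following Lemma \ref{lm: low-dim mean attack}: on the event $\{\|M(\bm X)-\bmu\|_2 \leq R\}$ with $R \asymp r/\sqrt{\delta}$, which by Markov has probability $1-O(\delta)$, the attack is bounded by $B \lesssim R\sigma\sqrt{s^*\log n}$ thanks to sub-Gaussian tails of $(\bm x_i-\bmu)_{\supp(\bmu)}$. Applying DP to the truncated attack and absorbing the complement's contribution into the $\delta B$ slack produces the inequality of the previous paragraph. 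With this truncation in hand, the overall argument is a clean three-step cascade: invoke part 2 of Lemma \ref{lm: high-dim mean attack}, then combine DP with part 1, then solve for $r$.
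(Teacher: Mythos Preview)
Your high-level strategy matches the paper's: the statistical term is inherited from the classical sparse-mean lower bound, and the privacy term comes from combining the two parts of Lemma~\ref{lm: high-dim mean attack} with a DP-to-expectation conversion, then solving for the risk. The cascade $\sigma^2 s^*\log d \lesssim n\varepsilon\sigma r + (\text{slack})$ is exactly what the paper derives.

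The divergence is in the truncation device, and there your proposal has a genuine gap. You propose to work on the event $\{\|M(\bm X)-\bmu\|_2 \leq R\}$ with $R \asymp r/\sqrt\delta$, obtained by Markov, and then bound the attack by $B \asymp R\sigma\sqrt{s^*\log n}$. Two problems: (i) the bound $B$ depends on the unknown $r$, which makes the claim ``$n\delta B = o(\sigma^2 s^*\log d)$'' circular rather than a clean absorption; more seriously, (ii) on the complement of your event $M$ is completely unbounded, so $\E\,A_i \mathbf 1(|A_i|>B)$ cannot be controlled by anything like $\delta B$---the complement has probability $O(\delta)$, but that says nothing about the \emph{size} of $A_i$ there. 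Note also that the argument following Lemma~\ref{lm: low-dim mean attack} you cite as precedent does not use a Markov truncation on $M$; it is a pure probability comparison with a fixed threshold on the attack.

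The paper sidesteps both issues with a simpler device. It uses the general inequality
\[
\E A_i \leq \E A'_i + 2\varepsilon\,\E|A'_i| + 2\delta T + \int_T^\infty \Pro(|A_i|>t)\,dt,
\]
and bounds the tail integral by observing that one may take $M$ with values in $\Theta$ (projecting onto $[-1,1]^d$ is post-processing, hence preserves $(\varepsilon,\delta)$-DP, and can only decrease $\|M-\bmu\|_2$). Then $\|(M-\bmu)_{\supp(\bmu)}\|_2 \leq 2\sqrt{s^*}$ \emph{deterministically}, so $|A_i| \leq 2\sqrt{s^*}\,\|(\bm x_i-\bmu)_{\supp(\bmu)}\|_2$ and $\Pro(|A_i|>t) \leq \Pro(\chi^2_{s^*} > t^2/(4s^*\sigma^2))$. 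Choosing $T \asymp \sigma s^*\sqrt{\log(1/\delta)}$ makes the tail integral $\lesssim \delta T$, and $n\delta T \asymp \sigma s^* \cdot n\delta\sqrt{\log(1/\delta)} = o(\sigma^2 s^*\log d)$ under $\delta < n^{-(1+\omega)}$. This removes both the circularity and the unbounded-complement problem; adopting it turns your outline into a correct proof.
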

The lower bound is proved in Section \ref{sec: proof of thm: high-dim mean lower bound} of the supplement \cite{supplement}. In this lower bound, it is worth noting that the term due to privacy, similar to the statistical term, only depends logarithmically on the dimension $d$, suggesting that mean estimation in high dimensions remains viable despite the $(\varepsilon, \delta)$-differential privacy constraint. This is in marked contrast with high-dimensional statistical estimation under the (much more demanding) local differential privacy constraint \citep{kasiviswanathan2011can, duchi2018minimax}, where the minimax risk always depends linearly on $d$. In the next section, we propose a differentially private estimator that efficiently estimates the sparse mean vector and attains the lower bound \eqref{eq: high-dim mean lower bound} up to factors of $\log n$.

\subsection{Algorithm for sparse mean estimation}\label{sec: high-dim mean upper bound}
Let $\bm x_1, \bm x_2, \cdots, \bm x_n$ be an i.i.d. sample drawn from a sub-Gaussian$(\sigma)$ distribution on $\R^d$, with mean $\E \bm x_1 = \bmu \in \R^d$. It is further assumed that $\|\bmu\|_0 \leq s^*$ and $\|\bmu\|_\infty < c$ for some constant $c = O(1)$.

In this section, we propose a differentially private algorithm for estimating the sparse mean vector $\bmu$. At a high level, the algorithm selects the large coordinates of the (truncated) sample mean vector in a differentially private manner, and sets the remaining coordinates to zero. We start with describing and analyzing the differentially private selection step.

The following ``peeling" algorithm, developed by \cite{dwork2018differentially}, is an efficient and differentially private method for selecting the top-$s$ largest coordinates in terms of absolute value. In each of the $s$ iterations, one coordinate is ``peeled'' from the original vector and added to the output set. 

\vspace{2mm}

\begin{algorithm}[H]\label{algo: peeling}
	\SetAlgoLined
	\SetKwInOut{Input}{Input}
	\SetKwInOut{Output}{Output}
	\Input{vector-valued function $\bm v = \bm v(\bm X) \in \R^d$, data $\bm X$, sparsity $s$, privacy parameters $\varepsilon, \delta$, noise scale $\lambda$.}
	Initialize $S = \emptyset$\;
	\For{$i$ in $1$ \KwTo $s$}{
		Generate $\bm w_i \in \R^d$ with $w_{i1}, w_{i2}, \cdots, w_{id} \stackrel{\text{i.i.d.}}{\sim} \text{Laplace}\left(\lambda \cdot \frac{2\sqrt{3s\log(1/\delta)}}{\varepsilon}\right)$\;
		Append $j^* = \argmax_{j \in [d] \setminus S} |v_j| + w_{ij}$ to $S$\;
	}
	Set $\tilde P_s(\bm v) = \bm v_S$\;
	Generate $\tilde {\bm w}$ with $\tilde w_{1}, \cdots, \tilde w_{d} \stackrel{\text{i.i.d.}}{\sim} \text{Laplace}\left(\lambda \cdot \frac{2\sqrt{3s\log(1/\delta)}}{\varepsilon}\right)$\;
	\Output{$\tilde P_s(\bm v) + \tilde {\bm w}_S$.}
	\caption{``Peeling" \cite{dwork2018differentially}}
\end{algorithm}

\vspace{2mm}

The algorithm is guaranteed to be differentially private when the vector $\bm v = v(\bm X)$ has bounded change in value when any single datum in $\bm X$ is modified.

\begin{Lemma}[\cite{dwork2018differentially}]\label{lm: peeling privacy}
If for every pair of adjacent data sets $\bm Z, \bm Z'$ we have $\|\bm v(\bm Z)- \bm v(\bm Z')\|_\infty < \lambda$, then Algorithm \ref{algo: peeling} is an $(\varepsilon, \delta)$-differentially private algorithm.
\end{Lemma}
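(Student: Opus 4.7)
The plan is to decompose Algorithm \ref{algo: peeling} into $s+1$ elementary differentially private mechanisms and then glue them together by advanced composition. Concretely, the first $s$ iterations are instances of the ``report noisy max'' (or permute-and-flip style) mechanism with Laplace noise, and the final step is one Laplace release on the $s$ coordinates indexed by $S$. Each of these $s+1$ steps should be shown to be $\varepsilon_0$-differentially private for $\varepsilon_0 \asymp \varepsilon / \sqrt{s\log(1/\delta)}$, and then invoking the advanced composition theorem of \cite{dwork2014algorithmic} for $(\varepsilon_0,0)$-DP mechanisms will upgrade the cumulative privacy loss to $(\varepsilon,\delta)$.

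For the first component, I would observe that under the hypothesis $\|\bm v(\bm Z)-\bm v(\bm Z')\|_\infty < \lambda$ for adjacent $\bm Z,\bm Z'$, the coordinate-wise score $|v_j|$ has sensitivity at most $\lambda$ by the reverse triangle inequality. Because the set $S$ (accumulated from previous iterations) depends only on previously released noisy argmaxes, conditioning on $S$ reduces the $i$-th iteration to a standard report-noisy-max over $[d]\setminus S$. The textbook analysis of noisy-max with $\mathrm{Laplace}(b)$ noise shows this is $(\lambda/b)$-DP; with the prescribed scale $b = \lambda \cdot 2\sqrt{3s\log(1/\delta)}/\varepsilon$ the per-iteration privacy budget is $\varepsilon_0 = \varepsilon / (2\sqrt{3s\log(1/\delta)})$. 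For the final release, I would bound the change in $\bm v_S$ across adjacent datasets (for any fixed $S$) and apply the Laplace mechanism on those $s$ coordinates, again yielding privacy of order $\varepsilon_0$ per coordinate; treating the output coordinate-by-coordinate places it on the same footing as one more ``iteration'' for composition purposes.

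Next, I would compose the $s+1$ mechanisms sequentially. The post-processing property (Fact \ref{lemma: postprocess}) justifies treating each iteration's output as the input of the next, and composition (Fact \ref{lemma: additive composition}) combined with the advanced composition theorem controls the total privacy loss: $s+1$ mechanisms each $(\varepsilon_0,0)$-DP yield $(\varepsilon',\delta)$-DP with
\begin{equation*}
\varepsilon' \;\leq\; \sqrt{2(s+1)\log(1/\delta)}\,\varepsilon_0 + (s+1)\varepsilon_0(e^{\varepsilon_0}-1).
\end{equation*}
Plugging in $\varepsilon_0 = \varepsilon/(2\sqrt{3s\log(1/\delta)})$ makes the dominant first term at most $\varepsilon/\sqrt{3}$ and, together with the lower-order second term (negligible under $\varepsilon_0 \ll 1$), verifies $\varepsilon' \leq \varepsilon$, completing the argument.

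The main obstacle I anticipate is the careful bookkeeping in the adaptive composition: because the index $j^*$ chosen in iteration $i$ affects the score landscape $\{|v_j|:j\in[d]\setminus S\}$ used in iteration $i+1$, one must invoke the \emph{adaptive} version of advanced composition rather than the non-adaptive one, and verify that each step's privacy guarantee holds \emph{conditional on} the outputs of previous steps. Once this adaptive framing is set up correctly (which is standard but must be stated precisely), the per-iteration analysis reduces to the report-noisy-max lemma and the remaining work is the constant chasing sketched above.
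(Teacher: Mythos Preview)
The paper does not actually prove this lemma; it is quoted from \cite{dwork2018differentially} without argument. Your outline---treating each of the $s$ selection steps as a report-noisy-max mechanism with Laplace noise, handling the final release via the Laplace mechanism, and gluing everything together through adaptive advanced composition---is the standard argument and is essentially correct.

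There is, however, a bookkeeping slip. The final release of $\bm v_S + \tilde{\bm w}_S$ is \emph{not} a single $(\varepsilon_0,0)$-DP mechanism: for a fixed $S$ with $|S|=s$, the $\ell_1$-sensitivity of $\bm v_S$ across adjacent datasets is up to $s\lambda$, so the Laplace mechanism at scale $b=2\lambda\sqrt{3s\log(1/\delta)}/\varepsilon$ yields $(s\lambda/b,0)=(s\varepsilon_0,0)$-DP, not $(\varepsilon_0,0)$-DP. Equivalently, the final step should be viewed as $s$ separate coordinate-wise releases, each $(\varepsilon_0,0)$-DP. The correct count is therefore $2s$ elementary mechanisms, not $s+1$. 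Redoing the advanced-composition bound with $2s$ in place of $s+1$ gives a leading term $\sqrt{4s\log(1/\delta)}\,\varepsilon_0 = \varepsilon/\sqrt{3}$, and the conclusion $\varepsilon'\leq\varepsilon$ still follows (with the same lower-order caveat you note); but as written, your $s+1$-mechanism calculation undercounts the privacy loss of the last step by a factor of $s$.
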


Another important property of the Peeling algorithm is its (approximate) accuracy, proved in Section \ref{sec: proof of lm: peeling accuracy} of the supplement.
\begin{Lemma}\label{lm: peeling accuracy}
	Let $S$ and $\{\bm w\}_{i \in [s]}$ be defined as in Algorithm \ref{algo: peeling}. For every $R_1 \subseteq S$ and $R_2 \in S^c$ such that $|R_1| = |R_2|$ and every $c > 0$, we have
	\begin{align*}
		\|\bm v_{R_2}\|_2^2 \leq (1 + c)\|\bm v_{R_1}\|_2^2 + 4(1 + 1/c)\sum_{i \in [s]} \|\bm w_i\|^2_\infty.  
	\end{align*}
\end{Lemma}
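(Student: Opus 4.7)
\textbf{Proof proposal for Lemma \ref{lm: peeling accuracy}.}

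The plan is to exploit the greedy (noisy) max-selection rule that defines Algorithm \ref{algo: peeling}, together with an arbitrary bijection between $R_1$ and $R_2$ and the elementary Young-type inequality $(a+b)^2\le (1+c)a^2+(1+1/c)b^2$. The core observation is that elements of $R_2\subseteq S^c$ are never peeled, so they remain available in every iteration and can therefore be compared against any element of $S$ via the selection rule.

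First, I would set up notation: enumerate $S=\{s_1,\dots,s_s\}$ in the order that coordinates are appended, so that at iteration $i$ the rule guarantees
\[
|v_{s_i}|+w_{i,s_i}\ \ge\ |v_j|+w_{i,j}\qquad\text{for every }j\in[d]\setminus\{s_1,\dots,s_{i-1}\}.
\]
In particular, since $R_2\subseteq S^c$, every $j\in R_2$ is in the feasible set at every iteration. Rearranging and bounding both noise terms by $\|\bm w_i\|_\infty$ yields
\[
|v_j|\ \le\ |v_{s_i}|+2\|\bm w_i\|_\infty\qquad\text{for every }j\in R_2\text{ and every }i\in[s].
\]

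Next, I would pick an arbitrary bijection $\phi:R_1\to R_2$ (existence uses $|R_1|=|R_2|$). For each $r\in R_1$, let $i(r)\in[s]$ denote the iteration at which $r$ was peeled, i.e.\ $s_{i(r)}=r$. Applying the displayed inequality with $i=i(r)$ and $j=\phi(r)\in R_2$ gives
\[
|v_{\phi(r)}|\ \le\ |v_r|+2\|\bm w_{i(r)}\|_\infty.
\]
Squaring and using the inequality $(a+b)^2\le (1+c)a^2+4(1+1/c)b^2$ with $a=|v_r|$ and $b=\|\bm w_{i(r)}\|_\infty$ yields
\[
v_{\phi(r)}^2\ \le\ (1+c)\,v_r^2+4(1+1/c)\,\|\bm w_{i(r)}\|_\infty^2.
\]

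Finally, I would sum this inequality over $r\in R_1$. Because $\phi$ is a bijection onto $R_2$, the left-hand side telescopes to $\|\bm v_{R_2}\|_2^2$; the first term on the right becomes $(1+c)\|\bm v_{R_1}\|_2^2$; and the second term is bounded by $4(1+1/c)\sum_{i\in[s]}\|\bm w_i\|_\infty^2$ since $\{i(r):r\in R_1\}\subseteq[s]$ and the summand is nonnegative. This gives exactly the claimed bound.

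The only genuinely non-routine point is recognizing that the bijection can be chosen \emph{arbitrarily}: because every $j\in R_2$ survives to every iteration, the greedy inequality is valid for any pairing, so no delicate matching argument is needed. The rest is the selection rule, the triangle inequality on the Laplace noises, and Young's inequality.
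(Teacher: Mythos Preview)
Your proof is correct and follows essentially the same route as the paper: fix a bijection between $R_1$ and $R_2$, invoke the greedy selection rule at the iteration where each $r\in R_1$ is peeled (using that every $j\in R_2\subseteq S^c$ remains feasible throughout), bound the noise difference by $2\|\bm w_i\|_\infty$, apply Young's inequality, and sum. The only slip is notational---the displayed inequality should read $(a+2b)^2\le(1+c)a^2+4(1+1/c)b^2$ rather than $(a+b)^2\le\cdots$, since you are squaring $|v_r|+2\|\bm w_{i(r)}\|_\infty$---but the intended argument is clear and matches the paper's.
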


Now returning to the original problem of sparse mean estimation, we construct a differentially estimator of the sparse mean by applying the ``peeling" algorithm to a (truncated) sample mean, as follows.

\vspace{2mm}
\begin{algorithm}[H]\label{algo: high-dim mean estimation}
	\SetAlgoLined
	\SetKwInOut{Input}{Input}
	\SetKwInOut{Output}{Output}
	\SetKwFunction{Peeling}{Peeling}
	\Input{Data set $\bm X = \{\bm x_i\}_{i \in [n]}$, privacy parameters $\varepsilon, \delta$, truncation level $R$, sparsity $s$.}
		Compute $\overline{ \bm X}_R$: for $j \in [d]$, $\overline{\bm X}_{R,j} = n^{-1}\sum_{i \in [n]} \Pi_R(x_{ij})$ \; 
		Compute $\hat \bmu = \Peeling(\overline{\bm X}_R, \bm X, s, \varepsilon, \delta, 2R/n)$\; 
		\Output{$\hat \bmu$.}
		\caption{Differentially Private Sparse Mean Estimation}
\end{algorithm}
\vspace{2mm}

The truncation step ensures that, over a pair of data sets $\bm X$ and $\bm X'$ which differ by one single entry, $\|\overline{\bm X}_R - \overline{\bm X'}_R \|_\infty < 2R/n$ and therefore the privacy guarantee, Lemma \ref{lm: peeling privacy}, applies. Algorithm \ref{algo: high-dim mean estimation} further inherits the accuracy of ``Peeling" and leads to an accurate estimator of the sparse mean $\bmu$, as stated in the following theorem.

\begin{Theorem}\label{thm: high-dim mean upper bound}
	If $R = K\sigma\sqrt{\log n}$ for a sufficiently large constant $K$, $s \geq s^*$ and $s \asymp s^*$, then with probability at least $1 - c_1\exp(-c_2 \log n) - c_1\exp(-c_2 \log d)$, it holds that
	\begin{align*}
		\|\hat \bmu - \bmu\| _2^2 \lesssim \sigma^2\left(\frac{s^*\log d}{n} + \frac{(s^*\log d)^2 \log(1/\delta)\log n}{n^2\varepsilon^2}\right).
	\end{align*}
\end{Theorem}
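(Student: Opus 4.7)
Let $\bm v := \overline{\bm X}_R$ and let $S$ denote the index set selected by the peeling procedure, so that $\hat\bmu = \bm v_S + \tilde{\bm w}_S$. The privacy guarantee is immediate from Lemma \ref{lm: peeling privacy}: for adjacent $\bm X, \bm X'$, entrywise truncation yields $\|\bm v(\bm X) - \bm v(\bm X')\|_\infty \leq 2R/n$, which matches the noise scale $\lambda = 2R/n$ passed to Algorithm \ref{algo: peeling}. For the accuracy bound I start from
\begin{align*}
\|\hat\bmu - \bmu\|_2^2 \leq 3\|(\bm v-\bmu)_S\|_2^2 + 3\|\bmu_{\supp(\bmu)\setminus S}\|_2^2 + 3\|\tilde{\bm w}_S\|_2^2,
\end{align*}
which follows from $\hat\bmu - \bmu = (\bm v-\bmu)_S - \bmu_{\supp(\bmu)\setminus S} + \tilde{\bm w}_S$ after splitting coordinates into $S\cap\supp(\bmu)$, $S\setminus\supp(\bmu)$, and $\supp(\bmu)\setminus S$.

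\textbf{Concentration step.} All three pieces are controlled on a high-probability event. With $R = K\sigma\sqrt{\log n}$ and $K$ sufficiently large, sub-Gaussian tails plus a union bound over the $nd$ entries imply that with probability at least $1 - c_1\exp(-c_2\log n)$ no observation is truncated, so $\bm v$ coincides with the ordinary sample mean; coordinatewise sub-Gaussian concentration and a union bound over the $d$ coordinates then yield $\|\bm v - \bmu\|_\infty^2 \lesssim \sigma^2\log d/n$ with probability at least $1 - c_1\exp(-c_2\log d)$. Writing the Laplace scale as $b \asymp R\sqrt{s\log(1/\delta)}/(n\varepsilon)$, standard Laplace tails with a union bound over the $s \leq d$ iterations and $d$ coordinates give $\|\bm w_i\|_\infty \lesssim b\log d$ for all $i\in[s]$ and $\|\tilde{\bm w}_S\|_2^2 \lesssim sb^2\log^2 d$ with probability at least $1 - c_1\exp(-c_2\log d)$. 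Consequently, $\|(\bm v-\bmu)_S\|_2^2 \leq s\|\bm v-\bmu\|_\infty^2 \lesssim s^*\sigma^2\log d/n$, while $\sum_{i\in[s]}\|\bm w_i\|_\infty^2 \lesssim sb^2\log^2 d \asymp (s^*\log d)^2\sigma^2\log n\log(1/\delta)/(n^2\varepsilon^2)$ after substituting $R^2 \asymp \sigma^2\log n$ and $s \asymp s^*$, and the same order bound holds for $\|\tilde{\bm w}_S\|_2^2$.

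\textbf{Bounding the missed-signal term and combining.} The remaining task is to control $\|\bmu_{\supp(\bmu)\setminus S}\|_2^2$ via Lemma \ref{lm: peeling accuracy}. Set $R_2 := \supp(\bmu)\setminus S \subseteq S^c$; since $|S\setminus\supp(\bmu)| = s - (|\supp(\bmu)| - |R_2|) \geq |R_2|$ by $s \geq s^*$, I pick $R_1 \subseteq S\setminus\supp(\bmu)$ with $|R_1| = |R_2|$. Because $\bmu_{R_1} = 0$, one has $\|\bm v_{R_1}\|_2^2 = \|(\bm v-\bmu)_{R_1}\|_2^2 \leq s^*\|\bm v-\bmu\|_\infty^2$, while the triangle inequality gives $\|\bmu_{R_2}\|_2^2 \leq 2\|\bm v_{R_2}\|_2^2 + 2s^*\|\bm v-\bmu\|_\infty^2$. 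Applying Lemma \ref{lm: peeling accuracy} with $c = 1$ then yields $\|\bmu_{R_2}\|_2^2 \lesssim s^*\|\bm v-\bmu\|_\infty^2 + \sum_{i\in[s]}\|\bm w_i\|_\infty^2$, which is of the same order as the two other pieces, so combining produces the advertised rate. The main obstacle is precisely this last step: one must simultaneously (i) match cardinalities of the selected and unselected index sets so that Lemma \ref{lm: peeling accuracy} applies, and (ii) transfer the $\ell_2$ control of $\bm v_{R_2}$ supplied by that lemma into a bound on $\bmu_{R_2}$, which succeeds only because the truncation level $R = K\sigma\sqrt{\log n}$ is chosen large enough to force a uniform $\ell_\infty$ concentration of $\bm v$ around $\bmu$ at the optimal sub-Gaussian rate.
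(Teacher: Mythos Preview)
Your proposal is correct and follows essentially the same route as the paper's proof: decompose the error into a piece supported on the selected set $S$, a ``missed-signal'' piece $\bmu_{\supp(\bmu)\setminus S}$, and the additive Laplace noise $\tilde{\bm w}_S$; control the first and last via $\ell_\infty$ concentration of $\bm v-\bmu$ and Laplace tail bounds respectively; and handle the middle piece by invoking Lemma~\ref{lm: peeling accuracy} with $R_2=\supp(\bmu)\setminus S$ against some $R_1\subseteq S\setminus\supp(\bmu)$ of equal cardinality. The paper carries out the identical argument with a slightly different (and somewhat less transparent) initial three-term split, but the substance---in particular the use of Lemma~\ref{lm: peeling accuracy} to transfer control from the selected off-support coordinates to the unselected on-support coordinates---is the same.

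One small remark: your ``no observation is truncated'' event, obtained by a union bound over $nd$ entries with $R=K\sigma\sqrt{\log n}$, implicitly requires $d$ to be at most polynomial in $n$ (so that $nd\exp(-cK^2\log n)$ decays). The paper glosses over this point in the same way; it is harmless in the intended regime and can in any case be avoided by bounding $\|\overline{\bm X}_R-\bmu\|_\infty$ directly (the truncation only improves sub-Gaussian concentration and introduces a bias that is $o(n^{-1})$ for $K$ large).
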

Theorem \ref{thm: high-dim mean upper bound} is proved in Section \ref{sec: proof of thm: high-dim mean upper bound}. With the usual choice of $\delta = n^{-(1+\omega)}$, the convergence rate of Algorithm \ref{algo: high-dim mean estimation} attains the lower bound, Theorem \ref{thm: high-dim mean lower bound}, up to a gap of $\log^2 n$. While the convergence analysis of Algorithm \ref{algo: high-dim mean estimation} requires some theoretical choice of tuning parameters $R$ and $s$, in Section \ref{sec: simulations} we discuss data-driven methods of selecting these tuning parameters that achieve reasonably good numerical performance.

\section{The Cost of Privacy in Linear Regression}\label{sec: regression}
In this section, we consider the Gaussian linear model
\begin{align}\label{eq: linear model}
	f_\bbeta(y|\bm x) = \frac{1}{\sqrt{2\pi}\sigma}\exp\left(\frac{-(y - \bm x^\top \bbeta)^2}{2\sigma^2}\right); \bm x \sim f_{\bm x}.
\end{align}
Given an i.i.d. sample $(\bm y, \bm X) = \{(y_i, \bm x_i)\}_{i \in [n]}$ drawn from the model, we study the cost of $(\varepsilon, \delta)$-differential privacy in estimating the regression coefficients $\bbeta \in \R^d$.
The primary focus is on the high-dimensional setting (Sections \ref{sec: high-dim regression lower bound}, \ref{sec: high-dim regression upper bound}) where the dimension $d$ dominates the sample size $n$, and the regression coefficient $\bbeta$ is assumed to be sparse; the classical, low-dimensional case of $d = o(n)$ will also be considered (Sections \ref{sec: low-dim regression lower bound}, \ref{sec: low-dim regression upper bound}).

\subsection{Lower bound of low-dimensional linear regression}\label{sec: low-dim regression lower bound}
Let $\mathcal P(\sigma, d, \Theta)$ denote the class of distributions $f_{\bbeta}(y, \bm x)$, as specified by \eqref{eq: linear model}, with $\bbeta \in \Theta = \{\bbeta \in \R^d: \|\bbeta\|_2 \leq 1\}$. With an i.i.d. sample $(\bm y, \bm X) = \{(y_i, \bm x_i)\}_{i \in [n]}$ drawn from a distribution in $\mathcal P(\sigma, d, \Theta)$, we shall establish a lower bound of $\inf_{M \in \mathcal M_{\varepsilon, \delta}} \sup_{\mathcal P(\sigma, d, \Theta)} \E\|M(\bm y, \bm X) - \bbeta\|_{\Sigma_{\bm x}}^2$ via the tracing attack argument.

Consider the attack given by
\begin{align}\label{eq: low-dim regression attack}
	\A_{\bbeta} ((y, {\bm x}), M(\bm y, \bm X)) = \big\langle M(\bm y, \bm X) - \bbeta, ( y - {\bm x}^\top\bbeta) {\bm x} \big\rangle.
\end{align}
Similar to the tracing attacks for mean estimation problems, the attack takes large value when $(y, \bm x)$ belongs to $(\bm y, \bm X)$ and small value otherwise.

\begin{Lemma}\label{lm: low-dim regression attack}
	Let $(\bm y, \bm X)$ be an i.i.d. sample drawn from some distribution in $\mathcal P(\sigma, d, \Theta)$ such that $\|\bm x\|_2 \leq 1$ with probability $1$, and $\Sigma_{\bm x} = \E \bm x\bm x^\top$ is diagonal and satisfies $0 < 1/L < d\lambda_{\min}(\Sigma_{\bm x}) \leq d\lambda_{\max}(\Sigma_{\bm x}) < L$ for some constant $L = O(1)$. For every $(\varepsilon, \delta)$-differentially private estimator $M$ satisfying $\E_{\bm y, \bm X|\bbeta}\|M(\bm y, \bm X) - \bbeta\|_2^2 = o(1)$ at every $\bbeta \in \Theta$, the following are true.
	\begin{enumerate}
		\item For each $i \in [n]$, let $(\bm y'_i, \bm X'_i)$ denote the data set obtained by replacing $(y_i, \bm x_i)$ in $(\bm y, \bm X)$ with an independent copy, then $\E \A_\bbeta ((y_i, \bm x_i), M(\bm y'_i, \bm X'_i)) = 0$ and
		\begin{align*}
			\E |\A_\bbeta ((y_i, \bm x_i), M(\bm y'_i, \bm X'_i))| \leq \sigma \sqrt{\E\|M(\bm y, \bm X) - \bbeta\|^2_{\Sigma_{\bm x}}}.
		\end{align*}
		\item There exists a prior distribution of $\bm \pi = \bm \pi(\bbeta)$ supported over $\Theta$ such that
		\begin{align*}
			\sum_{i \in [n]}\E_{\bm \pi}\E_{\bm y, \bm X|\bbeta}\A_{\bbeta}((y_i, \bm x_i), M(\bm y, \bm X)) \gtrsim \sigma ^2 d.
		\end{align*}
	\end{enumerate}
\end{Lemma}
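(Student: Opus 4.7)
The plan is to prove Part 1 directly via Gaussian conditioning and Part 2 via Stein's identity in the regression noise combined with integration by parts against a uniform prior on $\bbeta$.

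\textbf{Part 1.} Since $(\bm y'_i, \bm X'_i)$ is obtained by replacing the $i$-th sample with an independent copy, $(y_i, \bm x_i)$ is independent of $M(\bm y'_i, \bm X'_i)$. Conditioning on $(\bm x_i, M(\bm y'_i, \bm X'_i))$ and using $\E[y_i - \bm x_i^\top \bbeta \mid \bm x_i] = 0$ yields the first identity. For the moment bound, Jensen reduces to the second moment; a further conditioning together with $\operatorname{Var}(y_i \mid \bm x_i) = \sigma^2$ and the independence $\bm x_i \perp M$ gives
\[
\E \A_\bbeta^2 = \sigma^2 \E\!\left[(M - \bbeta)^\top \bm x_i \bm x_i^\top (M - \bbeta)\right] = \sigma^2 \E\|M(\bm y'_i, \bm X'_i) - \bbeta\|_{\Sigma_{\bm x}}^2,
\]
and the i.i.d. assumption lets us replace $(\bm y'_i, \bm X'_i)$ by $(\bm y, \bm X)$ in distribution.

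\textbf{Part 2.} Rewrite the sum of attacks as $\langle M(\bm y, \bm X) - \bbeta,\, \bm X^\top(\bm y - \bm X \bbeta)\rangle$. For fixed $(\bm X, \bbeta)$, $\bm y - \bm X\bbeta = \bm \xi \sim N(0, \sigma^2 I_n)$. Coordinate-wise Stein's identity gives $\E[M_k \xi_l \mid \bm X, \bbeta] = \sigma^2 \E[\partial_{y_l} M_k \mid \bm X, \bbeta]$, and the chain rule $\sum_l X_{lk} \partial_{y_l} M_k = \partial_{\bbeta_k} M_k$ (differentiating at fixed $\bm \xi$, i.e., along the direction in which $\bm y$ shifts with $\bbeta$), together with exchanging expectation and derivative and averaging over $\bm X$, yields
\[
\sum_{i \in [n]} \E_{\bm y, \bm X \mid \bbeta} \A_\bbeta = \sigma^2 \operatorname{tr}\!\left(\nabla_\bbeta H(\bbeta)\right), \qquad H(\bbeta) := \E_{\bm y, \bm X \mid \bbeta}[M(\bm y, \bm X)].
\]
Take the prior $\bm\pi$ with coordinates $\bbeta_j \overset{\text{iid}}{\sim} \operatorname{Unif}[-c/\sqrt{d},\, c/\sqrt{d}]$ for a small constant $c \leq 1$, so that $\bbeta \in \Theta$ almost surely. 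The fundamental theorem of calculus gives
\[
\E_\pi[\partial_{\bbeta_j} H_j(\bbeta)] = \frac{\sqrt{d}}{2c}\, \E_{\bbeta_{-j}}\!\left[H_j(c/\sqrt{d}, \bbeta_{-j}) - H_j(-c/\sqrt{d}, \bbeta_{-j})\right].
\]
Splitting $H_j = \bbeta_j + (H_j - \bbeta_j)$, the leading piece contributes exactly $1$ per coordinate and hence $d$ in total. By Cauchy--Schwarz and Jensen, the remainder satisfies, at each $\bbeta \in \Theta$,
\[
\sum_{j=1}^d |H_j(\bbeta) - \bbeta_j| \leq \sqrt{d}\, \|H(\bbeta) - \bbeta\|_2 \leq \sqrt{d}\, \sqrt{\E\|M - \bbeta\|_2^2} = o(\sqrt{d}),
\]
so the total error contribution is $(\sqrt{d}/c) \cdot o(\sqrt{d}) = o(d)$. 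Therefore $\sum_j \E_\pi[\partial_{\bbeta_j} H_j(\bbeta)] \geq d/2$ eventually, and multiplying by $\sigma^2$ delivers the advertised $\gtrsim \sigma^2 d$ bound.

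\textbf{Main obstacle.} The subtlety is controlling the boundary terms at $\bbeta_j = \pm c/\sqrt{d}$, since the prior width is only $O(1/\sqrt{d})$, whereas the accuracy hypothesis $\E\|M - \bbeta\|_2^2 = o(1)$ provides only an $o(1)$ coordinate-wise bias, far larger than $1/\sqrt{d}$. The key observation is that coordinate-wise accuracy is never required: the $d$ boundary errors enter summed, and Cauchy--Schwarz converts the ambient $L^2$-error bound into an $L^1$-sum bound of order $o(\sqrt{d})$, which is exactly small enough to survive the prior-density prefactor $\sqrt{d}/(2c)$ and still leave the main term $d$ dominant. A secondary technical point is justifying Stein's identity for a general measurable $M$, which is handled by a standard mollification argument.
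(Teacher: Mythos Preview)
Your Part 1 is correct and matches the paper. Your route to the identity $\sum_i \E_{\bm y,\bm X|\bbeta} A_i = \sigma^2\sum_j \partial_{\beta_j} H_j(\bbeta)$ via Gaussian integration by parts in $\bm\xi$ is also correct and equivalent to the paper's score-function computation.

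The gap is in the final step of Part 2. With the uniform prior, the FTC boundary evaluation for coordinate $j$ produces the error $(H_j-\beta_j)$ evaluated at the vector $\bbeta^{(j,\pm)}$ in which only the $j$th coordinate is pinned to $\pm c/\sqrt d$ while the remaining coordinates are random. These are $d$ \emph{different} parameter vectors, one per coordinate. Your Cauchy--Schwarz bound $\sum_j |H_j(\bbeta)-\beta_j|\le \sqrt d\,\|H(\bbeta)-\bbeta\|_2$ holds only at a single fixed $\bbeta$; it does not control $\sum_j |H_j(\bbeta^{(j,+)})-\beta_j^{(j,+)}|$. The best pointwise information available is $|H_j(\bbeta^{(j,\pm)})-\beta_j^{(j,\pm)}|\le \|H(\bbeta^{(j,\pm)})-\bbeta^{(j,\pm)}\|_2 = o(1)$, which after summing over $j$ and multiplying by the density prefactor $\sqrt d/(2c)$ gives only $o(d^{3/2})$, not $o(d)$. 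So the very obstacle you flag is not actually overcome by the argument you give.

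The paper avoids this by taking a \emph{smooth} prior (scaled truncated Gaussian) and using Stein's lemma, so that integration by parts leaves the bulk term $\E_{\bm\pi}\big[g_j(\bbeta)\cdot(-\pi_j'/\pi_j)(\beta_j)\big]$ with the \emph{same} random $\bbeta\sim\bm\pi$ appearing in every summand. One can then split $g_j=\beta_j+(g_j-\beta_j)$ and apply Cauchy--Schwarz across $j$ under the single expectation $\E_{\bm\pi}$ to get $d\cdot\sqrt{\E_{\bm\pi}\|g(\bbeta)-\bbeta\|_2^2}\cdot\sqrt{\E_{\bm\pi}\|\bbeta\|_2^2}=o(d)$. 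Replacing your uniform prior by any product prior whose coordinate density has $\pi_j'/\pi_j$ of order $\sqrt d$ in $L^2$ (and, ideally, vanishes at the endpoints so the boundary terms disappear cleanly) repairs the argument along the paper's lines.
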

The lemma is proved in Section \ref{sec: proof of lm: low-dim regression attack} of the supplement \cite{supplement}. These properties of tracing attack imply a minimax lower bound for $(\varepsilon, \delta)$-differentially private estimation of $\bbeta$.

\begin{Theorem}\label{thm: low-dim regression lower bound}
	If $0 < \varepsilon < 1$ and $\delta < n^{-(1+\omega)}$ for some fixed $\omega > 0$, we have
	\begin{align}\label{eq: low-dim regression lower bound}
		\inf_{M \in \mathcal M_{\varepsilon, \delta}}\sup_{\mathcal P(\sigma, d,  \Theta)}\E\|M(\bm y, \bm X) - \bbeta\|_{\Sigma_{\bm x}}^2 \gtrsim \sigma^2\left(\frac{d}{n} + \frac{d^2}{n^2\varepsilon^2}\right).
	\end{align}
\end{Theorem}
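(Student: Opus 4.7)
The plan is to split the target bound $\sigma^2\bigl(d/n + d^2/(n^2\varepsilon^2)\bigr)$ into its classical statistical piece $\sigma^2 d/n$ and its cost-of-privacy piece $\sigma^2 d^2/(n^2\varepsilon^2)$, and handle them separately. The statistical piece is the well-known minimax rate for Gaussian linear regression over the $\ell_2$-ball $\Theta$ under the prediction norm $\|\cdot\|_{\Sigma_{\bm x}}$; it follows from a standard Fano or Assouad packing reduction and applies \emph{a fortiori} to any $(\varepsilon,\delta)$-differentially private estimator, since $\mathcal M_{\varepsilon,\delta}$ is a subset of all estimators. The bulk of the work therefore lies in the privacy piece, which I would establish by instantiating the tracing-adversary template of Section \ref{sec: lower bound general} with the attack $\A_\bbeta$ from \eqref{eq: low-dim regression attack} and the two properties furnished by Lemma \ref{lm: low-dim regression attack}.

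The core argument is by contradiction. Let $M \in \mathcal M_{\varepsilon,\delta}$; using the post-processing property of differential privacy, I may project $M$ onto $\Theta$ and thereby assume $M(\bm y,\bm X) \in \Theta$ without loss of generality. Set $R^2 := \sup_{\bbeta \in \Theta}\E\|M(\bm y,\bm X) - \bbeta\|_{\Sigma_{\bm x}}^2$, $T_i := \A_\bbeta((y_i,\bm x_i), M(\bm y,\bm X))$, and $T'_i := \A_\bbeta((y_i,\bm x_i), M(\bm y'_i,\bm X'_i))$. Property 1 of Lemma \ref{lm: low-dim regression attack} gives $\E T'_i = 0$ and $\E|T'_i| \le \sigma R$, and Property 2 produces a prior $\pi$ on $\Theta$ with $\sum_{i=1}^n \E_\pi \E_{\bm y,\bm X|\bbeta} T_i \gtrsim \sigma^2 d$. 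The role of $(\varepsilon,\delta)$-differential privacy is to force each $\E T_i$ close to $\E T'_i = 0$: conditioning on $(y_i,\bm x_i)$ and decomposing the truncated attack $\tilde T_i \in [-B,B]$ into positive and negative parts, the standard DP-to-expectation conversion yields, after averaging over $(y_i,\bm x_i)$,
\begin{equation*}
	\E \tilde T_i \;\le\; \E \tilde T'_i + (e^\varepsilon - 1)\,\E|\tilde T'_i| + 2 B \delta \;\lesssim\; \varepsilon\,\sigma R + B \delta,
\end{equation*}
once the truncation errors in $\E \tilde T'_i$ and $\E|\tilde T'_i|$ are absorbed. Summing over $i$ and comparing with Property 2 then gives $\sigma^2 d \lesssim n\varepsilon\sigma R + n B \delta + (\text{truncation error})$, which rearranges to $R \gtrsim \sigma d/(n\varepsilon)$ provided the last two terms are indeed lower order.

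The main technical obstacle is that the raw attack $T_i = \langle M - \bbeta,\, (y_i - \bm x_i^\top \bbeta)\bm x_i\rangle$ has a Gaussian tail under model \eqref{eq: linear model}, so the DP-to-expectation step cannot be applied to $T_i$ directly. I would address this by truncating at scale $B \asymp \sigma\sqrt{\log n}$: since $\|M - \bbeta\|_2 \le 2$ on the projected output and $\|\bm x_i\|_2 \le 1$ by hypothesis, one has $|T_i| \le 2|y_i - \bm x_i^\top \bbeta|$, which is sub-Gaussian at scale $\sigma$, so the truncation tail $\E|T_i - \tilde T_i|$ is $O(n^{-c})$ for any constant $c$. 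Under the hypotheses $\varepsilon \in (0,1)$ and $\delta < n^{-(1+\omega)}$, both $n B \delta$ and the truncation error are then negligible relative to $\sigma^2 d$, closing the contradiction chain and yielding $R^2 \gtrsim \sigma^2 d^2/(n^2\varepsilon^2)$. Combined with the classical $\sigma^2 d/n$ piece, this produces the stated lower bound.
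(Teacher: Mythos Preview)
Your approach is correct and follows essentially the same route as the paper's proof: both split off the statistical $\sigma^2 d/n$ term, then combine the two parts of Lemma~\ref{lm: low-dim regression attack} with a DP-to-expectation inequality (the paper's Lemma~\ref{lm: attack upper bound}) and a sub-Gaussian tail truncation of the attack to obtain $\sigma^2 d \lesssim n\varepsilon\sigma R + (\text{lower order})$. Your explicit projection of $M$ onto $\Theta$ and the resulting truncation level $B\asymp\sigma\sqrt{\log n}$ differ only cosmetically from the paper's threshold $T=\sqrt{2}\sigma\sqrt{d\log(1/\delta)}$; the structure of the argument is identical.
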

The lower bound is proved in Section \ref{sec: proof of thm: low-dim regression lower bound} of the supplement \cite{supplement}. In next section, we show that the lower bound is sharp up to factors of $\log n$ by analyzing a differentially private algorithm for estimating $\bbeta$.

\subsection{Algorithm for low-dimensional linear regression}\label{sec: low-dim regression upper bound}
For the low-dimensional linear regression problem, we seek a differentially private (approximate) minimizer of the least square objective function
\begin{align*}
	\L_n(\bbeta) = \frac{1}{n}\sum_{i=1}^n (y_i - \bm x_i^\top\bbeta)^2.
\end{align*}
We find this solution via the noisy gradient descent algorithm of \cite{bassily2014private}. We tailor the convergence analysis to the linear regression problem to obtain convergence in $O(\log n)$ iterations, as opposed to $O(n)$ iterations required by the general-purpose version in \cite{bassily2014private}. The algorithm and its theoretical properties are described in detail in this section.
\vspace{2mm}

\begin{algorithm}[H]\label{algo: low-dim regression}
	\SetAlgoLined
	\SetKwInOut{Input}{Input}
	\SetKwInOut{Output}{Output}
	\Input{$\L_n(\bbeta)$, data set $\{(y_i, \bm x_i)\}_{i \in [n]}$, step size $\eta^0$, privacy parameters $\varepsilon, \delta$, noise scale $B$, number of iterations $T$, truncation level $R$, feasibility parameter $C$, initial value $\bbeta^0$.}
	\For{$t$ in $0$ \KwTo $T-1$}{
		Generate $\bm w_t \in \R^d$ with $w_{t1}, w_{t2}, \cdots, w_{td} \stackrel{\text{i.i.d.}}{\sim} N\left(0, (\eta^0)^2 2B^2\frac{ \log(2T/\delta)}{n^2(\varepsilon/T)^2}\right)$\;
		Compute $\bbeta^{t + 1} = \Pi_C\left(\bbeta^t - (\eta^0/n)\sum_{i=1}^n  (\bm x_i^\top \bbeta^t-\Pi_{R}(y_i))\bm x_i + \bm w_t\right)$\;
	}

	\Output{$\bbeta^T$.}
	\caption{Differentially Private Linear Regression}
\end{algorithm}

\vspace{2mm}

The analysis of Algorithm \ref{algo: low-dim regression} relies on some assumptions about $\bm x$ and $\bbeta$.
\begin{itemize}
	\item[(D1)] Bounded design: there is a constant $c_{\bm x} < \infty$ such that $\|\bm x\|_2 < c_{\bm x}$ with probability 1.
	\item [(D2)] Bounded moments of design: $\E \bm x = \bm 0$ and the covariance matrix $\Sigma_{\bm x} = \E \bm x \bm x^\top$ satisfies $0 < 1/L < d \cdot \lambda_{\min} (\Sigma_{\bm x}) \leq d \cdot \lambda_{\max} (\Sigma_{\bm x}) < L$ for some constant $0 < L < \infty$. 
	\item [(P1)] The true parameter vector $\bbeta$ satisfies $\|\bbeta\|_2 < c_0$ for some constant $0 < c_0 < \infty$.
\end{itemize}
In essence, the assumptions on design require that the rows of design matrix are normalized, and the assumed $\ell_2$ bound of $\bbeta$ is consistent with the parameter regime in our lower bound analysis, Section \ref{sec: low-dim regression lower bound}.

Assumptions (D1) and (P1) together guarantee that the algorithm is $(\varepsilon, \delta)$-differentially private if the noise level $B$ is sufficiently large.
\begin{Lemma}\label{lm: low-dim regression privacy}
	If assumptions (D1) and (P1) are true, then Algorithm \ref{algo: low-dim regression} is $(\varepsilon, \delta)$-differentially private as long as $B \geq 4(R + c_0c_{\bm x})c_{\bm x}$ and $C \leq c_0$.
\end{Lemma}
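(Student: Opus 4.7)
The plan is to show that each iteration is $(\varepsilon/T, \delta/T)$-differentially private with respect to the data, and then combine across the $T$ iterations via the composition theorem (Fact \ref{lemma: additive composition}). The projection step $\Pi_C$ will be absorbed as post-processing (Fact \ref{lemma: postprocess}).

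First I would isolate the data-dependent part of one iteration: the map $g_t(\bm X, \bm y) := (\eta^0/n)\sum_{i=1}^n (\bm x_i^\top \bbeta^t - \Pi_R(y_i))\bm x_i$. Conditional on $\bbeta^t$ (which by the projection satisfies $\|\bbeta^t\|_2 \leq C \leq c_0$), I would bound the $\ell_2$-sensitivity of $g_t$. For adjacent data sets differing in the $i$-th observation, the change equals $(\eta^0/n)\bigl[(\bm x_i^\top \bbeta^t - \Pi_R(y_i))\bm x_i - ((\bm x_i')^\top \bbeta^t - \Pi_R(y_i'))\bm x_i'\bigr]$. Using (D1) to bound $\|\bm x_i\|_2 \leq c_{\bm x}$, the assumption $C \leq c_0$ to give $|\bm x_i^\top \bbeta^t| \leq c_{\bm x} c_0$, and the truncation $|\Pi_R(y_i)| \leq R$, each of the two terms has $\ell_2$-norm at most $(c_0 c_{\bm x} + R) c_{\bm x}$, giving
\begin{align*}
\Delta_2(g_t) \leq \frac{2\eta^0 (R + c_0 c_{\bm x}) c_{\bm x}}{n}.
\end{align*}

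Next I would invoke the Gaussian mechanism (Example \ref{ex: Gaussian mechanism}) at privacy budget $(\varepsilon/T, \delta/T)$: Gaussian noise with standard deviation $\sqrt{2\log(1.25 T/\delta)}\,\Delta_2(g_t)\,T/\varepsilon$ suffices. The noise added in Algorithm \ref{algo: low-dim regression} has standard deviation $\eta^0 B \sqrt{2\log(2T/\delta)}/(n\,\varepsilon/T)$, so the sufficient condition becomes
\begin{align*}
B\sqrt{2\log(2T/\delta)} \;\geq\; 2(R+c_0 c_{\bm x}) c_{\bm x}\sqrt{2\log(1.25 T/\delta)},
\end{align*}
which the hypothesis $B \geq 4(R+c_0 c_{\bm x})c_{\bm x}$ comfortably ensures (the factor $4$ absorbs the mild ratio of logarithms). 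Since $\bbeta^{t+1}$ is obtained from $\bbeta^t - g_t + \bm w_t$ by the deterministic projection $\Pi_C$, Fact \ref{lemma: postprocess} shows that releasing $\bbeta^{t+1}$ is $(\varepsilon/T, \delta/T)$-differentially private conditional on $\bbeta^t$.

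Finally, I would apply Fact \ref{lemma: additive composition} iteratively over $t = 0, 1, \ldots, T-1$, viewing the whole Markovian chain $(\bbeta^1, \ldots, \bbeta^T)$ as a composition of $T$ adaptive mechanisms each with budget $(\varepsilon/T, \delta/T)$. The total privacy budget sums to $(\varepsilon, \delta)$, yielding the claim. The only nontrivial point is confirming that the sensitivity bound is uniform across iterations; this is where the feasibility constraint $C \leq c_0$ is essential, since it guarantees that the iterates never leave the region where the per-step sensitivity computation above is valid. No further difficulty is expected — the rest is bookkeeping.
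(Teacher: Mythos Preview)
Your proposal is correct and follows essentially the same approach as the paper: bound the per-iteration $\ell_2$-sensitivity of the gradient step using (D1), the truncation $|\Pi_R(y_i)|\leq R$, and the feasibility constraint $\|\bbeta^t\|_2\leq C\leq c_0$, then invoke the Gaussian mechanism for $(\varepsilon/T,\delta/T)$-privacy and compose over the $T$ iterations. If anything, you are slightly more explicit than the paper in spelling out the roles of post-processing (for $\Pi_C$) and of the condition $C\leq c_0$ in keeping the sensitivity bound valid at every iterate.
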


The lemma is proved in Section \ref{sec: proof of lm: low-dim regression privacy} of the supplement \cite{supplement}. If (D2) is true as well, we obtain the following theorem which describes the convergence rate of Algorithm \ref{algo: low-dim regression}.

\begin{Theorem}\label{thm: low-dim regression upper bound}
	Let $\{(y_i, \bm x_i)\}_{i \in [n]}$ be an i.i.d. sample from the linear model \eqref{eq: linear model}. Suppose assumptions (D1), (D2), (P1) are true. Let the parameters of Algorithm $\ref{algo: low-dim regression}$ be chosen as follows.
	\begin{itemize}
		\item Set step size $\eta^0 = d/2L$, where $L$ is the constant defined in assumption (D2).
		\item Set $R = \sigma\sqrt{2\log n}$, $B = 4(R + c_0c_{\bm x})c_{\bm x}$ and $C = c_0$, in accordance with Lemma \ref{lm: low-dim regression privacy}.
		\item Number of iterations $T$. Let $T = (8L^2)\log(c_0^2 n)$, where $L$ is the constant defined in assumption (D2).
		\item Initialization $\bbeta^0 = \bm 0$.
	\end{itemize}
	If $n \geq K \cdot \left(Rd^{3/2}\sqrt{\log(1/\delta)}\log n \log\log n/\varepsilon\right)$ for a sufficiently large constant $K$, the output of Algorithm \ref{algo: low-dim regression} satisfies
	\begin{align}
		\|\bbeta^T - \bbeta^*\|^2_{\Sigma_{\bm x}} \lesssim \sigma^2\left({\frac{d}{n}} + \frac{d^2 \log(1/\delta)\log^3 n}{n^2\varepsilon^2}\right), \label{eq: non-sparse glm upper bound}
	\end{align}
	with probability at least $1 - c_1\exp(-c_2n) - c_1\exp(-c_2d) - c_1\exp(-c_2\log n).$
\end{Theorem}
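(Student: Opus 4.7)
Privacy follows immediately from Lemma \ref{lm: low-dim regression privacy} since the chosen $B = 4(R + c_0 c_{\bm x}) c_{\bm x}$ and $C = c_0$ satisfy its hypotheses. The utility analysis is the nontrivial part, and my plan is to work on a high-probability ``nice'' event on which the iteration can be treated as a linear contraction plus additive noise in $\Sigma_{\bm x}$-norm, then unroll the $T$ iterations and bound each contribution separately. Specifically, let $\mathcal E$ be the intersection of three events: (a) $(1/2)\Sigma_{\bm x}\preceq\hat\Sigma\preceq 2\Sigma_{\bm x}$ for $\hat\Sigma := n^{-1}\sum_i \bm x_i \bm x_i^\top$, holding with probability $\geq 1 - c\exp(-c'd)$ by a matrix Chernoff bound using (D1)--(D2); (b) $\Pi_R(y_i)=y_i$ for every $i \in [n]$, which follows from Gaussian tails on $\varepsilon_i = y_i - \bm x_i^\top \bbeta^*$ together with $|\bm x_i^\top\bbeta^*| \leq c_0 c_{\bm x}$ and the choice $R = \sigma\sqrt{2\log n}$, with failure probability $\lesssim \exp(-c'\log n)$ via a union bound; and (c) the statistical score $\bm g := n^{-1}\sum_i \varepsilon_i \bm x_i$ satisfies $\|\bm g\|_{\Sigma_{\bm x}^{-1}}^2 \lesssim \sigma^2 d/n$, holding with probability $\geq 1 - c\exp(-c'n)$ by a $\chi^2$-type deviation inequality.

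On $\mathcal E$ the update reads $\bbeta^{t+1} = \Pi_C(\bbeta^t - \eta\hat\Sigma(\bbeta^t - \bbeta^*) + \eta\bm g + \bm w_t)$, writing $\eta$ for $\eta^0 = d/(2L)$. By induction on $t$ I show that the argument of $\Pi_C$ already lies in the ball of radius $c_0$, so the projection is inactive; this is where the sample-size hypothesis $n \geq K\cdot Rd^{3/2}\sqrt{\log(1/\delta)}\log n \log\log n/\varepsilon$ enters, since it ensures that $\|\bm w_t\|_2$ and $\|\eta\bm g\|_2$ are uniformly $o(c_0)$ with probability $1 - c\exp(-c'\log n)$ (the $\log\log n$ arising from the Gaussian tail/union bound over $T \asymp \log n$ iterations). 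Hence $\bm e_{t+1} = (I - \eta\hat\Sigma)\bm e_t + \eta\bm g + \bm w_t$ with $\bm e_t := \bbeta^t - \bbeta^*$, and a direct calculation using (a) gives $(I - \eta\hat\Sigma)^\top \Sigma_{\bm x}(I - \eta\hat\Sigma) \preceq (1-\rho)\Sigma_{\bm x}$ for some $\rho \gtrsim 1/L^2$, so $I - \eta\hat\Sigma$ contracts the $\Sigma_{\bm x}$-norm by a factor of $\sqrt{1-\rho}$.

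Unrolling yields
\begin{align*}
\bm e_T = (I-\eta\hat\Sigma)^T \bm e_0 + \Bigl(\sum_{k=0}^{T-1}(I-\eta\hat\Sigma)^k\Bigr)\eta\bm g + \sum_{t=0}^{T-1}(I-\eta\hat\Sigma)^{T-t-1}\bm w_t,
\end{align*}
which I bound termwise in $\Sigma_{\bm x}$-norm. The initialization piece $(1-\rho)^T\|\bbeta^*\|_{\Sigma_{\bm x}}^2 \lesssim (1-\rho)^T L c_0^2/d$ is $\lesssim 1/n^2$ thanks to $T = 8L^2\log(c_0^2 n)$. The statistical piece is handled by approximating the geometric sum by $(\eta\hat\Sigma)^{-1}$ up to an exponentially small remainder, yielding a main term $\hat\Sigma^{-1}\bm g$ whose squared $\Sigma_{\bm x}$-norm is $\lesssim \|\bm g\|_{\Sigma_{\bm x}^{-1}}^2 \lesssim \sigma^2 d/n$ on event (c). For the privacy noise, independence of the $\bm w_t \sim N(0, \sigma_w^2 I)$ with $\sigma_w^2 = 2(\eta^0)^2 B^2 T^2 \log(2T/\delta)/(n^2\varepsilon^2)$ makes cross-terms vanish in expectation, leaving $\E\|\sum_t (I-\eta\hat\Sigma)^{T-t-1}\bm w_t\|_{\Sigma_{\bm x}}^2 = \sigma_w^2 \sum_t \tr(\Sigma_{\bm x}(I-\eta\hat\Sigma)^{2(T-t-1)}) \lesssim \sigma_w^2 \tr(\Sigma_{\bm x}) \lesssim \sigma_w^2$, where I use $\tr(\Sigma_{\bm x}) = O(1)$ and $\sum_k(1-\rho)^{2k} = O(1)$. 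Substituting $B \asymp \sigma\sqrt{\log n}$, $T \asymp \log n$ and $\log(2T/\delta) \asymp \log(1/\delta)$ gives $\sigma_w^2 \asymp d^2 \sigma^2 \log^3 n \log(1/\delta)/(n^2\varepsilon^2)$, matching the target; a Gaussian quadratic form tail bound promotes this expectation to a high-probability statement.

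The main obstacle is the mismatch between the $\ell_2$-geometry of the projection $\Pi_C$ and the $\Sigma_{\bm x}$-norm in the target bound: a naive norm conversion would cost a factor equal to the condition number of $\Sigma_{\bm x}$, inflating the privacy-noise bound by a factor of $d$ and destroying the claimed $d^2$ rate. Circumventing this by showing via induction that the unprojected iterates remain inside $C$ with high probability is precisely what drives the quantitative sample-size hypothesis. A secondary point, less delicate but still needed, is that event (b) must hold uniformly over all $n$ data points and the Gaussian tails on the $\bm w_t$ must survive a union bound over all $T \asymp \log n$ iterations; these considerations together produce the three failure probabilities $c_1\exp(-c_2 n)$, $c_1\exp(-c_2 d)$, and $c_1\exp(-c_2\log n)$ stated in the theorem.
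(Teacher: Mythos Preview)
Your overall strategy---define a high-probability event on which the iteration is a strict linear contraction plus noise, unroll, and bound the pieces---is exactly the paper's, and your bookkeeping on the noise variance $\sigma_w^2$ is correct. There is, however, a genuine gap in the way you deal with the projection $\Pi_C$.

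Your induction that $\Pi_C$ is inactive does not close. Assumption (P1) only gives $\|\bbeta^*\|_2<c_0$ with no slack, so take $\|\bbeta^*\|_2=c_0-\tau$ for arbitrarily small $\tau>0$. With $\bbeta^0=\bm 0$ the pre-projection first iterate is $\bbeta^{0.5}=\eta^0\hat\Sigma\bbeta^*+\eta^0\bm g+\bm w_0$, and on your event $\|\eta^0\hat\Sigma\|_2$ can equal $1$, so $\|\bbeta^{0.5}\|_2$ may be as large as $\|\bbeta^*\|_2+\|\eta^0\bm g\|_2+\|\bm w_0\|_2$. Your sample-size hypothesis makes the last two terms $o(c_0)$, but not $\leq\tau$; hence the projection can be active. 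More generally, from $\bbeta^{t+0.5}=\bbeta^*+(I-\eta^0\hat\Sigma)\bm e_t+\eta^0\bm g+\bm w_t$ one only gets $\|\bbeta^{t+0.5}\|_2\leq\|\bbeta^*\|_2+(1-\rho)\|\bm e_t\|_2+o(1)$, and with $\|\bm e_0\|_2=\|\bbeta^*\|_2$ this is $\approx(2-\rho)\|\bbeta^*\|_2$ at the first step. Once the projection is active, your explicit unrolling $\bm e_T=(I-\eta^0\hat\Sigma)^T\bm e_0+\cdots$ is no longer valid.

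The paragraph identifying the ``main obstacle'' rests on a misconception: under (D2) the eigenvalues of $\Sigma_{\bm x}$ lie in $(1/(Ld),L/d)$, so its condition number is at most $L^2=O(1)$, not $O(d)$. In particular $\|\bm v\|_{\Sigma_{\bm x}}^2\leq (L/d)\|\bm v\|_2^2$, so passing from $\ell_2$ to $\Sigma_{\bm x}$-norm \emph{gains} a factor of $d$. This is precisely how the paper proceeds: it works entirely in $\ell_2$, contracts toward the constrained ERM $\hat\bbeta=\argmin_{\|\bbeta\|_2\leq c_0}\L_n$ (for which $\Pi_C$ is non-expansive because $\hat\bbeta$ lies in the ball by construction), obtains $\|\bbeta^T-\hat\bbeta\|_2^2\lesssim 1/n+\sum_k(1-1/8L^2)^{T-k-1}\|\bm w_k\|_2^2$, bounds the noise sum by $O(d\lambda)$ with $\lambda\asymp d^2\sigma^2\log^3 n\log(1/\delta)/(n^2\varepsilon^2)$, and then divides by $d$ when converting to $\|\cdot\|_{\Sigma_{\bm x}}$. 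The statistical piece enters through the rate of $\hat\bbeta$ rather than through a direct score bound. This route bypasses both the projection difficulty and the need to ever argue that $\Pi_C$ is inactive.
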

Theorem \ref{thm: low-dim regression upper bound} is proved in Section \ref{sec: proof of thm: low-dim regression upper bound} of the supplement \cite{supplement}. For practical application of the algorithm, we note that the theoretical choice of truncation level $R$, which ensures the privacy protection of the algorithm, depends on the often unknown quantity $\sigma$. We provide a data-driven, differentially private alternative to this theoretical choice and demonstrate its numerical performance in Section \ref{sec: simulations}.

\subsection{Lower bound of high-dimensional linear regression}\label{sec: high-dim regression lower bound}
We next consider the high-dimensional linear regression problem where $d$ potentially dominates sample size $n$, but the estimand $\bbeta$ is sparse. Concretely, let $\mathcal P(\sigma, d, s^*, \Theta)$ denote the class of distributions $f_{\bbeta}(y, \bm x)$, as specified by \eqref{eq: linear model}, with $\bbeta \in \Theta = \{\bbeta \in \R^d: \|\bbeta\|_0 \leq s^*, \|\bbeta\|_2 \leq 1\}$. With an i.i.d. sample $(\bm y, \bm X) = \{(y_i, \bm x_i)\}_{i \in [n]}$ drawn from a distribution in $\mathcal P(\sigma, d, s^*, \Theta)$, we consider lower bounding $\inf_{M \in \mathcal M_{\varepsilon, \delta}} \sup_{\mathcal P(\sigma, d, s^*, \Theta)} \E\|M(\bm y, \bm X) - \bbeta\|_{\Sigma_{\bm x}}^2$ via the tracing attack argument.

Consider the attack given by
\begin{align}\label{eq: high-dim regression attack}
	\A_{\bbeta, s^*} ((y, {\bm x}), M(\bm y, \bm X)) = \big\langle (M(\bm y, \bm X) - \bbeta)_{\supp(\bbeta)}, (y - {\bm x}^\top\bbeta) {\bm x} \big\rangle.
\end{align}
Similar to the tracing attacks for mean estimation problems, the attack takes large value when $(y, \bm x)$ belongs to $(\bm y, \bm X)$ and small value otherwise.

\begin{Lemma}\label{lm: high-dim regression attack}
	Let $(\bm y, \bm X)$ be an i.i.d. sample drawn from some distribution in $\mathcal P(\sigma, d, s^*, \Theta)$. Let $S = \supp(\bbeta)$; assume that $\|\bm x_{S}\|_2 \leq 1$ and $\bm x_{S^c} = \bm 0$ with probability $1$, and that the restricted covariance matrix $\Sigma_{S} = \{\E (\bm x \bm x^\top)\}_{i, j \in S} $ is diagonal and satisfies $0 < 1/L < s^*\lambda_{\min}(\Sigma_{\bm x}) \leq s^*\lambda_{\max}(\Sigma_{\bm x}) < L$ for some constant $L = O(1)$. 
	
	If $s^* = o(d^{1-\omega})$ for some fixed $\omega > 0$, then for every $(\varepsilon, \delta)$-differentially private estimator $M$ satisfying $\E_{\bm y, \bm X|\bbeta}\|M(\bm y, \bm X) - \bbeta\|_2^2 = o(1)$ at every $\bbeta \in \Theta$, the following are true.
	\begin{enumerate}
		\item For each $i \in [n]$, let $(\bm y'_i, \bm X'_i)$ denote the data set obtained by replacing $(y_i, \bm x_i)$ in $(\bm y, \bm X)$ with an independent copy, then $\E \A_{\bbeta, s^*} ((y_i, \bm x_i), M(\bm y'_i, \bm X'_i)) = 0$ and
		\begin{align*}
			\E |\A_{\bbeta, s^*} ((y_i, \bm x_i), M(\bm y'_i, \bm X'_i))| \leq \sigma \sqrt{\E\|M(\bm y, \bm X) - \bbeta\|^2_{\Sigma_{\bm x}}}.
		\end{align*}
		\item There exists a prior distribution of $\bm \pi = \bm \pi(\bbeta)$ over $\Theta$ such that
		\begin{align*}
			\sum_{i \in [n]}\E_{\bm \pi}\E_{\bm y, \bm X|\bbeta}\A_{\bbeta, s^*}((y_i, \bm x_i), M(\bm y, \bm X)) \gtrsim \sigma ^2 s^*\log d.
		\end{align*}
	\end{enumerate}
\end{Lemma}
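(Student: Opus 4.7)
My plan is to handle the two parts of the lemma separately, following the blueprint of the corresponding mean-estimation result, Lemma \ref{lm: high-dim mean attack}.

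For Part 1, I would exploit two facts from the Gaussian linear model \eqref{eq: linear model}: first, $y_i - \bm x_i^\top\bbeta = \eps_i \sim N(0, \sigma^2)$ is independent of $\bm x_i$; second, $(y_i, \bm x_i)$ is independent of $M(\bm y_i', \bm X_i')$ by construction. Conditioning on $(\bm y_i', \bm X_i')$ and using $\E[\eps_i\bm x_i] = \bm 0$ yields the zero-mean claim directly. For the magnitude bound, the support assumption $\bm x_{S^c} = \bm 0$ lets me rewrite the attack as $\eps_i\langle (M - \bbeta)_S, \bm x_{i, S}\rangle$; conditioning on $M$ (independent of both $\bm x_i$ and $\eps_i$) and applying Cauchy--Schwarz give
\[
\E|\A_{\bbeta, s^*}| \leq \E|\eps_i|\cdot \sqrt{\E\big[(M-\bbeta)_S^\top \Sigma_S (M-\bbeta)_S\big]} \leq \sigma\sqrt{\E\|M-\bbeta\|_{\Sigma_{\bm x}}^2},
\]
where the second inequality uses $\E|\eps_i| \leq \sigma$, the fact that $\Sigma_{\bm x}$ is block-diagonal with $S\times S$ block equal to $\Sigma_S$ (because $\bm x_{S^c} = \bm 0$), and that $M(\bm y_i', \bm X_i')$ is equidistributed with $M(\bm y, \bm X)$.

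For Part 2, I would apply Fisher's identity to the regression likelihood. Conditional on $\bm X$, the density $f_\bbeta(\bm y|\bm X) = \prod_i f_\bbeta(y_i|\bm x_i)$ has score
\[
\nabla_\bbeta \log f_\bbeta(\bm y|\bm X) = \sigma^{-2}\sum_i \eps_i \bm x_i.
\]
Applying the Fisher identity $\E_{\bm y|\bm X, \bbeta}[T \cdot \partial_{\bbeta_j}\log f_\bbeta] = \partial_{\bbeta_j}\E_{\bm y|\bm X, \bbeta}[T]$ to $T = M_j$, summing over $j \in S$, taking expectation in $\bm X$, and using $\E[\text{score}]=0$ to eliminate the $\bbeta_j$-term yields
\[
\sum_i \E_{\bm y, \bm X|\bbeta}\A_{\bbeta, s^*}((y_i, \bm x_i), M(\bm y, \bm X)) = \sigma^2 \sum_{j \in S}\partial_{\bbeta_j}\E_{\bm y, \bm X|\bbeta}[M_j(\bm y, \bm X)].
\]
I would then average against a prior $\bm\pi$ analogous to the one used in Lemma \ref{lm: high-dim mean attack}: the support is drawn from a mixture over all size-$s^*$ subsets of $[d]$, and the nonzero entries of $\bbeta$ from a smooth density supported in $\Theta$. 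Integrating by parts against the entrywise density converts the trace into the stated $\sigma^2 s^*\log d$ bound, after invoking $\E\|M-\bbeta\|_2^2 = o(1)$ to guarantee that $M$ is approximately unbiased on the support and using $s^* = o(d^{1-\omega})$ so that $\log\binom{d}{s^*}\asymp s^*\log d$ dominates.

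The main obstacle I anticipate is in Part 2, specifically adapting the prior construction and the integration-by-parts computation from the mean-estimation proof to the regression setting, where $\bbeta$ enters through $\bm x_i^\top\bbeta$ rather than as a direct location parameter. The block-diagonal structure of $\Sigma_S$ with controlled eigenvalues is precisely what decouples the per-coordinate derivative analysis, reducing the Fisher-score computation to the same form that arises in Lemma \ref{lm: high-dim mean attack}; once this reduction is made, the combinatorial $\log d$ saving emerges from the standard analysis of support-selection priors in the style of \cite{steinke2017tight}.
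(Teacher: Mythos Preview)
Your Part 1 argument is correct and matches the paper's proof: independence of $y_i-\bm x_i^\top\bbeta$, $\bm x_i$, and $M(\bm y_i',\bm X_i')$, followed by Cauchy--Schwarz, with the support assumption $\bm x_{S^c}=\bm 0$ identifying the resulting quadratic form as the $\Sigma_{\bm x}$-norm.

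Your Part 2 framework---rewriting $\sum_i \E A_i$ via the Fisher/score identity as $\sigma^2\sum_{j\in S}\partial_{\beta_j}\E[M_j]$ and then integrating by parts (Stein's lemma) against an entrywise prior---is exactly what the paper does. But your description of the prior and of where the $\log d$ comes from is off, and the prior you sketch would not produce the right rate. The paper does \emph{not} draw the support uniformly over size-$s^*$ subsets and then independently sample the nonzero entries, and the $\log d$ factor does \emph{not} come from $\log\binom{d}{s^*}$. The prior is the top-$s^*$ construction from Lemma~\ref{lm: high-dim mean attack}: generate $\nu_1,\dots,\nu_d$ i.i.d.\ truncated $N(0,\gamma^2)$, let $S$ be the indices of the $s^*$ largest $|\nu_j|$, and set $\beta_j=\nu_j\1(j\in S)/\sqrt{s^*}$. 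Stein's lemma then gives a lower bound of order $(\sigma^2 s^*/\gamma^2)\,\E_{\bm\pi}\sum_{j\in S}\beta_j^2$. The point is that with $\gamma^2\asymp 1/\log(d/s^*)$, an order-statistic argument shows the top-$s^*$ of the $|\nu_j|$ are each $\Theta(1)$, so $\E_{\bm\pi}\sum_{j\in S}\beta_j^2\asymp 1$ while the Stein factor $s^*/\gamma^2\asymp s^*\log d$; this is the source of the $\log d$. By contrast, if you fix the support and draw the nonzero entries independently from a density of variance $\gamma^2/s^*$, then $\E_{\bm\pi}\sum_{j\in S}\beta_j^2\asymp\gamma^2$ and the two factors cancel, leaving only $\sigma^2 s^*$. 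The hypothesis $s^*=o(d^{1-\omega})$ is used so that $\log(d/s^*)\asymp\log d$, not to invoke $\log\binom{d}{s^*}$.
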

The lemma is proved in Section \ref{sec: proof of lm: high-dim regression attack} of the supplement \cite{supplement}. These properties of tracing attack \eqref{eq: high-dim regression attack} imply a minimax lower bound for $(\varepsilon, \delta)$-differentially private estimation of $\bbeta$.

\begin{Theorem}\label{thm: high-dim regression lower bound}
	If $s^* = o(d^{1-\omega})$ for some fixed $\omega > 0$, $0 < \varepsilon < 1$ and $\delta < n^{-(1+\omega)}$ for some fixed $\omega > 0$, we have
	\begin{align}\label{eq: high-dim regression lower bound}
		\inf_{M \in \mathcal M_{\varepsilon, \delta}}\sup_{\mathcal P(\sigma, d, s^*,  \Theta)}\E\|M(\bm y, \bm X) - \bbeta\|_{\Sigma_{\bm x}}^2 \gtrsim \sigma^2\left(\frac{s^*\log d}{n} + \frac{(s^*\log d)^2}{n^2\varepsilon^2}\right).
	\end{align}
\end{Theorem}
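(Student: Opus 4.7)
The bound splits into a statistical term $\sigma^2 s^*\log d/n$ and a privacy term $\sigma^2(s^*\log d)^2/(n^2\varepsilon^2)$, and the plan is to prove each separately and then take the larger. The statistical term is the classical non-private minimax lower bound for sparse Gaussian linear regression under the restricted covariance condition on $\Sigma_{\bm x}$ built into Lemma \ref{lm: high-dim regression attack}; since $\mathcal M_{\varepsilon,\delta}$ is a subclass of all measurable estimators, this bound holds a fortiori under the DP constraint, and can be recovered by a standard Fano argument over a Varshamov--Gilbert packing of $s^*$-sparse vectors in $\Theta$.

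The privacy term is the main contribution, and I would obtain it through the tracing-adversary argument from Lemma \ref{lm: high-dim regression attack}, mirroring the proof of Theorem \ref{thm: high-dim mean lower bound}. Let $\pi$ be the prior from Lemma \ref{lm: high-dim regression attack}(2), and write $Z_i := \A_{\bbeta,s^*}((y_i,\bm x_i), M(\bm y,\bm X))$ and $Z'_i := \A_{\bbeta,s^*}((y_i,\bm x_i), M(\bm y'_i,\bm X'_i))$. Lemma \ref{lm: high-dim regression attack} asserts that $\sum_i \E_\pi \E Z_i \gtrsim \sigma^2 s^*\log d$, whereas $\E Z'_i = 0$ and $\E|Z'_i| \leq \sigma\sqrt{\E\|M-\bbeta\|^2_{\Sigma_{\bm x}}}$. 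Suppose for contradiction that an $(\varepsilon,\delta)$-DP estimator $M$ achieves $\sup_\bbeta \E\|M-\bbeta\|^2_{\Sigma_{\bm x}} = o(\sigma^2(s^*\log d)^2/(n^2\varepsilon^2))$; under the parameter regime assumed this also yields the $o(1)$ risk hypothesis needed to invoke Lemma \ref{lm: high-dim regression attack}. Holding $(y_i,\bm x_i)$ fixed and viewing $\A((y_i,\bm x_i),\cdot)$ as a function of $M$'s output, a truncation-plus-DP step delivers an inequality of the form
\[
|\E Z_i - \E Z'_i| \;\lesssim\; (e^\varepsilon-1)\,\E|Z'_i| \;+\; \delta T \;+\; \text{(tail corrections)},
\]
for a suitably chosen threshold $T$. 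Summing over $i$ and taking $\pi$-expectations gives
\[
\sigma^2 s^*\log d \;\lesssim\; \sum_i \E_\pi\E Z_i \;\lesssim\; n\varepsilon\,\sigma\sqrt{\E\|M-\bbeta\|^2_{\Sigma_{\bm x}}} + n\delta T.
\]
Under $\delta < n^{-(1+\omega)}$ the $n\delta T$ contribution is negligible for a polynomially bounded $T$, and rearranging produces $\E\|M-\bbeta\|^2_{\Sigma_{\bm x}} \gtrsim \sigma^2(s^*\log d)^2/(n^2\varepsilon^2)$, contradicting the assumption.

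The step I expect to require the most care is converting the pointwise DP inequality into an expectation bound for the unbounded attack $Z_i$. The standard remedy is to truncate the attack at level $T$ (either by clipping $\A$ or by working on a high-probability event justified by the $o(1)$ risk hypothesis of Lemma \ref{lm: high-dim regression attack}), apply DP to the bounded clipped quantity, and control the residual tails via a second-moment bound on $Z'_i$ obtained along the same lines as the first-moment bound in Lemma \ref{lm: high-dim regression attack}(1). This is precisely the step carried out in Theorem \ref{thm: high-dim mean lower bound}, and the regression structure enters only through Lemma \ref{lm: high-dim regression attack}, so once that lemma is in hand the remaining reduction to the sparse-mean template is essentially mechanical.
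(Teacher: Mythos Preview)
Your proposal is correct and follows essentially the same route as the paper: the statistical term is the classical sparse-regression lower bound, and the privacy term comes from combining the two parts of Lemma \ref{lm: high-dim regression attack} with a truncated DP-to-expectation inequality (the paper's Lemma \ref{lm: attack upper bound}), exactly as in the sparse-mean case. One small sharpening: the residual tail you must control is that of the \emph{in-sample} attack $Z_i$, not $Z'_i$, and the paper handles it not via a second-moment bound but by using the design assumption $\|\bm x_S\|_2\le 1$ from Lemma \ref{lm: high-dim regression attack} to bound $|\langle \bm x_i,(M-\bbeta)_S\rangle|$ deterministically, reducing $\Pro(|Z_i|>t)$ to a pure Gaussian tail in $y_i-\bm x_i^\top\bbeta$ and allowing the choice $T\asymp\sigma\sqrt{s^*\log(1/\delta)}$.
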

The lower bound is proved in Section \ref{sec: proof of thm: high-dim regression lower bound} of the supplement \cite{supplement}. Similar to the cost of privacy in high-dimensional mean estimation, the lower bound here depends only logarithmically on dimension $d$. We show in the next section that this lower bound is achieved up to factors of $\log n$ by an $(\varepsilon, \delta)$-differentially private algorithm.

\subsection{Algorithm for high-dimensional linear regression}\label{sec: high-dim regression upper bound}
When the dimension of $\bbeta$ exceeds the sample size, directly minimizing $\L_n(\bbeta) = n^{-1}\sum_{i=1}^n (y_i - \bm x_i^\top\bbeta)^2$ no longer leads to an accurate estimate of $\bbeta$, as seen from the rank deficiency of $\nabla^2 \L_n(\bbeta) = n^{-1}\bm X^\top \bm X$. As a consequence, the differentially private, noisy gradient algorithm for the low-dimensional setting is no longer applicable. 

To leverage the sparsity of $\bbeta$, we recall the ``peeling" algorithm for sparse mean estimation in Section \ref{sec: high-dim mean upper bound}, and arrive at the following modification of Algorithm \ref{algo: low-dim regression}.

\vspace{2mm}

\begin{algorithm}[H]\label{algo: high-dim regression}
	\SetAlgoLined
	\SetKwInOut{Input}{Input}
	\SetKwInOut{Output}{Output}
	\SetKwFunction{Peeling}{Peeling}
	\Input{$\L_n(\bbeta)$, data set $(\bm y, \bm X) = \{(y_i, \bm x_i)\}_{i \in [n]}$, step size $\eta^0$, privacy parameters $\varepsilon, \delta$, noise scale $B$, number of iterations $T$, truncation level $R$, feasibility parameter $C$, sparsity $s$, initial value $\bbeta^0$.}
	\For{$t$ in $0$ \KwTo $T-1$}{
		Compute $\bbeta^{t + 0.5} = \bbeta^t - (\eta^0/n)\sum_{i=1}^n  (\bm x_i^\top \bbeta^t-\Pi_{R}(y_i))\bm x_i$\;
		$\bbeta^{t+1} = \Pi_C\left(\Peeling(\bbeta^{t+0.5}, (\bm y, \bm X), s, \varepsilon/T, \delta/T, \eta^0 B/n)\right)$.
	}
	
	\Output{$\bbeta^T$.}
	\caption{Differentially Private Sparse Linear Regression}
\end{algorithm}

\vspace{2mm}

If the ``Peeling" step is replaced by non-private, exact projection of the gradient step onto $\{\bm v \in \R^d: \|\bm v\|_0 \leq s\}$, we recover the well-known iterative hard thresholding algorithm \cite{blumensath2009iterative, jain2014iterative} for high-dimensional sparse regression.

The analysis of Algorithm \ref{algo: high-dim regression} requires some assumptions similar to their low-dimensional counterparts in Section \ref{sec: low-dim regression upper bound}, as follows.
\begin{itemize}
	\item [(P1')] The true parameter vector $\bbeta$ satisfies $\|\bbeta\|_2 < c_0$ for some constant $0 < c_0 < \infty$ and $\|\bbeta\|_0 \leq s^* = o(n)$.
	\item[(D1')] Bounded design: for every index set $I \subseteq [d]$ with $|I| = o(n)$, there is a constant $c_{\bm x} < \infty$ such that $\sqrt{|I|}\|\bm x_I\|_\infty < c_{\bm x}$ with probability 1.
	\item [(D2')] Bounded moments of design: $\E \bm x = \bm 0$ and for every index set $I \subseteq [d]$ with $|I| = o(n)$, the (restricted) covariance matrix $\Sigma_{I} = \E \bm x_I \bm x^\top_I$ satisfies $0 < 1/L < |I| \cdot \lambda_{\min} (\Sigma_{I}) \leq |I| \cdot \lambda_{\max} (\Sigma_{I}) < L$ for some constant $0 < L < \infty$. 
\end{itemize}
These assumptions can be understood as restricted versions of their counterparts, (P1), (D1) and (D2), in the low-dimensional case, Section \ref{sec: low-dim regression upper bound}. When assumptions (P1') and (D1') hold, the algorithm is guaranteed to be $(\varepsilon, \delta)$-differentially private as long as the noise level $B$ is chosen properly. 
\begin{Lemma}\label{lm: high-dim regression privacy}
	If assumption (P1') and (D1') are true, then Algorithm \ref{algo: high-dim regression} is $(\epsilon, \delta)$-differentially private as long as $B \geq 4(R + c_0c_{\bm x})c_{\bm x}/\sqrt{s}$.
\end{Lemma}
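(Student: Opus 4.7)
The plan is to show that each iteration of Algorithm~\ref{algo: high-dim regression} is $(\varepsilon/T, \delta/T)$-differentially private, and then combine the $T$ iterations via the composition theorem (Fact~\ref{lemma: additive composition}). Within each iteration, the only randomness enters through the Peeling subroutine, so I would apply Lemma~\ref{lm: peeling privacy} after establishing that the gradient-step vector $\bbeta^{t+0.5}$ has bounded $\ell_\infty$-sensitivity, and treat the final $\Pi_C$ projection as deterministic post-processing (Fact~\ref{lemma: postprocess}).

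The main work is the sensitivity bound. I would first argue inductively that $\bbeta^t$ is $s$-sparse and satisfies $\|\bbeta^t\|_2 \leq C \leq c_0$: the initialization $\bbeta^0 = \bm 0$ trivially satisfies both, Peeling outputs a vector supported on a set of size at most $s$ by construction, and $\Pi_C$ rescales the output (preserving support) to enforce the norm bound. Let $S_t = \supp(\bbeta^t)$, so $|S_t| \leq s = o(n)$. Assumption (D1') applied with $I = S_t$ gives $\|\bm x_{i, S_t}\|_\infty \leq c_{\bm x}/\sqrt{s}$, hence $\|\bm x_{i, S_t}\|_2 \leq \sqrt{s}\cdot(c_{\bm x}/\sqrt{s}) = c_{\bm x}$; combined with (P1') this yields $|\bm x_i^\top \bbeta^t| = |\bm x_{i, S_t}^\top \bbeta^t_{S_t}| \leq c_0 c_{\bm x}$. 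Since $|\Pi_R(y_i)| \leq R$, the residual is bounded by $R + c_0 c_{\bm x}$, and another invocation of (D1') (choosing $I$ of size $s$ to include the coordinate realizing $\|\bm x_i\|_\infty$) gives $\|\bm x_i\|_\infty \leq c_{\bm x}/\sqrt{s}$. Therefore, for adjacent data sets $\bm Z, \bm Z'$ differing in a single observation, the triangle inequality yields
\begin{align*}
  \|\bbeta^{t+0.5}(\bm Z) - \bbeta^{t+0.5}(\bm Z')\|_\infty \;\leq\; \frac{2\eta^0}{n}(R + c_0 c_{\bm x})\cdot \frac{c_{\bm x}}{\sqrt{s}} \;\leq\; \frac{\eta^0 B}{n}
\end{align*}
under the stated choice $B \geq 4(R + c_0 c_{\bm x})c_{\bm x}/\sqrt{s}$.

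With this $\ell_\infty$-sensitivity bound, Lemma~\ref{lm: peeling privacy} applied with input noise scale $\lambda = \eta^0 B/n$ shows that the Peeling step in iteration $t$ is $(\varepsilon/T, \delta/T)$-differentially private, and the subsequent $\Pi_C$ preserves this by post-processing. Composing over the $T$ iterations via Fact~\ref{lemma: additive composition} gives the claimed $(\varepsilon, \delta)$ guarantee. The subtlety I would flag is that the sensitivity bound has to hold uniformly over all data-dependent histories of $\bbeta^t$; this is precisely why the norm cap $\|\bbeta^t\|_2 \leq c_0$ enforced by $\Pi_C$ and the hard sparsity constraint produced by Peeling are both built into the algorithm, as either feature alone would be insufficient to control $|\bm x_i^\top \bbeta^t|$ in the high-dimensional regime.
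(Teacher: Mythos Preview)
Your proposal is correct and follows essentially the same approach as the paper: bound the $\ell_\infty$-sensitivity of the gradient step using (D1') and (P1'), invoke Lemma~\ref{lm: peeling privacy} to get $(\varepsilon/T,\delta/T)$-privacy per iteration, and compose. Your version is in fact more carefully argued than the paper's terse proof---you spell out the inductive invariants $\|\bbeta^t\|_0\le s$ and $\|\bbeta^t\|_2\le c_0$, explain exactly how (D1') is invoked twice (once on $S_t$ to control $|\bm x_i^\top\bbeta^t|$, once on a size-$s$ set containing the $\ell_\infty$-argmax to control $\|\bm x_i\|_\infty$), and correctly flag that the sensitivity bound must hold for every realization of $\bbeta^t$ the algorithm can produce, which is precisely what the adaptive composition framework requires.
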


The lemma is proved in Section \ref{sec: proof of lm: high-dim regression privacy}. With assumption (D2') in addition, we can obtain the following convergence result for Algorithm \ref{algo: high-dim regression}.

\begin{Theorem}\label{thm: high-dim regression upper bound}
	Let $\{(y_i, \bm x_i)\}_{i \in [n]}$ be an i.i.d. sample from the linear model \eqref{eq: linear model}. Suppose assumptions (P1'), (D1') and (D2') are true. Let $R = \sigma\sqrt{2\log n}$, $C = c_0$ and $B = 4(R + c_0c_{\bm x})c_{\bm x}/\sqrt{s}$ in accordance with Lemma \ref{lm: high-dim regression privacy}, and $\bbeta^0 = \bm 0$. Then there exists some absolute constant $\rho$ such that, if $s = \rho L^4 s^*$, $\eta^0 = s/6L$, $T = \rho L^2 \log(8c_0^2Ln)$ and $n \geq K \cdot \left(R(s^*)^{3/2}\log d \sqrt{\log(1/\delta)}\log n/\varepsilon\right)$ for a sufficiently large constant $K$, the bound
	\begin{align}
		\|\bbeta^T - \bbeta\|^2_{\Sigma_{\bm x}} \lesssim \sigma^2\left(\frac{s^*\log d}{n} + \frac{(s^*\log d)^2 \log(1/\delta)\log^3 n}{n^2\varepsilon^2}\right) \label{eq: high-dim regression upper bound}
	\end{align}
holds with probability at least $1 - c_1\exp(-c_2\log(d/s^*\log n)) - c_1\exp(-c_2n) - c_1\exp(-c_2\log n).$
\end{Theorem}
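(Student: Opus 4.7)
\medskip
\noindent\textbf{Proof proposal for Theorem \ref{thm: high-dim regression upper bound}.}

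\emph{Overall strategy.} Algorithm \ref{algo: high-dim regression} is a private version of iterative hard thresholding, alternating a noisy truncated gradient step with a noisy sparse-selection step (Peeling). The plan is to establish a one-step error recursion of the form
\[
\|\bbeta^{t+1}-\bbeta\|_{\Sigma_{\bm x}}^2 \;\le\; \rho\,\|\bbeta^{t}-\bbeta\|_{\Sigma_{\bm x}}^2 \;+\; \text{(stat)}^2 \;+\; \text{(priv)}^2,
\]
with a contraction constant $\rho<1$ (fixed by the choice $\eta^0=s/6L$ and $s=\rho L^4 s^*$), where the statistical and privacy error terms are independent of $t$. Iterating this recursion $T=O(\log n)$ times makes the initial-value contribution $\lesssim \rho^T\cdot 1$ negligibly small, and the residual error equals (stat)$^2$ plus (priv)$^2$. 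So the bulk of the work is (i) producing the recursion and (ii) plugging in parameter values to match \eqref{eq: high-dim regression upper bound}.

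\emph{One-step analysis.} I would first show the truncation $\Pi_R$ with $R=\sigma\sqrt{2\log n}$ is essentially invisible: since $y_i=\bm x_i^\top\bbeta+\sigma\xi_i$ with $\xi_i\sim N(0,1)$ and $|\bm x_i^\top\bbeta|\lesssim c_0 c_{\bm x}$, the event $\{|y_i|\le R\}$ holds for all $i$ simultaneously with probability at least $1-c_1\exp(-c_2\log n)$; conditionally on this event the gradient used by the algorithm equals the ordinary least-squares gradient. Next, on the event that design matrix satisfies restricted strong convexity/smoothness with curvature constants of order $1/L$ and $L$ on all index sets of size $\lesssim s+s^*$ (which follows from (D2'), (D1') and a standard $\epsilon$-net argument — here the requirement $n\gtrsim s^*\log d$ enters through the restricted eigenvalue event), one applies the standard IHT one-step bound to the intermediate iterate: writing $\bar S=\supp(\bbeta^{t+0.5})\cup\supp(\bbeta)$, the gradient step contracts $\|\bbeta^{t+0.5}-\bbeta\|_{\bar S}$ by a factor $\sqrt{1-\eta^0 c/s}$ relative to $\|\bbeta^{t}-\bbeta\|_2$, plus an additive term controlled by $\sqrt{s}\,\|\nabla\L_n(\bbeta)\|_\infty$. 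The statistical term $\|\nabla\L_n(\bbeta)\|_\infty=\|n^{-1}\sum_i(\sigma\xi_i)\bm x_i\|_\infty\lesssim \sigma\sqrt{\log d/n}$ by a union bound over sub-Gaussian tails, which will eventually deliver the $\sigma^2 s^*\log d/n$ statistical term.

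\emph{Handling the Peeling step and noise accumulation.} To pass from $\bbeta^{t+0.5}$ to $\bbeta^{t+1}$ I invoke Lemma \ref{lm: peeling accuracy}: for any subset $R_2\subseteq (S^t)^c$ and any equally sized $R_1\subseteq S^t$, $\|\bbeta^{t+0.5}_{R_2}\|_2^2\le (1+c)\|\bbeta^{t+0.5}_{R_1}\|_2^2+4(1+1/c)\sum_i\|\bm w_i\|_\infty^2$. Taking $R_1=\supp(\bbeta)\setminus S^t$ and $R_2=\supp(\bbeta)\setminus S^t$-complement inside the support of the true parameter gives a bound on the amount of true signal missed by the selection; combining this with the noise-injection step (the final $\tilde{\bm w}$) yields
$\|\bbeta^{t+1}-\bbeta^{t+0.5}_{\bar S}\|_2^2\lesssim \sum_{i\in[s]}\|\bm w_i\|_\infty^2+\|\tilde{\bm w}\|_\infty^2$, with each $\bm w_i$ a $d$-dimensional i.i.d. Laplace vector whose scale, after accounting for the composition budget $(\varepsilon/T,\delta/T)$, is $O(\eta^0 B T\sqrt{s\log(T/\delta)}/(n\varepsilon))$. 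Standard extreme-value bounds then give $\sum_i\|\bm w_i\|_\infty^2\lesssim (\eta^0)^2 B^2 s^2 T^2\log(1/\delta)\log^2 d/(n^2\varepsilon^2)$. Combining the gradient-step contraction with the Peeling error and restricted strong convexity (to pass between $\|\cdot\|_2$ and $\|\cdot\|_{\Sigma_{\bm x}}$ at a price of $L/s$), and choosing $s=\rho L^4 s^*$ with $\rho$ large enough, yields the advertised one-step recursion with contraction factor $\rho_0<1$.

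\emph{Closing the bound and main difficulty.} Iterating $T=\rho L^2\log(8c_0^2 Ln)$ times, using $\|\bbeta^0-\bbeta\|_2\le c_0$ to kill the transient via $\rho_0^T\le 1/n$, and substituting $B\asymp R\asymp\sigma\sqrt{\log n}$, $s\asymp s^*$ produces exactly
\[
\|\bbeta^T-\bbeta\|_{\Sigma_{\bm x}}^2\lesssim \sigma^2\Big(\tfrac{s^*\log d}{n}+\tfrac{(s^*\log d)^2\log(1/\delta)\log^3 n}{n^2\varepsilon^2}\Big),
\]
on the intersection of the three high-probability events (truncation, restricted eigenvalue, statistical gradient bound), whose complements contribute the probabilities $c_1\exp(-c_2\log n)$, $c_1\exp(-c_2 n)$ and $c_1\exp(-c_2\log(d/s^*\log n))$ respectively. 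The main obstacle I anticipate is the interaction of the Peeling step with the IHT contraction: because Peeling only returns an approximate top-$s$ selection contaminated by Laplace noise whose scale depends on $s$ itself, one has to choose $s$ large enough (of order $L^4 s^*$) to absorb the noisy-selection error into the gradient contraction without destroying it, and then propagate this through all $T$ iterations while keeping the accumulated noise bounded by the claimed privacy term. This is where Lemma \ref{lm: peeling accuracy} must be used carefully, with the free parameter $c$ tuned to match the contraction factor from restricted strong convexity.
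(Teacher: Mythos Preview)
Your proposal captures the right high-level architecture --- the three high-probability events (no truncation, restricted eigenvalues, gradient sup-norm bound), the role of Lemma \ref{lm: peeling accuracy}, the over-parameterized choice $s\asymp L^4 s^*$, and the final rate bookkeeping all match the paper. The substantive difference is \emph{where the contraction is run}. You set up a one-step recursion directly on the parameter error $\|\bbeta^{t}-\bbeta\|_{\Sigma_{\bm x}}^2$. The paper instead proves the contraction on the \emph{objective suboptimality} $\L_n(\bbeta^{t})-\L_n(\hat\bbeta)$ (Lemma \ref{lm: high-dim regression contraction}), iterates that, and only at the very end converts back to parameter distance via the RSC lower bound
\[
\L_n(\bbeta^T)-\L_n(\bbeta^*) \;\ge\; \tfrac{1}{16Ls}\|\bbeta^T-\bbeta^*\|_2^2 \;-\; \langle \nabla\L_n(\bbeta^*),\,\bbeta^*-\bbeta^T\rangle,
\]
which is precisely where the statistical term $\|\nabla\L_n(\bbeta^*)\|_\infty\sqrt{s+s^*}$ enters. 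The proof of Lemma \ref{lm: high-dim regression contraction} is the technical core: it expands $\L_n(\bbeta^{t+1})-\L_n(\bbeta^t)$ by restricted smoothness, applies Lemma \ref{lm: peeling accuracy} together with a companion result (Lemma \ref{lm: peeling overall accuracy}) several times to control the pieces of the support missed by Peeling, and closes with a lemma from \cite{jain2014iterative} relating $\|\bm g^t_{S^t\cup S^*}\|_2^2$ to the loss gap.

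The payoff of the objective-space route is that the Peeling noise enters purely additively as $\sum_{i}\|\bm w_i\|_\infty^2+\|\tilde{\bm w}_{S^{t+1}}\|_2^2$, decoupled from the contraction factor, so no multiplicative interaction between noisy selection and curvature has to be tracked across iterations. Your direct parameter-space recursion is plausible in spirit, but the step ``the gradient step contracts $\|\bbeta^{t+0.5}-\bbeta\|_{\bar S}$'' is not quite right as written ($\bbeta^{t+0.5}$ is dense, and classical IHT contraction relies on \emph{exact} top-$s$ projection), and your application of Lemma \ref{lm: peeling accuracy} with the stated $R_1,R_2$ does not immediately yield a bound on $\|\bbeta^{t+1}-\bbeta^{t+0.5}\|$; you would still need something like Lemma \ref{lm: peeling overall accuracy} to compare the noisy selection to the best $s$-sparse approximation. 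The paper's detour through the loss is exactly what makes this manageable.
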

The theorem is proved in Section \ref{sec: proof of thm: high-dim regression upper bound}. This convergence rate attains the corresponding lower bound \eqref{eq: high-dim regression lower bound} up to factors of $\log n$, for the usual choice of $\delta = n^{-(1+\omega)}$. For selecting tuning parameters $R$ and $s$ in Algorithm \ref{algo: high-dim regression}, we demonstrate in Section \ref{sec: simulations} data-driven and differentially private alternatives to the theoretical choices required by Theorem \ref{thm: high-dim regression upper bound}.

\section{Simulation Studies}\label{sec: simulations}
In this section, we perform simulation studies of our algorithms to evaluate their numerical performance and demonstrate the cost of privacy in various estimation problems. The data are generated as follows.
\begin{description}
	\item[\textbf{Mean estimation}] $\bm x_1,...,\bm x_n$ are independently drawn from $N_d(\bm \mu, \bm I_d)$. Over repetitions of the experiments, the coordinates of $\bmu$ are sample i.i.d. from Uniform$(-10, 10)$ for the low-dimensional problem; in the high-dimensional case, the first $s^*$ coordinates of $\bmu$ are sampled i.i.d. from Uniform$(-10, 10)$ and the other coordinates are set to $0$.
	
	\item[\textbf{Linear regression}] The data $(\bm x_1,y_1),...,(\bm x_n,y_n)$ are generated from the linear model $y_i = \bm x_i^\top \bbeta + \epsilon_i$. The entries of design matrix are sampled i.i.d from the uniform distribution over $(-1/\sqrt{d}, 1/\sqrt{d})$ so that the row normalization assumption (D1) in Section \ref{sec: regression} is satisfied; $\epsilon_1,...\epsilon_n$ is an i.i.d sample from $ N(0,1)$. $\bbeta$ is sampled uniformly from the unit sphere $\{\bm v \in \R^d: \|\bm v\|_2 = 1\}$ for the low-dimensional problem; in the high-dimensional problem, the vector of first $s$ coordinates is sampled uniformly from the unit sphere $\{\bm v \in \R^{s^*}: \|\bm v\|_2 = 1\}$, and the other coordinates are set to $0$.
\end{description}

We shall carry out three sets of experiments with the simulated data:
\begin{itemize}
	\item Compare the performance of our algorithms under different choices of $R$, the truncation tuning parameter.
	\item Compare the performance of the high-dimensional algorithms under different choices of $s$, the sparsity tuning parameter.
	\item Compare our algorithms with their non-private counterparts, and with other differentially private algorithms in the literature.
	
\end{itemize}

\subsection{Tuning of truncation level}\label{sec: truncation tuning}
For each of our four algorithms, we consider three methods of determining the truncation tuning parameter $R$.
\begin{itemize}
	\item No truncation.
	\item The theoretical choice: $R$ is set to be the theoretical value of $4\sigma\sqrt{\log n}$.
	\item Data-driven: compute differentially private estimates of the data set's $2.5\%$ and $97.5\%$ percentiles by Algorithm $1'$ in \cite{lei2011differentially} (see ``Extension to distributions supported on $(-\infty, \infty)$", pp. 6), and truncate the data set at these levels.
\end{itemize}
As shown in Figure \ref{fig: t tuning} below, the data-driven method incurs comparable errors to the no truncation case and the theoretical choice of $R$, suggesting that it is a viable method for choosing $R$ in practice. It should be cautioned that the optimistic performance of constant quantile truncation benefits from the symmetry and light-tailedness of the Gaussian distribution; it may not be applicable to all types of data distribution.

\begin{figure}[h]
	\centering
	\subfloat[][]{\includegraphics[width=.40\textwidth]{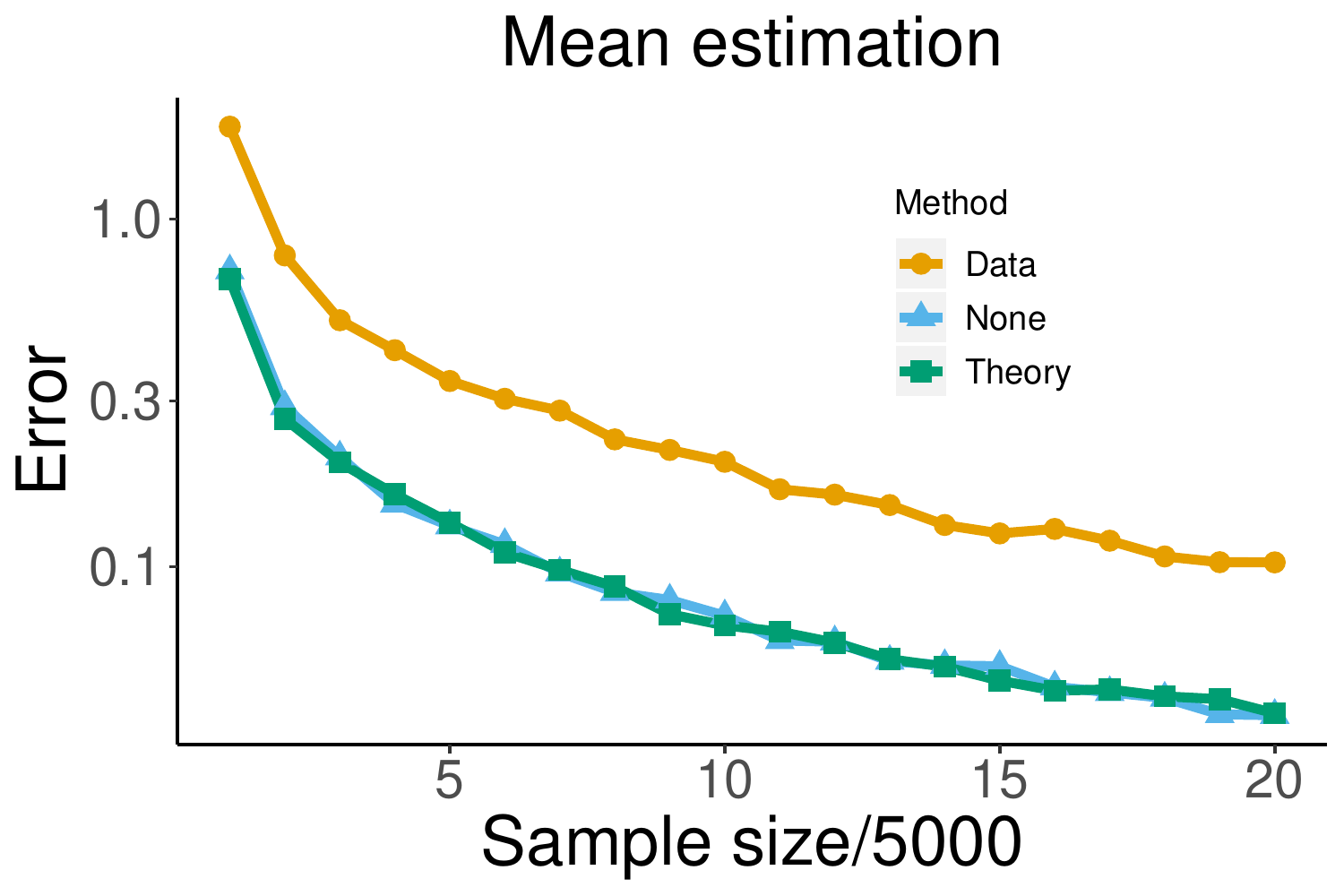}}\quad
	\subfloat[][]{\includegraphics[width=.40\textwidth]{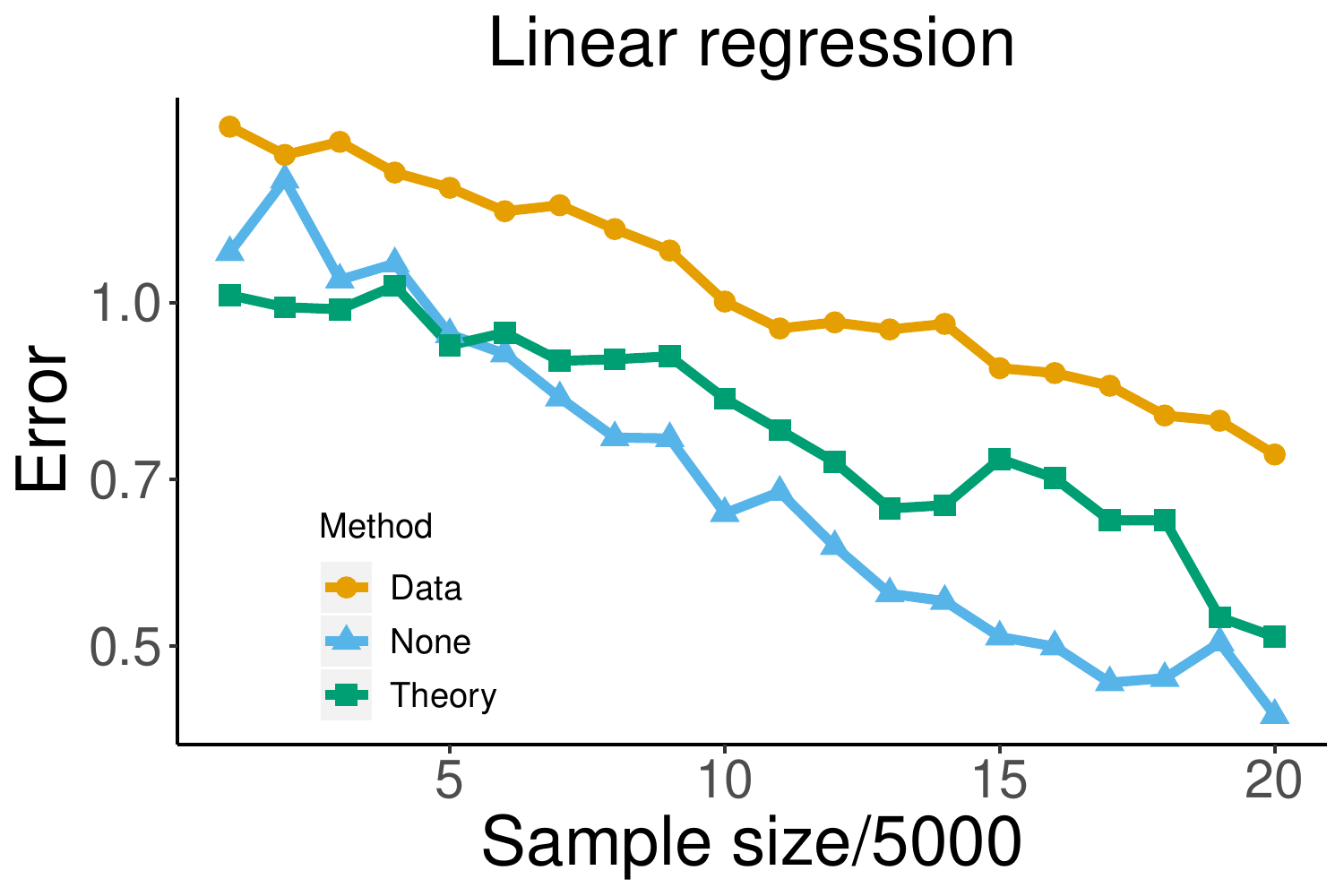}}\quad \\
	\subfloat[][]{\includegraphics[width=.40\textwidth]{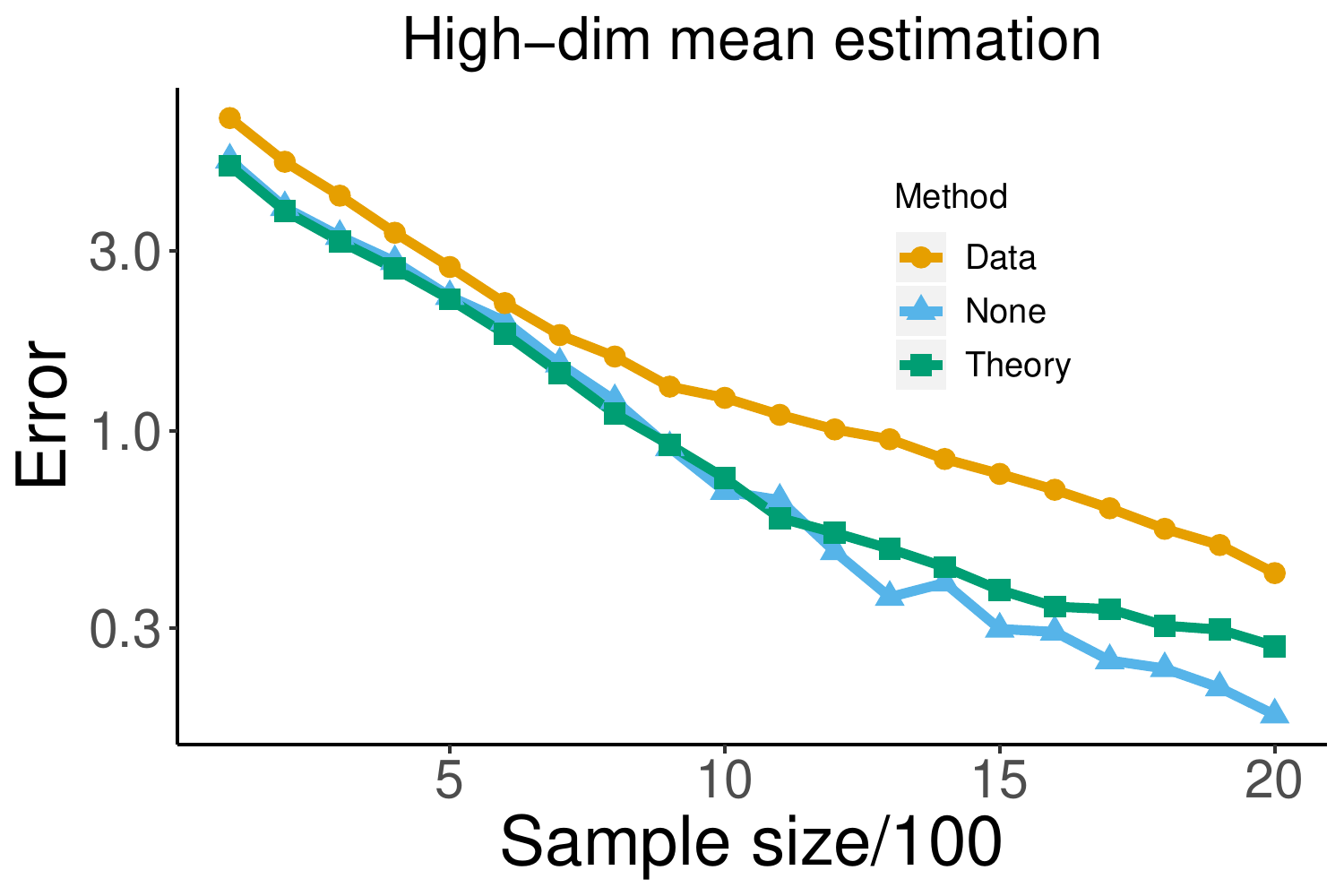}}\quad
	\subfloat[][]{\includegraphics[width=.40\textwidth]{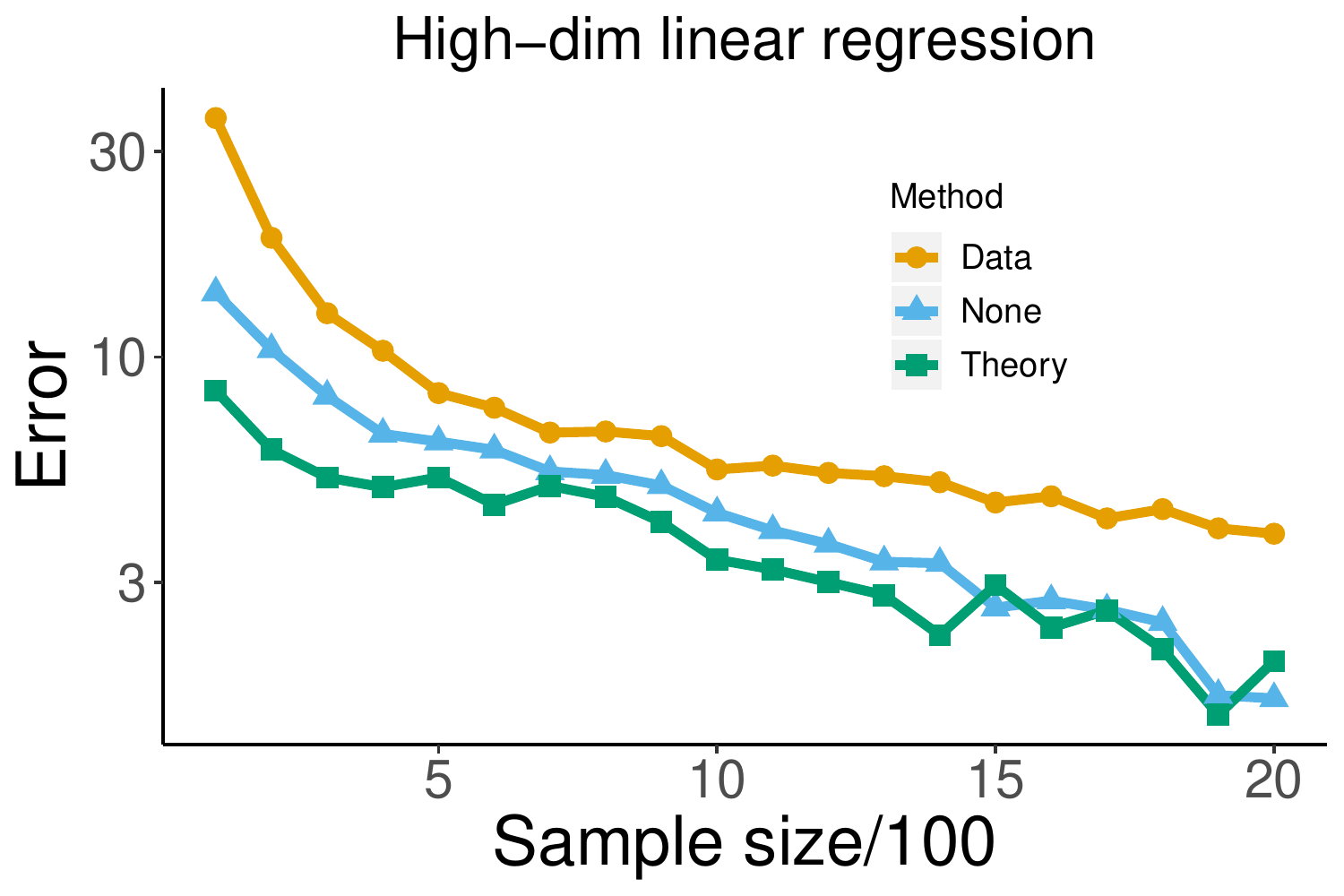}}\quad
	\caption{		\fontsize{10pt}{10}\selectfont
	Average $\ell_2$-error over 100 repetitions plotted against sample size $n$, with privacy level set at (0.5, $10/n^{1.1}$). (a) \& (b): mean estimation and linear regression with $d=20$ and $n$ from 5000 to 100000. (c) \& (d): high-dimensional mean estimation and linear regression with $n$ increasing from 100 to 2000, $d=n$, and $s = 20$.}
\label{fig: t tuning}
\end{figure}

\subsection{Tuning of ${s}$}\label{sec: sparsity tuning}
Our algorithms for high-dimensional problems require a sparsity tuning parameter $s$. We compare their performances when supplied with the true sparsity $s^*$ and when $s$ is chosen by 5-fold cross validation. The cross-validation error is first computed over a uniform grid of values from $s^*/2$ to $2s^*$. We then truncate these cross-validation errors with Algorithm $1'$ in \cite{lei2011differentially}, so that the truncated cross-validation errors have bounded sensitivity. With bounded sensitivity, the exponential mechanism \cite{mcsherry2007mechanism} can be applied to the (truncated) cross-validation errors to select a value of $s$ in a differentially private manner.

\begin{figure}[H]
	\centering
	\subfloat[][]{\includegraphics[width=.40\textwidth]{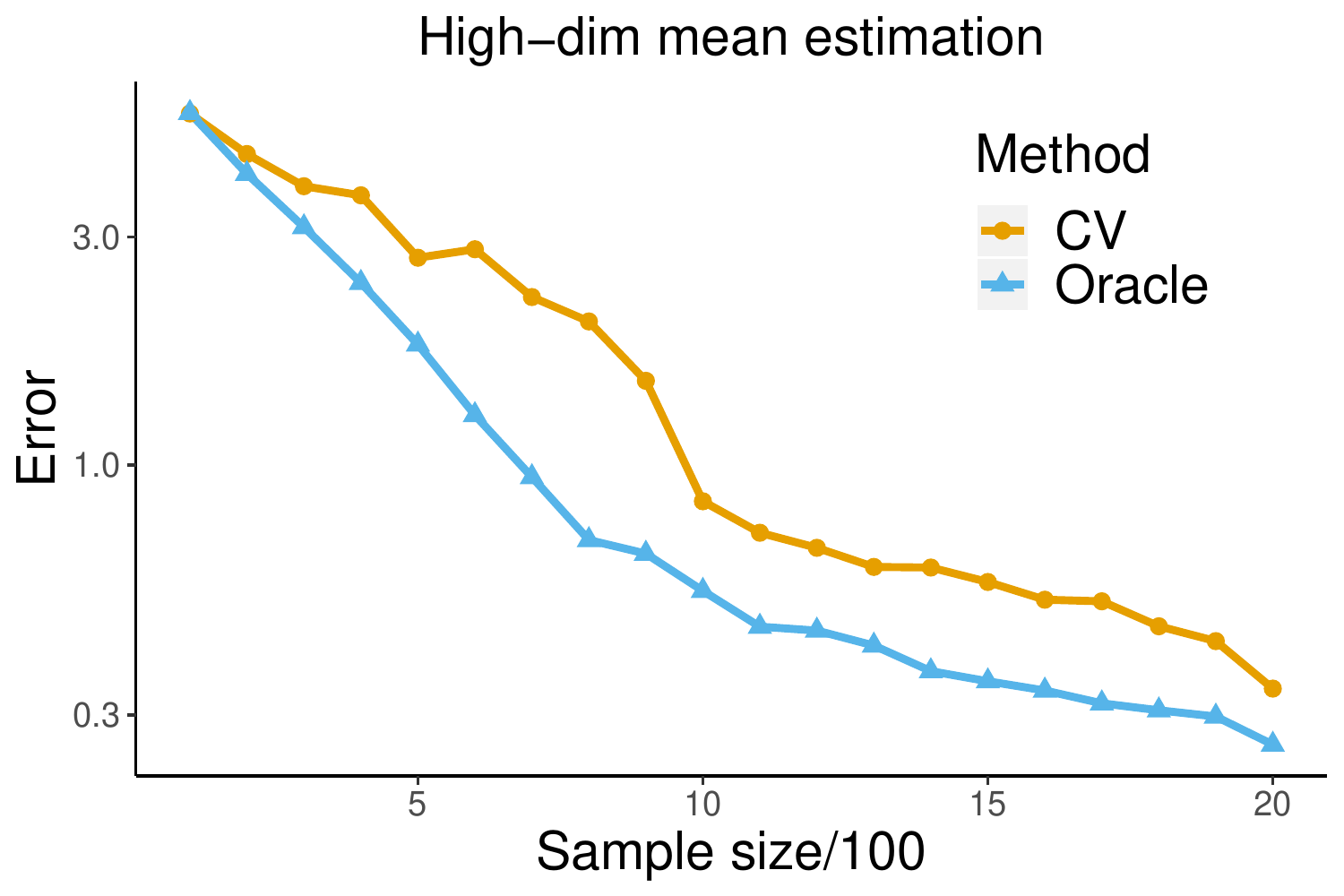}}\quad
	\subfloat[][]{\includegraphics[width=.40\textwidth]{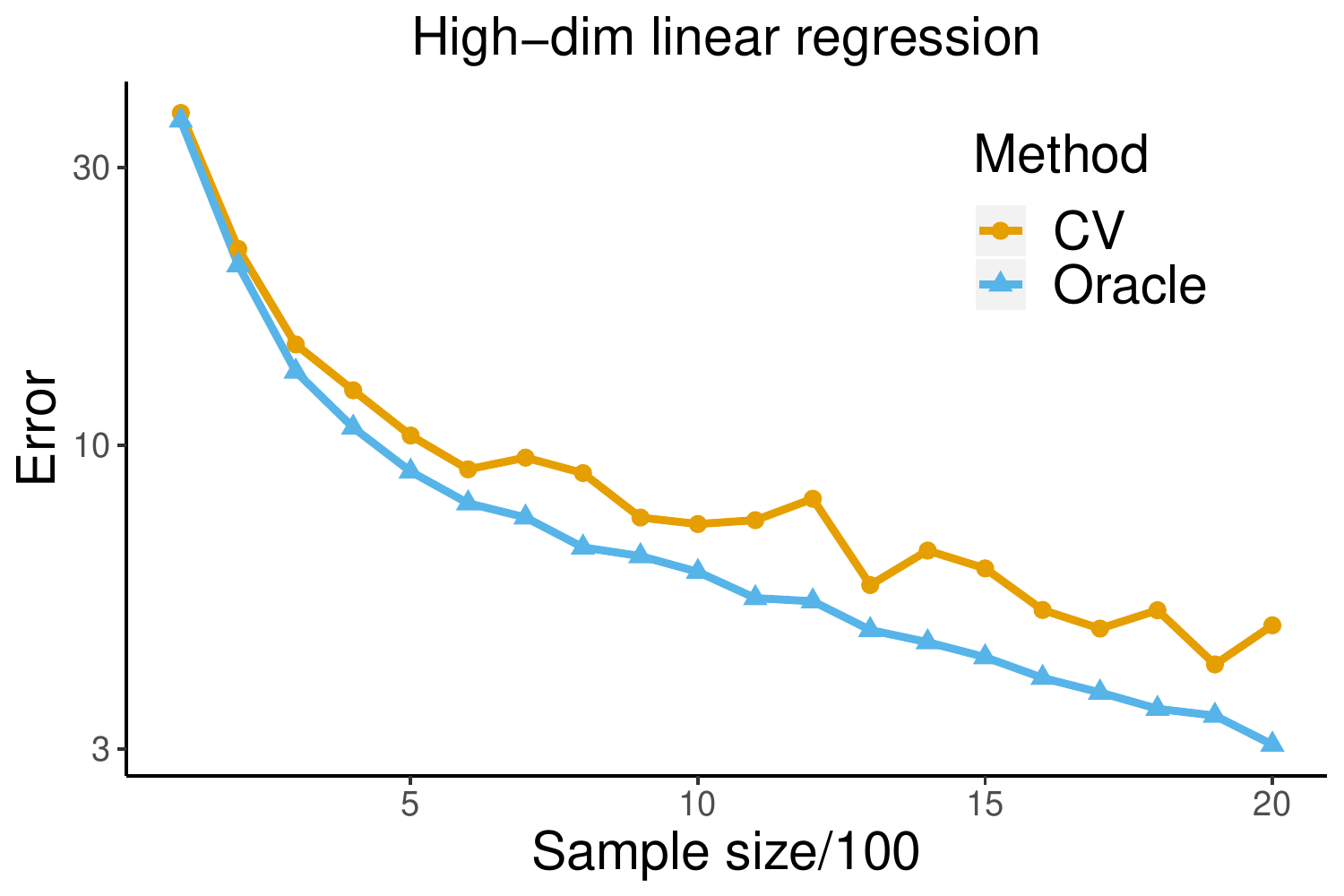}}\quad
	\caption{		\fontsize{10pt}{10}\selectfont
		Average $\ell_2$-error over 50 repetitions plotted against sample size $n$, with privacy level set at (0.5, $10/n^{1.1}$). (a) \& (b):  high-dimensional mean estimation and linear regression with $n$ increasing from 100 to 2000, $d=n$, and $s = 20$.}
	\label{fig: s cv}
\end{figure}
\begin{figure}[H]
	\centering
	\subfloat[][]{\includegraphics[width=.40\textwidth]{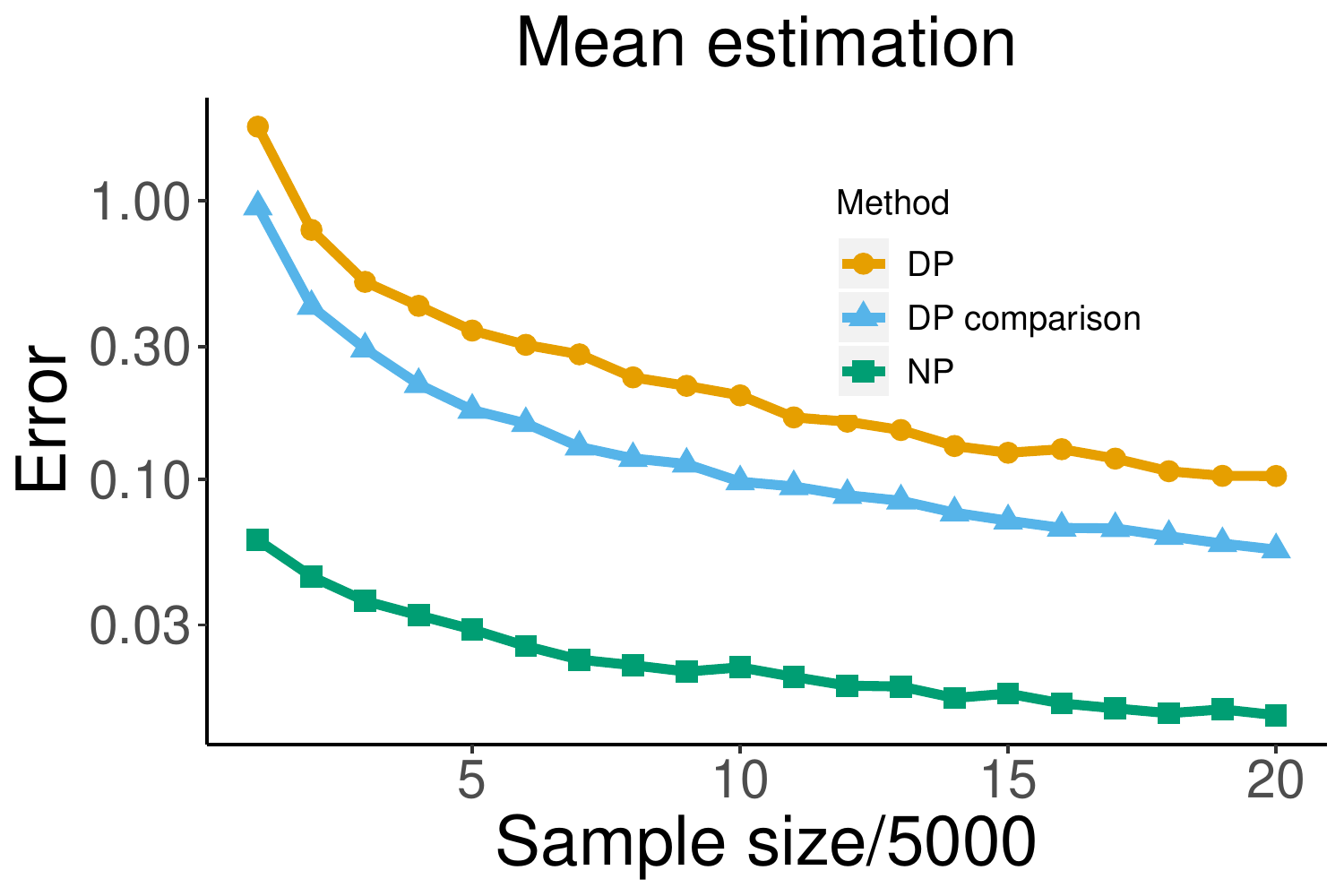}}\quad
	\subfloat[][]{\includegraphics[width=.40\textwidth]{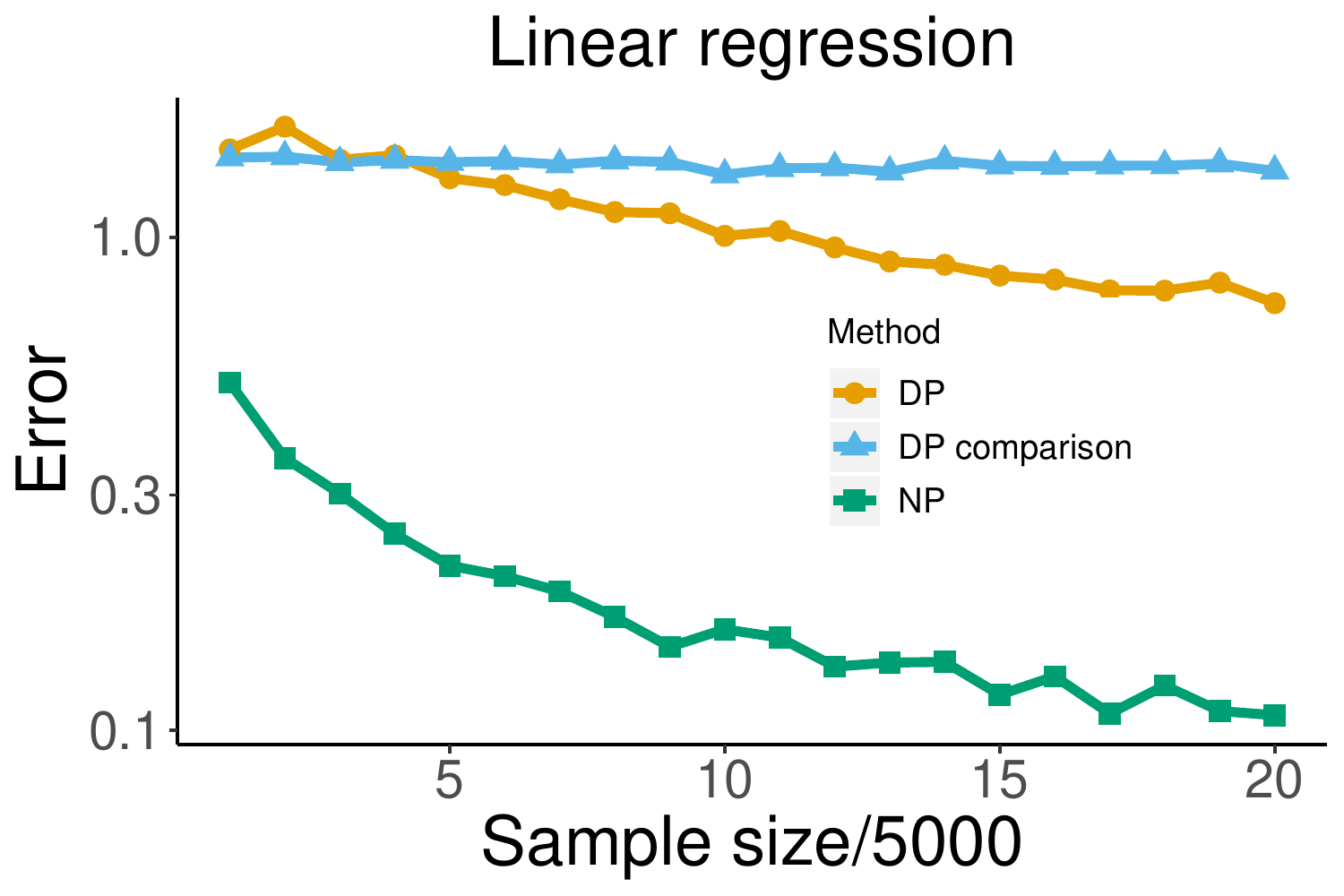}}\quad \\
	\subfloat[][]{\includegraphics[width=.40\textwidth]{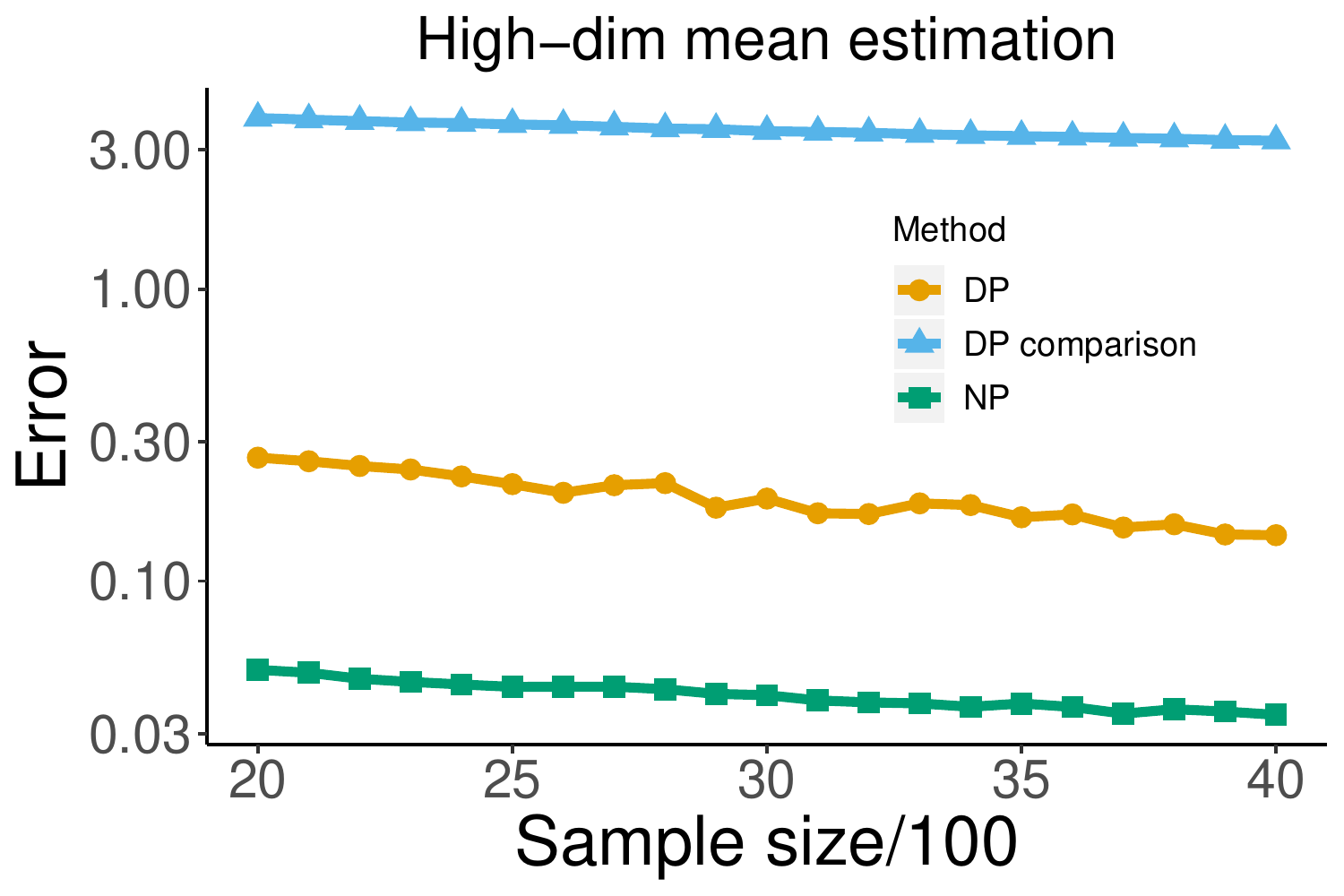}}\quad
	\subfloat[][]{\includegraphics[width=.40\textwidth]{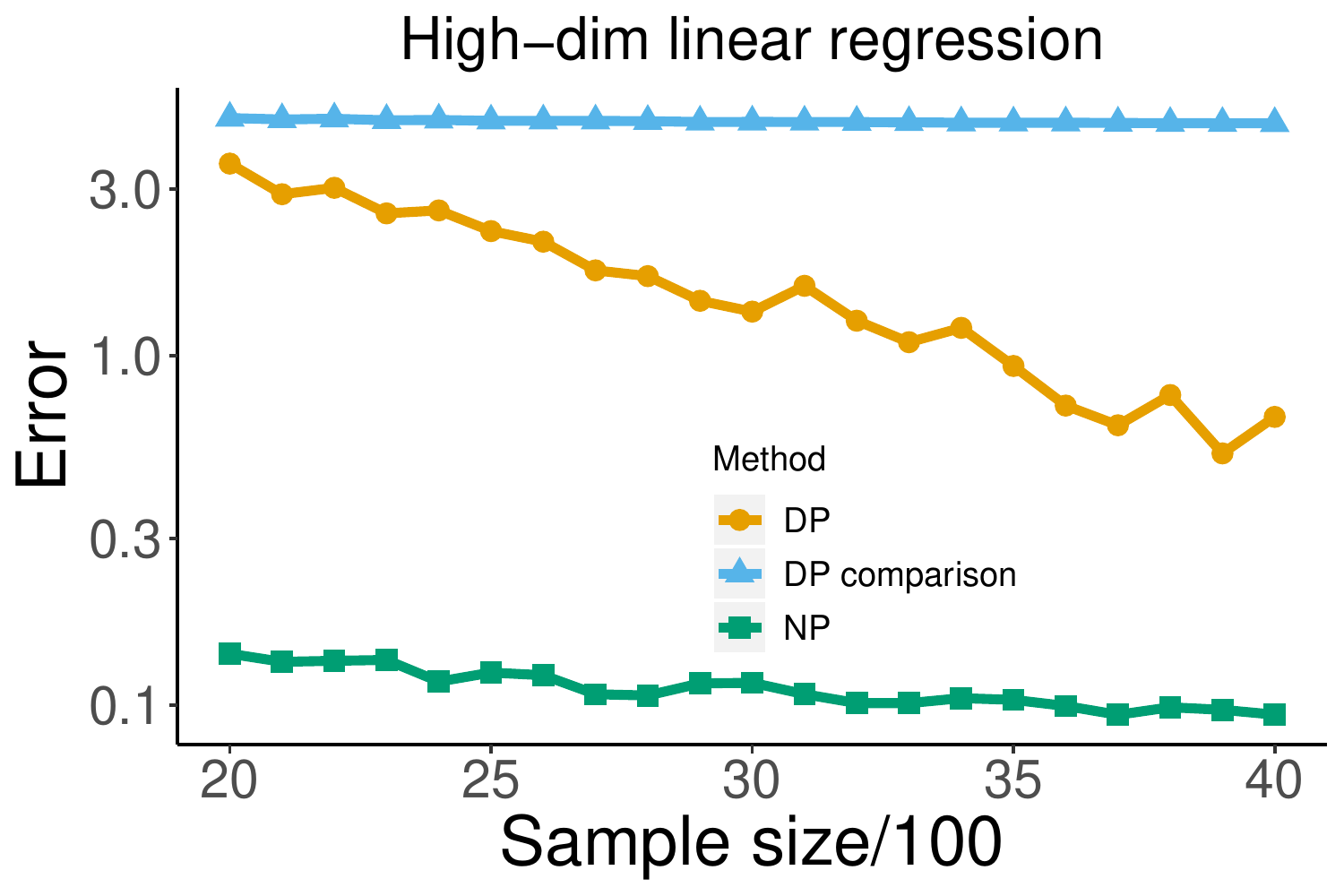}}\quad
	\caption{		\fontsize{10pt}{10}\selectfont
		Average $\ell_2$-error over 100 repetitions plotted against sample size $n$, with privacy level set at (0.5, $10/n^{1.1}$). (a) \& (b): mean estimation and linear regression with fixed $d=20$ and $n$ increasing from 5000 to 100000. (c) \& (d): high-dimensional mean estimation and linear regression with $n$ increasing from 2000 to 4000, $d=2n$, and $s = 20$.}
	\label{fig: comparisons}
\end{figure}

Informed by the previous section on tuning $R$, the truncation tuning parameters for experiments in this section are selected by the data-driven method. In each problem, as the plots show, selecting $s$ by cross validation leads to errors comparable with their counterparts when the algorithms are supplied with the true sparsity $s^*$.

\subsection{Comparisons with other algorithms}\label{sec: comparison with other algos}
We compare our algorithms with their non-private counterparts, as well as other differentially private algorithms in the literature. For the low-dimensional problems, we consider Algorithm 4 in \cite{karwa2017finite} for mean estimation and Algorithm 1 in \cite{sheffet2017differentially} for regression. For the high-dimensional problems, we compare with the method in \cite{talwar2015nearly}. 

There are significant gaps in performance between our algorithms and those in \cite{sheffet2017differentially, talwar2015nearly}. It is important to note, however, that the primary strength of Algorithm 1 in \cite{sheffet2017differentially} is its ability to produce accurate test statistics with differential privacy, and the algorithm by \cite{talwar2015nearly} is primarily targeted at minimizing the excess empirical risk, so these numerical experiments may not be fully reflective of their advantages.

To further understand the improved numerical performance, we report here some observations from the numerical experiments. For the private Johnson-Lindenstrauss projection algorithm in \cite{sheffet2017differentially}, we observed that the ridge regression subroutine of the algorithm is frequently activated even when $n$ is very large, resulting in a ridge regression solution with regularization parameter of order $O(\log(1/\delta)/\varepsilon)$ and leading to significant bias. For the private Frank-Wolfe algorithm in \cite{talwar2015nearly}, the solution is often non-sparse with large values outside the true support of $\bbeta$, while our algorithm guarantees a sparse solution by construction and converges to the non-private solution as $n$ grows.

\section{Data Analysis}\label{sec: data analysis}
In this section, we demonstrate the numerical performance of the differentially private algorithms on real data sets.

\subsection{SNP array of adults with schizophrenia}
We analyze the SNP array data of adults with schizophrenia, collected by \cite{lowther2017impact}, to illustrate the performance of our high-dimensional sparse mean estimator. In the dataset, there are 387 adults with schizophrenia, 241 of which are labeled as ``average IQ" and 146 of which are labeled as ``low IQ". The SNP array is obtained by genotyping the subjects with the Affymetrix Genome-Wide Human SNP 6.0 platform. For our analysis, we focus on the 2000 SNPs with the highest minor allele frequencies (MAFs); the full dataset is available at \url{https://www.ncbi.nlm.nih.gov/geo/query/acc.cgi?acc=GSE106818}.

Privacy-perserving data analysis is very much relevant for this dataset and genetic data in general, because as \cite{homer2008resolving} shows, an adversary can infer the absence/presence of an individual 's genetic data in a large dataset by cross-referencing summary statistics, such as MAFs, from multiple genetic datasets. As MAFs can be calculated from the mean of an SNP array, differentially-private estimators of the mean allow reporting the MAFs without compromising any individual's privacy. 

The data set takes the form of a $387 \times 2000$ matrix. The entries of the matrix take values 0, 1 or 2, representing the number of minor allele(s) at each SNP, and therefore the MAF of each SNP location in this sample can be obtained by computing the mean of the rows in this matrix. Sparsity is introduced by considering the difference in MAFs of the two IQ groups: the MAFs of the two groups are likely to differ at a small number of SNP locations among the 2000 SNPs considered.
\begin{figure}[H]
	\centering
	\subfloat[][]{\includegraphics[width=.37\textwidth]{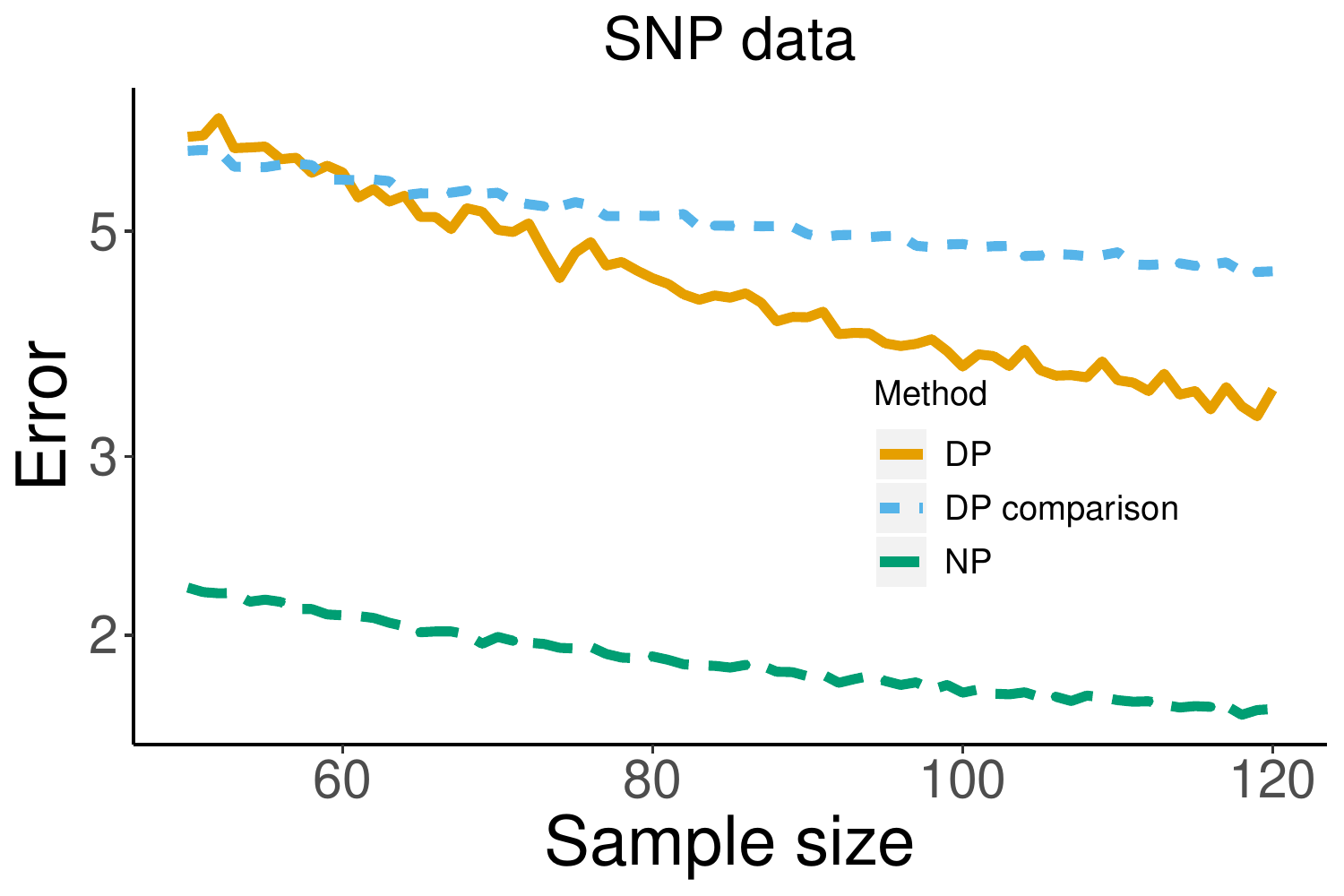}}\quad
	\subfloat[][]{\includegraphics[width=.37\textwidth]{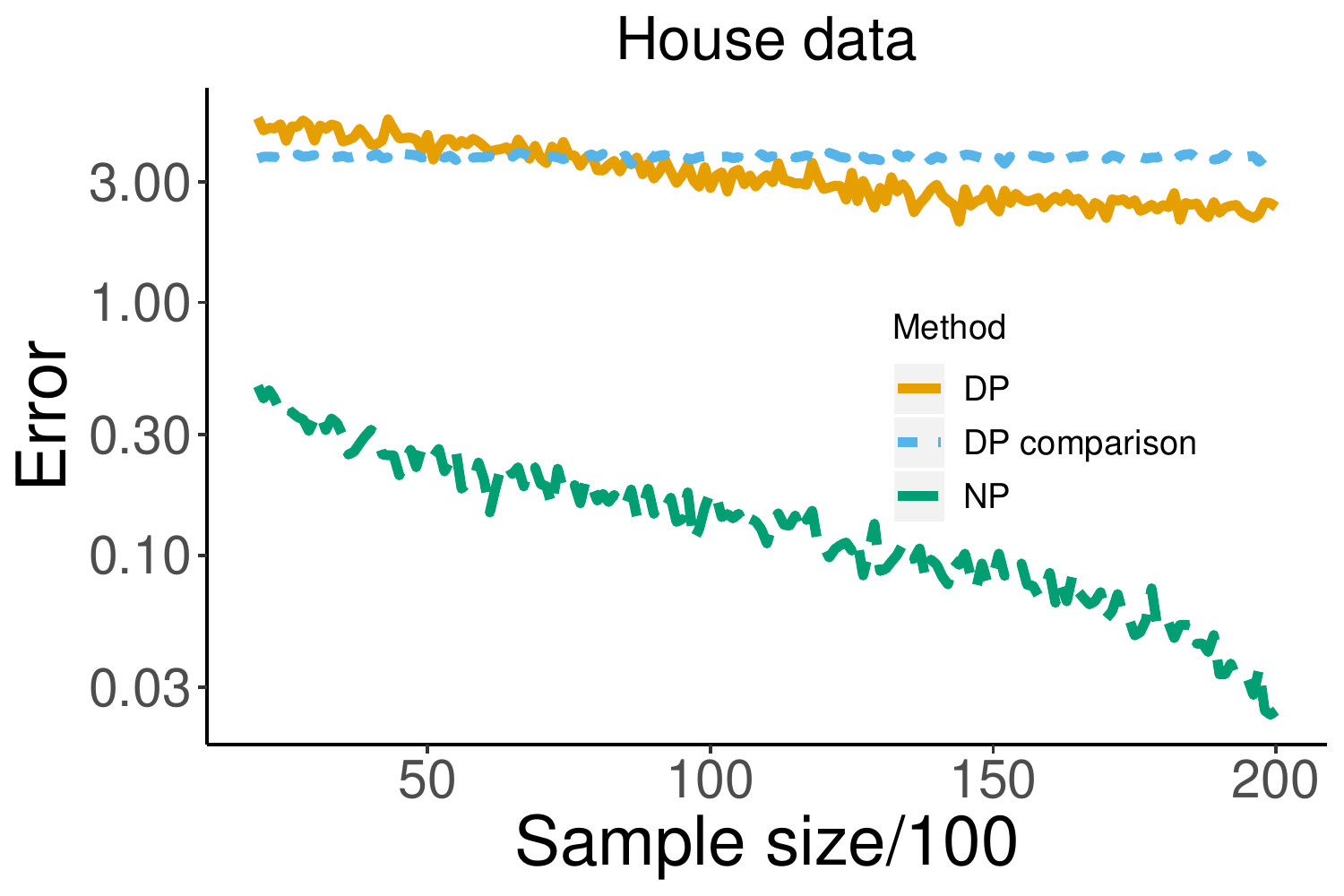}}\quad
	\caption{(a): The estimate of $\E[\|\hat\bmu-\bmu\|_2]$ for the differentially private sparse mean estimator as sample size increases from $50$ to $120$, with $s = 20$. \\
	(b): The estimate of $\E[\|\hat\bbeta-\bbeta\|_2]$ for the differentially private OLS estimator, compared with its differentially private counterpart, as sample size increases from $2000$ to $20000$.}
	\label{fig: data}
\end{figure}
For $m$ ranging from $10$ to $120$, we subsample $m$ subjects from each of the two IQ groups, say $\{\bm x_{11}, \bm x_{12}, \cdots \bm x_{1m}\}$ and $\{\bm x_{21}, \bm x_{22}, \cdots \bm x_{2m}\}$, and apply our sparse mean estimator to $\{\bm x_{11}- \bm x_{21}, \bm x_{12}-\bm x_{22}, \cdots \bm x_{1m}-\bm x_{2m}\}$ with $s = 20$ and privacy parameters $(\varepsilon, \delta) = (0.5, 10/n^{1.1})$. The error of this estimator is then calculated by comparing with the mean of the entire sample. This procedure is repeated 100 times to obtain Figure \ref{fig: data}(a), which displays the estimate of $\E[\|\hat\bmu-\bmu\|_2]$ as $m$ increases from $50$ to $120$. We also plotted the corresponding curve for the method in \cite{talwar2015nearly} for comparison.

\subsection{Housing prices in California}
For the linear regression problem, we analyze a housing price dataset with economic and demographic covariates, constructed by \cite{pace1997sparse} and available for download at \url{http://lib.stat.cmu.edu/datasets/houses.zip}. In this dataset, each subject is a block group in California in the 1990 Census; there are 20640 block groups in this dataset. The response variable is the median house value in the block group; the covariates include the median income, median age, total population, number of households, and the total number of rooms of all houses in the block group. In general, summary statistics such as mean or median do not have any differential privacy guarantees, so the absence of information on individual households in the dataset does not preclude an adversary from extracting sensitive individual information from the summary statistics. Privacy-preserving methods are still desirable in this case. 

For $m$ ranging from $100$ to $20600$, we subsample $m$ subjects from the dataset to compute the differentially private OLS estimate, with privacy parameters $(\varepsilon, \delta) = (0.5, 10/n^{1.1})$. The error of this estimator is then calculated by comparing with the non-private OLS estimator computed using the entire sample. This procedure is repeated 100 times to obtain Figure \ref{fig: data}(b), which displays the estimate of $\E[\|\hat\bbeta-\bbeta\|_2]$ as $m$ increases from $2000$ to $20000$. The design matrix is standardized before applying the algorithm. The corresponding curve for the method in \cite{sheffet2017differentially} is also plotted for comparison.

\section{Discussion}\label{sec: discussions}

	Our paper investigates the tradeoff between statistical accuracy and privacy, by providing minimax lower bounds with differential privacy constraint and proposing differentially private algorithms with rates of convergence attaining the lower bounds up to logarithmic factors. For the lower bounds, we considered a technique based on tracing adversary and illustrated its utility by establishing minimax lower bounds for differentially private mean estimation and linear regression. These lower bounds are shown to be tight up to logarithmic factors via analysis of differentially private algorithms with matching rates of convergence.
	 	
	Beyond the theoretical results, numerical performance of the private algorithms are demonstrated in simulations and real data analysis. The results suggest that the proposed algorithms have robust performance with respect to various choices of tuning parameters, achieve accuracy comparable to or better than that of existing differentially private algorithms, and can compute efficiently for sample sizes and dimensions up to tens of thousands. The numerical results corroborate the cost of privacy delineated in the theorems by exhibiting shrinking but non-vanishing gaps of accuracy between the private algorithms and their non-private counterparts. The theoretical and numerical results together can inform practitioners of differential privacy the necessary sacrifice of accuracy at a prescribed level of privacy, or the appropriate choice of privacy parameters if a given level of accuracy is desired.
	
	There are many promising avenues for future research. It is of significant interest to study the optimal tradeoff of privacy and accuracy in statistical problems beyond mean estimation and linear regression. Examples include covariance/precision matrix estimation, graphical model recovery, non-parametric regression, and principal component analysis. Along the way, it is of importance to further develop general approaches of designing privacy-preserving algorithms, as well as more general lower bound techniques than those presented in this work.
	
	One natural extension is uncertainty quantification with privacy constraints, which is largely unexplored in the statistics literature. Notably, \cite{karwa2017finite} established the rate-optimal length of differentially private confidence intervals for the (one-dimensional) Gaussian mean. The technical tools developed in our paper may provide insights for constructing optimal statistical inference procedures in the context of, say, high-dimensional sparse mean estimation and linear regression.
	
	Yet another intriguing direction of research is the cost of other notions of privacy, such as concentrated differential privacy \cite{dwork2016concentrated}, R\'enyi differential privacy \cite{mironov2017renyi}, and Gaussian differential privacy \cite{dong2019gaussian}. These notions of privacy have found important applications such as stochastic gradient Langevin dynamics, stochastic Monte Carlo sampling \cite{wang2015privacy} and deep learning \cite{bu2019deep}.

\section{Proofs}\label{sec: proofs}
In this section, we prove the lower bound of low-dimensional mean estimation, Lemma \ref{lm: low-dim mean group privacy} and Theorem \ref{thm: low-dim mean lower bound}, and the upper bound of high-dimensional linear regression, Theorem \ref{thm: high-dim regression upper bound}.

\subsection{Proof of Lemma \ref{lm: low-dim mean group privacy}}\label{sec: proof of lm: low-dim mean group privacy}
\begin{proof}[Proof of Lemma \ref{lm: low-dim mean group privacy}]
	Let $k = (C/2)\log(\frac{1}{n\delta})/\varepsilon$, with the value of $0 < C < 1$ to be chosen later. By the assumed regime of $\delta$ as a function of $n$, we have $k \asymp \log(1/\delta)/\varepsilon$ and $k < n/2$. We assume that $k$ divides $n$ without the loss of generality.
	
	For an arbitrary $M \in \mathcal M_{\varepsilon, \delta}$,  we define $M_k(\bm Z) \equiv M(\bm Y)$. Because $M$ is $(\varepsilon, \delta)$-differentially private, $M_k$ is also differentially private by post-processing. To lower bound $\E[\|M(\bm Y) - \E\bm y_1\|_2|\bm Z]$, we observe that it suffices to find some appropriate distribution of $\bm Z$ so that $\E\|M_k(\bm Z) - \bar{\bm Z}\|_2$ can be lower bounded: as $\|M(\bm Y) - \E\bm y_1\|_2 = \|M_k(\bm Z) - \bar{\bm Z}\|_2$ by construction, there must be a realization of $\bm Z$ such that $\E[\|M(\bm Y) - \E\bm y_1\|_2|\bm Z]$ is also lower bounded.
	
	Since the sample size of $\bm Z$ does satisfy the assumption of the preliminary lower bound \eqref{eq: low-dim mean prelim lower bound 1}, the bound does apply provided that $M_k$ is a differentially private algorithm with respect to $\bm Z$. To this end, we consider the group privacy lemma:
	\begin{Lemma}[group privacy, \cite{steinke2017between}]
		\label{lm: group privacy}
		For every $m \geq 1$, if $M$ is $(\varepsilon, \delta)$-differentially private, then for every pair of datasets $\bm X = \{\bm x_k\}_k$ and $\bm Z = \{\bm z_k\}_k$ satisfying $\sum_i \1(\bm x_i \neq \bm z_i) \leq m$, and every measurable set $S$,
		\begin{align*}
			\Pro(M(\bm X) \in S) \leq e^{\varepsilon m} \Pro(M(\bm Z) \in S) + \frac{e^{\varepsilon m} -1}{e-1} \cdot \delta.
		\end{align*}
	\end{Lemma}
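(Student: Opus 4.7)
The plan is to prove the bound by induction on $m$, using Definition \ref{def: dp} as the base case and a one-entry-at-a-time ``interpolation'' between $\bm X$ and $\bm Z$ for the inductive step. This is the standard route for group privacy bounds; the only quantitative content is tracking how the additive $\delta$ term accumulates across $m$ applications of the differential privacy inequality.

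First, for $m = 1$ the statement coincides with Definition \ref{def: dp}, since $\bm X$ and $\bm Z$ differ in at most one coordinate and the coefficient on $\delta$ reduces to $1$. Now suppose the bound is established for $m-1$, and consider $\bm X$ and $\bm Z$ with $\sum_i \1(\bm x_i \neq \bm z_i) \leq m$. Pick any index $i^*$ at which the two datasets disagree (if none exists the claim is trivial), and construct an interpolating dataset $\bm Y$ that equals $\bm X$ except in coordinate $i^*$, where $\bm Y_{i^*} = \bm z_{i^*}$. By construction $\bm Y$ is adjacent to $\bm X$, and $\bm Y$ differs from $\bm Z$ in at most $m-1$ coordinates, so both the definition of differential privacy and the induction hypothesis are applicable.

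Applying the definition of $(\varepsilon, \delta)$-differential privacy to the pair $(\bm X, \bm Y)$ gives $\Pro(M(\bm X) \in S) \leq e^\varepsilon \Pro(M(\bm Y) \in S) + \delta$. Applying the induction hypothesis to $(\bm Y, \bm Z)$ and substituting yields
\[
\Pro(M(\bm X) \in S) \leq e^{\varepsilon m} \Pro(M(\bm Z) \in S) + \left( 1 + e^\varepsilon + \cdots + e^{\varepsilon(m-1)} \right) \delta.
\]
The geometric sum evaluates to $(e^{\varepsilon m} - 1)/(e^\varepsilon - 1)$, giving the stated additive factor (up to the minor point that as written the excerpt has $e-1$ in the denominator, which should read $e^\varepsilon - 1$ for the induction to close cleanly; for $\varepsilon \le 1$ one has $e^\varepsilon - 1 \ge \varepsilon \ge e - 1$ is false, so the intended bound is with $e^\varepsilon - 1$, which is weaker than $e - 1$ and hence still implies the displayed inequality in the regime $\varepsilon < 1$ used throughout the paper).

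The main obstacle is purely accounting: one must be careful that the $\delta$ from the single-step DP inequality gets multiplied by the $e^{\varepsilon(m-1)}$ prefactor that comes from the induction hypothesis, so that the new coefficient is $e^{\varepsilon(m-1)} \cdot 1 + (e^{\varepsilon(m-1)} - 1)/(e^\varepsilon - 1)$, which telescopes to $(e^{\varepsilon m}-1)/(e^\varepsilon - 1)$. No probabilistic machinery beyond the definition is needed; the argument is purely a deterministic inequality iterated $m$ times along the path $\bm X \to \bm Y_1 \to \cdots \to \bm Y_{m-1} \to \bm Z$ of one-coordinate edits, and in fact one could equivalently write the proof without induction by displaying this chain explicitly.
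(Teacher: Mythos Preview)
The paper does not supply a proof of this lemma; it is quoted from \cite{steinke2017between} and used as a black box inside the proof of Lemma~\ref{lm: low-dim mean group privacy}. Your induction along a path of single-coordinate edits is exactly the standard argument and correctly produces the additive coefficient $\sum_{j=0}^{m-1} e^{\varepsilon j} = (e^{\varepsilon m}-1)/(e^\varepsilon-1)$.

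Your parenthetical about the denominator, however, has the implication backwards. For $0 < \varepsilon < 1$ one has $e^\varepsilon - 1 < e - 1$, so
\[
\frac{e^{\varepsilon m}-1}{e^\varepsilon - 1} \;>\; \frac{e^{\varepsilon m}-1}{e-1}.
\]
The bound you actually derived (denominator $e^\varepsilon-1$) is therefore \emph{weaker} than the one printed (denominator $e-1$), and a weaker inequality does not imply a stronger one. The printed $e-1$ is simply a typo for $e^\varepsilon-1$; the version you proved is the correct statement of group privacy and is what the cited reference contains. A second, minor presentational slip: the telescoping you write in the last paragraph, $e^{\varepsilon(m-1)} + (e^{\varepsilon(m-1)}-1)/(e^\varepsilon-1)$, corresponds to applying the induction hypothesis \emph{first} and then the single DP step, which is the reverse of the order you described two paragraphs earlier (DP to $(\bm X,\bm Y)$, then induction on $(\bm Y,\bm Z)$ yields $1 + e^\varepsilon\cdot\frac{e^{\varepsilon(m-1)}-1}{e^\varepsilon-1}$). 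Both orders give the same final coefficient, but the bookkeeping should match.
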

	The group privacy lemma suggests that, to characterize the privacy parameters of $M_k$, it suffices to upper-bound the number of changes in $\bm Y$ incurred by replacing one element of $\bm Z$. Let $m_i$ denote the number of times that $\bm z_i$ appears in a sample of size $n$ drawn with replacement from $\bm Z$, then our quantity of interest here is simply $\max_{i \in [n/k]} m_i$.
	
	To analyze $\max_{i \in [n/k]} m_i$, we first show that $\delta$ as a function of $n$ must satisfy one of the following two statements.
	\begin{enumerate}
		\item[1.] There is a fixed constant $\tau$ such that $\frac{n}{(n/k)\log(n/k)} \leq {C\tau}/{\varepsilon}$ when $n$ is sufficiently large.
		\item [2.] Case 1 fails to hold: we have $k > (C/2)\tau/\varepsilon\log(n/k)$ for any constant $\tau$, as long as $n$ is sufficiently large.
	\end{enumerate}
	The dichotomy is made possible by the assumption that $\log(\delta)/\log(n)$ is non-increasing in $n$ and $\delta < n^{-(1+\omega)}$ for some fixed $\omega > 0$. Under this assumption, we have either $\lim_{n\to\infty}\log(\delta)/\log(n)=c<-1$ or $\lim_{n\to\infty}\log(\delta)/\log(n)=-\infty$.
	
	For the first case, we have $k = (C/2)\log(\frac{1}{n\delta})/\varepsilon=(C/2\varepsilon)\cdot (\log(1/\delta)-\log(n))\in((C/2\varepsilon)\cdot  c_1\log n,(C/2\varepsilon)\cdot  c_2\log n)$ for some $c_1, c_2>0$ when $n$ is sufficiently large.  Therefore
	\begin{align*}
		\frac{n}{(n/k)\log(n/k)} =\frac{k}{\log(n/k)} \leq\frac{c_2\cdot (C/2\varepsilon)\cdot \log n}{\log(2n\varepsilon/(C\cdot c_1\cdot\log n))}\leq {C\tau}/{\varepsilon},
	\end{align*}
	for some $\tau>0$. This corresponds to the first statement. 
	
	For the second case $\lim_{n\to\infty}\log(\delta)/\log(n)=-\infty$, we then have for any constant $c_3>0$ and sufficiently large $n$, $\log(1/\delta)> c_3 \log(n)$.
	Then  $k = (C/2)\log(\frac{1}{n\delta})/\varepsilon=(C/2\varepsilon)\cdot (\log(1/\delta)-\log(n))\ge(C/2\varepsilon)\cdot  (c_3-1)\log n$.
	Consequently we have
	\begin{align*}
		\frac{k}{\log(n/k)} >\frac{(c_3-1)\cdot (C/2\varepsilon)\cdot \log n}{\log(n)}\leq (c_3-1)\cdot (C/2\varepsilon)
	\end{align*}
	for sufficiently large $n$. Since $c_3$ can take value of any positive number, this corresponds to the second statement. 
	
	\textbf{Case 1.} As $(m_1, m_2, \cdots, m_{n/k})$ follows a uniform multinomial distribution, we consider a useful result from \cite{raab1998balls}, stated below:
	\begin{Lemma}[\cite{raab1998balls}]\label{lm: balls in bins}
		If $(y_1, y_2, \cdots, y_d)$ follows a uniform multinomial$(\ell)$ distribution, and $\frac{\ell}{d\log d} \leq c$ for some constant $c$ not depending on $\ell$ and $d$, then for every $\zeta > 0$,
		$$\Pro\left(\max_{i \in [d]} y_i > (r_c + \zeta)\log d \right) = o(1),$$
		where $r_c$ is the unique root of $1 + y(\log c - \log y + 1) -c = 0$ that is strictly greater than $c$.
	\end{Lemma}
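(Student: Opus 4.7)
The plan is to reduce the maximum-load estimate to a sharp tail bound on a single $\mathrm{Binomial}(\ell,1/d)$ coordinate, and then match the Binomial tail exponent against the $\log d$ contributed by a union bound. Concretely, by symmetry and a union bound, $\Pro(\max_{i\in[d]} y_i > k) \leq d \cdot \Pro(y_1 > k)$, where $y_1 \sim \mathrm{Binomial}(\ell,1/d)$. It then suffices to show that for $k = (r_c+\zeta)\log d$ we have $\Pro(y_1 > k) = o(1/d)$. Moreover, since $\Pro(y_1 > k)$ is monotone non-decreasing in $\ell$, I can reduce to the boundary case $\ell = cd\log d$ and $\mu:= \ell/d = c\log d$, which simplifies the calculation without loss of generality.

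The main technical step is a local central estimate via Stirling's formula. For $y > c$ and $m = \lceil y\log d\rceil$, I would apply the asymptotic $\binom{\ell}{m} = (1+o(1))\sqrt{\ell/(2\pi m(\ell-m))}\,(\ell/m)^m(\ell/(\ell-m))^{\ell-m}$ and compute each factor. The term $(\ell/m)^m(1/d)^m$ contributes $d^{y\log(c/y)}$, while a careful Taylor expansion of $\log(1-y/(cd))$ inside $(\ell(1-1/d)/(\ell-m))^{\ell-m}$ produces $d^{y-c}$. Combining these yields
\begin{equation*}
\Pro(y_1 = m) = \frac{1+o(1)}{\sqrt{2\pi y\log d}}\cdot d^{\,y(\log(c/y)+1)-c} = \frac{1+o(1)}{\sqrt{2\pi y\log d}}\cdot d^{\,f(y)-1},
\end{equation*}
where $f(y):=1+y(\log c - \log y + 1)-c$. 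To pass from $\Pro(y_1=m)$ to $\Pro(y_1\geq m)$, I would observe that the ratio $\Pro(y_1=j+1)/\Pro(y_1=j) = (\ell-j)/((j+1)(d-1))$ equals $(1+o(1))c/y$ at $j=m$, which is strictly less than $1$ since $y>r_c>c$; hence the tail is a geometric series bounded by a constant multiple of $\Pro(y_1=m)$.

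Finally I would verify the sign of $f$ at $r_c+\zeta$ using elementary calculus. Since $f'(y) = \log(c/y) < 0$ on $(c,\infty)$, $f$ is strictly decreasing there, with $f(c)=1>0$ and $f(y)\to-\infty$ as $y\to\infty$, so the unique root $r_c>c$ exists and $f(r_c+\zeta)<0$ for every $\zeta>0$. Combining the union bound with the tail estimate yields
\begin{equation*}
\Pro(\max_{i\in[d]}y_i > (r_c+\zeta)\log d) \leq C\cdot\frac{d^{\,1+f(r_c+\zeta)+o(1)}}{\sqrt{\log d}} = o(1),
\end{equation*}
as required.

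The main obstacle is the precise bookkeeping in the Stirling expansion, particularly the $(\ell(1-p)/(\ell-m))^{\ell-m}$ factor: a naive $(1-p)^{\ell-m}\approx e^{-\ell/d}$ bound only delivers $d^{-c}$, missing the crucial $d^{y}$ contribution that makes the exponent $f(y)-1$ rather than something looser. Getting the $y\log d$ term requires keeping the second-order correction in $\log(1-y/(cd))$ and carefully canceling $\log\log d$ terms against each other. Once this expansion is handled cleanly, the remaining steps (geometric tail summation, monotonicity of $f$) are routine.
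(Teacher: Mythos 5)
Your proposal is correct, and it reconstructs the standard first-moment argument for the upper-tail regime of the balls-in-bins maximum-load problem. Note that the paper itself does not prove Lemma \ref{lm: balls in bins}; it cites it directly from Raab and Steger (1998), so there is no internal proof to compare against. Your reduction (union bound over coordinates, marginal $\mathrm{Binomial}(\ell,1/d)$ law, monotonicity in $\ell$ to push to the boundary case $\ell = cd\log d$) is exactly the reduction used in that reference for the $m=\Theta(n\log n)$ regime, and your Stirling bookkeeping correctly produces the exponent $f(y)-1$: the naive replacement $(1-1/d)^{\ell-m}\approx e^{-\ell/d}=d^{-c}$ alone is indeed insufficient, and the factor $(\ell/(\ell-m))^{\ell-m}\approx d^{y}$ from the binomial coefficient supplies the missing $d^{y}$, giving $\Pro(y_1=m)\asymp (\log d)^{-1/2}d^{f(y)-1}$. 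The geometric-tail step (ratio of consecutive probabilities tends to $c/y<1$ because $y>r_c>c$), the verification $f'(y)=\log(c/y)<0$ on $(c,\infty)$ with $f(c)=1$, and the conclusion $f(r_c+\zeta)<0$ are all correct. One small point worth stating explicitly in a final write-up: the monotonicity $\Pro(\mathrm{Binomial}(\ell,1/d)>k)\uparrow$ in $\ell$ follows from the coupling $\mathrm{Binomial}(\ell+1,p)\stackrel{d}{=}\mathrm{Binomial}(\ell,p)+\mathrm{Bernoulli}(p)$, so the reduction to $\ell=cd\log d$ is legitimate and handles the case $\ell\ll d\log d$ (where the event is eventually impossible) for free. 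No gaps.
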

	It follows that $\Pro\left(\max_i m_i \leq r \log n\right) = 1 - o(1)$, 
	where $r$ is the unique root of 
	$1 + x(\log(C\tau/\varepsilon) - \log x + 1) - (C\tau/\varepsilon) = 0$
	that is greater than $C\tau/\varepsilon$. Such a root exists, because $f_{C, \tau,\varepsilon}(x) := 1 + x(\log(C\tau/\varepsilon) - \log x + 1) - (C\tau/\varepsilon)$ is strictly concave and achieves the global maximum value of $1$ at $x = C\tau/\varepsilon$.
	
	Let $\mathcal E := \{\max_i m_i \leq r\log n \}$. Under $\mathcal E$, Lemma \ref{lm: group privacy} implies that $M_k$ is an $(\varepsilon r \log n,$ $\delta e^{\varepsilon r \log n})$-differentially private algorithm.  We may essentially repeat the lower bound argument leading to the preliminary lower bound \eqref{eq: low-dim mean prelim lower bound 1}, as follows. Let $\bm Z = \{\bm z_1, \bm z_2, \cdots, \bm z_{n/k}\}$ be sampled i.i.d, from the data distribution specified in Lemma \ref{lm: low-dim mean attack} including the prior distribution on $\bmu = \E z_1$, so that Lemma \ref{lm: low-dim mean attack} applies to $\bm Z$. For every $i \in [n/k]$, let
	$\mathcal C = \{\sum_{i \in [n/k]} \A_\bmu (\bm z_i, M_k(\bm Z)) \leq (n/k)\sigma^2\sqrt{8d\log(1/\delta)}\}$. We have
	\begin{align*}
		&\Pro(\|M_k(\bm Z) - \bar{\bm Z}\|_2  < c\sigma \sqrt{d}) \leq \Pro(\mathcal E^c) + \Pro\left(\mathcal C \cap \{\|M_k(\bm Z) - \bar{\bm Z}\|_2  < c\sigma \sqrt{d}\}\right)  + \Pro(\mathcal C^c)\\
		&\leq \Pro(\mathcal E^c) + \Pro\left(\mathcal C \cap \{\|M_k(\bm Z) - \bar{\bm Z}\|_2  < c\sigma \sqrt{d}\}\right) + \sum_{i \in [n/k]}\Pro\left(\A_\bmu (\bm z_i, M_k(\bm Z)) > \sigma^2\sqrt{8d\log(1/\delta)}\right) \\
		&\leq o(1) + \delta + n(e^{\varepsilon r\log n}\delta + \delta e^{\varepsilon r\log n}) = o(1) + \delta + 2n^{-\tau + \varepsilon r}.
	\end{align*}
	This probability is always bounded away from 1, because $\varepsilon r < \tau$ with appropriately chosen $C$: since
	$f_{C,\tau,\varepsilon}(\tau/\varepsilon) = (\tau/\varepsilon)(1+\log C - C) + 1$ and $0 < \varepsilon < 1$, for every $\tau > 0$ there is a sufficiently small $0 < C < 1$ such that $f_{C,\tau,\varepsilon}(\tau/\varepsilon) < 0$. Since $f_{C, \tau,\varepsilon}(C\tau/\varepsilon) = 1$ is the global maximum, we have $r < \tau/\varepsilon$, or equivalently $\varepsilon r < \tau$, as desired. 
	
	\textbf{Case 2.} each $m_i$ is a sum of $n$ independent Bernoulli$(k/n)$ random variables. Chernoff's inequality implies that
	\begin{align*}
		\Pro\left (\max_i m_i  > \frac{1}{\varepsilon} \log\left(\frac{1}{2n\delta}\right)\right) \leq \frac{n}{k} \cdot \exp\left(-\frac{(1/2C-1)}{3} k \right).
	\end{align*}
	Recall that $k = (C/2)\log(\frac{1}{2n\delta})/\varepsilon$ by construction. By the assumption of Case 2, we have $k > (C/2)\tau/\varepsilon\log(n/k)$ for any constant $\tau$, as long as $n$ is sufficiently large. Then we have
	\begin{align*}
		\Pro\left (\max_i m_i  > \frac{1}{\varepsilon} \log\left(\frac{1}{2n\delta}\right)\right) \leq \left(\frac{n}{k}\right)^{1- (1-C/2)\tau/3\varepsilon}.
	\end{align*}

	The probability can be made arbitrarily small by fixing $C = 1/2$ and choosing large $\tau$. Now that we have a high-probability bound for $\max_i m_i$, the group privacy lemma and union bound, as in Case 1, imply that $\Pro(\mathcal C) = o(1)$, and therefore by Lemma \ref{lm: low-dim mean attack}
	\begin{align*}
		&\Pro(\|M_k(\bm Z) - \bar{\bm Z}\|_2  < c\sigma \sqrt{d})\\
		 \leq& \Pro(\mathcal E^c) + \Pro\left(\mathcal C \cap \{\|M_k(\bm Z) - \bar{\bm Z}\|_2  < c\sigma \sqrt{d}\}\right)  + \Pro(\mathcal C^c) = o(1).	
	\end{align*}
	
	In each of the two cases, we found that $\Pro(\|M_k(\bm Z) - \bar{\bm Z}\|_2  < c\sigma \sqrt{d}) = o(1)$. The proof is now complete by the reduction from $\E\left[\|M(\bm Y) - \E \bm y_1\|_2|\bm Z\right]$ to $\E\|M_k(\bm Z) - \bar{\bm Z}\|_2$.
\end{proof}

\subsection{Proof of Theorem \ref{thm: low-dim mean lower bound}}\label{sec: proof of thm: low-dim mean lower bound}
It suffices to prove the second term of the minimax lower bound, as the first term is the statistical minimax lower bound for sub-Gaussian mean estimation.

For $i \in [n]$, consider $\bm x_i = \bm 0 \in \R^d$ with probability $1-\alpha$ and $\bm x_i = \bm y_i$ with probability $\alpha$, where $\bm y_i$ follows the discrete uniform distribution specified in Lemma \ref{lm: low-dim mean group privacy}. When $n \gtrsim \sqrt{d\log(1/\delta)}/\varepsilon$, there exists some $0 < \alpha < 1$ such that $\alpha n \asymp \sqrt{d\log(1/\delta)}/\varepsilon$. The distribution of $\bm x_i$ is indeed sub-Gaussian$(\sigma)$ with $\bmu \in \Theta$.

Consider the random index set $\mathcal S = \{i \in [n]: \bm x_i \neq \bm 0\}$. For every $M \in \mathcal M_{\varepsilon, \delta}$, we have
\begin{align*}
	\E[\|M(\bm X) - \bmu\|_2] \geq \sum_{ \mathcal S = S \subseteq [n], |S| \leq n\alpha} \E[\|M(\bm X) - \bmu\|_2|\mathcal S = S] \Pro(\mathcal S = S).
\end{align*}
Now for each fixed $S$, define $\tilde M(\bm X_S) = \alpha^{-1} \E[M(\bm X)|\mathcal S = S]$. We note that $\tilde M(\bm X_S)$ is an $(\varepsilon, \delta)$-differentially private algorithm with respect to $\bm X_S$, by observing that $\E[M(\bm X)|\mathcal S = S] = M(\{\bm x_i: i \in S\} \cup \{\bm 0\}^{n - |S|})$ and therefore modifying any single datum in $\bm X_{S} = \{\bm x_i: i \in S\}$ incurs the same privacy loss to $M$ as it does to $\tilde M$. By construction, it also holds that $\bmu = \E \bm x_1 = \alpha \E\bm y_1$. We then have
\begin{align*}
	\E[\|M(\bm X) - \bmu\|_2|\mathcal S = S] &\geq \E[\|\alpha\tilde M(\bm X_S)  - \alpha \E \bm y_1\|_2|\mathcal S = S] \\
	&\geq \alpha \E[\|\tilde M(\bm X_S)  - \E \bm y_1\|_2] \gtrsim \alpha \sigma \sqrt{d} \asymp \sigma\frac{d\sqrt{\log(1/\delta)}}{n\varepsilon}.
\end{align*}
For the last inequality, we invoked the lower bound proved in Lemma \ref{lm: low-dim mean group privacy}, since the sample size of $\bm Y$ is $\alpha n \asymp \sqrt{d\log(1/\delta)}/\varepsilon$. The proof is complete.

\subsection{Proof of Theorem \ref{thm: high-dim regression upper bound}}\label{sec: proof of thm: high-dim regression upper bound}

\begin{proof}[Proof of Theorem \ref{thm: high-dim regression upper bound}]
	
	Let $\hat\bbeta = \argmin_{\|\bbeta\|_2 \leq c_0, \|\bbeta\|_0 \leq s^*} \L_n(\bbeta)$. While global strong convexity and smoothness are no longer possible when $d > n$, because $\bbeta^t$ and $\hat\bbeta$ are sparse, we have following fact known as restricted strong convexity (RSC) and restricted smoothness (RSM) \cite{negahban2009unified, agarwal2010fast, loh2015regularized}.
	
	\begin{fact}\label{lm: rsc and rsm}
		Under assumptions of Theorem \ref{thm: high-dim regression upper bound}, it holds with probability at least $1-c_1\exp(-c_2n)$ that
		\begin{align}\label{eq: rsc and rsm}
			\frac{1}{8Ls}\|\bbeta^t - \hat\bbeta\|_2^2 \leq \langle \nabla \L_n(\bbeta^t) - \nabla \L_n(\hat\bbeta), \bbeta^t - \hat\bbeta \rangle \leq \frac{4L}{s} \|\bbeta^t - \hat\bbeta\|_2^2.
		\end{align}
	\end{fact}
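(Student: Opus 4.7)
The plan is to reduce the stated restricted strong convexity/smoothness (RSC/RSM) inequality to a restricted eigenvalue statement about the empirical covariance $\hat\Sigma_n := n^{-1}\bm X^\top\bm X$, combine the population-level bound supplied by (D2') with matrix concentration, and take a union bound over all sparse supports that $\bbeta^t-\hat\bbeta$ can occupy. First I would unpack the inner product. For the least-squares loss one has $\nabla\L_n(\bbeta)=(2/n)\bm X^\top(\bm X\bbeta-\bm y)$, so
\begin{equation*}
\langle \nabla \L_n(\bbeta^t)-\nabla\L_n(\hat\bbeta),\bbeta^t-\hat\bbeta\rangle \;=\; 2(\bbeta^t-\hat\bbeta)^\top \hat\Sigma_n (\bbeta^t-\hat\bbeta).
\end{equation*}
Algorithm \ref{algo: high-dim regression} enforces $\|\bbeta^t\|_0\le s$ through the peeling step, and $\hat\bbeta$ is $s^*$-sparse by definition with $s\ge s^*$ by assumption; thus $\bm v:=\bbeta^t-\hat\bbeta$ is supported on a set of size at most $2s$. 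It therefore suffices to prove, with probability $\ge 1-c_1\exp(-c_2n)$, that $(16Ls)^{-1}\|\bm v\|_2^2 \le \bm v^\top\hat\Sigma_n\bm v \le (2L/s)\|\bm v\|_2^2$ for every $2s$-sparse $\bm v$.

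At the population level, assumption (D2') applied to any $I\subseteq[d]$ with $|I|=2s=o(n)$ gives $(2Ls)^{-1}\le \lambda_{\min}(\Sigma_I)\le \lambda_{\max}(\Sigma_I)\le L/(2s)$, hence $(2Ls)^{-1}\|\bm v\|_2^2\le \bm v^\top\Sigma_{\bm x}\bm v\le (L/(2s))\|\bm v\|_2^2$ for every $\bm v$ supported on such $I$. The substantive step is a uniform concentration of $\hat\Sigma_I := n^{-1}\sum_i \bm x_{i,I}\bm x_{i,I}^\top$ around $\Sigma_I$ in operator norm, over all supports $I$ of size $2s$. Assumption (D1') implies $\|\bm x_{i,I}\|_2\le \sqrt{|I|}\|\bm x_{i,I}\|_\infty\le c_{\bm x}$ almost surely, so each summand $\bm x_{i,I}\bm x_{i,I}^\top$ has operator norm at most $c_{\bm x}^2$ and matrix variance of order $c_{\bm x}^2\lambda_{\max}(\Sigma_I)\le c_{\bm x}^2L/s$. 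Matrix Bernstein for a fixed $I$ followed by a union bound over the $\binom{d}{2s}\le(ed/2s)^{2s}$ possible supports yields
\begin{equation*}
\Pr\Bigl(\max_{|I|=2s}\|\hat\Sigma_I-\Sigma_I\|_{\mathrm{op}} > \tfrac{1}{4Ls}\Bigr) \;\le\; (ed/2s)^{2s}\cdot 2s\cdot\exp\!\left(-c_0\min\!\left(\tfrac{ns}{16L^3 c_{\bm x}^2},\,\tfrac{n}{4Lc_{\bm x}^2}\right)\right) \;\le\; c_1\exp(-c_2 n),
\end{equation*}
provided $n\gtrsim s^2L^3c_{\bm x}^2\log(d/s)$, which is implied by the sample-size hypothesis of Theorem \ref{thm: high-dim regression upper bound}.

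Assembling the pieces: on the high-probability event, for any $2s$-sparse $\bm v$ supported on $I$,
\begin{equation*}
\bigl|\bm v^\top \hat\Sigma_n\bm v - \bm v^\top \Sigma_{\bm x}\bm v\bigr| \;=\; \bigl|\bm v_I^\top(\hat\Sigma_I-\Sigma_I)\bm v_I\bigr| \;\le\; \tfrac{1}{4Ls}\|\bm v\|_2^2,
\end{equation*}
which combined with the population bounds places $\bm v^\top\hat\Sigma_n\bm v$ in the interval $[(4Ls)^{-1},\,3L/(4s)]\|\bm v\|_2^2 \subset [(16Ls)^{-1},\,2L/s]\|\bm v\|_2^2$. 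Multiplying by the factor $2$ from the gradient identity yields the displayed RSC/RSM bounds.

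The principal difficulty is matching scales: because $\lambda_{\min}(\Sigma_I)$ is already as small as $(2Ls)^{-1}$, the concentration radius must be calibrated at the refined $1/s$ scale rather than the natural $O(1)$ scale, which forces the exponent in matrix Bernstein to be of order $n/(s L^3 c_{\bm x}^2)$ rather than $n$. Showing that this still dominates both the entropy $s\log(d/s)$ from the union bound and the target $n$-scale in $\exp(-c_2 n)$ is the key quantitative step, and it is precisely what makes the quadratic-in-$s$ sample-size hypothesis of Theorem \ref{thm: high-dim regression upper bound} necessary.
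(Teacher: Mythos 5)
The paper does not prove Fact~\ref{lm: rsc and rsm} internally; it states it as a known restricted strong convexity/smoothness result and cites Negahban et al., Agarwal et al., and Loh--Wainwright. So there is no paper proof to compare against, and attempting a self-contained proof (gradient identity $\Rightarrow$ quadratic form in $\hat\Sigma_n$, population eigenvalue bounds from (D2'), operator-norm concentration of $\hat\Sigma_I$ plus a union bound over the $\binom{d}{2s}$ supports) is a reasonable and standard route. Your reduction steps, the sparsity count $\|\bbeta^t - \hat\bbeta\|_0 \le 2s$, the use of (D1') to bound $\|\bm x_{i,I}\|_2 \le c_{\bm x}$, and the final algebra translating the event $\|\hat\Sigma_I - \Sigma_I\|_{\rm op} \le (4Ls)^{-1}$ into the displayed constants are all correct.

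However, the matrix Bernstein exponent you write is off, and this error propagates into an unjustified probability claim. With $W_i = \bm x_{i,I}\bm x_{i,I}^\top - \Sigma_I$, the variance proxy is $\tfrac{1}{n}\|\E W_1^2\|_{\rm op} \le \tfrac{c_{\bm x}^2 L}{2ns}$, the almost-sure bound is $\|W_i\|/n \le 2c_{\bm x}^2/n$, and with $t = (4Ls)^{-1}$ the Bernstein exponent is
\begin{equation*}
\frac{t^2/2}{\sigma^2 + Kt/3}
\;=\; \frac{(32L^2 s^2)^{-1}}{\tfrac{c_{\bm x}^2}{ns}\left(\tfrac{L}{2}+\tfrac{1}{6L}\right)}
\;\asymp\; \frac{n}{L^3 c_{\bm x}^2\, s}\,,
\end{equation*}
not $\min\!\left(\tfrac{ns}{16L^3c_{\bm x}^2},\,\tfrac{n}{4Lc_{\bm x}^2}\right)$. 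The missing $1/s$ comes from the fact that both the target deviation $t \asymp 1/s$ and the variance proxy $c_{\bm x}^2\|\Sigma_I\|_{\rm op} \asymp 1/s$ are at the refined $1/s$ scale, so $t^2/\sigma^2$ is $O(1/s)$, not $O(1)$. Tellingly, the sample-size requirement you then state, $n \gtrsim s^2 L^3 c_{\bm x}^2 \log(d/s)$, is exactly what is needed to beat the entropy $s\log(d/s)$ with the \emph{corrected} exponent $n/(L^3 c_{\bm x}^2 s)$, which suggests an arithmetic slip rather than a structural misunderstanding. But with the corrected exponent your argument yields a failure probability of order $\exp(-c\, n/(L^3 c_{\bm x}^2 s))$, \emph{not} $c_1\exp(-c_2 n)$ as you assert. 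Since $s$ grows, this is a genuinely weaker guarantee than stated. (The paper's Fact also asserts $\exp(-c_2 n)$, so this discrepancy ought to be flagged in either case; the fix in your write-up is to weaken the tail to $\exp(-c_2 n / s)$, or to invoke one of the cited RSC results directly and track whatever tail it actually delivers.)
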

	Under the event $\mathcal E_1 = \left\{\Pi_R(y_i) = y_i, \forall i \in [n]\right\}$, Fact \ref{lm: rsc and rsm} implies that $\L_n(\bbeta^t) - \L_n(\hat\bbeta)$ decays exponentially fast in $t$.
	\begin{Lemma}\label{lm: high-dim regression contraction}
		Under assumptions of Theorem \ref{thm: high-dim regression upper bound} and event $\mathcal E_1$, $\eqref{eq: rsc and rsm}$ implies that there exists an absolute constant $\rho$ such that
		\begin{align}\label{eq: high-dim regression contraction}
			\L_n(\bbeta^{t+1}) - \L_n(\hat\bbeta) \leq \left(1 - \frac{1}{\rho L^2}\right)\left(\L_n(\bbeta^{t}) - \L_n(\hat\bbeta)\right) + c_3 \left(\sum_{i \in [s]} \|\bm w^t_i\|_\infty^2 + \|\tilde{\bm w}^t_{S^{t+1}} \|_2^2\right),
		\end{align} 
		for every $t$, where $\bm w^t_1, \bm w^t_2, \cdots, \bm w^t_s$ are the Laplace noise vectors added to $\bbeta^t - (\eta^0/n)\sum_{i=1}^n  (\bm x_i^\top \bbeta^t-\Pi_{R}(y_i))\bm x_i$ when the support of $\bbeta^{t+1}$ is iteratively selected by ``Peeling", $S^{t+1}$ is the support of $\bbeta^{t+1}$, and $\tilde {\bm w}^t$ is the noise vector added to the selected $s$-sparse vector.
	\end{Lemma}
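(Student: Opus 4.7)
The plan is to adapt the classical iterative hard thresholding analysis of \cite{jain2014iterative} to the noisy setting induced by the Peeling subroutine, using the restricted strong convexity/smoothness (RSC/RSM) pair from Fact \ref{lm: rsc and rsm}. Under $\mathcal E_1$ the gradient step reduces to $\bbeta^{t+0.5} = \bbeta^t - \eta^0 \nabla \L_n(\bbeta^t)$; the Peeling subroutine then returns an approximate top-$s$ hard thresholding of $\bbeta^{t+0.5}$ on a support $S^{t+1}$, with Laplace noise $\tilde{\bm w}^t$ added to the selected coordinates, and finally $\Pi_C$ enforces the $\ell_2$ constraint. Since both $\bbeta^{t+1}$ and $\bbeta^t$ are $s$-sparse, their difference lives in a $2s$-sparse subspace, so the RSC/RSM of Fact \ref{lm: rsc and rsm} is applicable throughout.

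First I would apply the RSM upper bound to obtain the quadratic majorant
\[
\L_n(\bbeta^{t+1}) \leq \L_n(\bbeta^t) + \langle \nabla \L_n(\bbeta^t),\, \bbeta^{t+1}-\bbeta^t\rangle + \frac{4L}{s}\|\bbeta^{t+1}-\bbeta^t\|_2^2.
\]
With the choice $\eta^0 = s/(6L)$, completing the square shows that, up to quantities that do not depend on $\bbeta^{t+1}$, the right-hand side is proportional to $\|\bbeta^{t+1}-\bbeta^{t+0.5}\|_2^2$. Thus the desired contraction reduces to showing that $\bbeta^{t+1}$ is an approximate minimizer of $\bbeta \mapsto \|\bbeta-\bbeta^{t+0.5}\|_2^2$ among $s$-sparse vectors in the ball $\{\|\bbeta\|_2\leq c_0\}$, compared against the ideal benchmark $\hat\bbeta$.

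Next I would control this approximation error via Lemma \ref{lm: peeling accuracy}. Let $S^\ast = \supp(\hat\bbeta)$, take $R_2 = S^\ast \setminus S^{t+1}$ and any $R_1 \subseteq S^{t+1}\setminus S^\ast$ with $|R_1|=|R_2|$; the lemma gives
\[
\|\bbeta^{t+0.5}_{R_2}\|_2^2 \leq (1+c)\,\|\bbeta^{t+0.5}_{R_1}\|_2^2 + 4(1+1/c)\sum_{i\in[s]}\|\bm w_i^t\|_\infty^2,
\]
which says that the hard thresholding supported on $S^{t+1}$ loses almost nothing compared with the hard thresholding supported on $S^{t+1}\cup S^\ast$, up to the first noise term. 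The additive Laplace perturbation on the selected coordinates contributes the $\|\tilde{\bm w}^t_{S^{t+1}}\|_2^2$ term, and since $\|\hat\bbeta\|_2 \leq c_0 = C$ the projection $\Pi_C$ is a nonexpansion toward $\hat\bbeta$ and can only decrease the error.

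Finally I would close the loop by combining the above with the RSC lower bound of Fact \ref{lm: rsc and rsm}, which lets us translate norm-squared progress into progress on $\L_n(\bbeta^t) - \L_n(\hat\bbeta)$. The ratio between the RSM constant $4L/s$ and the RSC constant $1/(8Ls)$ is $O(L^2)$, playing the role of an effective condition number, so the standard quadratic-majorization/contraction argument yields a per-iteration improvement of the form $1 - 1/(\rho L^2)$ as claimed. The main obstacle, in my view, is calibrating the slack parameter $c$ in Lemma \ref{lm: peeling accuracy} together with the step size $\eta^0 = s/(6L)$ so that the cross terms supported on $S^\ast\cup S^{t+1}$ are absorbed cleanly; this is precisely what forces the oversampling condition $s = \rho L^4 s^\ast$, since the oversampling factor must exceed the $L^2$ condition number of the RSC/RSM pair in order to preserve the geometric contraction.
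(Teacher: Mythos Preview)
Your plan is correct and follows the same route as the paper: RSM quadratic majorant, Lemma~\ref{lm: peeling accuracy} to control the peeling error, RSC to close the loop, and oversampling $s=\rho L^4 s^\ast$ to absorb the condition number. Two ingredients the paper spells out that your sketch leaves implicit: (i) Lemma~\ref{lm: peeling accuracy} by itself only swaps equal-size index sets; to turn this into the crucial inequality $\|\tilde P_s(\bbeta^{t+0.5})-\bbeta^{t+0.5}\|_2^2 \lesssim \tfrac{|I^t|-s}{|I^t|-s^\ast}\|\hat\bbeta-\bbeta^{t+0.5}\|_2^2+\text{noise}$, where the ratio $\tfrac{|I^t|-s}{|I^t|-s^\ast}\lesssim s^\ast/s$ is what actually exploits oversampling, the paper first proves an intermediate result (Lemma~\ref{lm: peeling overall accuracy}) combining Lemma~\ref{lm: peeling accuracy} with an averaging argument over the discarded coordinates; (ii) the last step converting restricted gradient-norm information into objective-value decrease quotes an auxiliary inequality from \cite{jain2014iterative}, namely $\|\bm g^t_{S^t\cup S^\ast}\|_2^2 - \tfrac{\alpha^2}{4}\|\hat\bbeta-\bbeta^t\|_2^2 \geq \tfrac{\alpha}{2}\bigl(\L_n(\bbeta^t)-\L_n(\hat\bbeta)\bigr)$. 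Also note that with the prescribed step size $\eta^0=s/(6L)$ one has $\gamma\eta^0=2/3\neq 1$, so completing the square against $\bbeta^{t+0.5}$ leaves a residual cross term $(1-\eta)\langle\bbeta^{t+1}-\bbeta^t,\bm g^t\rangle$ that the paper handles by a separate decomposition over $S^{t+1}$, $S^t\setminus S^{t+1}$ and $I^t\setminus(S^t\cup S^\ast)$; your cleaner ``complete-the-square'' picture would require either $\eta^0=s/(8L)$ or replacing $\gamma/2$ by the looser $1/(2\eta^0)$ in the majorant.
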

	
	We take Lemma \ref{lm: high-dim regression contraction}, which is proved in Section \ref{sec: proof of lm: high-dim regression contraction} of the supplement \cite{supplement}, to prove Theorem \ref{thm: high-dim regression upper bound}. We iterate \eqref{eq: high-dim regression contraction} over $t$ and notate $\bm W_t = c_3\left(\sum_{i \in [s]} \|\bm w^t_i\|^2_\infty + \|\tilde {\bm w}^t_{S^{t+1}}\|_2^2\right)$ to obtain
	\begin{align}
		\L_n(\bbeta^T) - \L_n(\hat\bbeta) &\leq \left(1-\frac{1}{\rho L^2}\right)^T\left(\L_n(\bbeta^0) - \L_n(\hat\bbeta)\right)  + \sum_{k=0}^{T-1}\left(1-\frac{1}{\rho L^2}\right)^{T-k-1}\bm W_k \notag\\
		&\leq  
		\left(1-\frac{1}{\rho L^2}\right)^T 8Lc_0^2 + \sum_{k=0}^{T-1}\left(1-\frac{1}{\rho L^2}\right)^{T-k-1}\bm W_k.\label{eq: high-dim regression suboptimality upper bound}
	\end{align}
	The second inequality is a consequence of the upper inequality in \eqref{eq: rsc and rsm} and the $\ell_2$ bounds of $\bbeta^0$ and $\hat\bbeta$. We can also bound $\L_n(\bbeta^T) - \L_n(\hat\bbeta)$ from below by the lower inequality in \eqref{eq: rsc and rsm}:
	\begin{align}\label{eq: high-dim regression suboptimality lower bound}
		\L_n(\bbeta^T) - \L_n(\hat\bbeta) \geq \L_n(\bbeta^T) - \L_n(\bbeta^*) \geq \frac{1}{16Ls}\|\bbeta^T - \bbeta^*\|_2^2 - \langle \nabla \L_n(\bbeta^*), \bbeta^* - \bbeta^T \rangle.
	\end{align}
	Now \eqref{eq: high-dim regression suboptimality upper bound} and \eqref{eq: high-dim regression suboptimality lower bound} imply that, with $T = (\rho L^2)\log(8c_0^2Ln)$, 
	\begin{align}
		\frac{1}{16Ls}\|\bbeta^T - \bbeta^*\|_2^2 \leq  
		\|\nabla \L_n(\bbeta^*)\|_\infty \sqrt{s+s^*}\|\bbeta^* - \bbeta^T\|_2 + \frac{1}{n} + \sum_{k=0}^{T-1}\left(1-\frac{1}{\rho L^2}\right)^{T-k-1}\bm W_k. \label{eq: high-dim regression fundamental inequality}
	\end{align}
	To further bound $\|\bbeta^T - \bbeta^*\|_2^2$, we observe that under $\mathcal E_1$ and two other events
	\begin{align*}
		&\mathcal E_2 = \left\{\max_t \bm W_t \leq  K\frac{R^2(s^*)^3\log^2 d \log(1/\delta)\log^2 n}{n^2\varepsilon^2}\right\},\\
		&\mathcal E_3 = \left\{\|\nabla \L_n(\bbeta^*)\|_\infty \leq 4\sigma\|\bm x\|_\infty\sqrt{\frac{\log d}{n}}\right\},
	\end{align*}
	\eqref{eq: high-dim regression fundamental inequality} and assumptions (D1'), (D2') yield
	\begin{align*}
		\|\bbeta^T - \bbeta\|^2_{\Sigma_{\bm x}} \lesssim \sigma^2\left(\frac{s^*\log d}{n} + \frac{(s^*\log d)^2 \log(1/\delta)\log^3 n}{n^2\varepsilon^2}\right).
	\end{align*}
	It remains to show that the events $\mathcal E_1, \mathcal E_2, \mathcal E_3$ occur simultaneously with high probability. We have $\Pro(\mathcal E_1^c) \leq c_1\exp(-c_2\log n)$ because $y_1, y_2, \cdots, y_n \stackrel{\text{i.i.d.}}{\sim} N(0, \sigma^2)$ and $R  \asymp \sigma\sqrt{\log n}$. 
	
	For $\mathcal E_2$, we invoke Lemma \ref{lm: laplace noise bound} in the supplement \cite{supplement}. For each iterate $t$, the individual coordinates of $\tilde{\bm w}^t$, $\bm w^t_i$ are sampled i.i.d. from the Laplace distribution with scale $\eta^0 \cdot \frac{2B\sqrt{3s\log(T/\delta)}}{n\varepsilon/T}$, where the noise scale $B \lesssim R/\sqrt{s}$ and $T \asymp \log n$ by our choice. If $n \geq K \cdot \left(R(s^*)^{3/2}\log d \sqrt{\log(1/\delta)}\log n/\varepsilon\right)$ for a sufficiently large constant $K$, Lemma \ref{lm: laplace noise bound} and the union bound imply that, with probability at least $1-c_1\exp(-c_2\log(d/(s^*\log n))$, $\max_{t} \bm W_t$ is bounded by $K\frac{R^2(s^*)^3\log^2 d \log(1/\delta)\log^2 n}{n^2\varepsilon^2}$ for some appropriate constant $K$. For $\mathcal E_3$, under assumptions (D1') and (D2'), it is a standard probabilistic result (see, for example, \cite{wainwright2019high} pp. 210-211) that $\Pro(\mathcal E^c_3) \leq 2e^{-2\log d}$.
\end{proof}

\bibliographystyle{plain}
\bibliography{reference}

\newpage
\appendix 
	
\section{Proofs of upper bound results}\label{sec: ub proofs}

\subsection{Proof of Theorem \ref{thm: low-dim mean upper bound}}\label{sec: proof of thm: low-dim mean upper bound}
\begin{proof}[Proof of Theorem \ref{thm: low-dim mean upper bound}]
	By the choice of $R$ and $\|\bmu\|_\infty \leq c = O(1)$, we have
	\begin{align*}
		\|\hat\bmu - \bmu\|_2^2  \leq 2\|\bm w\|_2^2 +  2\|\overline{\bm X} - \bmu\|_2^2. 
	\end{align*}
Once we take expectation, the conclusion follows from the distribution of $\bm w$ and the sub-Gaussianity of $\bm x$. 
\end{proof}

\subsection{Proof of Lemma \ref{lm: peeling accuracy}}\label{sec: proof of lm: peeling accuracy}
\begin{proof}[Proof of Lemma \ref{lm: peeling accuracy}]
	Let $\psi: R_2 \to R_1$ be a bijection. By the selection criterion of Algorithm \ref{algo: peeling}, for each $j \in R_2$ we have $|v_j| + w_{ij} \leq |v_{\psi(j)}| + w_{i\psi(j)}$, where $i$ is the index of the iteration in which $\psi(j)$ is appended to $S$. It follows that, for every $c > 0$, 
	\begin{align*}
		v_j^2 &\leq \left(|v_{\psi(j)}| + w_{i\psi(j)} - w_{ij} \right)^2 \\
		&\leq (1+1/c) v_{\psi(j)}^2 + (1 + c)(w_{i\psi(j)} - w_{ij})^2 \leq (1+1/c)v_{\psi(j)}^2 + 4(1+c)\|\bm w_i\|_\infty^2
	\end{align*}
	Summing over $j$ then leads to
	\begin{align*}
		\|\bm v_{R_2}\|_2^2 \leq (1 + 1/c)\|\bm v_{R_1}\|_2^2 + 4(1 + c)\sum_{i \in [s]} \|\bm w_i\|^2_\infty.  
	\end{align*}
\end{proof}

\subsection{Proof of Theorem \ref{thm: high-dim mean upper bound}}\label{sec: proof of thm: high-dim mean upper bound}
\begin{proof}[Proof of Theorem \ref{thm: high-dim mean upper bound}]
	Let $S, S^*$ denote the supports of $\hat \bmu$ and $\bmu$ respectively. By the choice of $R = K\sigma\sqrt{\log n}$ and $\|\bmu\|_\infty \leq c = O(1)$, we have
	\begin{align}\label{eq: high-dim mean expansion 1}
		\|\hat\bmu - \bmu\|_2^2  \leq 2\|\tilde{\bm w}_S\|_2^2 +  2\|(\overline{\bm X} - \bmu)_{S \cap S^*}\|_2^2 + \|\overline{\bm X}_{S \cap (S^*)^c} - \bmu _{S^* \cap S^c}\|_2^2
	\end{align}
	For the last term,
	\begin{align}\label{eq: high-dim mean expansion 2}
		\|\overline{\bm X}_{S \cap (S^*)^c} - \bmu _{S^* \cap S^c}\|_2^2  &= 	\|\overline{\bm X}_{S \cap (S^*)^c} -\overline{\bm X}_{S^* \cap S^c} + \overline{\bm X}_{S^* \cap S^c}  - \bmu _{S^* \cap S^c}\|_2^2 \notag \\
		& \leq 4\|\overline{\bm X}_{S \cap (S^*)^c}\|_2 + 4\|\overline{\bm X}_{S^* \cap S^c}\| + 2\|(\overline{\bm X} - \bmu)_{S^* \cap S^c}\|_2^2.
	\end{align}
Since $s^* = |S^*| \leq |S| = s$ by assumption, we invoke Lemma \ref{lm: peeling accuracy} to obtain that
\begin{align}\label{eq: high-dim mean expansion 3}
	\|\overline{\bm X}_{S^* \cap S^c}\| \leq 2 \|\overline{\bm X}_{S \cap (S^*)^c}\|_2 + 8\sum_{i \in [s]} \|\bm w_i\|^2_\infty.
\end{align}
Now combining \eqref{eq: high-dim mean expansion 3} with \eqref{eq: high-dim mean expansion 2} and further with \eqref{eq: high-dim mean expansion 1} yields
\begin{align}\label{eq: high-dim mean expansion 4}
	\|\hat\bmu - \bmu\|_2^2  \leq  2\|(\overline{\bm X} - \bmu)_{S^*}\|_2^2 + 12\|\overline{\bm X}_{S \cap (S^*)^c}\|_2 + 32\sum_{i \in [s]} \|\bm w_i\|^2_\infty + 2\|\tilde{\bm w}_S\|_2^2.
\end{align}
For the first two terms, since $|S| = s \asymp s^*$, we have
\begin{align*}
	2\|(\overline{\bm X} - \bmu)_{S^*}\|_2^2 + 12\|\overline{\bm X}_{S \cap (S^*)^c}\|_2 \lesssim s^*\|\overline{\bm X}-\bmu\|^2_\infty.
\end{align*}
$\overline{\bm X} - \bmu$ is a zero-mean sub-Gaussian$(\sigma/\sqrt{n})$ random vector. Standard tail bounds for sub-Gaussian maxima (see, for example, \cite{wainwright2019high}) implies that $\|\overline{\bm X} - \bmu\|^2_\infty < C\sigma^2\log d/n$ with probability at least $1 - c_1\exp(-c_2\log n)$.

For the last two terms of \eqref{eq: high-dim mean expansion 4}, we have the following lemma.
\begin{Lemma}\label{lm: laplace noise bound}
Consider $\bm w \in \R^k$ with $w_1, w_2, \cdots, w_k \stackrel{\text{i.i.d.}}{\sim}$ Laplace$(\lambda)$. For every $C > 1$, 
\begin{align*}
		& \Pro\left(\|\bm w\|_2^2 > kC^2\lambda^2\right) \leq ke^{-C}\\
		& \Pro\left(\|\bm w\|_\infty^2 > C^2\lambda^2\log^2k\right) \leq e^{-(C-1)\log k}.
	\end{align*}
\end{Lemma} 
The lemma is proved in Section \ref{sec: proof of lm: laplace noise bound}. In our case, $\lambda = 4R\sqrt{3s\log(1/\delta)}/n\varepsilon$ and $k = d$. It follows that 
\begin{align*}
	32\sum_{i \in [s]} \|\bm w_i\|^2_\infty + 2\|\tilde{\bm w}_S\|_2^2 \lesssim \sigma^2\frac{(s^*\log d)^2\log(1/\delta)\log n}{n^2\varepsilon^2}
\end{align*}
with probability at least $1 - c_1\exp(-c_2\log d)$. Combining the two high-probability bounds above completes the proof.
\end{proof}

\subsubsection{Proof of Lemma \ref{lm: laplace noise bound}}\label{sec: proof of lm: laplace noise bound}
\begin{proof}[Proof of Lemma \ref{lm: laplace noise bound}]
	By union bound and the i.i.d. assumption,
	\begin{align*}
		\Pro\left(\|\bm w\|_2^2 > kC^2\lambda^2\right) \leq k\Pro(w_1^2 > C^2\lambda^2)  \leq ke^{-C}. 
	\end{align*}
	It follows that
	\begin{align*}
		\Pro\left(\|\bm w\|_\infty^2 > C^2\lambda^2\log^2k\right) \leq k\Pro(w_1^2 > C^2\lambda^2\log^2k) \leq ke^{-C\log k} = e^{-(C-1)\log k}.
	\end{align*}
\end{proof}

\subsection{Proof of Lemma \ref{lm: low-dim regression privacy}}\label{sec: proof of lm: low-dim regression privacy}

\begin{proof}[Proof of Lemma \ref{lm: low-dim regression privacy}]
	As there are $T$ iterations in Algorithm \ref{algo: low-dim regression}, it suffices to show that each iteration is $(\varepsilon/T, \delta/T)$-differentially private, and then the overall privacy follows from the composition property of differential privacy.
	
	Consider two data sets $\bm Z$ and $\bm Z'$ that differ by one datum, $(y, \bm x) \in \bm Z$ versus $(y', \bm x') \in \bm Z'$. For each $t$, by (D1) and (P1), we control the $\ell_2$-sensitivity of the gradient step:
	\begin{align*}
		&\frac{\eta^0}{n}\left(|\bm x^\top \bbeta^t-\Pi_R(y)|\|\bm x\|_2 + |(\bm x')^\top \bbeta^t-\Pi_R(y')|\|\bm x'\|_2\right)  \leq \frac{\eta^0}{n} \cdot 4(R+c_0c_{\bm x})c_{\bm x} = \frac{\eta^0}{n} B.
	\end{align*}
	By the Gaussian mechanism of differential privacy, it follows that $\bbeta^{t+1}(\bm Z)$ is an $(\varepsilon/T, \delta/T)$-differentially private algorithm, as desired.
\end{proof}

\subsection{Proof of Theorem \ref{thm: low-dim regression upper bound}} \label{sec: proof of thm: low-dim regression upper bound}
\begin{proof}[Proof of Theorem \ref{thm: low-dim regression upper bound}]
	Let $\bm X$ denote the $n \times d$ design matrix. We analyze the algorithm under the events
	\begin{align*}
		\mathcal E_1 = \left\{d\|n^{-1}\bm X^\top \bm X - \Sigma_{\bm x}\|_2 \leq 1/2L \right\}\text{ and } \mathcal E_2 = \left\{\Pi_R(y_i) = y_i, \forall i \in [n]\right\},
	\end{align*}
	and then show that they do occur with high probability. 
	
	Under $\mathcal E_2$, we have $\bbeta^{t+1} = \Pi_C\left(\bbeta^t - \eta^0\nabla \L_n(\bbeta^t) + \bm w_t\right)$. $\mathcal E_1$ and assumption (D2) imply that the objective function $\L_n$ is $(2L/d)$-smooth and $(1/2Ld)$-strongly convex. Let $\hat\bbeta = \argmin_{\|\bbeta\|_2 \leq C} \L_n$ and $\tilde \bbeta^{t+1} = \bbeta^t - \eta^0\nabla \L_n(\bbeta^t)$, it then follows that
	\begin{align}
		\|\bbeta^{t+1} - \hat\bbeta\|_2^2 &\leq (1+1/8L^2)\|\tilde \bbeta^{t+1} - \hat\bbeta\|_2^2 + (1 + 8L^2)\|\bm w_t\|_2^2 \notag \\
		&\leq (1+1/8L^2)(1-1/4L^2)\|\bbeta^t - \hat\bbeta\|_2^2 + (1+8L^2)\|\bm w_t\|_2^2 \notag \\
		&\leq (1-1/8L^2)\|\bbeta^t - \hat\bbeta\|_2^2 + (1+8L^2)\|\bm w_t\|_2^2. \label{eq: low-dim regression contraction}
	\end{align}
	The second inequality holds by standard convergence analysis of gradient descent for $\gamma$-smooth and $\alpha$-strongly convex objective (see, for example, \cite{nesterov2003introductory}): when the step size $\eta^0$ is chosen to be $1/\gamma$, it holds that $\|\tilde\bbeta^{t+1} - \hat\bbeta\|^2_2 \leq (1-\alpha/\gamma)\|\bbeta^{t} - \hat\bbeta\|_2^2$.
	
	Now by \eqref{eq: low-dim regression contraction} and the choice of $C = c_0$, $T = (8L^2)\log(c_0^2 n)$, induction over $t$ gives
	\begin{align}\label{eq:low-dim regression master expansion}
		\|\bbeta^T - \hat\bbeta\|_2^2 &\leq \frac{1}{n} + \left(1+8L^2\right)\sum_{k=0}^{T-1} \left(1 - 1/8L^2\right)^{T-k-1}\|\bm w_k\|_2^2.
	\end{align}
	The noise term can be controlled by the following lemma:
	\begin{Lemma}\label{lm: gaussian noise bound}
		For $X_1, X_2, \cdots, X_k \stackrel{\text{i.i.d.}}{\sim} \chi^2_d$, $\lambda > 0$ and $0 < \rho < 1$, 
		\begin{align*}
			\Pro\left(\sum_{j=1}^k \lambda \rho^j X_j > \frac{\rho\lambda d}{1-\rho} + \Delta\right) \leq \exp\left(-\min\left(\frac{(1-\rho^2)\Delta^2}{8\rho^2\lambda^2d}, \frac{\Delta}{8\rho\lambda}\right)\right).
		\end{align*}
	\end{Lemma}
	The lemma is proved in Section \ref{sec: proof of lm: gaussian noise bound}. To apply the tail bound, we let $\lambda = (\eta^0)^2 2B^2 \frac{\log(2T/\delta)}{n^2(\varepsilon/T)^2}$ and $\Delta = K\lambda d$ for a sufficiently large constant $K$, then the noise term in \eqref{eq: low-dim regression contraction} is bounded by $K\lambda d \asymp \sigma^2\frac{d^3\log(1/\delta)\log^3n}{n^2\varepsilon^2}$ with probability at least $1 - c_1\exp(-c_2 d)$. Now \eqref{eq: low-dim regression contraction} combined with the statistical convergence rate of $\|\hat\bbeta - \bbeta\|_2^2$  and assumptions (D1), (D2) yields
	\begin{align*}
		\|\bbeta^T - \bbeta\|_{\Sigma_{\bm x}}^2 \lesssim \sigma^2\left(\frac{d}{n} + \frac{d^2\log(1/\delta)\log^3 n}{n^2\varepsilon^2}\right).
	\end{align*} 
	It remains to control the probability that either $\mathcal E_1$ or $\mathcal E_2$ fails to occur. For $\mathcal E_1$, standard matrix concentration bounds (see, for example, \cite{vershynin2010introduction}), imply that there exists universal constant $c_1, c_2$ such that, as long as $d < n$, $\Pro(\mathcal E_1^c) \leq c_1\exp(-c_2 n)$. Finally we have $\Pro(\mathcal E_2^c) \leq c_1\exp(-c_2\log n)$ because $y_1, y_2, \cdots, y_n \stackrel{\text{i.i.d.}}{\sim} N(0, \sigma^2)$ and $R  \asymp \sigma\sqrt{\log n}$. 
\end{proof}

\subsubsection{Proof of Lemma \ref{lm: gaussian noise bound}}\label{sec: proof of lm: gaussian noise bound}
\begin{proof}[Proof of Lemma \ref{lm: gaussian noise bound}]
	Since $\E \sum_{j=1}^k \lambda \rho^j X_j \leq \lambda d \sum_{j=1}^k \rho^j < \frac{\rho\lambda d}{1-\rho}$, we have 
	\begin{align*}
		\Pro\left(\sum_{j=1}^k \lambda \rho^j X_j > \frac{\rho\lambda d}{1-\rho} + t\right) \leq \Pro\left(\sum_{j=1}^k \lambda \rho^j (X_j - \E X_j) > t\right).
	\end{align*}
	The (centered) $\chi^2_d$ random variable is sub-exponential with parameters $(2\sqrt{d}, 4)$, the weighted sum is also sub-exponential, with parameters at most $\left(2\lambda\sqrt{d}\sqrt{\sum_{j=1}^k \rho^{2j}}, 4\lambda\rho\right)$. The desired tail bound now follows directly from standard sub-exponential tail bounds.
\end{proof}

\subsection{Proof of Lemma \ref{lm: high-dim regression privacy}}\label{sec: proof of lm: high-dim regression privacy}

\begin{proof}[Proof of Lemma \ref{lm: high-dim regression privacy}]
As there are $T$ iterations in Algorithm \ref{algo: high-dim regression}, it suffices to show that each iteration is $(\varepsilon/T, \delta/T)$-differentially private, and then the overall privacy follows from the composition property of differential privacy.

Consider two data sets $\bm Z$ and $\bm Z'$ that differ by one datum, $(y, \bm x) \in \bm Z$ versus $(y', \bm x') \in \bm Z'$. For each $t$, by (D1') and (P1'), we have
\begin{align*}
	&\frac{\eta^0}{n}\left(|\bm x^\top \bbeta^t-\Pi_R(y)|\|\bm x\|_\infty + |(\bm x')^\top \bbeta^t-\Pi_R(y')|\|\bm x'\|_\infty\right)  \leq \frac{\eta^0}{n} \cdot 4(R+c_0c_{\bm x})c_{\bm x}/\sqrt{s} = \frac{\eta^0}{n} B.
\end{align*}
Lemma \ref{lm: peeling privacy} then implies that each iteration of Algorithm \ref{algo: high-dim regression} is $(\varepsilon/T, \delta/T)$-differentially private, as desired.
\end{proof}

\subsection{Proof of Lemma \ref{lm: high-dim regression contraction}}\label{sec: proof of lm: high-dim regression contraction}
\begin{proof}[Proof of Lemma \ref{lm: high-dim regression contraction}]
We begin with stating a key property of the ``Peeling" algorithm (Algorithm \ref{algo: peeling}).
\begin{Lemma}\label{lm: peeling overall accuracy}
	Let $\tilde P_s$ be defined as in Algorithm $\ref{algo: peeling}$. For any index set $I$, any $\bm v \in \R^I$ and $\hat{\bm v}$ such that $\|\hat{\bm v}\|_0 \leq \hat s \leq s$, we have that for every $c > 0$, 
	\begin{align*}
		\|\tilde P_s(\bm v) - \bm v\|_2^2 \leq (1+1/c) \frac{|I|-s}{|I|-\hat s} \|\hat{\bm v} - \bm v\|_2^2 + 4(1 + c)\sum_{i \in [s]} \|\bm w_i\|^2_\infty.
	\end{align*}
\end{Lemma}
The lemma is proved in Section \ref{sec: proof of lm: peeling overall accuracy}. We also introduce some notation for the proof.
\begin{itemize}
	\item Let $\alpha = 1/8Ls$ and $\gamma =4L/s$ so that \eqref{eq: rsc and rsm} can be equivalently written as
	\begin{align}\label{eq: rsc and rsm equivalent}
		\alpha\|\bbeta^t - \hat\bbeta\|_2^2 \leq \langle \nabla \L_n(\bbeta^t) - \nabla \L_n(\hat\bbeta), \bbeta^t - \hat\bbeta \rangle \leq \gamma \|\bbeta^t - \hat\bbeta\|_2^2.
	\end{align}
	Throughout the proof, we assume the truth of \eqref{eq: rsc and rsm equivalent} to prove \eqref{eq: high-dim regression contraction}.
	\item Let $S^t = \supp(\bbeta^t)$, $S^{t+1} = \supp(\bbeta^{t+1})$, $S^* = \supp(\hat \bbeta)$, and define $I^t = S^{t+1} \cup S^t \cup S^*$. 
	\item Let $\bm g^t = \nabla \L_n(\bbeta^t)$ and $\eta^0 = \eta/\gamma$, where $\gamma$ is the constant in \eqref{eq: rsc and rsm equivalent}.  
	\item Let $\bm w_1, \bm w_2, \cdots, \bm w_s$ be the noise vectors added to $\bbeta^t - \eta^0\nabla \L_n(\bbeta^t)$ when the support of $\bbeta^{t+1}$ is iteratively selected. We define $\bm W = 4\sum_{i \in [s]} \|\bm w_i\|^2_\infty$.
\end{itemize}

By \eqref{eq: rsc and rsm equivalent}, we have
\begin{align}
	\L_n(\bbeta^{t+1}) - \L_n(\bbeta^t) &\leq \langle \bbeta^{t+1} - \bbeta^{t}, \bm g^t \rangle + \frac{\gamma}{2}\|\bbeta^{t+1} - \bbeta^t\|_2^2 \notag \\
	&= \frac{\gamma}{2}\left\|\bbeta^{t+1}_{I^t} - \bbeta^t_{I^t} + \frac{\eta}{\gamma}\bm g^t_{I^t}\right\|_2^2 - \frac{\eta^2}{2\gamma}\left\|\bm g^t_{I^t}\right\|_2^2 + (1-\eta) \langle \bbeta^{t+1} - \bbeta^{t}, \bm g^t \rangle. \label{eq: master expansion 1}
\end{align} 
We first focus on the third term above. In what follows, $c$ denotes an arbitrary constant greater than 1. Since $\bbeta^{t+1}$ is an output from Algorithm \ref{algo: peeling}, we may write $\bbeta^{t+1} = {\tilde \bbeta}^{t+1} + \tilde {\bm w}_{S^{t+1}}$, so that ${\tilde \bbeta}^{t+1} = \tilde P_s(\bbeta^t - \eta^0\nabla \L_n(\bbeta^t))$ and $\tilde {\bm w}$ is a vector consisting of $d$ i.i.d. Laplace random variables.
\begin{align*}
	\langle \bbeta^{t+1} - \bbeta^t, \bm g^t \rangle &= \langle \bbeta^{t+1}_{S^{t+1}} - \bbeta^t_{S^{t+1}}, \bm g^t_{S^{t+1}} \rangle - \langle \bbeta^t_{S^t \setminus S^{t+1}}, \bm g^t_{S^t \setminus S^{t+1}} \rangle \\
	&= \langle {\tilde \bbeta}^{t+1}_{S^{t+1}} - \bbeta^t_{S^{t+1}}, \bm g^t_{S^{t+1}} \rangle + \langle \tilde {\bm w}_{S^{t+1}}, \bm g^t_{S^{t+1}} \rangle  - \langle \bbeta^t_{S^t \setminus S^{t+1}}, \bm g^t_{S^t \setminus S^{t+1}} \rangle.
\end{align*}
It follows that, for every $c > 1$,
\begin{align}\label{eq: proof of third term in master expansion 1}
	\langle \bbeta^{t+1} - \bbeta^t, \bm g^t \rangle &\leq -\frac{\eta}{\gamma}\|\bm g^t_{S^{t+1}}\|_2^2 + c\|\tilde {\bm w}_{S^{t+1}}\|_2^2 + (1/4c)\|\bm g^t_{S^{t+1}}\|_2^2 - \langle \bbeta^t_{S^t \setminus S^{t+1}}, \bm g^t_{S^t \setminus S^{t+1}} \rangle.
\end{align}
Now for the last term in the display above, we have
\begin{align*}
	- \langle \bbeta^t_{S^t \setminus S^{t+1}}, \bm g^t_{S^t \setminus S^{t+1}} \rangle &\leq \frac{\gamma}{2\eta}\left(\left\|\bbeta^t_{S^t \setminus S^{t+1}} - \frac{\eta}{\gamma}\bm g^t_{S^t \setminus S^{t+1}}\right\|_2^2 - \left(\frac{\eta}{\gamma}\right)^2\|\bm g^t_{S^t \setminus S^{t+1}}\|_2^2\right) \\
	&\leq \frac{\gamma}{2\eta}\left\|\bbeta^t_{S^t \setminus S^{t+1}} - \frac{\eta}{\gamma}\bm g^t_{S^t \setminus S^{t+1}}\right\|_2^2 - \frac{\eta}{2\gamma}\|\bm g^t_{S^t \setminus S^{t+1}}\|_2^2.
\end{align*}
We apply Lemma \ref{lm: peeling accuracy} to $\left\|\bbeta^t_{S^t \setminus S^{t+1}} - \frac{\eta}{\gamma}\bm g^t_{S^t \setminus S^{t+1}}\right\|_2^2$ to obtain that, for every $c > 1$, 
\begin{align*}
	- \langle \bbeta^t_{S^t \setminus S^{t+1}}, \bm g^t_{S^t \setminus S^{t+1}} \rangle &\leq \frac{\gamma}{2\eta}\left[(1+1/c)\left\|\tilde{\bbeta}^{t+1}_{S^{t+1} \setminus S^t}\right\|_2^2 + (1+c)\bm W\right] - \frac{\eta}{2\gamma}\|\bm g^t_{S^t \setminus S^{t+1}}\|_2^2 \\
	&= \frac{\eta}{2\gamma}\left[(1+1/c)\left\|\bm g^t_{S^{t+1} \setminus S^t}\right\|_2^2 + (1+c)\frac{\gamma}{2\eta}\bm W\right] - \frac{\eta}{2\gamma}\|\bm g^t_{S^t \setminus S^{t+1}}\|_2^2.
\end{align*}
Plugging the inequality above back into \eqref{eq: proof of third term in master expansion 1} yields
\begin{align*}
	\langle \bbeta^{t+1} - \bbeta^t, \bm g^t \rangle \leq~ & -\frac{\eta}{\gamma}\|\bm g^t_{S^{t+1}}\|_2^2 + c\|\tilde {\bm w}_{S^{t+1}}\|_2^2 + (1/4c)\|\bm g^t_{S^{t+1}}\|_2^2 \\ 
	&+ \frac{\eta}{2\gamma}\left[(1+1/c)\left\|\bm g^t_{S^{t+1} \setminus S^t}\right\|_2^2 + (1+c)\frac{\gamma}{2\eta}\bm W\right] - \frac{\eta}{2\gamma}\|\bm g^t_{S^t \setminus S^{t+1}}\|_2^2 \\
	\leq~ & \frac{\eta}{2\gamma}\left\|\bm g^t_{S^{t+1} \setminus S^t}\right\|_2^2 - \frac{\eta}{2\gamma}\|\bm g^t_{S^t \setminus S^{t+1}}\|_2^2 - \frac{\eta}{\gamma}\|\bm g^t_{S^{t+1}}\|_2^2 \\
	&+ (1/c)\left(4 + \frac{\eta}{2\gamma}\right) \|\bm g^t_{S^{t+1}}\|_2^2 + c\|\tilde {\bm w}_{S^{t+1}}\|_2^2 + (1+c)\frac{\gamma}{2\eta}\bm W.
\end{align*}
Finally, for the third term of \eqref{eq: master expansion 1} we have
\begin{align*}
	\langle \bbeta^{t+1} - \bbeta^t, \bm g^t \rangle \leq -\frac{\eta}{2\gamma}\left\|\bm g^t_{S^{t} \cup S^{t+1}}\right\|_2^2 + (1/c)\left(4 + \frac{\eta}{2\gamma}\right) \|\bm g^t_{S^{t+1}}\|_2^2 + c\|\tilde {\bm w}_{S^{t+1}}\|_2^2 + (1+c)\frac{\gamma}{2\eta}\bm W.
\end{align*}
Now combining this bound with \eqref{eq: master expansion 1} yields 
\begin{align}
	&\L_n(\bbeta^{t+1}) - \L_n(\bbeta^t) \notag \\
	\leq ~ &\frac{\gamma}{2}\left\|\bbeta^{t+1}_{I^t} - \bbeta^t_{I^t} + \frac{\eta}{\gamma}\bm g^t_{I^t}\right\|_2^2 - \frac{\eta^2}{2\gamma}\left\|\bm g^t_{I^t}\right\|_2^2 -\frac{\eta(1-\eta)}{2\gamma}\left\|\bm g^t_{S^t \cup S^{t+1}}\right\|_2^2 \notag \\
	&+ \frac{1-\eta}{c}\left(4 + \frac{\eta}{2\gamma}\right) \|\bm g^t_{S^{t+1}}\|_2^2 + (1-\eta)c\|\tilde {\bm w}_{S^{t+1}}\|_2^2 + (1-\eta)(1+c)\frac{\gamma}{2\eta}\bm W \notag \\
	\leq ~ & \frac{\gamma}{2}\left\|\bbeta^{t+1}_{I^t} - \bbeta^t_{I^t} + \frac{\eta}{\gamma}\bm g^t_{I^t}\right\|_2^2 - \frac{\eta^2}{2\gamma}\left\|\bm g^t_{I^t \setminus (S^t \cup S^*)}\right\|_2^2 - \frac{\eta^2}{2\gamma}\left\|\bm g^t_{S^t \cup S^*}\right\|_2^2  -\frac{\eta(1-\eta)}{2\gamma}\left\|\bm g^t_{S^t \cup S^{t+1}}\right\|_2^2  \notag\\
	&+ \frac{1-\eta}{c}\left(4 + \frac{\eta}{2\gamma}\right) \|\bm g^t_{S^{t+1}}\|_2^2 + (1-\eta)c\|\tilde {\bm w}_{S^{t+1}}\|_2^2 + (1-\eta)(1+c)\frac{\gamma}{2\eta}\bm W \notag\\
	\leq ~ & \frac{\gamma}{2}\left\|\bbeta^{t+1}_{I^t} - \bbeta^t_{I^t} + \frac{\eta}{\gamma}\bm g^t_{I^t}\right\|_2^2 - \frac{\eta^2}{2\gamma}\left\|\bm g^t_{I^t \setminus (S^t \cup S^*)}\right\|_2^2  - \frac{\eta^2}{2\gamma}\left\|\bm g^t_{S^t \cup S^*}\right\|_2^2 -\frac{\eta(1-\eta)}{2\gamma}\left\|\bm g^t_{S^{t+1} \setminus (S^t \cup S^*)}\right\|_2^2  \notag\\
	&+\frac{1-\eta}{c}\left(4 + \frac{\eta}{2\gamma}\right) \|\bm g^t_{S^{t+1}}\|_2^2 + (1-\eta)c\|\tilde {\bm w}_{S^{t+1}}\|_2^2 + (1-\eta)(1+c)\frac{\gamma}{2\eta}\bm W. \label{eq: master expansion 2}
\end{align}
	The last step is true because $S^{t+1} \setminus (S^t \cup S^*)$ is a subset of $S^t \cup S^{t+1}$. We next analyze the first two terms,  $\frac{\gamma}{2}\left\|\bbeta^{t+1}_{I^t} - \bbeta^t_{I^t} + \frac{\eta}{\gamma}\bm g^t_{I^t}\right\|_2^2 - \frac{\eta^2}{2\gamma}\left\|\bm g^t_{I^t \setminus (S^t \cup S^*)}\right\|_2^2$.
	
	Let $R$ be a subset of $S^t \setminus S^{t+1}$ such that $|R| = |I^t \setminus (S^t \cup S^*)| = |S^{t+1} \setminus (S^t \cup S^*)|$. By the definition of $\tilde \bbeta^{t+1}$ and Lemma \ref{lm: peeling accuracy}, we have, for every $c > 1$, 
	\begin{align*}
		\frac{\eta^2}{\gamma^2}\left\|\bm g^t_{I^t \setminus (S^t \cup S^*)}\right\|_2^2 = \|\tilde \bbeta^{t+1}_{I^t \setminus (S^t \cup S^*)}\|_2^2 \geq (1-1/c)\left\|\bbeta^t_R - \frac{\eta}{\gamma}\bm g^t_R\right\|_2^2 - c\bm W.
	\end{align*}
	It follows that
	\begin{align*}
		&\frac{\gamma}{2}\left\|\bbeta^{t+1}_{I^t} - \bbeta^t_{I^t} + \frac{\eta}{\gamma}\bm g^t_{I^t}\right\|_2^2 - \frac{\eta^2}{2\gamma}\left\|\bm g^t_{I^t \setminus (S^t \cup S^*)}\right\|_2^2 \\
		&\leq \frac{\gamma}{2}\|\tilde {\bm w}_{S^{t+1}}\|_2^2 + \frac{\gamma}{2}\left\|\tilde \bbeta^{t+1}_{I^t} - \bbeta^t_{I^t} + \frac{\eta}{\gamma}\bm g^t_{I^t}\right\|_2^2 - \frac{\gamma}{2}(1-1/c)\left\|\bbeta^t_R - \frac{\eta}{\gamma}\bm g^t_R\right\|_2^2 + \frac{c\gamma}{2}\bm W \\
		&= \frac{\gamma}{2}\left\|\tilde \bbeta^{t+1}_{I^t} - \bbeta^t_{I^t} + \frac{\eta}{\gamma}\bm g^t_{I^t}\right\|_2^2 - \frac{\gamma}{2}\left\||\tilde \bbeta^{t+1}_R - \bbeta^t_R + \frac{\eta}{\gamma}\bm g^t_R\right\|_2^2 + \frac{\gamma}{2}(1/c)\left\|\bbeta^t_R - \frac{\eta}{\gamma}\bm g^t_R\right\|_2^2+\frac{c\gamma}{2}\bm W \\
		&+ \frac{\gamma}{2}\|\tilde {\bm w}_{S^{t+1}}\|_2^2\\
		&\leq \frac{\gamma}{2}\left\|\tilde \bbeta^{t+1}_{I^t \setminus R} - \bbeta^t_{I^t \setminus R} + \frac{\eta}{\gamma}\bm g^t_{I^t \setminus R}\right\|_2^2 + \frac{\eta^2}{2c\gamma}(1+1/c)\left\|\bm g^t_{I^t \setminus (S^t \cup S^*)}\right\|_2^2+\frac{c\gamma}{2}\bm W + \frac{\gamma}{2}\|\tilde {\bm w}_{S^{t+1}}\|_2^2.
	\end{align*}
	The last inequality is obtained by applying Lemma \ref{lm: peeling accuracy} to $\left\|\bbeta^t_R - \frac{\eta}{\gamma}\bm g^t_R\right\|_2^2$. Now we apply Lemma \ref{lm: peeling overall accuracy} to obtain
	\begin{align*}
		&\frac{\gamma}{2}\left\|\bbeta^{t+1}_{I^t} - \bbeta^t_{I^t} + \frac{\eta}{\gamma}\bm g^t_{I^t}\right\|_2^2 - \frac{\eta^2}{2\gamma}\left\|\bm g^t_{I^t \setminus (S^t \cup S^*)}\right\|_2^2 \\
		&\leq \frac{3\gamma}{4}\frac{|I^t \setminus R|-s}{|I^t \setminus R|-s^*}\left\|\tilde \hat \bbeta_{I^t \setminus R} - \bbeta^t_{I^t \setminus R} + \frac{\eta}{\gamma}\bm g^t_{I^t \setminus R}\right\|_2^2 +\frac{3\gamma}{2}\bm W \\
		&+ \frac{\eta^2(1+c^{-1})}{2c\gamma}\left\|\bm g^t_{I^t \setminus (S^t \cup S^*)}\right\|_2^2+\frac{c\gamma}{2}\bm W + \frac{\gamma}{2}\|\tilde {\bm w}_{S^{t+1}}\|_2^2 \\
		&\leq \frac{3\gamma}{4}\frac{2s^*}{s+s^*}\left\|\tilde \hat \bbeta_{I^t \setminus R} - \bbeta^t_{I^t \setminus R} + \frac{\eta}{\gamma}\bm g^t_{I^t \setminus R}\right\|_2^2 +\frac{3\gamma}{2}\bm W + \frac{\eta^2}{2c\gamma}(1+1/c)\left\|\bm g^t_{S^{t+1}}\right\|_2^2+\frac{c\gamma}{2}\bm W + \frac{\gamma}{2}\|\tilde {\bm w}_{S^{t+1}}\|_2^2.
	\end{align*}
	The last step is true by observing that $|I^t \setminus R| \leq 2s^*+s$, and the inclusion $I^t \setminus (S^t \cup S^*) \subseteq S^{t+1}$. We continue to simplify,
	\begin{align*}
		&\frac{\gamma}{2}\left\|\bbeta^{t+1}_{I^t} - \bbeta^t_{I^t} + \frac{\eta}{\gamma}\bm g^t_{I^t}\right\|_2^2 - \frac{\eta^2}{2\gamma}\left\|\bm g^t_{I^t \setminus (S^t \cup S^*)}\right\|_2^2 \\
		&\leq \frac{\gamma}{2}\frac{3s^*}{s+s^*}\left\|\tilde \hat \bbeta_{I^t} - \bbeta^t_{I^t} + \frac{\eta}{\gamma}\bm g^t_{I^t}\right\|_2^2 +\frac{3\gamma}{2}\bm W + \frac{\eta^2}{2c\gamma}(1+1/c)\left\|\bm g^t_{S^{t+1}}\right\|_2^2+\frac{c\gamma}{2}\bm W + \frac{\gamma}{2}\|\tilde {\bm w}_{S^{t+1}}\|_2^2 \\
		&\leq \frac{3s^*}{s+s^*}\left(\eta\langle\hat \bbeta - \bbeta^t, \bm g^t\rangle + \frac{\gamma}{2}\|\hat \bbeta - \bbeta^t\|_2^2 + \frac{\eta^2}{2c\gamma}\|\bm g^t_{I^t}\|_2^2\right) \\
		&+ \frac{\eta^2}{2c\gamma}(1+1/c)\left\|\bm g^t_{S^{t+1}}\right\|_2^2+\frac{(c+3)\gamma}{2}\bm W + \frac{\gamma}{2}\|\tilde {\bm w}_{S^{t+1}}\|_2^2 \\
		&\leq \frac{3s^*}{s+s^*}\left(\eta\L_n(\hat \bbeta) - \eta\L_n(\bbeta^t) + \frac{\gamma - \eta \alpha}{2}\|\hat \bbeta - \bbeta^t\|_2^2 + \frac{\eta^2}{2c\gamma}\|\bm g^t_{I^t}\|_2^2\right) \\
		&\quad + \frac{\eta^2}{2c\gamma}(1+1/c)\left\|\bm g^t_{S^{t+1}}\right\|_2^2+\frac{(c+3)\gamma}{2}\bm W + \frac{\gamma}{2}\|\tilde {\bm w}_{S^{t+1}}\|_2^2.
	\end{align*}
Until now, the inequality is true for any $0 < \eta < 1$ and $c > 1$. We now specify the choice of these parameters: let $\eta  = 2/3$ and set $c$ large enough so that
\begin{align*}
	\L_n(\bbeta^{t+1}) - \L_n(\bbeta^t) \leq ~ & \frac{3s^*}{s + s^*} \left(\eta \L_n(\hat \bbeta) - \eta \L_n(\bbeta^t) + \frac{\gamma-\eta\alpha}{2}\|\hat \bbeta - \bbeta^t\|_2^2 + \frac{\eta^2}{2\gamma}\|\bm g^t_{I^t}\|_2^2\right) \\
	&- \frac{\eta^2}{4\gamma}\left\|\bm g^t_{S^t \cup S^*}\right\|_2^2 -\frac{\eta(1-\eta)}{4\gamma}\left\|\bm g^t_{S^{t+1} \setminus (S^t \cup S^*)}\right\|_2^2\\
	& + \frac{\gamma}{2}\left(\frac{3c+7}{2}\right)\bm W + \left(\frac{c}{3} + \frac{\gamma}{2}\right)\|\tilde {\bm w}_{S^{t+1}}\|_2^2.
\end{align*}
Such a choice of $c$ is available because $\gamma$ is bounded above by an absolute constant thanks to the RSM condition (upper inequality of \eqref{eq: rsc and rsm equivalent}). Now we set $s = 72(\gamma/\alpha)^2 s^* = \rho L^4 s^*$, where $\rho$ is the absolute constant referred to in Lemma \ref{lm: high-dim regression contraction} and Theorem \ref{thm: high-dim regression upper bound}, so that $\frac{3s^*}{s+s^*} \leq \frac{\alpha^2}{24\gamma(\gamma - \eta\alpha)}$, and $\frac{\alpha^2}{24\gamma(\gamma - \eta\alpha)} \leq 1/8$ because $\alpha < \gamma$. It follows that
\begin{align*}
	\L_n(\bbeta^{t+1}) - \L_n(\bbeta^t) \leq ~ & \frac{3s^*}{s + s^*} \left(\eta \L_n(\hat \bbeta) - \eta \L_n(\bbeta^t)\right) + \frac{\alpha^2}{48\gamma}\|\hat \bbeta - \bbeta^t\|_2^2 + \frac{1}{36\gamma}\|\bm g^t_{I^t}\|_2^2 \\
	&- \frac{1}{9\gamma}\left\|\bm g^t_{S^t \cup S^*}\right\|_2^2 -\frac{1}{18\gamma}\left\|\bm g^t_{S^{t+1} \setminus (S^t \cup S^*)}\right\|_2^2\\
	& + \frac{\gamma}{2}\left(\frac{3c+7}{2}\right)\bm W + \left(\frac{c}{3} + \frac{\gamma}{2}\right)\|\tilde {\bm w}_{S^{t+1}}\|_2^2.
\end{align*}
Because $\|\bm g^t_{I^t}\|_2^2 = \left\|\bm g^t_{S^t \cup S^*}\right\|_2^2 + \left\|\bm g^t_{S^{t+1} \setminus (S^t \cup S^*)}\right\|_2^2$, we have
\begin{align}
	\L_n(\bbeta^{t+1}) - \L_n(\bbeta^t) \leq ~ & \frac{3s^*}{s + s^*} \left(\eta \L_n(\hat \bbeta) - \eta \L_n(\bbeta^t)\right) + \frac{\alpha^2}{48\gamma}\|\hat \bbeta - \bbeta^t\|_2^2 - \frac{3}{36\gamma}\left\|\bm g^t_{S^t \cup S^*}\right\|_2^2 \notag\\
	& + \frac{\gamma}{2}\left(\frac{3c+7}{2}\right)\bm W + \left(\frac{c}{3} + \frac{\gamma}{2}\right)\|\tilde {\bm w}_{S^{t+1}}\|_2^2 \notag\\
	\leq ~ & \frac{3s^*}{s + s^*} \left(\eta \L_n(\hat \bbeta) - \eta \L_n(\bbeta^t)\right) - \frac{3}{36\gamma}\left(\left\|\bm g^t_{S^t \cup S^*}\right\|_2^2 - \frac{\alpha^2}{4}\|\hat \bbeta - \bbeta^t\|_2^2\right) \notag\\
	& + \frac{\gamma}{2}\left(\frac{3c+7}{2}\right)\bm W + \left(\frac{c}{3} + \frac{\gamma}{2}\right)\|\tilde {\bm w}_{S^{t+1}}\|_2^2. \label{eq: master expansion 3}
\end{align}
To continue the calculations, we consider a lemma from \cite{jain2014iterative}:
\begin{Lemma}[\citep{jain2014iterative}, Lemma 6]
	\begin{align*}
		\left\|\bm g^t_{S^t \cup S^*}\right\|_2^2 - \frac{\alpha^2}{4}\|\hat \bbeta - \bbeta^t\|_2^2 \geq \frac{\alpha}{2}\left(\L_n(\bbeta^t) - \L_n(\hat \bbeta)\right).
	\end{align*}
\end{Lemma}
It then follows from \eqref{eq: master expansion 3}, the quoted lemma above and the definition of $\rho$ that, for an appropriate constant $c_3$, 
\begin{align*}
	\L_n(\bbeta^{t+1}) - \L_n(\bbeta^t) &\leq -\left(\frac{3\alpha}{72\gamma} + \frac{2s^*}{s + s^*}\right)\left(\L_n(\bbeta^t) - \L_n(\hat \bbeta)\right) + c_3(\bm W + \|\tilde {\bm w}_{S^{t+1}}\|_2^2)\\
	&\leq -\left(\frac{1}{\rho L^2}\right)\left(\L_n(\bbeta^t) - \L_n(\hat \bbeta)\right) + c_3(\bm W + \|\tilde {\bm w}_{S^{t+1}}\|_2^2).
\end{align*}
Adding $\L_n(\bbeta^t) - \L_n(\hat \bbeta)$ to both sides of the inequality concludes the proof.
\end{proof}

\subsubsection{Proof of Lemma \ref{lm: peeling overall accuracy}}\label{sec: proof of lm: peeling overall accuracy}
\begin{proof}[Proof of Lemma \ref{lm: peeling overall accuracy}]
	Let $T$ be the index set of the top $s$ coordinates of $\bm v$ in terms of absolute values. We have
	\begin{align*}
		\|\tilde P_s(\bm v) - \bm v\|_2^2 &= \sum_{j \in S^c} v_j^2 = \sum_{j \in S^c \cap T^c} v_j^2 + \sum_{j \in S^c \cap T} v_j^2\\
		& \leq \sum_{j \in S^c \cap T^c} v_j^2 + (1+1/c)\sum_{j \in S \cap T^c} v_j^2 + 4(1 + c)\sum_{i \in [s]} \|\bm w_i\|^2_\infty.
	\end{align*}
	The last step is true by observing that $|S \cap T^c| = |S^c \cap T|$ and applying Lemma \ref{lm: peeling accuracy}.
	
	Now, for an arbitrary $\hat{\bm v}$ with $\|\hat{\bm v}\|_0 = \hat s \leq s$, let $\hat S = \supp(\hat{\bm v})$. We have
	\begin{align*}
		\frac{1}{|I|-s}\sum_{j \in T^c} v_j^2 = \frac{1}{|T^c|}\sum_{j \in T^c} v_j^2 \stackrel{(*)}{\leq} \frac{1}{|(\hat S)^c|}\sum_{j \in (\hat S)^c} v_j^2 = \frac{1}{|I|-\hat s}\sum_{j \in (\hat S)^c} v_j^2 \leq \frac{1}{|I|-\hat s}\sum_{j \in (\hat S)^c} \|\hat{\bm v} - \bm v\|_2^2
	\end{align*}
	The (*) step is true because $T^c$ is the collection of indices with the smallest absolute values, and $|T^c| \leq |\hat S^c|$. We then combine the two displays above to conclude that
	\begin{align*}
		\|\tilde P_s(\bm v) - \bm v\|_2^2 &\leq \sum_{j \in S^c \cap T^c} v_j^2 + (1+1/c)\sum_{j \in S \cap T^c} v_j^2 + 4(1 + c)\sum_{i \in [s]} \|\bm w_i\|^2_\infty \\
		&\leq (1+1/c)\sum_{j \in T^c} v_j^2 + 4(1 + c)\sum_{i \in [s]} \|\bm w_i\|^2_\infty \\
		&\leq (1+1/c) \frac{|I|-s}{|I|-\hat s} \|\hat{\bm v} - \bm v\|_2^2 + 4(1 + c)\sum_{i \in [s]} \|\bm w_i\|^2_\infty.
	\end{align*}
\end{proof}
\section{Proofs of lower bound results}\label{sec: lb proofs}
\subsection{Proof of Lemma \ref{lm: low-dim mean attack}}\label{sec: proof of lm: low-dim mean attack}
\begin{proof}[Proof of Lemma \ref{lm: low-dim mean attack}]
For the first part, we observe that $M(\bm X'_i)$ and $\bm x_i$ are independent, then by Hoeffding's inequality,
\begin{align*}
	&\Pro\left(\sum_{j=1}^d x_{ij} M(\bm X'_i)_j -  \sum_{j=1}^d  {\mu}_j M(\bm X'_i)_j > \sigma^2 \sqrt{8d\log(1/\delta)}\Bigg|M(\bm X'_i) = \bm q\right)\\
	&= \Pro\left(\sum_{j=1}^d x_{ij} q_j -  \sum_{j=1}^d  {\mu}_j q_j > \sigma^2 \sqrt{8d\log(1/\delta)}\Bigg|M(\bm X'_i) = \bm q \right) \exp\left(-\frac{(\sigma^2\sqrt{8d\log(1/\delta)})^2}{8\sigma^4d}\right) \leq \delta.
\end{align*}
Hoeffding's inequality applies since $\sum_{j=1}^d ({x}_{ij}- {\mu}_j) q_j$ is a sum of $d$ independent, zero-mean random variables bounded by $-2\sigma^2$ and $2\sigma^2$.

For the second part, since $\sqrt{d}\|M(\bm X)-\bar{\bm X}\|_2 \geq \|M(\bm X) - \bar{\bm X}\|_1$, it suffices to show that
\begin{align*}
	\Pro\left(\sum_{i \in [n]} \A_\bmu (\bm x_i, M(\bm X)) \leq n\sigma^2\sqrt{8d\log(1/\delta)}, \|M(\bm X)-\bar{\bm X}\|_1 \lesssim \sigma d\right) < \delta.
\end{align*}
Now we introduce the prior distribution of $\bmu$: let $\bmu = \sigma \bm p$, where the coordinates $p_1, p_2, \cdots, p_d$ of $\bm p$ is an i.i.d. sample from Uniform$(-1, 1)$. For $j \in [d]$, define 
\begin{align*}
	W_j = \frac{M(\bm X)_j}{\sigma} \sum_{i =1}^n (x_{ij}-\sigma p_j) + \frac{1}{\alpha}|M(\bm X)_j - \bar{\bm X}_j|,
\end{align*}
where $\alpha$ is a universal constant to be specified later. By the assumed sample size range, it suffices to show that $
\Pro\left(\sum_{j=1}^d W_j  \leq \gamma\cdot {\sigma d}\right) < \delta$ for some constant $\gamma$. In fact, if this is true, we then have
	\begin{align*}
	&\delta \geq  \Pro\left(\sum_{j=1}^d W_j \leq {\sigma \gamma d}{}\right) \\
	&= \Pro\left\{ \frac{1}{\sigma}\sum_{i =1}^n \left(\sum_{j=1}^d {x}_{ij} M(X)_j -  \sum_{j=1}^d {\mu}_j M(X)_j \right) + \frac{1}{\alpha}\|M(X) - \bar X\|_1 \leq {\sigma \gamma d}{} \right\}\\
	& \geq \Pro\left\{ \sum_{i=1}^n \left(\sum_{j=1}^d {x}_{ij} M(X)_j -  \sum_{j=1}^d {\mu}_j M(X)_j \right) \leq \frac{\sigma^2 \gamma d}{2}, \frac{1}{\alpha}\|M(X) -  \bar X\|_1 \leq \frac{\sigma\gamma d}{2} \right\}\\
	&\geq \Pro\left\{ \sum_{i=1}^n \left(\sum_{j=1}^d {x}_{ij} M(X)_j -  \sum_{j=1}^d {\mu}_j M(X)_j \right) \leq n \sigma^2\sqrt{8d\log(1/\delta)}, \|M(X) - \bar X\|_1 \lesssim  \sigma d \right\},
	\end{align*} 
	which is the desired result.

To this end, we denote $\mathcal F=\{\bm X,M(\bm X)\}$ and compute the moment generating function
\begin{align*}
	\E[e^{-u\sum_{j=1}^d W_j}]=\E[\E[e^{-u\sum_{j=1}^d W_j}\mid \mathcal F]]=\E[e^{-u\sum_{j=1}^d \E[W_j\mid \mathcal F]}\cdot \E[e^{-u\sum_{j=1}^d (W_j-\E[W_j\mid \mathcal F])}\mid \mathcal F]].
\end{align*}
We first bound $
\E[e^{-u\sum_{j=1}^d (W_j-\E[W_j\mid \mathcal F])}\mid \mathcal F].
$ To control this term, we note that $p_1, p_2, \cdots, p_d$ are i.i.d. given $\bm X$ and therefore i.i.d given $\mathcal F$. Let $\bm X_j$ denote the $j$th column of $\bm X$,
\begin{align*}
	f(\bm p|\bm X) = \frac{f(\bm X|\bm p)\bm \pi(\bm p)}{f(\bm X)} = \frac{\prod_{j=1}^d f_j(\bm X_j|\bm p_j)\pi_j(p_j)}{\prod_{j=1}^d f_j(\bm X_j)} = \prod_{j=1}^d f_j(p_j|\bm X_j).
\end{align*}
It follows that
\begin{align*}
	\E[e^{-u\sum_{j=1}^d (W_j-\E[W_j\mid \mathcal F])}\mid \mathcal F]=\prod_{j=1}^d \E[e^{-u (W_j-\E[W_j\mid \mathcal F])}\mid \mathcal F].
\end{align*}
For the ease of presentation, let us denote $W_j = \frac{M(\bm X)_j}{\sigma} \sum_{i =1}^n (x_{ij}-\sigma p_j) + \frac{1}{\alpha}|M(\bm X)_j - \bar {\bm X}_j|$ by $\phi_{\bm X,j}(p_j)+C_M(\bm X)$, where $\phi_{\bm X,j}(p_j)=-\frac{M(\bm X)_j}{\sigma}n\sigma p_j$ and $C_M(\bm X)= \frac{M(\bm X)_j}{\sigma} \sum_{i =1}^n x_{ij}+ \frac{1}{\alpha}|M(\bm X)_j - \bar{\bm X}_j|$.
We have
\begin{align*}
	\E[e^{-u (W_j-\E[W_j\mid \mathcal F])}\mid \mathcal F]=\E[e^{-u (\phi_{\bm X,j}(p_j)-\E[\phi_{\bm X,j}(p_j)\mid \mathcal F])}\mid \mathcal F].
\end{align*}
Since $|M(\bm X)_j| \leq \sigma$ , we have  $|\phi_{\bm X,j}(p_j)-\E[\phi_{\bm X,j}(p_j)|\leq n\sigma$ and $\|\phi_{\bm X,j}(p_j)-\E[\phi_{\bm X,j}(p_j)\|_{\psi_2}\leq n\sigma$, where $\|\cdot\|_{\psi_2}$ denotes the sub-Gaussian norm of a random variable. This implies $
\E\left[\exp\left(-u (W_j-\E[W_j\mid \mathcal F])\right)\mid \mathcal F\right]\leq e^{Cn^2\sigma^2u^2}$, and therefore
\begin{align*}
	&\E\left[\exp\left(u\cdot \left|\sum_{j=1}^d (W_j - \E[W_j\mid \mathcal F])\right|\right)\mid \mathcal F\right]
	&\leq \prod_{j=1}^d \E\left[\exp\left(-u (W_j-\E[W_j\mid \mathcal F])\right)\mid \mathcal F\right] \leq  e^{Cn^2\sigma^2 u^2\cdot d}.
\end{align*}
We then have
\begin{align*}
	\E[e^{-u\sum_{j=1}^d W_j}]=\E[e^{-u\sum_{j=1}^d \E[W_j\mid \mathcal F]}\cdot \E[e^{-u\sum_{j=1}^d (W_j-\E[W_j\mid \mathcal F])}\mid \mathcal F]]\le  \exp(Cn^2 \sigma^2u^2\cdot{d})\cdot \E[e^{-u\sum_{j=1}^d \E[W_j\mid \mathcal F]}].
\end{align*}
We know that given $\mathcal F$,  $p_1,...,p_d$ are i.i.d. For $j\in[d]$, since $\frac{x_{ij}+\sigma}{2\sigma}\mid p_j \sim \text{Bernoulli}(\frac{p_j+1}{2})$, and $\frac{p_j+1}{2}\sim U(0,1)$. It follows that
\begin{align*}
	\frac{p_j+1}{2} \big| \mathcal F \stackrel{d}{=} \frac{p_j+1}{2}\big| \bm X_j\sim \text{Beta}\left(1+\sum_{i=1}^n\frac{ x_{ij}+\sigma}{2\sigma},n+1-\sum_{i=1}^n\frac{ x_{ij}+\sigma}{2\sigma}\right).
\end{align*}
Therefore $
\E\left[\frac{p_j+1}{2}\mid \mathcal F\right]=\frac{1+\sum_{i=1}^n\frac{ x_{ij}+\sigma}{2\sigma}}{n+2}$, which implies $
\E[p_j\mid \mathcal F]=\frac{\sum_{i=1}^n x_{ij}/\sigma}{n+2}.$

Denote $S_j=\sum_{i=1}^n x_{ij}$ and $\tilde S_j=S_j/\sigma$, we then have 
\begin{align*}
	\E[W_j\mid \mathcal F]=&\frac{M(\bm X)_j}{\sigma} \sum_{i =1}^n ( x_{ij}-\sigma\E[p_j\mid\mathcal F]) + \frac{1}{\alpha}|M(\bm X)_j - \bar {\bm X}_j|\\
	&= \frac{M(\bm X)_j}{\sigma} \cdot \frac{2}{n+2} S_j+\frac{1}{\alpha}|M(\bm X)_j - S_j/n|\\
	&\geq\min\left\{\frac{S_j}{\alpha n}, \frac{2}{n+2}S_j+\frac{1}{\alpha}(\sigma-S_j/n),\frac{2}{(n+2)n\sigma}S_j^2 \right\}\\
	&= \sigma\cdot\min\left\{\frac{\tilde S_j}{\alpha n}, \frac{2}{n+2}\tilde S_j+\frac{1}{\alpha}(1-\tilde S_j/n),\frac{2}{(n+2)n}\tilde S_j^2 \right\}.
\end{align*}

Take $\alpha=1/3$, then we have $\frac{2}{(n+2)n}\tilde S_j^2\le\frac{2}{n+2}\tilde S_j\le\frac{3}{n}\tilde S_j$ and  $\frac{2}{(n+2)n}\tilde S_j^2\le\frac{2}{n+2}\tilde S_j\le \frac{2}{n+2}\tilde S_j+3(1-\tilde S_j/n)$. It follows that $\E[W_j\mid \mathcal F]\ge\frac{2\sigma}{(n+2)n}\tilde S_j^2$, and then
\begin{align*}
	\E[e^{-u\sum_{j=1}^d \E[W_j\mid \mathcal F]}]\le\E[e^{-u\cdot\frac{2\sigma}{(n+2)n}\sum_{j=1}^d \tilde S_j^2}]=\prod_{j=1}^d(\E[e^{-u\cdot\frac{2\sigma}{(n+2)n} \tilde S_j^2}]).
\end{align*}
Let us consider the marginal distribution of $\tilde S_j$. Let $\tilde p_j=\frac{1+p_j}{2}\sim \text{Uniform}[0,1]$. We then have $\frac{\tilde S_j+1}{2}\sim \text{Binomial}(n,\tilde p_j)$. Then for $\tilde k\in\{-n,-n+2,...,n\}$ and $k=\frac{\tilde k+n}{2}$,
\begin{align*}
	\Pro(\tilde S_j=\tilde k) &= \Pro\left( \frac{\tilde S_j+1}{2}= k\right)=\int_{0}^1 \Pro( \frac{\tilde S_j+1}{2}= k\mid \tilde p_j=p)\; \d p\\
	&= \int_{0}^1 {n\choose k}p^k(1-p)^{n-k} \d p={n\choose k} B(k+1,n-k+1) = \frac{n!}{k!(n-k)!}\cdot\frac{k!(n-k)!}{(n+1)!}=\frac{1}{n+1}.
\end{align*}
Therefore, $\tilde S_j$ is a uniform random variable, and
\begin{align*}
	\E[e^{-u\cdot\frac{2\sigma}{(n+2)n} \tilde S_j^2}]=\frac{1}{n+1}\sum_{k\in\{-n,-n+2,...,n\}}e^{-\frac{2u\cdot\sigma}{(n+2)n}k^2}.
\end{align*}
With $u\sigma=o(1)$, we have 
\begin{align*}
	\frac{1}{n+1}\sum_{k\in\{-n,-n+2,...,n\}}e^{-\frac{2u\sigma}{(n+2)n}k^2} &\asymp\frac{1}{n+1}\sum_{k\in\{-n,-n+2,...,n\}}(1-\frac{2u\sigma}{(n+2)n}k^2)\\
	&\asymp 1-\frac{1}{n+1}\cdot \frac{2u\sigma}{(n+2)n}\sum_{k\in\{-n,-n+2,...,n\}}k^2 \asymp 1-\frac{u\sigma}{3} \leq e^{-u\sigma/3}.
\end{align*}
Combining all the pieces, we have obtained
\begin{align*}
	\E[e^{-u\sum_{j=1}^d W_j}] \leq \exp(Cn^2\sigma^2 u^2\cdot{d})\cdot \E[e^{-u\sum_{j=1}^d \E[W_j\mid \mathcal F]}] \leq \exp(Cn^2 \sigma^ 2u^2\cdot{d}-ud\sigma/3)
\end{align*}
With $u=\frac{1}{7C\sigma\cdot n^2}$, since ${\sqrt{d/\log(1/\delta)}}/{n} \gtrsim 1$ by the sample size range,
\begin{align*}
	\Pro\left(\sum_{j=1}^d W_j  \leq \frac{1}{6}\cdot {\sigma d}\right)  \leq \exp(Cn^2 \sigma^ 2u^2\cdot{d}-ud\sigma/6) = \exp(-\frac{1}{294}\cdot\frac{d}{Cn^2}) < \delta.
\end{align*}
\end{proof}

\subsection{Proof of Lemma \ref{lm: high-dim mean attack}}\label{sec: proof of lm: high-dim mean attack}
\begin{proof}[Proof of Lemma \ref{lm: high-dim mean attack}]
	
Throughout the proof, we denote $A_i = \A_{\bmu, s^*}(\bm x_i, M(\bm X))$ and $A'_i = \A_{\bmu, s^*}(\bm x_i, M(\bm X'_i))$.

For the first part, observe that $\bm x_i - \bmu$ and $M(\bm X'_i) - \bmu$ are independent and therefore $\E A'_i = \langle \E(\bm x_i - \bmu), \E(M(\bm X'_i) - \bmu)  \rangle = \bm 0.$ Also by independence, we have
\begin{align*}
	\E |A'_i| \leq \sqrt{\E (A'_i)^2} \leq \sigma\sqrt{\E\|M(\bm X'_i) - \bmu\|_2^2} = \sigma\sqrt{\E\|M(\bm X) - \bmu\|_2^2}.
\end{align*}
For the second part, we have
\begin{align*}
	\sum_{i \in [n]} \E_{\bm X|\bmu} A_i = \sum_{j \in \supp(\bmu)} \E_{\bm X|\bmu} M(\bm X)_j\sum_{i \in [n]}(x_{ij} - \mu_j)
\end{align*}
Let $f_{\bmu}(\bm x_i) = (2\pi\sigma^2)^{-d/2}\exp\left(-\frac{\|\bm x_i - \bmu\|_2^2}{2\sigma^2}\right)$ denote the density of $\bm x_i \sim N_d(\bmu, \sigma^2 \bm I)$, and $f_{\bmu}(\bm X)$ denote the joint density of the sample. For each $j$, we have
\begin{align*}
	\E_{\bm X|\bmu} M(\bm X)_j\sum_{i \in [n]}(x_{ij} - \mu_j) = \sigma^2\E_{\bm X|\bmu} M(\bm X)_j\frac{\partial \log f_{\bmu}(\bm X)}{\partial \mu_j} = \sigma^2\frac{\partial}{\partial \mu_j}\E_{\bm X|\bmu} M(\bm X)_j
\end{align*}
It follows that
\begin{align*}
	\sum_{i \in [n]} \E_{\bm X|\bmu} A_i = \sigma^2\sum_{j \in [d]}\frac{\partial}{\partial \mu_j}\E_{\bm X|\bmu} M(\bm X)_j\1(\mu_j \neq 0).
\end{align*}
Let the prior distribution $\bm \pi$ of $\bmu$ be defined as follows. Let $\nu_1, \nu_2, \cdots, \nu_d$ be an i.i.d. sample drawn from the truncated normal $N(0, \gamma^2)$ distribution with truncation at $-1$ and $1$. Let $S$ be be the index set of $\bm \nu$ with top $s^*$ largest absolute values so that $|S| = s^*$ by definition, and define $\mu_j = \nu_j \1(j \in S)$. Denote $g_j(\bmu) = \E_{\bm X|\bmu}M(\bm X)_j$; the choice of prior $\bm \pi$ gives
\begin{align*}
		\E_{\bm \pi} \sum_{i \in [n]} \E_{\bm X|\bmu} A_i = \sigma^2 \E_{\bm \pi} \sum_{j \in [d]} \frac{\partial}{\partial \nu_j} g_j(\bm \nu)\1(j \in S).
\end{align*}
We next apply Stein's Lemma to analyze the right side.
\begin{Lemma}[Stein's Lemma]\label{lm: stein's lemma}
	Let $Z$ be distributed according to some density $p(z)$ that is continuously differentiable with respect to $z$ and let $h: \R \to \R$ be a differentiable function such that $\E |h'(Z)| < \infty$. We have
	\begin{align*}
		\E h'(Z) = \E\left[\frac{-h(Z)p'(Z)}{p(Z)}\right].
	\end{align*}
\end{Lemma}
For each $j \in [d]$, by Lemma \ref{lm: stein's lemma} we have
\begin{align*}
	\E_{\bm \pi} \frac{\partial}{\partial \nu_j} g_j(\bm \nu)\1(j \in S) &= \E_{\nu_j} \frac{\partial}{\partial \nu_j}\E(g_j(\bm \nu)\1(j \in S)|\nu_j) = - \E_{\nu_j}\left[\E(g_j(\bm \nu)\1(j \in S)|\nu_j)\cdot\frac{\pi'_j(\nu_j)}{\pi_j(\nu_j)}\right].
\end{align*}
It follows that
\begin{align*}
\E_{\bm \pi} \frac{\partial}{\partial \nu_j} g_j(\bm \nu)\1(j \in S)
	&=- \E_{\nu_j}\left[\E\left(g_j(\bm \nu)\1(j \in S)\cdot\frac{\pi'_j(\nu_j)}{\pi_j(\nu_j)}\Big|\nu_j\right)\right]\\
	&\geq \E_{\bm \pi}\left(\frac{-\nu_j\1(j \in S)\pi'_j(\nu_j)}{\pi_j(\nu_j)}\right) - \E_{\bm \pi}\left(\left|g_j(\bm \nu) - \nu_j\right|\left|\frac{\pi'_j(\nu_j)}{\pi_j(\nu_j)}\right|\1(j \in S)\right)
\end{align*}
Summing over $j$ and plugging in the truncated normal density (truncated at $-1$ and $1$) for $\pi_j(\nu_j)$ lead to
\begin{align}\label{eq: high-dim mean lower bound completeness}
	\E_{\bm \pi} \sum_{i \in [n]} \E A_i \geq \frac{\sigma^2}{\gamma^2}\left(\E_{\bm \pi} \sum_{j \in S} \nu_j^2 - \sqrt{\E_{\bm\pi}\E_{\bm X|\bmu}\|M(\bm X) - \bmu\|_2^2}\sqrt{\E_{\bm \pi} \sum_{j \in S} \nu_j^2} \right).
\end{align}
Now we set $\gamma^2 = 1/(4\log(d/4s^*))$ and let $|\nu|_{(k)}$ be the $k$th order statistic of $\{|\nu_j|\}_{j \in [d]}$. Denote $Y = |\nu|_{(d-s^*+1)}$ and observe that
\begin{align*}
	\Pro(Y > t)  = 1 - \Pro(Y \leq t) = 1 - \Pro\left(\sum_{j \in [d]} \1(|\nu_j| > t) \leq s^* \right)
\end{align*}
Let $\tilde \nu_j$ denote a non-truncated $N(0, \gamma^2)$ random variable. For $t \in (0, 1)$, we have
\begin{align*}
	\Pro(|\nu_j| > t) \geq \Pro(|\tilde \nu_j| > t) - \Pro(|\tilde \nu_j| > 1).
\end{align*}
Since $(t/\gamma)^{-1} \exp(-t^2/2\gamma^2) \leq \Pro(|\tilde \nu_i| > t) \leq \exp(-t^2/2\gamma^2)$ for $t \geq \sqrt{2}\gamma$ by Mills ratio, as long as $4s^*/d < 1/2$, 
\begin{align*}
	\Pro(|\nu_j| > 1/2) \geq \Pro(|\tilde \nu_j| > 1/2) - \Pro(|\tilde \nu_j| > 1) \geq 4s^*/d - (4s^*/d)^2 > 2s^*/d.
\end{align*}
Now consider $N \sim$ Binomial$(d, 2s^*/d)$; we have $\Pro\left(\sum_{j \in [d]} \1(|\nu_j| > t) \leq s \right) \leq \Pro (N \leq s^*)$. By standard Binomial tail bounds \cite{arratia1989tutorial}, 
\begin{align*}
	\Pro (N \leq s^*) &\leq \exp\left[-d\left((s^*/d)\log(1/2) + (1-s^*/d)\log\left(\frac{1-s^*/d}{1-2s^*/d}\right)\right)\right] \\
	&\leq 2^{s^*}\left(1-\frac{s^*}{d-s^*}\right)^{d-s^*} < (2/e)^{s^*}
\end{align*}
It follows that $\Pro\left(Y > 1/2\right) > 1- (2/e)^{s^*} > 0.$ Because $Y =  |\nu|_{(d-s^*+1)}$, we conclude that there exists an absolute constant $0 < c < 1$ such that
$cs^* < \E_{\bm \pi} \sum_{j \in S} \nu_j^2  < s^*.$ Returning to \eqref{eq: high-dim mean lower bound completeness}, by our choice of $\gamma^2$, the assumption that $s^* = o(d^{1-\omega})$ for some fixed $\omega > 0$, and $\E_{\bm X|\bmu}\|M(\bm X) - \bmu\|_2^2 = o(1)$, we have
\begin{align}\label{eq: high-dim mean attack lower bound}
	\E_{\bm \pi} \sum_{i \in [n]} \E A_i = 	\sum_{i \in [n]}\E_{\bm \pi}\E_{\bm X|\bmu}\A_{\bmu, s^*}(\bm x_i, M(\bm X)) \gtrsim \sigma^2 s^*\log d.
\end{align}
\end{proof}

\subsection{Proof of Theorem \ref{thm: high-dim mean lower bound}}\label{sec: proof of thm: high-dim mean lower bound}
It suffices to prove the second term of the minimax lower bound, as the first term is simply the statistical minimax lower bound for sparse mean estimation. Throughout the proof, we denote $A_i = \A_{\bmu, s^*}(\bm x_i, M(\bm X))$ and $A'_i = \A_{\bmu, s^*}(\bm x_i, M(\bm X'_i))$. Consider the following lemma.
\begin{Lemma}\label{lm: attack upper bound}
	If $M$ is an $(\varepsilon, \delta)$-differentially private algorithm with $0 < \varepsilon < 1$ and $\delta > 0$,  then for every $T > 0$, 
	\begin{align}\label{eq: attack upper bound}
		\E A_i \leq \E A'_i + 2\varepsilon \E|A'_i| + 2\delta T + \int_T^\infty \Pro\left(|A_i| > t \right).
	\end{align}
\end{Lemma}
This inequality has previously appeared in \cite{steinke2017tight} and \cite{kamath2018privately} in their respective analysis of tracing attacks. We include a proof in Section \ref{sec: proof of lm: attack upper bound}.

By \eqref{eq: attack upper bound} and the first part of Lemma \ref{lm: high-dim mean attack}, for every $\bmu \in \Theta$ we have
\begin{align*}
	\sum_{i \in [n]} \E_{\bm X|\bmu} A_i \leq 2n\varepsilon\sigma\sqrt{\E_{\bm X|\bmu}\|M(\bm X) - \bmu\|_2^2} + 2n\delta T + n\int_T^\infty \Pro\left(|A_i| > t \right).
\end{align*}
For the tail probability, as every $\bmu \in \Theta$ is assumed to satisfy $\|\bmu\|_0  \leq s^*$ and $\|\bmu \|_\infty < 1$,
\begin{align*}
\Pro\left(|A_i| > t \right) \leq \Pro(\chi^2_{s^*} > {t^2}/{4s^*\sigma^2}) \leq \exp\left(-\frac{t^2}{c_1s^*\sigma^2} + s^*\right)
\end{align*}
for some universal constant $c_1$. By choosing $T = \sqrt{c_1}\sigma s^*\sqrt{\log(1/\delta)}$, we obtain
\begin{align*}
	\sum_{i \in [n]} \E_{\bm X|\bmu} A_i \leq 2n\varepsilon\sigma\sqrt{\E_{\bm X|\bmu}\|M(\bm X) - \bmu\|_2^2} + c_2\sigma s^* n\delta\sqrt{\log(1/\delta)}.
\end{align*}
Combining with \eqref{eq: high-dim mean attack lower bound} leads to
\begin{align*}
	\sigma^2 s^*\log d \leq \E_{\bm \pi} \sum_{i \in [n]} \E A_i \leq 2n\varepsilon\sigma\sqrt{\E_{\bm \pi}\E_{\bm X|\bmu}\|M(\bm X) - \bmu\|_2^2} + c_2\sigma s^* n\delta\sqrt{\log(1/\delta)}.
\end{align*}
Since $\delta  < n^{-(1+\omega)}$ for some $\omega > 0$, for every $(\varepsilon, \delta)$-differentially private $M$ we have
\begin{align*}
	\E_{\bm \pi}\E_{\bm X|\bmu}\|M(\bm X) - \bmu\|_2^2 \gtrsim \sigma^2\frac{(s^*\log d)^2}{n^2\varepsilon^2}.
\end{align*}
As the Bayes risk always lower bounds the max risk, the proof is complete.
\subsubsection{Proof of Lemma \ref{lm: attack upper bound}}\label{sec: proof of lm: attack upper bound}
\begin{proof}[Proof of Lemma \ref{lm: attack upper bound}]
	let $Z^+ = \max(Z, 0)$ and $Z^- = -\min(Z, 0)$ denote the positive and negative parts of random variable $Z$ respectively. We have
	\begin{align*}
		\E A_i = \E A_i^+ - \E A_i^- = \int_0^\infty \Pro(A_i^+ > t) \; d t - \int_0^\infty \Pro(A_i^- > t) \; d t.
	\end{align*}
	For the positive part, if $0 < T < \infty$ and $0 < \varepsilon < 1$, we have
	\begin{align*}
		\int_0^\infty \Pro(A_i^+ > t) \; d t &= \int_0^T \Pro(A_i^+ > t) \; d t + \int_T^\infty \Pro(A_i^+ > t) \; d t \\
		&\leq \int\; d t_0^T \left(e^\varepsilon\Pro(A_i^+ > t) + \delta\right)\; d t + \int_T^\infty \Pro(A_i^+ > t) \; d t \\
		&\leq \int_0^\infty \Pro({A'_i}^+ > t) \; d t + 2\varepsilon\int_0^\infty \Pro({A'_i}^+ > t) \; d t + \delta T + \int_T^\infty \Pro(|A_i| > t) \; d t.
	\end{align*}
	Similarly for the negative part,
	\begin{align*}
		\int_0^\infty \Pro(A_i^- > t) \; d t &= \int_0^T \Pro(A_i^- > t) \; d t + \int_T^\infty \Pro(A_i^- > t) \; d t \\
		& \geq \int_0^T \left(e^{-\varepsilon} \Pro({A'_i}^- > t) - \delta\right)\; d t + \int_T^\infty \Pro(A_i^- > t) \; d t \\
		&\geq \int_0^T \Pro({A'_i}^- > t) \; d t - 2\varepsilon\int_0^T \Pro({A'_i}^- > t) - \delta T + \int_T^\infty \Pro(A_i^- > t) \; d t \\
		&\geq \int_0^\infty \Pro({A'_i}^- > t) \; d t - 2\varepsilon\int_0^\infty \Pro({A'_i}^- > t) - \delta T.
	\end{align*}
	It then follows that
	\begin{align*}
		\E A_i &\leq \int_0^\infty \Pro({A'_i}^+ > t) \; d t - \int_0^\infty \Pro({A'_i}^- > t) \; d t + 2\varepsilon \int_0^\infty \Pro(|A'_i| > t) \; d t + 2\delta T + \int_T^\infty \Pro(|A_i| > t) \; d t \\
		&= \E A'_i + 2\varepsilon\E|A_i| + 2\delta T + \int_T^\infty \Pro(|A_i| > t) \; d t.
	\end{align*}
\end{proof}

\subsection{Proof of Lemma \ref{lm: low-dim regression attack}}\label{sec: proof of lm: low-dim regression attack}
\begin{proof}[Proof of Lemma \ref{lm: low-dim regression attack}]
Throughout the proof, we denote $A_i = \A_{\bbeta}((y_i, \bm x_i), M(\bm y, \bm X))$ and $A'_i = \A_\bbeta ((y_i, \bm x_i), M(\bm y'_i, \bm X'_i))$. 
	
For the first part, observe that $(y_i - \bm x_i^\top \bbeta)$, $\bm x_i$ and $M(\bm y'_i, \bm X'_i) - \bbeta$ are independent and therefore $\E A'_i = \E(y_i - \bm x_i^\top \bbeta) \langle \E \bm x, \E(M(\bm X'_i) - \bbeta)  \rangle = 0. $ Also by independence and assumptions for $\Sigma_{\bm x}$, we have
	\begin{align*}
		\E A'_i \leq \sqrt{\E (A'_i)^2} \leq \sigma\sqrt{\E\|M(\bm y'_i, \bm X'_i) - \bbeta\|_{\Sigma_{\bm x}}^2} = \sigma\sqrt{\E\|M(\bm y, \bm X) - \bbeta\|_{\Sigma_{\bm x}}^2}.
	\end{align*}
For the second part, we have
\begin{align*}
	\sum_{i \in [n]} \E A_i = \sum_{j \in [d]} \E M(\bm y, \bm X)_j\sum_{i \in [n]}(y_i - \bm x_i^\top \bbeta)\bm x_{ij}.
\end{align*}
For each $j$, we have
\begin{align*}
	\E M(\bm y, \bm X)_j\sum_{i \in [n]}(y_i - \bm x_i^\top \bbeta)\bm x_{ij} = \sigma^2\E M(\bm y, \bm X)_j\frac{\partial \log f_{\bbeta}(\bm y, \bm X)}{\partial \beta_j} = \sigma^2\frac{\partial}{\partial \beta_j}\E_{\bm y, \bm X|\bbeta} M(\bm y, \bm X)_j.
\end{align*}
It follows that
\begin{align*}
	\sum_{i \in [n]} \E A_i = \sigma^2\sum_{j \in [d]}\frac{\partial}{\partial \beta_j}\E_{\bm y, \bm X|\bbeta} M(\bm y, \bm X)_j.
\end{align*}
Let the prior distribution $\bm \pi$ of $\bbeta$ be defined as follows. Let $\nu_1, \nu_2, \cdots, \nu_d$ be an i.i.d. sample drawn from the truncated $N(0, 1)$ distribution with truncation at $-1$ and $1$, and let $\bbeta_j = \nu_j/\sqrt{d}$ so that $\|\bbeta\|_2 < 1$. Denote $g_j(\bbeta) = \E_{\bm y, \bm X|\bbeta}M(\bm y, \bm X)_j$, we have
\begin{align*}
	\E_{\bm \pi} \sum_{i \in [n]} \E A_i = \sigma^2 \E_{\bm \pi} \sum_{j \in [d]} \frac{\partial}{\partial \beta_j} g_j(\bbeta).
\end{align*}
For each $j \in [d]$, by Lemma \ref{lm: stein's lemma} we have
\begin{align*}
	\E_{\bm \pi} \frac{\partial}{\partial \beta_j} g_j(\bbeta) &= \E_{\bm \pi} \frac{\partial}{\partial \beta_j}\E(g_j(\bbeta)|\beta_j) \geq \E_{\bm \pi}\left(\frac{-\beta_j\pi'_j(\beta_j)}{\pi_j(\beta_j)}\right) - \E_{\bm \pi}\left(\left|g_j(\bbeta) - \beta_j\right|\left|\frac{\pi'_j(\beta_j)}{\pi_j(\beta_j)}\right|\right)
\end{align*}
Summing over $j$ and plugging in the truncated normal density for $\pi_j(\beta_j)$ lead to
\begin{align}\label{eq: low-dim regression lower bound completeness}
	\E_{\bm \pi} \sum_{i \in [n]} \E A_i \geq \frac{\sigma^2}{1/d}\left(\E_{\bm \pi} \sum_{j \in [d]} \beta_j^2 - \sqrt{\E_{\bm\pi}\E_{\bm y, \bm X|\bbeta}\|M(\bm y, \bm X) - \bbeta\|_2^2}\sqrt{\E_{\bm \pi} \sum_{j \in [d]} \beta_j^2} \right).
\end{align}
Since $\E_{\bm \pi} \sum_{j \in [d]} \beta_j^2 \asymp 1$ and $\E_{\bm\pi}\E_{\bm y, \bm X|\bbeta}\|M(\bm y, \bm X) - \bbeta\|_2^2 = o(1)$ by assumption, the proof is complete.
\end{proof}

\subsection{Proof of Theorem \ref{thm: low-dim regression lower bound}}\label{sec: proof of thm: low-dim regression lower bound}

\begin{proof}[Proof of \ref{thm: low-dim regression lower bound}]
	It suffices to prove the second term of lower bound \ref{eq: low-dim regression lower bound} as the first term comes from the statistical minimax lower bound. By Lemma \ref{lm: low-dim regression attack} and the first part of Lemma \ref{lm: attack upper bound}, for every $\bmu \in \Theta$ we have
	\begin{align*}
		\sum_{i \in [n]} \E_{\bm y, \bm X|\bbeta} A_i \leq 2n\varepsilon\sigma\sqrt{\E_{\bm y, \bm X|\bbeta}\|M(\bm y, \bm X) - \bbeta\|_{\Sigma_{\bm x}}^2} + 2n\delta T + n\int_T^\infty \Pro\left(|A_i| > t \right).
	\end{align*}
For the tail probability term,
\begin{align*}
	\Pro(|A_i| > t) &= \Pro\left(\left|y_i - \bm x_i^\top\bbeta\right| \left|\langle \bm x_i, M(\bm y, \bm X)-\bbeta \rangle\right| > t\right) \leq \Pro\left(\left|y_i - \bm x_i^\top\bbeta\right| \sqrt{d} > t\right) \leq 2\exp\left(\frac{-t^2}{2d\sigma^2}\right).
\end{align*}
By choosing $T = \sqrt{2}\sigma \sqrt{d\log(1/\delta)}$, we obtain
\begin{align*}
	\sum_{i \in [n]} \E_{\bm y, \bm X|\bbeta} A_i \leq 2n\varepsilon\sigma\sqrt{\E_{\bm y, \bm X|\bbeta}\|M(\bm y, \bm X) - \bbeta\|_2^2} + c_1\sigma n\delta\sqrt{d\log(1/\delta)}.
\end{align*}
Combining with the second part of Lemma \ref{lm: low-dim regression attack} leads to
\begin{align*}
	\sigma^2 d \leq \E_{\bm \pi} \sum_{i \in [n]} \E A_i \leq 2n\varepsilon\sigma\sqrt{\E_{\bm \pi}\E_{\bm y, \bm X|\bbeta}\|M(\bm y, \bm X) - \bbeta\|_{\Sigma_{\bm x}}^2} + c_1 \sigma n\delta\sqrt{d\log(1/\delta)}.
\end{align*}
Since $\delta < n^{-(1+\omega)}$ for $\omega > 0$, for every $(\varepsilon, \delta)$-differentially private $M$ we have
\begin{align*}
	\E_{\bm \pi}\E_{\bm y, \bm X|\bbeta}\|M(\bm y, \bm X) - \bbeta\|_{\Sigma_{\bm x}}^2 \gtrsim \sigma^2\frac{d^2}{n^2\varepsilon^2}.
\end{align*}
As the Bayes risk always lower bounds the max risk, the proof is complete.
\end{proof}

\subsection{Proof of Lemma \ref{lm: high-dim regression attack}}\label{sec: proof of lm: high-dim regression attack}

\begin{proof}[Proof of Lemma \ref{lm: high-dim regression attack}]
	
	Throughout the proof, we denote $A_i = \A_{\bmu, s^*}((y_i, \bm x_i), M(\bm y, \bm X))$ and $A'_i = \A_{\bmu, s^*}((y_i, \bm x_i), M(\bm y'_i, \bm X'_i))$.
	
	For the first part, observe that $(y_i - \bm x_i^\top \bbeta)$, $\bm x_i$ and $M(\bm y'_i, \bm X'_i) - \bbeta$ are independent and therefore $\E A'_i = \E(y_i - \bm x_i^\top \bbeta) \langle \E \bm x, \E(M(\bm y'_i, \bm X'_i) - \bbeta)  \rangle = 0. $ Also by independence and assumptions for $\Sigma_{\bm x}$, we have
	\begin{align*}
		\E A'_i \leq \sqrt{\E (A'_i)^2} \leq \sigma\sqrt{\E\|M(\bm y'_i, \bm X'_i) - \bbeta\|_{\Sigma_{\bm x}}^2} = \sigma\sqrt{\E\|M(\bm y, \bm X) - \bbeta\|_{\Sigma_{\bm x}}^2}.
	\end{align*}

	For the second part, we have
	\begin{align*}
		\sum_{i \in [n]} \E A_i = \sum_{j \in \supp(\bbeta)} \E M(\bm y, \bm X)_j\sum_{i \in [n]}(y_i - \bm x_i^\top \bbeta)\bm x_{ij}.
	\end{align*}
	For each $j$, we have
	\begin{align*}
		\E M(\bm y, \bm X)_j\sum_{i \in [n]}(y_i - \bm x_i^\top \bbeta)\bm x_{ij} = \sigma^2\E M(\bm y, \bm X)_j\frac{\partial \log f_{\bbeta}(\bm y, \bm X)}{\partial \beta_j} = \sigma^2\frac{\partial}{\partial \beta_j}\E_{\bm y, \bm X|\bbeta} M(\bm y, \bm X)_j.
	\end{align*}
	It follows that
	\begin{align*}
		\sum_{i \in [n]} \E A_i = \sigma^2\sum_{j \in [d]}\frac{\partial}{\partial \beta_j}\E_{\bm y, \bm X|\bbeta} M(\bm y, \bm X)_j\1(\beta_j \neq 0).
	\end{align*}
	Let the prior distribution $\bm \pi$ of $\bbeta$ be defined as follows. Let $\nu_1, \nu_2, \cdots, \nu_d$ be an i.i.d. sample drawn from the truncated normal $N(0, \gamma^2)$ distribution with truncation at $-1$ and $1$. Let $S$ be be the index set of $\bm \nu$ with top $s^*$ largest absolute values so that $|S| = s^*$ by definition, and define $\beta_j = \nu_j \1(j \in S)/\sqrt{s^*}$, so that $\|\bbeta\|_2 \leq 1$. Denote $g_j(\bbeta) = \E_{\bm y, \bm X|\bbeta}M(\bm y, \bm X)_j$; the choice of prior $\bm \pi$ gives
	\begin{align*}
		\E_{\bm \pi} \sum_{i \in [n]} \E A_i = \sigma^2 \E_{\bm \pi} \sum_{j \in [d]} \frac{\partial}{\partial \bbeta_j} g_j(\bm \bbeta)\1(j \in S).
	\end{align*}
For each $j \in [d]$, by Lemma \ref{lm: stein's lemma} we have
\begin{align*}
	\E_{\bm \pi} \frac{\partial}{\partial \beta_j} g_j(\bbeta) &= \E_{\bm \pi} \frac{\partial}{\partial \beta_j}\E(g_j(\bbeta)|\beta_j) \geq \E_{\bm \pi}\left(\frac{-\beta_j\pi'_j(\beta_j)}{\pi_j(\beta_j)}\right) - \E_{\bm \pi}\left(\left|g_j(\bbeta) - \beta_j\right|\left|\frac{\pi'_j(\beta_j)}{\pi_j(\beta_j)}\right|\right)
\end{align*}
Summing over $j$ and plugging in the truncated normal density for $\pi_j(\beta_j)$ lead to
\begin{align}\label{eq: high-dim regression lower bound completeness}
	\E_{\bm \pi} \sum_{i \in [n]} \E A_i \geq \frac{\sigma^2}{\gamma^2/s^*}\left(\E_{\bm \pi} \sum_{j \in S} \beta_j^2 - \sqrt{\E_{\bm\pi}\E_{\bm y, \bm X|\bbeta}\|M(\bm y, \bm X) - \bbeta\|_2^2}\sqrt{\E_{\bm \pi} \sum_{j \in S} \beta_j^2} \right).
\end{align}
Since the prior for $\bbeta$ is a scaled version of our prior for $\bmu$ in the sparse mean estimation problem, by the same order statistic calculation as in the proof of Lemma \ref{lm: high-dim mean attack}, the assumption that $s^* = o(d^{1-\omega})$ for some fixed $\omega > 0$, and $\E_{\bm\pi}\E_{\bm y, \bm X|\bbeta}\|M(\bm y, \bm X) - \bbeta\|_2^2 = o(1)$,
\begin{align}\label{eq: high-dim regression attack lower bound}
	\E_{\bm \pi} \sum_{i \in [n]} \E A_i = 	\sum_{i \in [n]}\E_{\bm \pi}\E_{\bm y, \bm X|\bbeta}\A_{\bmu, s^*}((y_i, \bm x_i), M(\bm y, \bm X)) \gtrsim \sigma^2 s^*\log d.
\end{align}
\end{proof}

\subsection{Proof of Theorem \ref{thm: high-dim regression lower bound}}\label{sec: proof of thm: high-dim regression lower bound}

\begin{proof}[Proof of \ref{thm: high-dim regression lower bound}]
	It suffices to prove the second term of lower bound \ref{eq: high-dim regression lower bound} as the first term is inherited from the statistical minimax lower bound. By Lemma \ref{lm: high-dim regression attack} and the first part of Lemma \ref{lm: attack upper bound}, for every $\bbeta \in \Theta$ we have
	\begin{align*}
		\sum_{i \in [n]} \E_{\bm y, \bm X|\bbeta} A_i \leq 2n\varepsilon\sigma\sqrt{\E_{\bm y, \bm X|\bbeta}\|M(\bm y, \bm X) - \bbeta\|_{\Sigma_{\bm x}}^2} + 2n\delta T + n\int_T^\infty \Pro\left(|A_i| > t \right).
	\end{align*}
	For the tail probability term,
	\begin{align*}
		\Pro(|A_i| > t) &= \Pro\left(\left|y_i - \bm x_i^\top\bbeta\right| \left|\langle \bm x_i, (M(\bm y, \bm X)-\bbeta)_S \rangle\right| > t\right) \leq \Pro\left(\left|y_i - \bm x_i^\top\bbeta\right| \sqrt{s} > t\right) \leq 2\exp\left(\frac{-t^2}{2s^*\sigma^2}\right).
	\end{align*}
	By choosing $T = \sqrt{2}\sigma \sqrt{s^*\log(1/\delta)}$, we obtain
	\begin{align*}
		\sum_{i \in [n]} \E_{\bm y, \bm X|\bbeta} A_i \leq 2n\varepsilon\sigma\sqrt{\E_{\bm y, \bm X|\bbeta}\|M(\bm y, \bm X) - \bbeta\|_2^2} + c_1\sigma n\delta\sqrt{s^*\log(1/\delta)}.
	\end{align*}
	Combining with \eqref{eq: high-dim regression attack lower bound} leads to
	\begin{align*}
		\sigma^2 s^*\log d \leq \E_{\bm \pi} \sum_{i \in [n]} \E A_i \leq 2n\varepsilon\sigma\sqrt{\E_{\bm \pi}\E_{\bm y, \bm X|\bbeta}\|M(\bm y, \bm X) - \bbeta\|_{\Sigma_{\bm x}}^2} + c_1 \sigma n\delta\sqrt{s^*\log(1/\delta)}.
	\end{align*}
	Since $\delta < n^{-(1+\omega)}$ for $\omega > 0$, for every $(\varepsilon, \delta)$-differentially private $M$ we have
	\begin{align*}
		\E_{\bm \pi}\E_{\bm y, \bm X|\bbeta}\|M(\bm y, \bm X) - \bbeta\|_{\Sigma_{\bm x}}^2 \gtrsim \sigma^2\frac{(s^*\log d)^2}{n^2\varepsilon^2}.
	\end{align*}
	As the Bayes risk always lower bounds the max risk, the proof is complete.
\end{proof}

\end{document}


\title{Supplement to ``The Cost of Privacy: Optimal Rates of Convergence for Parameter Estimation with Differential Privacy''\footnote{The research was supported in part by NSF grant DMS-1712735 and NIH grants R01-GM129781 and R01-GM123056.}}
\author{T. Tony Cai, \; Yichen Wang, \; and \; Linjun Zhang\\
	University of Pennsylvania}
\maketitle
\begin{abstract}
	This document includes the proofs omitted in the main text of ``The Cost of Privacy: Optimal Rates of Convergence for Parameter Estimation with Differential Privacy''. Section \ref{sec: ub proofs} contains proofs of upper bound results (Theorems \ref{thm: low-dim mean upper bound}, \ref{thm: high-dim mean upper bound}, \ref{thm: low-dim regression upper bound} and \ref{thm: high-dim regression upper bound}) and their related auxiliary lemmas. Section \ref{sec: lb proofs} contains proofs of lower bound results (Theorems \ref{thm: low-dim mean lower bound}, \ref{thm: high-dim mean lower bound}, \ref{thm: low-dim regression lower bound} and \ref{thm: high-dim regression lower bound}) and their related auxiliary lemmas. 
\end{abstract}

\newpage
\tableofcontents

\newpage
\appendix 
\section{Proofs of upper bound results}\label{sec: ub proofs}

\subsection{Proof of Theorem \ref{thm: low-dim mean upper bound}}\label{sec: proof of thm: low-dim mean upper bound}
\begin{proof}[Proof of Theorem \ref{thm: low-dim mean upper bound}]
	By the choice of $R$ and $\|\bmu\|_\infty \leq c = O(1)$, we have
	\begin{align*}
		\|\hat\bmu - \bmu\|_2^2  \leq 2\|\bm w\|_2^2 +  2\|\overline{\bm X} - \bmu\|_2^2. 
	\end{align*}
Once we take expectation, the conclusion follows from the distribution of $\bm w$ and the sub-Gaussianity of $\bm x$. 
\end{proof}

\subsection{Proof of Lemma \ref{lm: peeling accuracy}}\label{sec: proof of lm: peeling accuracy}
\begin{proof}[Proof of Lemma \ref{lm: peeling accuracy}]
	Let $\psi: R_2 \to R_1$ be a bijection. By the selection criterion of Algorithm \ref{algo: peeling}, for each $j \in R_2$ we have $|v_j| + w_{ij} \leq |v_{\psi(j)}| + w_{i\psi(j)}$, where $i$ is the index of the iteration in which $\psi(j)$ is appended to $S$. It follows that, for every $c > 0$, 
	\begin{align*}
		v_j^2 &\leq \left(|v_{\psi(j)}| + w_{i\psi(j)} - w_{ij} \right)^2 \\
		&\leq (1+1/c) v_{\psi(j)}^2 + (1 + c)(w_{i\psi(j)} - w_{ij})^2 \leq (1+1/c)v_{\psi(j)}^2 + 4(1+c)\|\bm w_i\|_\infty^2
	\end{align*}
	Summing over $j$ then leads to
	\begin{align*}
		\|\bm v_{R_2}\|_2^2 \leq (1 + 1/c)\|\bm v_{R_1}\|_2^2 + 4(1 + c)\sum_{i \in [s]} \|\bm w_i\|^2_\infty.  
	\end{align*}
\end{proof}

\subsection{Proof of Theorem \ref{thm: high-dim mean upper bound}}\label{sec: proof of thm: high-dim mean upper bound}
\begin{proof}[Proof of Theorem \ref{thm: high-dim mean upper bound}]
	Let $S, S^*$ denote the supports of $\hat \bmu$ and $\bmu$ respectively. By the choice of $R = K\sigma\sqrt{\log n}$ and $\|\bmu\|_\infty \leq c = O(1)$, we have
	\begin{align}\label{eq: high-dim mean expansion 1}
		\|\hat\bmu - \bmu\|_2^2  \leq 2\|\tilde{\bm w}_S\|_2^2 +  2\|(\overline{\bm X} - \bmu)_{S \cap S^*}\|_2^2 + \|\overline{\bm X}_{S \cap (S^*)^c} - \bmu _{S^* \cap S^c}\|_2^2
	\end{align}
	For the last term,
	\begin{align}\label{eq: high-dim mean expansion 2}
		\|\overline{\bm X}_{S \cap (S^*)^c} - \bmu _{S^* \cap S^c}\|_2^2  &= 	\|\overline{\bm X}_{S \cap (S^*)^c} -\overline{\bm X}_{S^* \cap S^c} + \overline{\bm X}_{S^* \cap S^c}  - \bmu _{S^* \cap S^c}\|_2^2 \notag \\
		& \leq 4\|\overline{\bm X}_{S \cap (S^*)^c}\|_2 + 4\|\overline{\bm X}_{S^* \cap S^c}\| + 2\|(\overline{\bm X} - \bmu)_{S^* \cap S^c}\|_2^2.
	\end{align}
Since $s^* = |S^*| \leq |S| = s$ by assumption, we invoke Lemma \ref{lm: peeling accuracy} to obtain that
\begin{align}\label{eq: high-dim mean expansion 3}
	\|\overline{\bm X}_{S^* \cap S^c}\| \leq 2 \|\overline{\bm X}_{S \cap (S^*)^c}\|_2 + 8\sum_{i \in [s]} \|\bm w_i\|^2_\infty.
\end{align}
Now combining \eqref{eq: high-dim mean expansion 3} with \eqref{eq: high-dim mean expansion 2} and further with \eqref{eq: high-dim mean expansion 1} yields
\begin{align}\label{eq: high-dim mean expansion 4}
	\|\hat\bmu - \bmu\|_2^2  \leq  2\|(\overline{\bm X} - \bmu)_{S^*}\|_2^2 + 12\|\overline{\bm X}_{S \cap (S^*)^c}\|_2 + 32\sum_{i \in [s]} \|\bm w_i\|^2_\infty + 2\|\tilde{\bm w}_S\|_2^2.
\end{align}
For the first two terms, since $|S| = s \asymp s^*$, we have
\begin{align*}
	2\|(\overline{\bm X} - \bmu)_{S^*}\|_2^2 + 12\|\overline{\bm X}_{S \cap (S^*)^c}\|_2 \lesssim s^*\|\overline{\bm X}-\bmu\|^2_\infty.
\end{align*}
$\overline{\bm X} - \bmu$ is a zero-mean sub-Gaussian$(\sigma/\sqrt{n})$ random vector. Standard tail bounds for sub-Gaussian maxima (see, for example, \cite{wainwright2019high}) implies that $\|\overline{\bm X} - \bmu\|^2_\infty < C\sigma^2\log d/n$ with probability at least $1 - c_1\exp(-c_2\log n)$.

For the last two terms of \eqref{eq: high-dim mean expansion 4}, we have the following lemma.
\begin{Lemma}\label{lm: laplace noise bound}
Consider $\bm w \in \R^k$ with $w_1, w_2, \cdots, w_k \stackrel{\text{i.i.d.}}{\sim}$ Laplace$(\lambda)$. For every $C > 1$, 
\begin{align*}
		& \Pro\left(\|\bm w\|_2^2 > kC^2\lambda^2\right) \leq ke^{-C}\\
		& \Pro\left(\|\bm w\|_\infty^2 > C^2\lambda^2\log^2k\right) \leq e^{-(C-1)\log k}.
	\end{align*}
\end{Lemma} 
The lemma is proved in Section \ref{sec: proof of lm: laplace noise bound}. In our case, $\lambda = 4R\sqrt{3s\log(1/\delta)}/n\varepsilon$ and $k = d$. It follows that 
\begin{align*}
	32\sum_{i \in [s]} \|\bm w_i\|^2_\infty + 2\|\tilde{\bm w}_S\|_2^2 \lesssim \sigma^2\frac{(s^*\log d)^2\log(1/\delta)\log n}{n^2\varepsilon^2}
\end{align*}
with probability at least $1 - c_1\exp(-c_2\log d)$. Combining the two high-probability bounds above completes the proof.
\end{proof}

\subsubsection{Proof of Lemma \ref{lm: laplace noise bound}}\label{sec: proof of lm: laplace noise bound}
\begin{proof}[Proof of Lemma \ref{lm: laplace noise bound}]
	By union bound and the i.i.d. assumption,
	\begin{align*}
		\Pro\left(\|\bm w\|_2^2 > kC^2\lambda^2\right) \leq k\Pro(w_1^2 > C^2\lambda^2)  \leq ke^{-C}. 
	\end{align*}
	It follows that
	\begin{align*}
		\Pro\left(\|\bm w\|_\infty^2 > C^2\lambda^2\log^2k\right) \leq k\Pro(w_1^2 > C^2\lambda^2\log^2k) \leq ke^{-C\log k} = e^{-(C-1)\log k}.
	\end{align*}
\end{proof}

\subsection{Proof of Lemma \ref{lm: low-dim regression privacy}}\label{sec: proof of lm: low-dim regression privacy}

\begin{proof}[Proof of Lemma \ref{lm: low-dim regression privacy}]
	As there are $T$ iterations in Algorithm \ref{algo: low-dim regression}, it suffices to show that each iteration is $(\varepsilon/T, \delta/T)$-differentially private, and then the overall privacy follows from the composition property of differential privacy.
	
	Consider two data sets $\bm Z$ and $\bm Z'$ that differ by one datum, $(y, \bm x) \in \bm Z$ versus $(y', \bm x') \in \bm Z'$. For each $t$, by (D1) and (P1), we control the $\ell_2$-sensitivity of the gradient step:
	\begin{align*}
		&\frac{\eta^0}{n}\left(|\bm x^\top \bbeta^t-\Pi_R(y)|\|\bm x\|_2 + |(\bm x')^\top \bbeta^t-\Pi_R(y')|\|\bm x'\|_2\right)  \leq \frac{\eta^0}{n} \cdot 4(R+c_0c_{\bm x})c_{\bm x} = \frac{\eta^0}{n} B.
	\end{align*}
	By the Gaussian mechanism of differential privacy, it follows that $\bbeta^{t+1}(\bm Z)$ is an $(\varepsilon/T, \delta/T)$-differentially private algorithm, as desired.
\end{proof}

\subsection{Proof of Theorem \ref{thm: low-dim regression upper bound}} \label{sec: proof of thm: low-dim regression upper bound}
\begin{proof}[Proof of Theorem \ref{thm: low-dim regression upper bound}]
	Let $\bm X$ denote the $n \times d$ design matrix. We analyze the algorithm under the events
	\begin{align*}
		\mathcal E_1 = \left\{d\|n^{-1}\bm X^\top \bm X - \Sigma_{\bm x}\|_2 \leq 1/2L \right\}\text{ and } \mathcal E_2 = \left\{\Pi_R(y_i) = y_i, \forall i \in [n]\right\},
	\end{align*}
	and then show that they do occur with high probability. 
	
	Under $\mathcal E_2$, we have $\bbeta^{t+1} = \Pi_C\left(\bbeta^t - \eta^0\nabla \L_n(\bbeta^t) + \bm w_t\right)$. $\mathcal E_1$ and assumption (D2) imply that the objective function $\L_n$ is $(2L/d)$-smooth and $(1/2Ld)$-strongly convex. Let $\hat\bbeta = \argmin_{\|\bbeta\|_2 \leq C} \L_n$ and $\tilde \bbeta^{t+1} = \bbeta^t - \eta^0\nabla \L_n(\bbeta^t)$, it then follows that
	\begin{align}
		\|\bbeta^{t+1} - \hat\bbeta\|_2^2 &\leq (1+1/8L^2)\|\tilde \bbeta^{t+1} - \hat\bbeta\|_2^2 + (1 + 8L^2)\|\bm w_t\|_2^2 \notag \\
		&\leq (1+1/8L^2)(1-1/4L^2)\|\bbeta^t - \hat\bbeta\|_2^2 + (1+8L^2)\|\bm w_t\|_2^2 \notag \\
		&\leq (1-1/8L^2)\|\bbeta^t - \hat\bbeta\|_2^2 + (1+8L^2)\|\bm w_t\|_2^2. \label{eq: low-dim regression contraction}
	\end{align}
	The second inequality holds by standard convergence analysis of gradient descent for $\gamma$-smooth and $\alpha$-strongly convex objective (see, for example, \cite{nesterov2003introductory}): when the step size $\eta^0$ is chosen to be $1/\gamma$, it holds that $\|\tilde\bbeta^{t+1} - \hat\bbeta\|^2_2 \leq (1-\alpha/\gamma)\|\bbeta^{t} - \hat\bbeta\|_2^2$.
	
	Now by \eqref{eq: low-dim regression contraction} and the choice of $C = c_0$, $T = (8L^2)\log(c_0^2 n)$, induction over $t$ gives
	\begin{align}\label{eq:low-dim regression master expansion}
		\|\bbeta^T - \hat\bbeta\|_2^2 &\leq \frac{1}{n} + \left(1+8L^2\right)\sum_{k=0}^{T-1} \left(1 - 1/8L^2\right)^{T-k-1}\|\bm w_k\|_2^2.
	\end{align}
	The noise term can be controlled by the following lemma:
	\begin{Lemma}\label{lm: gaussian noise bound}
		For $X_1, X_2, \cdots, X_k \stackrel{\text{i.i.d.}}{\sim} \chi^2_d$, $\lambda > 0$ and $0 < \rho < 1$, 
		\begin{align*}
			\Pro\left(\sum_{j=1}^k \lambda \rho^j X_j > \frac{\rho\lambda d}{1-\rho} + \Delta\right) \leq \exp\left(-\min\left(\frac{(1-\rho^2)\Delta^2}{8\rho^2\lambda^2d}, \frac{\Delta}{8\rho\lambda}\right)\right).
		\end{align*}
	\end{Lemma}
	The lemma is proved in Section \ref{sec: proof of lm: gaussian noise bound}. To apply the tail bound, we let $\lambda = (\eta^0)^2 2B^2 \frac{\log(2T/\delta)}{n^2(\varepsilon/T)^2}$ and $\Delta = K\lambda d$ for a sufficiently large constant $K$, then the noise term in \eqref{eq: low-dim regression contraction} is bounded by $K\lambda d \asymp \sigma^2\frac{d^3\log(1/\delta)\log^3n}{n^2\varepsilon^2}$ with probability at least $1 - c_1\exp(-c_2 d)$. Now \eqref{eq: low-dim regression contraction} combined with the statistical convergence rate of $\|\hat\bbeta - \bbeta\|_2^2$  and assumptions (D1), (D2) yields
	\begin{align*}
		\|\bbeta^T - \bbeta\|_{\Sigma_{\bm x}}^2 \lesssim \sigma^2\left(\frac{d}{n} + \frac{d^2\log(1/\delta)\log^3 n}{n^2\varepsilon^2}\right).
	\end{align*} 
	It remains to control the probability that either $\mathcal E_1$ or $\mathcal E_2$ fails to occur. For $\mathcal E_1$, standard matrix concentration bounds (see, for example, \cite{vershynin2010introduction}), imply that there exists universal constant $c_1, c_2$ such that, as long as $d < n$, $\Pro(\mathcal E_1^c) \leq c_1\exp(-c_2 n)$. Finally we have $\Pro(\mathcal E_2^c) \leq c_1\exp(-c_2\log n)$ because $y_1, y_2, \cdots, y_n \stackrel{\text{i.i.d.}}{\sim} N(0, \sigma^2)$ and $R  \asymp \sigma\sqrt{\log n}$. 
\end{proof}

\subsubsection{Proof of Lemma \ref{lm: gaussian noise bound}}\label{sec: proof of lm: gaussian noise bound}
\begin{proof}[Proof of Lemma \ref{lm: gaussian noise bound}]
	Since $\E \sum_{j=1}^k \lambda \rho^j X_j \leq \lambda d \sum_{j=1}^k \rho^j < \frac{\rho\lambda d}{1-\rho}$, we have 
	\begin{align*}
		\Pro\left(\sum_{j=1}^k \lambda \rho^j X_j > \frac{\rho\lambda d}{1-\rho} + t\right) \leq \Pro\left(\sum_{j=1}^k \lambda \rho^j (X_j - \E X_j) > t\right).
	\end{align*}
	The (centered) $\chi^2_d$ random variable is sub-exponential with parameters $(2\sqrt{d}, 4)$, the weighted sum is also sub-exponential, with parameters at most $\left(2\lambda\sqrt{d}\sqrt{\sum_{j=1}^k \rho^{2j}}, 4\lambda\rho\right)$. The desired tail bound now follows directly from standard sub-exponential tail bounds.
\end{proof}

\subsection{Proof of Lemma \ref{lm: high-dim regression privacy}}\label{sec: proof of lm: high-dim regression privacy}

\begin{proof}[Proof of Lemma \ref{lm: high-dim regression privacy}]
As there are $T$ iterations in Algorithm \ref{algo: high-dim regression}, it suffices to show that each iteration is $(\varepsilon/T, \delta/T)$-differentially private, and then the overall privacy follows from the composition property of differential privacy.

Consider two data sets $\bm Z$ and $\bm Z'$ that differ by one datum, $(y, \bm x) \in \bm Z$ versus $(y', \bm x') \in \bm Z'$. For each $t$, by (D1') and (P1'), we have
\begin{align*}
	&\frac{\eta^0}{n}\left(|\bm x^\top \bbeta^t-\Pi_R(y)|\|\bm x\|_\infty + |(\bm x')^\top \bbeta^t-\Pi_R(y')|\|\bm x'\|_\infty\right)  \leq \frac{\eta^0}{n} \cdot 4(R+c_0c_{\bm x})c_{\bm x}/\sqrt{s} = \frac{\eta^0}{n} B.
\end{align*}
Lemma \ref{lm: peeling privacy} then implies that each iteration of Algorithm \ref{algo: high-dim regression} is $(\varepsilon/T, \delta/T)$-differentially private, as desired.
\end{proof}

\subsection{Proof of Lemma \ref{lm: high-dim regression contraction}}\label{sec: proof of lm: high-dim regression contraction}
\begin{proof}[Proof of Lemma \ref{lm: high-dim regression contraction}]
We begin with stating a key property of the ``Peeling" algorithm (Algorithm \ref{algo: peeling}).
\begin{Lemma}\label{lm: peeling overall accuracy}
	Let $\tilde P_s$ be defined as in Algorithm $\ref{algo: peeling}$. For any index set $I$, any $\bm v \in \R^I$ and $\hat{\bm v}$ such that $\|\hat{\bm v}\|_0 \leq \hat s \leq s$, we have that for every $c > 0$, 
	\begin{align*}
		\|\tilde P_s(\bm v) - \bm v\|_2^2 \leq (1+1/c) \frac{|I|-s}{|I|-\hat s} \|\hat{\bm v} - \bm v\|_2^2 + 4(1 + c)\sum_{i \in [s]} \|\bm w_i\|^2_\infty.
	\end{align*}
\end{Lemma}
The lemma is proved in Section \ref{sec: proof of lm: peeling overall accuracy}. We also introduce some notation for the proof.
\begin{itemize}
	\item Let $\alpha = 1/8Ls$ and $\gamma =4L/s$ so that \eqref{eq: rsc and rsm} can be equivalently written as
	\begin{align}\label{eq: rsc and rsm equivalent}
		\alpha\|\bbeta^t - \hat\bbeta\|_2^2 \leq \langle \nabla \L_n(\bbeta^t) - \nabla \L_n(\hat\bbeta), \bbeta^t - \hat\bbeta \rangle \leq \gamma \|\bbeta^t - \hat\bbeta\|_2^2.
	\end{align}
	Throughout the proof, we assume the truth of \eqref{eq: rsc and rsm equivalent} to prove \eqref{eq: high-dim regression contraction}.
	\item Let $S^t = \supp(\bbeta^t)$, $S^{t+1} = \supp(\bbeta^{t+1})$, $S^* = \supp(\hat \bbeta)$, and define $I^t = S^{t+1} \cup S^t \cup S^*$. 
	\item Let $\bm g^t = \nabla \L_n(\bbeta^t)$ and $\eta^0 = \eta/\gamma$, where $\gamma$ is the constant in \eqref{eq: rsc and rsm equivalent}.  
	\item Let $\bm w_1, \bm w_2, \cdots, \bm w_s$ be the noise vectors added to $\bbeta^t - \eta^0\nabla \L_n(\bbeta^t)$ when the support of $\bbeta^{t+1}$ is iteratively selected. We define $\bm W = 4\sum_{i \in [s]} \|\bm w_i\|^2_\infty$.
\end{itemize}

By \eqref{eq: rsc and rsm equivalent}, we have
\begin{align}
	\L_n(\bbeta^{t+1}) - \L_n(\bbeta^t) &\leq \langle \bbeta^{t+1} - \bbeta^{t}, \bm g^t \rangle + \frac{\gamma}{2}\|\bbeta^{t+1} - \bbeta^t\|_2^2 \notag \\
	&= \frac{\gamma}{2}\left\|\bbeta^{t+1}_{I^t} - \bbeta^t_{I^t} + \frac{\eta}{\gamma}\bm g^t_{I^t}\right\|_2^2 - \frac{\eta^2}{2\gamma}\left\|\bm g^t_{I^t}\right\|_2^2 + (1-\eta) \langle \bbeta^{t+1} - \bbeta^{t}, \bm g^t \rangle. \label{eq: master expansion 1}
\end{align} 
We first focus on the third term above. In what follows, $c$ denotes an arbitrary constant greater than 1. Since $\bbeta^{t+1}$ is an output from Algorithm \ref{algo: peeling}, we may write $\bbeta^{t+1} = {\tilde \bbeta}^{t+1} + \tilde {\bm w}_{S^{t+1}}$, so that ${\tilde \bbeta}^{t+1} = \tilde P_s(\bbeta^t - \eta^0\nabla \L_n(\bbeta^t))$ and $\tilde {\bm w}$ is a vector consisting of $d$ i.i.d. Laplace random variables.
\begin{align*}
	\langle \bbeta^{t+1} - \bbeta^t, \bm g^t \rangle &= \langle \bbeta^{t+1}_{S^{t+1}} - \bbeta^t_{S^{t+1}}, \bm g^t_{S^{t+1}} \rangle - \langle \bbeta^t_{S^t \setminus S^{t+1}}, \bm g^t_{S^t \setminus S^{t+1}} \rangle \\
	&= \langle {\tilde \bbeta}^{t+1}_{S^{t+1}} - \bbeta^t_{S^{t+1}}, \bm g^t_{S^{t+1}} \rangle + \langle \tilde {\bm w}_{S^{t+1}}, \bm g^t_{S^{t+1}} \rangle  - \langle \bbeta^t_{S^t \setminus S^{t+1}}, \bm g^t_{S^t \setminus S^{t+1}} \rangle.
\end{align*}
It follows that, for every $c > 1$,
\begin{align}\label{eq: proof of third term in master expansion 1}
	\langle \bbeta^{t+1} - \bbeta^t, \bm g^t \rangle &\leq -\frac{\eta}{\gamma}\|\bm g^t_{S^{t+1}}\|_2^2 + c\|\tilde {\bm w}_{S^{t+1}}\|_2^2 + (1/4c)\|\bm g^t_{S^{t+1}}\|_2^2 - \langle \bbeta^t_{S^t \setminus S^{t+1}}, \bm g^t_{S^t \setminus S^{t+1}} \rangle.
\end{align}
Now for the last term in the display above, we have
\begin{align*}
	- \langle \bbeta^t_{S^t \setminus S^{t+1}}, \bm g^t_{S^t \setminus S^{t+1}} \rangle &\leq \frac{\gamma}{2\eta}\left(\left\|\bbeta^t_{S^t \setminus S^{t+1}} - \frac{\eta}{\gamma}\bm g^t_{S^t \setminus S^{t+1}}\right\|_2^2 - \left(\frac{\eta}{\gamma}\right)^2\|\bm g^t_{S^t \setminus S^{t+1}}\|_2^2\right) \\
	&\leq \frac{\gamma}{2\eta}\left\|\bbeta^t_{S^t \setminus S^{t+1}} - \frac{\eta}{\gamma}\bm g^t_{S^t \setminus S^{t+1}}\right\|_2^2 - \frac{\eta}{2\gamma}\|\bm g^t_{S^t \setminus S^{t+1}}\|_2^2.
\end{align*}
We apply Lemma \ref{lm: peeling accuracy} to $\left\|\bbeta^t_{S^t \setminus S^{t+1}} - \frac{\eta}{\gamma}\bm g^t_{S^t \setminus S^{t+1}}\right\|_2^2$ to obtain that, for every $c > 1$, 
\begin{align*}
	- \langle \bbeta^t_{S^t \setminus S^{t+1}}, \bm g^t_{S^t \setminus S^{t+1}} \rangle &\leq \frac{\gamma}{2\eta}\left[(1+1/c)\left\|\tilde{\bbeta}^{t+1}_{S^{t+1} \setminus S^t}\right\|_2^2 + (1+c)\bm W\right] - \frac{\eta}{2\gamma}\|\bm g^t_{S^t \setminus S^{t+1}}\|_2^2 \\
	&= \frac{\eta}{2\gamma}\left[(1+1/c)\left\|\bm g^t_{S^{t+1} \setminus S^t}\right\|_2^2 + (1+c)\frac{\gamma}{2\eta}\bm W\right] - \frac{\eta}{2\gamma}\|\bm g^t_{S^t \setminus S^{t+1}}\|_2^2.
\end{align*}
Plugging the inequality above back into \eqref{eq: proof of third term in master expansion 1} yields
\begin{align*}
	\langle \bbeta^{t+1} - \bbeta^t, \bm g^t \rangle \leq~ & -\frac{\eta}{\gamma}\|\bm g^t_{S^{t+1}}\|_2^2 + c\|\tilde {\bm w}_{S^{t+1}}\|_2^2 + (1/4c)\|\bm g^t_{S^{t+1}}\|_2^2 \\ 
	&+ \frac{\eta}{2\gamma}\left[(1+1/c)\left\|\bm g^t_{S^{t+1} \setminus S^t}\right\|_2^2 + (1+c)\frac{\gamma}{2\eta}\bm W\right] - \frac{\eta}{2\gamma}\|\bm g^t_{S^t \setminus S^{t+1}}\|_2^2 \\
	\leq~ & \frac{\eta}{2\gamma}\left\|\bm g^t_{S^{t+1} \setminus S^t}\right\|_2^2 - \frac{\eta}{2\gamma}\|\bm g^t_{S^t \setminus S^{t+1}}\|_2^2 - \frac{\eta}{\gamma}\|\bm g^t_{S^{t+1}}\|_2^2 \\
	&+ (1/c)\left(4 + \frac{\eta}{2\gamma}\right) \|\bm g^t_{S^{t+1}}\|_2^2 + c\|\tilde {\bm w}_{S^{t+1}}\|_2^2 + (1+c)\frac{\gamma}{2\eta}\bm W.
\end{align*}
Finally, for the third term of \eqref{eq: master expansion 1} we have
\begin{align*}
	\langle \bbeta^{t+1} - \bbeta^t, \bm g^t \rangle \leq -\frac{\eta}{2\gamma}\left\|\bm g^t_{S^{t} \cup S^{t+1}}\right\|_2^2 + (1/c)\left(4 + \frac{\eta}{2\gamma}\right) \|\bm g^t_{S^{t+1}}\|_2^2 + c\|\tilde {\bm w}_{S^{t+1}}\|_2^2 + (1+c)\frac{\gamma}{2\eta}\bm W.
\end{align*}
Now combining this bound with \eqref{eq: master expansion 1} yields 
\begin{align}
	&\L_n(\bbeta^{t+1}) - \L_n(\bbeta^t) \notag \\
	\leq ~ &\frac{\gamma}{2}\left\|\bbeta^{t+1}_{I^t} - \bbeta^t_{I^t} + \frac{\eta}{\gamma}\bm g^t_{I^t}\right\|_2^2 - \frac{\eta^2}{2\gamma}\left\|\bm g^t_{I^t}\right\|_2^2 -\frac{\eta(1-\eta)}{2\gamma}\left\|\bm g^t_{S^t \cup S^{t+1}}\right\|_2^2 \notag \\
	&+ \frac{1-\eta}{c}\left(4 + \frac{\eta}{2\gamma}\right) \|\bm g^t_{S^{t+1}}\|_2^2 + (1-\eta)c\|\tilde {\bm w}_{S^{t+1}}\|_2^2 + (1-\eta)(1+c)\frac{\gamma}{2\eta}\bm W \notag \\
	\leq ~ & \frac{\gamma}{2}\left\|\bbeta^{t+1}_{I^t} - \bbeta^t_{I^t} + \frac{\eta}{\gamma}\bm g^t_{I^t}\right\|_2^2 - \frac{\eta^2}{2\gamma}\left\|\bm g^t_{I^t \setminus (S^t \cup S^*)}\right\|_2^2 - \frac{\eta^2}{2\gamma}\left\|\bm g^t_{S^t \cup S^*}\right\|_2^2  -\frac{\eta(1-\eta)}{2\gamma}\left\|\bm g^t_{S^t \cup S^{t+1}}\right\|_2^2  \notag\\
	&+ \frac{1-\eta}{c}\left(4 + \frac{\eta}{2\gamma}\right) \|\bm g^t_{S^{t+1}}\|_2^2 + (1-\eta)c\|\tilde {\bm w}_{S^{t+1}}\|_2^2 + (1-\eta)(1+c)\frac{\gamma}{2\eta}\bm W \notag\\
	\leq ~ & \frac{\gamma}{2}\left\|\bbeta^{t+1}_{I^t} - \bbeta^t_{I^t} + \frac{\eta}{\gamma}\bm g^t_{I^t}\right\|_2^2 - \frac{\eta^2}{2\gamma}\left\|\bm g^t_{I^t \setminus (S^t \cup S^*)}\right\|_2^2  - \frac{\eta^2}{2\gamma}\left\|\bm g^t_{S^t \cup S^*}\right\|_2^2 -\frac{\eta(1-\eta)}{2\gamma}\left\|\bm g^t_{S^{t+1} \setminus (S^t \cup S^*)}\right\|_2^2  \notag\\
	&+\frac{1-\eta}{c}\left(4 + \frac{\eta}{2\gamma}\right) \|\bm g^t_{S^{t+1}}\|_2^2 + (1-\eta)c\|\tilde {\bm w}_{S^{t+1}}\|_2^2 + (1-\eta)(1+c)\frac{\gamma}{2\eta}\bm W. \label{eq: master expansion 2}
\end{align}
	The last step is true because $S^{t+1} \setminus (S^t \cup S^*)$ is a subset of $S^t \cup S^{t+1}$. We next analyze the first two terms,  $\frac{\gamma}{2}\left\|\bbeta^{t+1}_{I^t} - \bbeta^t_{I^t} + \frac{\eta}{\gamma}\bm g^t_{I^t}\right\|_2^2 - \frac{\eta^2}{2\gamma}\left\|\bm g^t_{I^t \setminus (S^t \cup S^*)}\right\|_2^2$.
	
	Let $R$ be a subset of $S^t \setminus S^{t+1}$ such that $|R| = |I^t \setminus (S^t \cup S^*)| = |S^{t+1} \setminus (S^t \cup S^*)|$. By the definition of $\tilde \bbeta^{t+1}$ and Lemma \ref{lm: peeling accuracy}, we have, for every $c > 1$, 
	\begin{align*}
		\frac{\eta^2}{\gamma^2}\left\|\bm g^t_{I^t \setminus (S^t \cup S^*)}\right\|_2^2 = \|\tilde \bbeta^{t+1}_{I^t \setminus (S^t \cup S^*)}\|_2^2 \geq (1-1/c)\left\|\bbeta^t_R - \frac{\eta}{\gamma}\bm g^t_R\right\|_2^2 - c\bm W.
	\end{align*}
	It follows that
	\begin{align*}
		&\frac{\gamma}{2}\left\|\bbeta^{t+1}_{I^t} - \bbeta^t_{I^t} + \frac{\eta}{\gamma}\bm g^t_{I^t}\right\|_2^2 - \frac{\eta^2}{2\gamma}\left\|\bm g^t_{I^t \setminus (S^t \cup S^*)}\right\|_2^2 \\
		&\leq \frac{\gamma}{2}\|\tilde {\bm w}_{S^{t+1}}\|_2^2 + \frac{\gamma}{2}\left\|\tilde \bbeta^{t+1}_{I^t} - \bbeta^t_{I^t} + \frac{\eta}{\gamma}\bm g^t_{I^t}\right\|_2^2 - \frac{\gamma}{2}(1-1/c)\left\|\bbeta^t_R - \frac{\eta}{\gamma}\bm g^t_R\right\|_2^2 + \frac{c\gamma}{2}\bm W \\
		&= \frac{\gamma}{2}\left\|\tilde \bbeta^{t+1}_{I^t} - \bbeta^t_{I^t} + \frac{\eta}{\gamma}\bm g^t_{I^t}\right\|_2^2 - \frac{\gamma}{2}\left\||\tilde \bbeta^{t+1}_R - \bbeta^t_R + \frac{\eta}{\gamma}\bm g^t_R\right\|_2^2 + \frac{\gamma}{2}(1/c)\left\|\bbeta^t_R - \frac{\eta}{\gamma}\bm g^t_R\right\|_2^2+\frac{c\gamma}{2}\bm W \\
		&+ \frac{\gamma}{2}\|\tilde {\bm w}_{S^{t+1}}\|_2^2\\
		&\leq \frac{\gamma}{2}\left\|\tilde \bbeta^{t+1}_{I^t \setminus R} - \bbeta^t_{I^t \setminus R} + \frac{\eta}{\gamma}\bm g^t_{I^t \setminus R}\right\|_2^2 + \frac{\eta^2}{2c\gamma}(1+1/c)\left\|\bm g^t_{I^t \setminus (S^t \cup S^*)}\right\|_2^2+\frac{c\gamma}{2}\bm W + \frac{\gamma}{2}\|\tilde {\bm w}_{S^{t+1}}\|_2^2.
	\end{align*}
	The last inequality is obtained by applying Lemma \ref{lm: peeling accuracy} to $\left\|\bbeta^t_R - \frac{\eta}{\gamma}\bm g^t_R\right\|_2^2$. Now we apply Lemma \ref{lm: peeling overall accuracy} to obtain
	\begin{align*}
		&\frac{\gamma}{2}\left\|\bbeta^{t+1}_{I^t} - \bbeta^t_{I^t} + \frac{\eta}{\gamma}\bm g^t_{I^t}\right\|_2^2 - \frac{\eta^2}{2\gamma}\left\|\bm g^t_{I^t \setminus (S^t \cup S^*)}\right\|_2^2 \\
		&\leq \frac{3\gamma}{4}\frac{|I^t \setminus R|-s}{|I^t \setminus R|-s^*}\left\|\tilde \hat \bbeta_{I^t \setminus R} - \bbeta^t_{I^t \setminus R} + \frac{\eta}{\gamma}\bm g^t_{I^t \setminus R}\right\|_2^2 +\frac{3\gamma}{2}\bm W \\
		&+ \frac{\eta^2(1+c^{-1})}{2c\gamma}\left\|\bm g^t_{I^t \setminus (S^t \cup S^*)}\right\|_2^2+\frac{c\gamma}{2}\bm W + \frac{\gamma}{2}\|\tilde {\bm w}_{S^{t+1}}\|_2^2 \\
		&\leq \frac{3\gamma}{4}\frac{2s^*}{s+s^*}\left\|\tilde \hat \bbeta_{I^t \setminus R} - \bbeta^t_{I^t \setminus R} + \frac{\eta}{\gamma}\bm g^t_{I^t \setminus R}\right\|_2^2 +\frac{3\gamma}{2}\bm W + \frac{\eta^2}{2c\gamma}(1+1/c)\left\|\bm g^t_{S^{t+1}}\right\|_2^2+\frac{c\gamma}{2}\bm W + \frac{\gamma}{2}\|\tilde {\bm w}_{S^{t+1}}\|_2^2.
	\end{align*}
	The last step is true by observing that $|I^t \setminus R| \leq 2s^*+s$, and the inclusion $I^t \setminus (S^t \cup S^*) \subseteq S^{t+1}$. We continue to simplify,
	\begin{align*}
		&\frac{\gamma}{2}\left\|\bbeta^{t+1}_{I^t} - \bbeta^t_{I^t} + \frac{\eta}{\gamma}\bm g^t_{I^t}\right\|_2^2 - \frac{\eta^2}{2\gamma}\left\|\bm g^t_{I^t \setminus (S^t \cup S^*)}\right\|_2^2 \\
		&\leq \frac{\gamma}{2}\frac{3s^*}{s+s^*}\left\|\tilde \hat \bbeta_{I^t} - \bbeta^t_{I^t} + \frac{\eta}{\gamma}\bm g^t_{I^t}\right\|_2^2 +\frac{3\gamma}{2}\bm W + \frac{\eta^2}{2c\gamma}(1+1/c)\left\|\bm g^t_{S^{t+1}}\right\|_2^2+\frac{c\gamma}{2}\bm W + \frac{\gamma}{2}\|\tilde {\bm w}_{S^{t+1}}\|_2^2 \\
		&\leq \frac{3s^*}{s+s^*}\left(\eta\langle\hat \bbeta - \bbeta^t, \bm g^t\rangle + \frac{\gamma}{2}\|\hat \bbeta - \bbeta^t\|_2^2 + \frac{\eta^2}{2c\gamma}\|\bm g^t_{I^t}\|_2^2\right) \\
		&+ \frac{\eta^2}{2c\gamma}(1+1/c)\left\|\bm g^t_{S^{t+1}}\right\|_2^2+\frac{(c+3)\gamma}{2}\bm W + \frac{\gamma}{2}\|\tilde {\bm w}_{S^{t+1}}\|_2^2 \\
		&\leq \frac{3s^*}{s+s^*}\left(\eta\L_n(\hat \bbeta) - \eta\L_n(\bbeta^t) + \frac{\gamma - \eta \alpha}{2}\|\hat \bbeta - \bbeta^t\|_2^2 + \frac{\eta^2}{2c\gamma}\|\bm g^t_{I^t}\|_2^2\right) \\
		&\quad + \frac{\eta^2}{2c\gamma}(1+1/c)\left\|\bm g^t_{S^{t+1}}\right\|_2^2+\frac{(c+3)\gamma}{2}\bm W + \frac{\gamma}{2}\|\tilde {\bm w}_{S^{t+1}}\|_2^2.
	\end{align*}
Until now, the inequality is true for any $0 < \eta < 1$ and $c > 1$. We now specify the choice of these parameters: let $\eta  = 2/3$ and set $c$ large enough so that
\begin{align*}
	\L_n(\bbeta^{t+1}) - \L_n(\bbeta^t) \leq ~ & \frac{3s^*}{s + s^*} \left(\eta \L_n(\hat \bbeta) - \eta \L_n(\bbeta^t) + \frac{\gamma-\eta\alpha}{2}\|\hat \bbeta - \bbeta^t\|_2^2 + \frac{\eta^2}{2\gamma}\|\bm g^t_{I^t}\|_2^2\right) \\
	&- \frac{\eta^2}{4\gamma}\left\|\bm g^t_{S^t \cup S^*}\right\|_2^2 -\frac{\eta(1-\eta)}{4\gamma}\left\|\bm g^t_{S^{t+1} \setminus (S^t \cup S^*)}\right\|_2^2\\
	& + \frac{\gamma}{2}\left(\frac{3c+7}{2}\right)\bm W + \left(\frac{c}{3} + \frac{\gamma}{2}\right)\|\tilde {\bm w}_{S^{t+1}}\|_2^2.
\end{align*}
Such a choice of $c$ is available because $\gamma$ is bounded above by an absolute constant thanks to the RSM condition (upper inequality of \eqref{eq: rsc and rsm equivalent}). Now we set $s = 72(\gamma/\alpha)^2 s^* = \rho L^4 s^*$, where $\rho$ is the absolute constant referred to in Lemma \ref{lm: high-dim regression contraction} and Theorem \ref{thm: high-dim regression upper bound}, so that $\frac{3s^*}{s+s^*} \leq \frac{\alpha^2}{24\gamma(\gamma - \eta\alpha)}$, and $\frac{\alpha^2}{24\gamma(\gamma - \eta\alpha)} \leq 1/8$ because $\alpha < \gamma$. It follows that
\begin{align*}
	\L_n(\bbeta^{t+1}) - \L_n(\bbeta^t) \leq ~ & \frac{3s^*}{s + s^*} \left(\eta \L_n(\hat \bbeta) - \eta \L_n(\bbeta^t)\right) + \frac{\alpha^2}{48\gamma}\|\hat \bbeta - \bbeta^t\|_2^2 + \frac{1}{36\gamma}\|\bm g^t_{I^t}\|_2^2 \\
	&- \frac{1}{9\gamma}\left\|\bm g^t_{S^t \cup S^*}\right\|_2^2 -\frac{1}{18\gamma}\left\|\bm g^t_{S^{t+1} \setminus (S^t \cup S^*)}\right\|_2^2\\
	& + \frac{\gamma}{2}\left(\frac{3c+7}{2}\right)\bm W + \left(\frac{c}{3} + \frac{\gamma}{2}\right)\|\tilde {\bm w}_{S^{t+1}}\|_2^2.
\end{align*}
Because $\|\bm g^t_{I^t}\|_2^2 = \left\|\bm g^t_{S^t \cup S^*}\right\|_2^2 + \left\|\bm g^t_{S^{t+1} \setminus (S^t \cup S^*)}\right\|_2^2$, we have
\begin{align}
	\L_n(\bbeta^{t+1}) - \L_n(\bbeta^t) \leq ~ & \frac{3s^*}{s + s^*} \left(\eta \L_n(\hat \bbeta) - \eta \L_n(\bbeta^t)\right) + \frac{\alpha^2}{48\gamma}\|\hat \bbeta - \bbeta^t\|_2^2 - \frac{3}{36\gamma}\left\|\bm g^t_{S^t \cup S^*}\right\|_2^2 \notag\\
	& + \frac{\gamma}{2}\left(\frac{3c+7}{2}\right)\bm W + \left(\frac{c}{3} + \frac{\gamma}{2}\right)\|\tilde {\bm w}_{S^{t+1}}\|_2^2 \notag\\
	\leq ~ & \frac{3s^*}{s + s^*} \left(\eta \L_n(\hat \bbeta) - \eta \L_n(\bbeta^t)\right) - \frac{3}{36\gamma}\left(\left\|\bm g^t_{S^t \cup S^*}\right\|_2^2 - \frac{\alpha^2}{4}\|\hat \bbeta - \bbeta^t\|_2^2\right) \notag\\
	& + \frac{\gamma}{2}\left(\frac{3c+7}{2}\right)\bm W + \left(\frac{c}{3} + \frac{\gamma}{2}\right)\|\tilde {\bm w}_{S^{t+1}}\|_2^2. \label{eq: master expansion 3}
\end{align}
To continue the calculations, we consider a lemma from \cite{jain2014iterative}:
\begin{Lemma}[\citep{jain2014iterative}, Lemma 6]
	\begin{align*}
		\left\|\bm g^t_{S^t \cup S^*}\right\|_2^2 - \frac{\alpha^2}{4}\|\hat \bbeta - \bbeta^t\|_2^2 \geq \frac{\alpha}{2}\left(\L_n(\bbeta^t) - \L_n(\hat \bbeta)\right).
	\end{align*}
\end{Lemma}
It then follows from \eqref{eq: master expansion 3}, the quoted lemma above and the definition of $\rho$ that, for an appropriate constant $c_3$, 
\begin{align*}
	\L_n(\bbeta^{t+1}) - \L_n(\bbeta^t) &\leq -\left(\frac{3\alpha}{72\gamma} + \frac{2s^*}{s + s^*}\right)\left(\L_n(\bbeta^t) - \L_n(\hat \bbeta)\right) + c_3(\bm W + \|\tilde {\bm w}_{S^{t+1}}\|_2^2)\\
	&\leq -\left(\frac{1}{\rho L^2}\right)\left(\L_n(\bbeta^t) - \L_n(\hat \bbeta)\right) + c_3(\bm W + \|\tilde {\bm w}_{S^{t+1}}\|_2^2).
\end{align*}
Adding $\L_n(\bbeta^t) - \L_n(\hat \bbeta)$ to both sides of the inequality concludes the proof.
\end{proof}

\subsubsection{Proof of Lemma \ref{lm: peeling overall accuracy}}\label{sec: proof of lm: peeling overall accuracy}
\begin{proof}[Proof of Lemma \ref{lm: peeling overall accuracy}]
	Let $T$ be the index set of the top $s$ coordinates of $\bm v$ in terms of absolute values. We have
	\begin{align*}
		\|\tilde P_s(\bm v) - \bm v\|_2^2 &= \sum_{j \in S^c} v_j^2 = \sum_{j \in S^c \cap T^c} v_j^2 + \sum_{j \in S^c \cap T} v_j^2\\
		& \leq \sum_{j \in S^c \cap T^c} v_j^2 + (1+1/c)\sum_{j \in S \cap T^c} v_j^2 + 4(1 + c)\sum_{i \in [s]} \|\bm w_i\|^2_\infty.
	\end{align*}
	The last step is true by observing that $|S \cap T^c| = |S^c \cap T|$ and applying Lemma \ref{lm: peeling accuracy}.
	
	Now, for an arbitrary $\hat{\bm v}$ with $\|\hat{\bm v}\|_0 = \hat s \leq s$, let $\hat S = \supp(\hat{\bm v})$. We have
	\begin{align*}
		\frac{1}{|I|-s}\sum_{j \in T^c} v_j^2 = \frac{1}{|T^c|}\sum_{j \in T^c} v_j^2 \stackrel{(*)}{\leq} \frac{1}{|(\hat S)^c|}\sum_{j \in (\hat S)^c} v_j^2 = \frac{1}{|I|-\hat s}\sum_{j \in (\hat S)^c} v_j^2 \leq \frac{1}{|I|-\hat s}\sum_{j \in (\hat S)^c} \|\hat{\bm v} - \bm v\|_2^2
	\end{align*}
	The (*) step is true because $T^c$ is the collection of indices with the smallest absolute values, and $|T^c| \leq |\hat S^c|$. We then combine the two displays above to conclude that
	\begin{align*}
		\|\tilde P_s(\bm v) - \bm v\|_2^2 &\leq \sum_{j \in S^c \cap T^c} v_j^2 + (1+1/c)\sum_{j \in S \cap T^c} v_j^2 + 4(1 + c)\sum_{i \in [s]} \|\bm w_i\|^2_\infty \\
		&\leq (1+1/c)\sum_{j \in T^c} v_j^2 + 4(1 + c)\sum_{i \in [s]} \|\bm w_i\|^2_\infty \\
		&\leq (1+1/c) \frac{|I|-s}{|I|-\hat s} \|\hat{\bm v} - \bm v\|_2^2 + 4(1 + c)\sum_{i \in [s]} \|\bm w_i\|^2_\infty.
	\end{align*}
\end{proof}
\section{Proofs of lower bound results}\label{sec: lb proofs}
\subsection{Proof of Lemma \ref{lm: low-dim mean attack}}\label{sec: proof of lm: low-dim mean attack}
\begin{proof}[Proof of Lemma \ref{lm: low-dim mean attack}]
For the first part, we observe that $M(\bm X'_i)$ and $\bm x_i$ are independent, then by Hoeffding's inequality,
\begin{align*}
	&\Pro\left(\sum_{j=1}^d x_{ij} M(\bm X'_i)_j -  \sum_{j=1}^d  {\mu}_j M(\bm X'_i)_j > \sigma^2 \sqrt{8d\log(1/\delta)}\Bigg|M(\bm X'_i) = \bm q\right)\\
	&= \Pro\left(\sum_{j=1}^d x_{ij} q_j -  \sum_{j=1}^d  {\mu}_j q_j > \sigma^2 \sqrt{8d\log(1/\delta)}\Bigg|M(\bm X'_i) = \bm q \right) \exp\left(-\frac{(\sigma^2\sqrt{8d\log(1/\delta)})^2}{8\sigma^4d}\right) \leq \delta.
\end{align*}
Hoeffding's inequality applies since $\sum_{j=1}^d ({x}_{ij}- {\mu}_j) q_j$ is a sum of $d$ independent, zero-mean random variables bounded by $-2\sigma^2$ and $2\sigma^2$.

For the second part, since $\sqrt{d}\|M(\bm X)-\bar{\bm X}\|_2 \geq \|M(\bm X) - \bar{\bm X}\|_1$, it suffices to show that
\begin{align*}
	\Pro\left(\sum_{i \in [n]} \A_\bmu (\bm x_i, M(\bm X)) \leq n\sigma^2\sqrt{8d\log(1/\delta)}, \|M(\bm X)-\bar{\bm X}\|_1 \lesssim \sigma d\right) < \delta.
\end{align*}
Now we introduce the prior distribution of $\bmu$: let $\bmu = \sigma \bm p$, where the coordinates $p_1, p_2, \cdots, p_d$ of $\bm p$ is an i.i.d. sample from Uniform$(-1, 1)$. For $j \in [d]$, define 
\begin{align*}
	W_j = \frac{M(\bm X)_j}{\sigma} \sum_{i =1}^n (x_{ij}-\sigma p_j) + \frac{1}{\alpha}|M(\bm X)_j - \bar{\bm X}_j|,
\end{align*}
where $\alpha$ is a universal constant to be specified later. By the assumed sample size range, it suffices to show that $
\Pro\left(\sum_{j=1}^d W_j  \leq \gamma\cdot {\sigma d}\right) < \delta$ for some constant $\gamma$. In fact, if this is true, we then have
	\begin{align*}
	&\delta \geq  \Pro\left(\sum_{j=1}^d W_j \leq {\sigma \gamma d}{}\right) \\
	&= \Pro\left\{ \frac{1}{\sigma}\sum_{i =1}^n \left(\sum_{j=1}^d {x}_{ij} M(X)_j -  \sum_{j=1}^d {\mu}_j M(X)_j \right) + \frac{1}{\alpha}\|M(X) - \bar X\|_1 \leq {\sigma \gamma d}{} \right\}\\
	& \geq \Pro\left\{ \sum_{i=1}^n \left(\sum_{j=1}^d {x}_{ij} M(X)_j -  \sum_{j=1}^d {\mu}_j M(X)_j \right) \leq \frac{\sigma^2 \gamma d}{2}, \frac{1}{\alpha}\|M(X) -  \bar X\|_1 \leq \frac{\sigma\gamma d}{2} \right\}\\
	&\geq \Pro\left\{ \sum_{i=1}^n \left(\sum_{j=1}^d {x}_{ij} M(X)_j -  \sum_{j=1}^d {\mu}_j M(X)_j \right) \leq n \sigma^2\sqrt{8d\log(1/\delta)}, \|M(X) - \bar X\|_1 \lesssim  \sigma d \right\},
	\end{align*} 
	which is the desired result.

To this end, we denote $\mathcal F=\{\bm X,M(\bm X)\}$ and compute the moment generating function
\begin{align*}
	\E[e^{-u\sum_{j=1}^d W_j}]=\E[\E[e^{-u\sum_{j=1}^d W_j}\mid \mathcal F]]=\E[e^{-u\sum_{j=1}^d \E[W_j\mid \mathcal F]}\cdot \E[e^{-u\sum_{j=1}^d (W_j-\E[W_j\mid \mathcal F])}\mid \mathcal F]].
\end{align*}
We first bound $
\E[e^{-u\sum_{j=1}^d (W_j-\E[W_j\mid \mathcal F])}\mid \mathcal F].
$ To control this term, we note that $p_1, p_2, \cdots, p_d$ are i.i.d. given $\bm X$ and therefore i.i.d given $\mathcal F$. Let $\bm X_j$ denote the $j$th column of $\bm X$,
\begin{align*}
	f(\bm p|\bm X) = \frac{f(\bm X|\bm p)\bm \pi(\bm p)}{f(\bm X)} = \frac{\prod_{j=1}^d f_j(\bm X_j|\bm p_j)\pi_j(p_j)}{\prod_{j=1}^d f_j(\bm X_j)} = \prod_{j=1}^d f_j(p_j|\bm X_j).
\end{align*}
It follows that
\begin{align*}
	\E[e^{-u\sum_{j=1}^d (W_j-\E[W_j\mid \mathcal F])}\mid \mathcal F]=\prod_{j=1}^d \E[e^{-u (W_j-\E[W_j\mid \mathcal F])}\mid \mathcal F].
\end{align*}
For the ease of presentation, let us denote $W_j = \frac{M(\bm X)_j}{\sigma} \sum_{i =1}^n (x_{ij}-\sigma p_j) + \frac{1}{\alpha}|M(\bm X)_j - \bar {\bm X}_j|$ by $\phi_{\bm X,j}(p_j)+C_M(\bm X)$, where $\phi_{\bm X,j}(p_j)=-\frac{M(\bm X)_j}{\sigma}n\sigma p_j$ and $C_M(\bm X)= \frac{M(\bm X)_j}{\sigma} \sum_{i =1}^n x_{ij}+ \frac{1}{\alpha}|M(\bm X)_j - \bar{\bm X}_j|$.
We have
\begin{align*}
	\E[e^{-u (W_j-\E[W_j\mid \mathcal F])}\mid \mathcal F]=\E[e^{-u (\phi_{\bm X,j}(p_j)-\E[\phi_{\bm X,j}(p_j)\mid \mathcal F])}\mid \mathcal F].
\end{align*}
Since $|M(\bm X)_j| \leq \sigma$ , we have  $|\phi_{\bm X,j}(p_j)-\E[\phi_{\bm X,j}(p_j)|\leq n\sigma$ and $\|\phi_{\bm X,j}(p_j)-\E[\phi_{\bm X,j}(p_j)\|_{\psi_2}\leq n\sigma$, where $\|\cdot\|_{\psi_2}$ denotes the sub-Gaussian norm of a random variable. This implies $
\E\left[\exp\left(-u (W_j-\E[W_j\mid \mathcal F])\right)\mid \mathcal F\right]\leq e^{Cn^2\sigma^2u^2}$, and therefore
\begin{align*}
	&\E\left[\exp\left(u\cdot \left|\sum_{j=1}^d (W_j - \E[W_j\mid \mathcal F])\right|\right)\mid \mathcal F\right]
	&\leq \prod_{j=1}^d \E\left[\exp\left(-u (W_j-\E[W_j\mid \mathcal F])\right)\mid \mathcal F\right] \leq  e^{Cn^2\sigma^2 u^2\cdot d}.
\end{align*}
We then have
\begin{align*}
	\E[e^{-u\sum_{j=1}^d W_j}]=\E[e^{-u\sum_{j=1}^d \E[W_j\mid \mathcal F]}\cdot \E[e^{-u\sum_{j=1}^d (W_j-\E[W_j\mid \mathcal F])}\mid \mathcal F]]\le  \exp(Cn^2 \sigma^2u^2\cdot{d})\cdot \E[e^{-u\sum_{j=1}^d \E[W_j\mid \mathcal F]}].
\end{align*}
We know that given $\mathcal F$,  $p_1,...,p_d$ are i.i.d. For $j\in[d]$, since $\frac{x_{ij}+\sigma}{2\sigma}\mid p_j \sim \text{Bernoulli}(\frac{p_j+1}{2})$, and $\frac{p_j+1}{2}\sim U(0,1)$. It follows that
\begin{align*}
	\frac{p_j+1}{2} \big| \mathcal F \stackrel{d}{=} \frac{p_j+1}{2}\big| \bm X_j\sim \text{Beta}\left(1+\sum_{i=1}^n\frac{ x_{ij}+\sigma}{2\sigma},n+1-\sum_{i=1}^n\frac{ x_{ij}+\sigma}{2\sigma}\right).
\end{align*}
Therefore $
\E\left[\frac{p_j+1}{2}\mid \mathcal F\right]=\frac{1+\sum_{i=1}^n\frac{ x_{ij}+\sigma}{2\sigma}}{n+2}$, which implies $
\E[p_j\mid \mathcal F]=\frac{\sum_{i=1}^n x_{ij}/\sigma}{n+2}.$

Denote $S_j=\sum_{i=1}^n x_{ij}$ and $\tilde S_j=S_j/\sigma$, we then have 
\begin{align*}
	\E[W_j\mid \mathcal F]=&\frac{M(\bm X)_j}{\sigma} \sum_{i =1}^n ( x_{ij}-\sigma\E[p_j\mid\mathcal F]) + \frac{1}{\alpha}|M(\bm X)_j - \bar {\bm X}_j|\\
	&= \frac{M(\bm X)_j}{\sigma} \cdot \frac{2}{n+2} S_j+\frac{1}{\alpha}|M(\bm X)_j - S_j/n|\\
	&\geq\min\left\{\frac{S_j}{\alpha n}, \frac{2}{n+2}S_j+\frac{1}{\alpha}(\sigma-S_j/n),\frac{2}{(n+2)n\sigma}S_j^2 \right\}\\
	&= \sigma\cdot\min\left\{\frac{\tilde S_j}{\alpha n}, \frac{2}{n+2}\tilde S_j+\frac{1}{\alpha}(1-\tilde S_j/n),\frac{2}{(n+2)n}\tilde S_j^2 \right\}.
\end{align*}

Take $\alpha=1/3$, then we have $\frac{2}{(n+2)n}\tilde S_j^2\le\frac{2}{n+2}\tilde S_j\le\frac{3}{n}\tilde S_j$ and  $\frac{2}{(n+2)n}\tilde S_j^2\le\frac{2}{n+2}\tilde S_j\le \frac{2}{n+2}\tilde S_j+3(1-\tilde S_j/n)$. It follows that $\E[W_j\mid \mathcal F]\ge\frac{2\sigma}{(n+2)n}\tilde S_j^2$, and then
\begin{align*}
	\E[e^{-u\sum_{j=1}^d \E[W_j\mid \mathcal F]}]\le\E[e^{-u\cdot\frac{2\sigma}{(n+2)n}\sum_{j=1}^d \tilde S_j^2}]=\prod_{j=1}^d(\E[e^{-u\cdot\frac{2\sigma}{(n+2)n} \tilde S_j^2}]).
\end{align*}
Let us consider the marginal distribution of $\tilde S_j$. Let $\tilde p_j=\frac{1+p_j}{2}\sim \text{Uniform}[0,1]$. We then have $\frac{\tilde S_j+1}{2}\sim \text{Binomial}(n,\tilde p_j)$. Then for $\tilde k\in\{-n,-n+2,...,n\}$ and $k=\frac{\tilde k+n}{2}$,
\begin{align*}
	\Pro(\tilde S_j=\tilde k) &= \Pro\left( \frac{\tilde S_j+1}{2}= k\right)=\int_{0}^1 \Pro( \frac{\tilde S_j+1}{2}= k\mid \tilde p_j=p)\; \d p\\
	&= \int_{0}^1 {n\choose k}p^k(1-p)^{n-k} \d p={n\choose k} B(k+1,n-k+1) = \frac{n!}{k!(n-k)!}\cdot\frac{k!(n-k)!}{(n+1)!}=\frac{1}{n+1}.
\end{align*}
Therefore, $\tilde S_j$ is a uniform random variable, and
\begin{align*}
	\E[e^{-u\cdot\frac{2\sigma}{(n+2)n} \tilde S_j^2}]=\frac{1}{n+1}\sum_{k\in\{-n,-n+2,...,n\}}e^{-\frac{2u\cdot\sigma}{(n+2)n}k^2}.
\end{align*}
With $u\sigma=o(1)$, we have 
\begin{align*}
	\frac{1}{n+1}\sum_{k\in\{-n,-n+2,...,n\}}e^{-\frac{2u\sigma}{(n+2)n}k^2} &\asymp\frac{1}{n+1}\sum_{k\in\{-n,-n+2,...,n\}}(1-\frac{2u\sigma}{(n+2)n}k^2)\\
	&\asymp 1-\frac{1}{n+1}\cdot \frac{2u\sigma}{(n+2)n}\sum_{k\in\{-n,-n+2,...,n\}}k^2 \asymp 1-\frac{u\sigma}{3} \leq e^{-u\sigma/3}.
\end{align*}
Combining all the pieces, we have obtained
\begin{align*}
	\E[e^{-u\sum_{j=1}^d W_j}] \leq \exp(Cn^2\sigma^2 u^2\cdot{d})\cdot \E[e^{-u\sum_{j=1}^d \E[W_j\mid \mathcal F]}] \leq \exp(Cn^2 \sigma^ 2u^2\cdot{d}-ud\sigma/3)
\end{align*}
With $u=\frac{1}{7C\sigma\cdot n^2}$, since ${\sqrt{d/\log(1/\delta)}}/{n} \gtrsim 1$ by the sample size range,
\begin{align*}
	\Pro\left(\sum_{j=1}^d W_j  \leq \frac{1}{6}\cdot {\sigma d}\right)  \leq \exp(Cn^2 \sigma^ 2u^2\cdot{d}-ud\sigma/6) = \exp(-\frac{1}{294}\cdot\frac{d}{Cn^2}) < \delta.
\end{align*}
\end{proof}

\subsection{Proof of Lemma \ref{lm: high-dim mean attack}}\label{sec: proof of lm: high-dim mean attack}
\begin{proof}[Proof of Lemma \ref{lm: high-dim mean attack}]
	
Throughout the proof, we denote $A_i = \A_{\bmu, s^*}(\bm x_i, M(\bm X))$ and $A'_i = \A_{\bmu, s^*}(\bm x_i, M(\bm X'_i))$.

For the first part, observe that $\bm x_i - \bmu$ and $M(\bm X'_i) - \bmu$ are independent and therefore $\E A'_i = \langle \E(\bm x_i - \bmu), \E(M(\bm X'_i) - \bmu)  \rangle = \bm 0.$ Also by independence, we have
\begin{align*}
	\E |A'_i| \leq \sqrt{\E (A'_i)^2} \leq \sigma\sqrt{\E\|M(\bm X'_i) - \bmu\|_2^2} = \sigma\sqrt{\E\|M(\bm X) - \bmu\|_2^2}.
\end{align*}
For the second part, we have
\begin{align*}
	\sum_{i \in [n]} \E_{\bm X|\bmu} A_i = \sum_{j \in \supp(\bmu)} \E_{\bm X|\bmu} M(\bm X)_j\sum_{i \in [n]}(x_{ij} - \mu_j)
\end{align*}
Let $f_{\bmu}(\bm x_i) = (2\pi\sigma^2)^{-d/2}\exp\left(-\frac{\|\bm x_i - \bmu\|_2^2}{2\sigma^2}\right)$ denote the density of $\bm x_i \sim N_d(\bmu, \sigma^2 \bm I)$, and $f_{\bmu}(\bm X)$ denote the joint density of the sample. For each $j$, we have
\begin{align*}
	\E_{\bm X|\bmu} M(\bm X)_j\sum_{i \in [n]}(x_{ij} - \mu_j) = \sigma^2\E_{\bm X|\bmu} M(\bm X)_j\frac{\partial \log f_{\bmu}(\bm X)}{\partial \mu_j} = \sigma^2\frac{\partial}{\partial \mu_j}\E_{\bm X|\bmu} M(\bm X)_j
\end{align*}
It follows that
\begin{align*}
	\sum_{i \in [n]} \E_{\bm X|\bmu} A_i = \sigma^2\sum_{j \in [d]}\frac{\partial}{\partial \mu_j}\E_{\bm X|\bmu} M(\bm X)_j\1(\mu_j \neq 0).
\end{align*}
Let the prior distribution $\bm \pi$ of $\bmu$ be defined as follows. Let $\nu_1, \nu_2, \cdots, \nu_d$ be an i.i.d. sample drawn from the truncated normal $N(0, \gamma^2)$ distribution with truncation at $-1$ and $1$. Let $S$ be be the index set of $\bm \nu$ with top $s^*$ largest absolute values so that $|S| = s^*$ by definition, and define $\mu_j = \nu_j \1(j \in S)$. Denote $g_j(\bmu) = \E_{\bm X|\bmu}M(\bm X)_j$; the choice of prior $\bm \pi$ gives
\begin{align*}
		\E_{\bm \pi} \sum_{i \in [n]} \E_{\bm X|\bmu} A_i = \sigma^2 \E_{\bm \pi} \sum_{j \in [d]} \frac{\partial}{\partial \nu_j} g_j(\bm \nu)\1(j \in S).
\end{align*}
We next apply Stein's Lemma to analyze the right side.
\begin{Lemma}[Stein's Lemma]\label{lm: stein's lemma}
	Let $Z$ be distributed according to some density $p(z)$ that is continuously differentiable with respect to $z$ and let $h: \R \to \R$ be a differentiable function such that $\E |h'(Z)| < \infty$. We have
	\begin{align*}
		\E h'(Z) = \E\left[\frac{-h(Z)p'(Z)}{p(Z)}\right].
	\end{align*}
\end{Lemma}
For each $j \in [d]$, by Lemma \ref{lm: stein's lemma} we have
\begin{align*}
	\E_{\bm \pi} \frac{\partial}{\partial \nu_j} g_j(\bm \nu)\1(j \in S) &= \E_{\nu_j} \frac{\partial}{\partial \nu_j}\E(g_j(\bm \nu)\1(j \in S)|\nu_j) = - \E_{\nu_j}\left[\E(g_j(\bm \nu)\1(j \in S)|\nu_j)\cdot\frac{\pi'_j(\nu_j)}{\pi_j(\nu_j)}\right].
\end{align*}
It follows that
\begin{align*}
\E_{\bm \pi} \frac{\partial}{\partial \nu_j} g_j(\bm \nu)\1(j \in S)
	&=- \E_{\nu_j}\left[\E\left(g_j(\bm \nu)\1(j \in S)\cdot\frac{\pi'_j(\nu_j)}{\pi_j(\nu_j)}\Big|\nu_j\right)\right]\\
	&\geq \E_{\bm \pi}\left(\frac{-\nu_j\1(j \in S)\pi'_j(\nu_j)}{\pi_j(\nu_j)}\right) - \E_{\bm \pi}\left(\left|g_j(\bm \nu) - \nu_j\right|\left|\frac{\pi'_j(\nu_j)}{\pi_j(\nu_j)}\right|\1(j \in S)\right)
\end{align*}
Summing over $j$ and plugging in the truncated normal density (truncated at $-1$ and $1$) for $\pi_j(\nu_j)$ lead to
\begin{align}\label{eq: high-dim mean lower bound completeness}
	\E_{\bm \pi} \sum_{i \in [n]} \E A_i \geq \frac{\sigma^2}{\gamma^2}\left(\E_{\bm \pi} \sum_{j \in S} \nu_j^2 - \sqrt{\E_{\bm\pi}\E_{\bm X|\bmu}\|M(\bm X) - \bmu\|_2^2}\sqrt{\E_{\bm \pi} \sum_{j \in S} \nu_j^2} \right).
\end{align}
Now we set $\gamma^2 = 1/(4\log(d/4s^*))$ and let $|\nu|_{(k)}$ be the $k$th order statistic of $\{|\nu_j|\}_{j \in [d]}$. Denote $Y = |\nu|_{(d-s^*+1)}$ and observe that
\begin{align*}
	\Pro(Y > t)  = 1 - \Pro(Y \leq t) = 1 - \Pro\left(\sum_{j \in [d]} \1(|\nu_j| > t) \leq s^* \right)
\end{align*}
Let $\tilde \nu_j$ denote a non-truncated $N(0, \gamma^2)$ random variable. For $t \in (0, 1)$, we have
\begin{align*}
	\Pro(|\nu_j| > t) \geq \Pro(|\tilde \nu_j| > t) - \Pro(|\tilde \nu_j| > 1).
\end{align*}
Since $(t/\gamma)^{-1} \exp(-t^2/2\gamma^2) \leq \Pro(|\tilde \nu_i| > t) \leq \exp(-t^2/2\gamma^2)$ for $t \geq \sqrt{2}\gamma$ by Mills ratio, as long as $4s^*/d < 1/2$, 
\begin{align*}
	\Pro(|\nu_j| > 1/2) \geq \Pro(|\tilde \nu_j| > 1/2) - \Pro(|\tilde \nu_j| > 1) \geq 4s^*/d - (4s^*/d)^2 > 2s^*/d.
\end{align*}
Now consider $N \sim$ Binomial$(d, 2s^*/d)$; we have $\Pro\left(\sum_{j \in [d]} \1(|\nu_j| > t) \leq s \right) \leq \Pro (N \leq s^*)$. By standard Binomial tail bounds \cite{arratia1989tutorial}, 
\begin{align*}
	\Pro (N \leq s^*) &\leq \exp\left[-d\left((s^*/d)\log(1/2) + (1-s^*/d)\log\left(\frac{1-s^*/d}{1-2s^*/d}\right)\right)\right] \\
	&\leq 2^{s^*}\left(1-\frac{s^*}{d-s^*}\right)^{d-s^*} < (2/e)^{s^*}
\end{align*}
It follows that $\Pro\left(Y > 1/2\right) > 1- (2/e)^{s^*} > 0.$ Because $Y =  |\nu|_{(d-s^*+1)}$, we conclude that there exists an absolute constant $0 < c < 1$ such that
$cs^* < \E_{\bm \pi} \sum_{j \in S} \nu_j^2  < s^*.$ Returning to \eqref{eq: high-dim mean lower bound completeness}, by our choice of $\gamma^2$, the assumption that $s^* = o(d^{1-\omega})$ for some fixed $\omega > 0$, and $\E_{\bm X|\bmu}\|M(\bm X) - \bmu\|_2^2 = o(1)$, we have
\begin{align}\label{eq: high-dim mean attack lower bound}
	\E_{\bm \pi} \sum_{i \in [n]} \E A_i = 	\sum_{i \in [n]}\E_{\bm \pi}\E_{\bm X|\bmu}\A_{\bmu, s^*}(\bm x_i, M(\bm X)) \gtrsim \sigma^2 s^*\log d.
\end{align}
\end{proof}

\subsection{Proof of Theorem \ref{thm: high-dim mean lower bound}}\label{sec: proof of thm: high-dim mean lower bound}
It suffices to prove the second term of the minimax lower bound, as the first term is simply the statistical minimax lower bound for sparse mean estimation. Throughout the proof, we denote $A_i = \A_{\bmu, s^*}(\bm x_i, M(\bm X))$ and $A'_i = \A_{\bmu, s^*}(\bm x_i, M(\bm X'_i))$. Consider the following lemma.
\begin{Lemma}\label{lm: attack upper bound}
	If $M$ is an $(\varepsilon, \delta)$-differentially private algorithm with $0 < \varepsilon < 1$ and $\delta > 0$,  then for every $T > 0$, 
	\begin{align}\label{eq: attack upper bound}
		\E A_i \leq \E A'_i + 2\varepsilon \E|A'_i| + 2\delta T + \int_T^\infty \Pro\left(|A_i| > t \right).
	\end{align}
\end{Lemma}
This inequality has previously appeared in \cite{steinke2017tight} and \cite{kamath2018privately} in their respective analysis of tracing attacks. We include a proof in Section \ref{sec: proof of lm: attack upper bound}.

By \eqref{eq: attack upper bound} and the first part of Lemma \ref{lm: high-dim mean attack}, for every $\bmu \in \Theta$ we have
\begin{align*}
	\sum_{i \in [n]} \E_{\bm X|\bmu} A_i \leq 2n\varepsilon\sigma\sqrt{\E_{\bm X|\bmu}\|M(\bm X) - \bmu\|_2^2} + 2n\delta T + n\int_T^\infty \Pro\left(|A_i| > t \right).
\end{align*}
For the tail probability, as every $\bmu \in \Theta$ is assumed to satisfy $\|\bmu\|_0  \leq s^*$ and $\|\bmu \|_\infty < 1$,
\begin{align*}
\Pro\left(|A_i| > t \right) \leq \Pro(\chi^2_{s^*} > {t^2}/{4s^*\sigma^2}) \leq \exp\left(-\frac{t^2}{c_1s^*\sigma^2} + s^*\right)
\end{align*}
for some universal constant $c_1$. By choosing $T = \sqrt{c_1}\sigma s^*\sqrt{\log(1/\delta)}$, we obtain
\begin{align*}
	\sum_{i \in [n]} \E_{\bm X|\bmu} A_i \leq 2n\varepsilon\sigma\sqrt{\E_{\bm X|\bmu}\|M(\bm X) - \bmu\|_2^2} + c_2\sigma s^* n\delta\sqrt{\log(1/\delta)}.
\end{align*}
Combining with \eqref{eq: high-dim mean attack lower bound} leads to
\begin{align*}
	\sigma^2 s^*\log d \leq \E_{\bm \pi} \sum_{i \in [n]} \E A_i \leq 2n\varepsilon\sigma\sqrt{\E_{\bm \pi}\E_{\bm X|\bmu}\|M(\bm X) - \bmu\|_2^2} + c_2\sigma s^* n\delta\sqrt{\log(1/\delta)}.
\end{align*}
Since $\delta  < n^{-(1+\omega)}$ for some $\omega > 0$, for every $(\varepsilon, \delta)$-differentially private $M$ we have
\begin{align*}
	\E_{\bm \pi}\E_{\bm X|\bmu}\|M(\bm X) - \bmu\|_2^2 \gtrsim \sigma^2\frac{(s^*\log d)^2}{n^2\varepsilon^2}.
\end{align*}
As the Bayes risk always lower bounds the max risk, the proof is complete.
\subsubsection{Proof of Lemma \ref{lm: attack upper bound}}\label{sec: proof of lm: attack upper bound}
\begin{proof}[Proof of Lemma \ref{lm: attack upper bound}]
	let $Z^+ = \max(Z, 0)$ and $Z^- = -\min(Z, 0)$ denote the positive and negative parts of random variable $Z$ respectively. We have
	\begin{align*}
		\E A_i = \E A_i^+ - \E A_i^- = \int_0^\infty \Pro(A_i^+ > t) \; d t - \int_0^\infty \Pro(A_i^- > t) \; d t.
	\end{align*}
	For the positive part, if $0 < T < \infty$ and $0 < \varepsilon < 1$, we have
	\begin{align*}
		\int_0^\infty \Pro(A_i^+ > t) \; d t &= \int_0^T \Pro(A_i^+ > t) \; d t + \int_T^\infty \Pro(A_i^+ > t) \; d t \\
		&\leq \int\; d t_0^T \left(e^\varepsilon\Pro(A_i^+ > t) + \delta\right)\; d t + \int_T^\infty \Pro(A_i^+ > t) \; d t \\
		&\leq \int_0^\infty \Pro({A'_i}^+ > t) \; d t + 2\varepsilon\int_0^\infty \Pro({A'_i}^+ > t) \; d t + \delta T + \int_T^\infty \Pro(|A_i| > t) \; d t.
	\end{align*}
	Similarly for the negative part,
	\begin{align*}
		\int_0^\infty \Pro(A_i^- > t) \; d t &= \int_0^T \Pro(A_i^- > t) \; d t + \int_T^\infty \Pro(A_i^- > t) \; d t \\
		& \geq \int_0^T \left(e^{-\varepsilon} \Pro({A'_i}^- > t) - \delta\right)\; d t + \int_T^\infty \Pro(A_i^- > t) \; d t \\
		&\geq \int_0^T \Pro({A'_i}^- > t) \; d t - 2\varepsilon\int_0^T \Pro({A'_i}^- > t) - \delta T + \int_T^\infty \Pro(A_i^- > t) \; d t \\
		&\geq \int_0^\infty \Pro({A'_i}^- > t) \; d t - 2\varepsilon\int_0^\infty \Pro({A'_i}^- > t) - \delta T.
	\end{align*}
	It then follows that
	\begin{align*}
		\E A_i &\leq \int_0^\infty \Pro({A'_i}^+ > t) \; d t - \int_0^\infty \Pro({A'_i}^- > t) \; d t + 2\varepsilon \int_0^\infty \Pro(|A'_i| > t) \; d t + 2\delta T + \int_T^\infty \Pro(|A_i| > t) \; d t \\
		&= \E A'_i + 2\varepsilon\E|A_i| + 2\delta T + \int_T^\infty \Pro(|A_i| > t) \; d t.
	\end{align*}
\end{proof}

%
%
%
%
%
%
%
%
%
%
%
%
%
%
%
%
%
%
%
%

\subsection{Proof of Lemma \ref{lm: low-dim regression attack}}\label{sec: proof of lm: low-dim regression attack}
\begin{proof}[Proof of Lemma \ref{lm: low-dim regression attack}]
Throughout the proof, we denote $A_i = \A_{\bbeta}((y_i, \bm x_i), M(\bm y, \bm X))$ and $A'_i = \A_\bbeta ((y_i, \bm x_i), M(\bm y'_i, \bm X'_i))$. 
	
For the first part, observe that $(y_i - \bm x_i^\top \bbeta)$, $\bm x_i$ and $M(\bm y'_i, \bm X'_i) - \bbeta$ are independent and therefore $\E A'_i = \E(y_i - \bm x_i^\top \bbeta) \langle \E \bm x, \E(M(\bm X'_i) - \bbeta)  \rangle = 0. $ Also by independence and assumptions for $\Sigma_{\bm x}$, we have
	\begin{align*}
		\E A'_i \leq \sqrt{\E (A'_i)^2} \leq \sigma\sqrt{\E\|M(\bm y'_i, \bm X'_i) - \bbeta\|_{\Sigma_{\bm x}}^2} = \sigma\sqrt{\E\|M(\bm y, \bm X) - \bbeta\|_{\Sigma_{\bm x}}^2}.
	\end{align*}
For the second part, we have
\begin{align*}
	\sum_{i \in [n]} \E A_i = \sum_{j \in [d]} \E M(\bm y, \bm X)_j\sum_{i \in [n]}(y_i - \bm x_i^\top \bbeta)\bm x_{ij}.
\end{align*}
For each $j$, we have
\begin{align*}
	\E M(\bm y, \bm X)_j\sum_{i \in [n]}(y_i - \bm x_i^\top \bbeta)\bm x_{ij} = \sigma^2\E M(\bm y, \bm X)_j\frac{\partial \log f_{\bbeta}(\bm y, \bm X)}{\partial \beta_j} = \sigma^2\frac{\partial}{\partial \beta_j}\E_{\bm y, \bm X|\bbeta} M(\bm y, \bm X)_j.
\end{align*}
It follows that
\begin{align*}
	\sum_{i \in [n]} \E A_i = \sigma^2\sum_{j \in [d]}\frac{\partial}{\partial \beta_j}\E_{\bm y, \bm X|\bbeta} M(\bm y, \bm X)_j.
\end{align*}
Let the prior distribution $\bm \pi$ of $\bbeta$ be defined as follows. Let $\nu_1, \nu_2, \cdots, \nu_d$ be an i.i.d. sample drawn from the truncated $N(0, 1)$ distribution with truncation at $-1$ and $1$, and let $\bbeta_j = \nu_j/\sqrt{d}$ so that $\|\bbeta\|_2 < 1$. Denote $g_j(\bbeta) = \E_{\bm y, \bm X|\bbeta}M(\bm y, \bm X)_j$, we have
\begin{align*}
	\E_{\bm \pi} \sum_{i \in [n]} \E A_i = \sigma^2 \E_{\bm \pi} \sum_{j \in [d]} \frac{\partial}{\partial \beta_j} g_j(\bbeta).
\end{align*}
For each $j \in [d]$, by Lemma \ref{lm: stein's lemma} we have
\begin{align*}
	\E_{\bm \pi} \frac{\partial}{\partial \beta_j} g_j(\bbeta) &= \E_{\bm \pi} \frac{\partial}{\partial \beta_j}\E(g_j(\bbeta)|\beta_j) \geq \E_{\bm \pi}\left(\frac{-\beta_j\pi'_j(\beta_j)}{\pi_j(\beta_j)}\right) - \E_{\bm \pi}\left(\left|g_j(\bbeta) - \beta_j\right|\left|\frac{\pi'_j(\beta_j)}{\pi_j(\beta_j)}\right|\right)
\end{align*}
Summing over $j$ and plugging in the truncated normal density for $\pi_j(\beta_j)$ lead to
\begin{align}\label{eq: low-dim regression lower bound completeness}
	\E_{\bm \pi} \sum_{i \in [n]} \E A_i \geq \frac{\sigma^2}{1/d}\left(\E_{\bm \pi} \sum_{j \in [d]} \beta_j^2 - \sqrt{\E_{\bm\pi}\E_{\bm y, \bm X|\bbeta}\|M(\bm y, \bm X) - \bbeta\|_2^2}\sqrt{\E_{\bm \pi} \sum_{j \in [d]} \beta_j^2} \right).
\end{align}
Since $\E_{\bm \pi} \sum_{j \in [d]} \beta_j^2 \asymp 1$ and $\E_{\bm\pi}\E_{\bm y, \bm X|\bbeta}\|M(\bm y, \bm X) - \bbeta\|_2^2 = o(1)$ by assumption, the proof is complete.
\end{proof}

\subsection{Proof of Theorem \ref{thm: low-dim regression lower bound}}\label{sec: proof of thm: low-dim regression lower bound}

\begin{proof}[Proof of \ref{thm: low-dim regression lower bound}]
	It suffices to prove the second term of lower bound \ref{eq: low-dim regression lower bound} as the first term comes from the statistical minimax lower bound. By Lemma \ref{lm: low-dim regression attack} and the first part of Lemma \ref{lm: attack upper bound}, for every $\bmu \in \Theta$ we have
	\begin{align*}
		\sum_{i \in [n]} \E_{\bm y, \bm X|\bbeta} A_i \leq 2n\varepsilon\sigma\sqrt{\E_{\bm y, \bm X|\bbeta}\|M(\bm y, \bm X) - \bbeta\|_{\Sigma_{\bm x}}^2} + 2n\delta T + n\int_T^\infty \Pro\left(|A_i| > t \right).
	\end{align*}
For the tail probability term,
\begin{align*}
	\Pro(|A_i| > t) &= \Pro\left(\left|y_i - \bm x_i^\top\bbeta\right| \left|\langle \bm x_i, M(\bm y, \bm X)-\bbeta \rangle\right| > t\right) \leq \Pro\left(\left|y_i - \bm x_i^\top\bbeta\right| \sqrt{d} > t\right) \leq 2\exp\left(\frac{-t^2}{2d\sigma^2}\right).
\end{align*}
By choosing $T = \sqrt{2}\sigma \sqrt{d\log(1/\delta)}$, we obtain
\begin{align*}
	\sum_{i \in [n]} \E_{\bm y, \bm X|\bbeta} A_i \leq 2n\varepsilon\sigma\sqrt{\E_{\bm y, \bm X|\bbeta}\|M(\bm y, \bm X) - \bbeta\|_2^2} + c_1\sigma n\delta\sqrt{d\log(1/\delta)}.
\end{align*}
Combining with the second part of Lemma \ref{lm: low-dim regression attack} leads to
\begin{align*}
	\sigma^2 d \leq \E_{\bm \pi} \sum_{i \in [n]} \E A_i \leq 2n\varepsilon\sigma\sqrt{\E_{\bm \pi}\E_{\bm y, \bm X|\bbeta}\|M(\bm y, \bm X) - \bbeta\|_{\Sigma_{\bm x}}^2} + c_1 \sigma n\delta\sqrt{d\log(1/\delta)}.
\end{align*}
Since $\delta < n^{-(1+\omega)}$ for $\omega > 0$, for every $(\varepsilon, \delta)$-differentially private $M$ we have
\begin{align*}
	\E_{\bm \pi}\E_{\bm y, \bm X|\bbeta}\|M(\bm y, \bm X) - \bbeta\|_{\Sigma_{\bm x}}^2 \gtrsim \sigma^2\frac{d^2}{n^2\varepsilon^2}.
\end{align*}
As the Bayes risk always lower bounds the max risk, the proof is complete.
\end{proof}

\subsection{Proof of Lemma \ref{lm: high-dim regression attack}}\label{sec: proof of lm: high-dim regression attack}

\begin{proof}[Proof of Lemma \ref{lm: high-dim regression attack}]
	
	Throughout the proof, we denote $A_i = \A_{\bmu, s^*}((y_i, \bm x_i), M(\bm y, \bm X))$ and $A'_i = \A_{\bmu, s^*}((y_i, \bm x_i), M(\bm y'_i, \bm X'_i))$.
	
	For the first part, observe that $(y_i - \bm x_i^\top \bbeta)$, $\bm x_i$ and $M(\bm y'_i, \bm X'_i) - \bbeta$ are independent and therefore $\E A'_i = \E(y_i - \bm x_i^\top \bbeta) \langle \E \bm x, \E(M(\bm y'_i, \bm X'_i) - \bbeta)  \rangle = 0. $ Also by independence and assumptions for $\Sigma_{\bm x}$, we have
	\begin{align*}
		\E A'_i \leq \sqrt{\E (A'_i)^2} \leq \sigma\sqrt{\E\|M(\bm y'_i, \bm X'_i) - \bbeta\|_{\Sigma_{\bm x}}^2} = \sigma\sqrt{\E\|M(\bm y, \bm X) - \bbeta\|_{\Sigma_{\bm x}}^2}.
	\end{align*}

	For the second part, we have
	\begin{align*}
		\sum_{i \in [n]} \E A_i = \sum_{j \in \supp(\bbeta)} \E M(\bm y, \bm X)_j\sum_{i \in [n]}(y_i - \bm x_i^\top \bbeta)\bm x_{ij}.
	\end{align*}
	For each $j$, we have
	\begin{align*}
		\E M(\bm y, \bm X)_j\sum_{i \in [n]}(y_i - \bm x_i^\top \bbeta)\bm x_{ij} = \sigma^2\E M(\bm y, \bm X)_j\frac{\partial \log f_{\bbeta}(\bm y, \bm X)}{\partial \beta_j} = \sigma^2\frac{\partial}{\partial \beta_j}\E_{\bm y, \bm X|\bbeta} M(\bm y, \bm X)_j.
	\end{align*}
	It follows that
	\begin{align*}
		\sum_{i \in [n]} \E A_i = \sigma^2\sum_{j \in [d]}\frac{\partial}{\partial \beta_j}\E_{\bm y, \bm X|\bbeta} M(\bm y, \bm X)_j\1(\beta_j \neq 0).
	\end{align*}
	Let the prior distribution $\bm \pi$ of $\bbeta$ be defined as follows. Let $\nu_1, \nu_2, \cdots, \nu_d$ be an i.i.d. sample drawn from the truncated normal $N(0, \gamma^2)$ distribution with truncation at $-1$ and $1$. Let $S$ be be the index set of $\bm \nu$ with top $s^*$ largest absolute values so that $|S| = s^*$ by definition, and define $\beta_j = \nu_j \1(j \in S)/\sqrt{s^*}$, so that $\|\bbeta\|_2 \leq 1$. Denote $g_j(\bbeta) = \E_{\bm y, \bm X|\bbeta}M(\bm y, \bm X)_j$; the choice of prior $\bm \pi$ gives
	\begin{align*}
		\E_{\bm \pi} \sum_{i \in [n]} \E A_i = \sigma^2 \E_{\bm \pi} \sum_{j \in [d]} \frac{\partial}{\partial \bbeta_j} g_j(\bm \bbeta)\1(j \in S).
	\end{align*}
For each $j \in [d]$, by Lemma \ref{lm: stein's lemma} we have
\begin{align*}
	\E_{\bm \pi} \frac{\partial}{\partial \beta_j} g_j(\bbeta) &= \E_{\bm \pi} \frac{\partial}{\partial \beta_j}\E(g_j(\bbeta)|\beta_j) \geq \E_{\bm \pi}\left(\frac{-\beta_j\pi'_j(\beta_j)}{\pi_j(\beta_j)}\right) - \E_{\bm \pi}\left(\left|g_j(\bbeta) - \beta_j\right|\left|\frac{\pi'_j(\beta_j)}{\pi_j(\beta_j)}\right|\right)
\end{align*}
Summing over $j$ and plugging in the truncated normal density for $\pi_j(\beta_j)$ lead to
\begin{align}\label{eq: high-dim regression lower bound completeness}
	\E_{\bm \pi} \sum_{i \in [n]} \E A_i \geq \frac{\sigma^2}{\gamma^2/s^*}\left(\E_{\bm \pi} \sum_{j \in S} \beta_j^2 - \sqrt{\E_{\bm\pi}\E_{\bm y, \bm X|\bbeta}\|M(\bm y, \bm X) - \bbeta\|_2^2}\sqrt{\E_{\bm \pi} \sum_{j \in S} \beta_j^2} \right).
\end{align}
Since the prior for $\bbeta$ is a scaled version of our prior for $\bmu$ in the sparse mean estimation problem, by the same order statistic calculation as in the proof of Lemma \ref{lm: high-dim mean attack}, the assumption that $s^* = o(d^{1-\omega})$ for some fixed $\omega > 0$, and $\E_{\bm\pi}\E_{\bm y, \bm X|\bbeta}\|M(\bm y, \bm X) - \bbeta\|_2^2 = o(1)$,
\begin{align}\label{eq: high-dim regression attack lower bound}
	\E_{\bm \pi} \sum_{i \in [n]} \E A_i = 	\sum_{i \in [n]}\E_{\bm \pi}\E_{\bm y, \bm X|\bbeta}\A_{\bmu, s^*}((y_i, \bm x_i), M(\bm y, \bm X)) \gtrsim \sigma^2 s^*\log d.
\end{align}
\end{proof}

\subsection{Proof of Theorem \ref{thm: high-dim regression lower bound}}\label{sec: proof of thm: high-dim regression lower bound}

\begin{proof}[Proof of \ref{thm: high-dim regression lower bound}]
	It suffices to prove the second term of lower bound \ref{eq: high-dim regression lower bound} as the first term is inherited from the statistical minimax lower bound. By Lemma \ref{lm: high-dim regression attack} and the first part of Lemma \ref{lm: attack upper bound}, for every $\bbeta \in \Theta$ we have
	\begin{align*}
		\sum_{i \in [n]} \E_{\bm y, \bm X|\bbeta} A_i \leq 2n\varepsilon\sigma\sqrt{\E_{\bm y, \bm X|\bbeta}\|M(\bm y, \bm X) - \bbeta\|_{\Sigma_{\bm x}}^2} + 2n\delta T + n\int_T^\infty \Pro\left(|A_i| > t \right).
	\end{align*}
	For the tail probability term,
	\begin{align*}
		\Pro(|A_i| > t) &= \Pro\left(\left|y_i - \bm x_i^\top\bbeta\right| \left|\langle \bm x_i, (M(\bm y, \bm X)-\bbeta)_S \rangle\right| > t\right) \leq \Pro\left(\left|y_i - \bm x_i^\top\bbeta\right| \sqrt{s} > t\right) \leq 2\exp\left(\frac{-t^2}{2s^*\sigma^2}\right).
	\end{align*}
	By choosing $T = \sqrt{2}\sigma \sqrt{s^*\log(1/\delta)}$, we obtain
	\begin{align*}
		\sum_{i \in [n]} \E_{\bm y, \bm X|\bbeta} A_i \leq 2n\varepsilon\sigma\sqrt{\E_{\bm y, \bm X|\bbeta}\|M(\bm y, \bm X) - \bbeta\|_2^2} + c_1\sigma n\delta\sqrt{s^*\log(1/\delta)}.
	\end{align*}
	Combining with \eqref{eq: high-dim regression attack lower bound} leads to
	\begin{align*}
		\sigma^2 s^*\log d \leq \E_{\bm \pi} \sum_{i \in [n]} \E A_i \leq 2n\varepsilon\sigma\sqrt{\E_{\bm \pi}\E_{\bm y, \bm X|\bbeta}\|M(\bm y, \bm X) - \bbeta\|_{\Sigma_{\bm x}}^2} + c_1 \sigma n\delta\sqrt{s^*\log(1/\delta)}.
	\end{align*}
	Since $\delta < n^{-(1+\omega)}$ for $\omega > 0$, for every $(\varepsilon, \delta)$-differentially private $M$ we have
	\begin{align*}
		\E_{\bm \pi}\E_{\bm y, \bm X|\bbeta}\|M(\bm y, \bm X) - \bbeta\|_{\Sigma_{\bm x}}^2 \gtrsim \sigma^2\frac{(s^*\log d)^2}{n^2\varepsilon^2}.
	\end{align*}
	As the Bayes risk always lower bounds the max risk, the proof is complete.
\end{proof}

\bibliographystyle{plain}
\bibliography{reference}